\documentclass[11pt, letterpaper]{article}
\usepackage[T1]{fontenc}
\usepackage{subcaption}
\usepackage{arxiv}
\usepackage{times}
\usepackage{tikz}
\usepackage{adjustbox}
\usepackage{stmaryrd}
\usetikzlibrary{positioning, arrows.meta}
\definecolor{linkblue}{HTML}{3B78A6}

\hypersetup{
    colorlinks=true,
    linkcolor=linkblue,
    citecolor=linkblue,
    urlcolor=linkblue
}
\title{\LARGE Learning with Synthetic Data via SGD in High-Dimensional\\ Linear Regression}

\newcommand{\emailfont}[1]{{\fontfamily{ntxtt}\selectfont #1}}

\author{
Jichu Li \\
 Renmin University of China\\
\emailfont{lijichu52@gmail.com}
\and
Difan Zou \\
The University of Hong Kong\\
\emailfont{dzou@hku.hk}
}

\date{}

\date{}

\begin{document}

\maketitle

\begin{abstract}
Synthetic data has become a promising way to scale model training beyond limited human-generated data but it may also induce \emph{strong model collapse}~\citep{dohmatob2024strong}, where any fixed fraction of synthetic data prevents model performance from improving under data scaling, leaving a non-vanishing excess risk floor. In this paper, we study how synthetic data affects the generalization of one-pass SGD in high-dimensional linear regression with model shift. 
We establish finite-sample risk bounds for mixed and two-stage training, separating standard bias and variance from source-mismatch effects, namely fluctuation and persistent drift under mixing and filtered initialization bias under two-stage. These bounds reveal a sharp contrast: mixed training induces strong model collapse, while two-stage training avoids the floor by using synthetic data only in the first stage, showing that collapse is not inevitable under a simple data curriculum. 
Under a random sketch model, we further obtain scaling laws for both protocols, with tight results for mixed training in the optimization-saturated regime. These laws show that larger models may amplify synthetic-induced degradation under mixing, and quantify how high-quality synthetic pretraining may reduce bias in two-stage training. 
Finally, we establish an exact finite-sample necessary-and-sufficient condition for two-stage training to strictly outperform real-only training under the same real-data budget and identical real-stage updates. Overall, our results highlight that synthetic data is neither inherently harmful nor beneficial; its effect depends critically on both its quality and the training protocol used to incorporate it.
\end{abstract}

\section{Introduction}

Large language models (LLMs) have demonstrated remarkable capabilities, primarily driven by large-scale, high-quality pre-training data~\citep{radford2019language,brown2020language}. Scaling laws suggest that increasing both model size and data volume consistently leads to improved performance~\citep{kaplan2020scaling,hoffmann2022training,muennighoff2023scaling}. However, high-quality pre-training data is limited, and its supply is rapidly diminishing, posing a significant challenge for continued model scaling~\citep{muennighoff2023scaling,villalobos2024position}. Consequently, synthetic data, often generated by existing models, has emerged as a promising supplement to human-generated data for training, particularly in scenarios where labeled data is scarce~\citep{muennighoff2023scaling,chen2024diversity,long2024llms,havrilla2024surveying}. The training paradigm thus evolves to operate on a mixture of real data from $\cP_1$ and synthetic data from $\cP_2$, while our ultimate interest remains performance on the real data distribution $\cP_1$ alone.

This raises a natural question: Does synthetic data truly benefit model training, and what limitations or adverse effects might arise? On the one hand, a large body of work has demonstrated the utility of synthetic data. By amplifying real datasets, synthetic samples effectively increase the total training volume, enabling models to learn richer patterns and improve generalization~\citep{gunasekar2023textbooks,li2023textbooks,liu2024best,maini2024rephrasing,long2024llms,abdin2024phi,qin2025scaling}. On the other hand, \citet{shumailov2023curse} showed that training with synthetic data can induce \textit{model collapse}, a phenomenon where the iterative use of generated data for training leads to a critical degradation in model performance. This phenomenon arises as the model gradually overfits to patterns found in synthetic data, which may not fully represent the richness or variability of real-world data, ultimately leading to a deterioration of model quality over time, as extensively validated by prior empirical works~\citep{hataya2023will,bohacek2023nepotistically,alemohammad2023self,guo2024curious,briesch2024largelanguagemodelssuffer,shumailov2024ai}. Understanding training with synthetic data has become central to large-scale model development.

Theoretical studies of learning with synthetic data typically formulate the problem as a regression task under distribution mismatch between real and synthetic data. Some of them focus on the phenomenon of model collapse, showing how training with synthetic data can degrade model performance~\citep{dohmatob2024tale,dohmatob2024model,seddik2024bad,jain2024scaling} while other studies investigate strategies to mitigate model collapse or to effectively leverage synthetic data to improve learning~\citep{seddik2024bad,firdoussi2024maximizing,barzilai2025models,garg2025preventing,ferbach2024self}. Of particular relevance to our work, \citet{dohmatob2024strong} introduce the concept of \textit{strong model collapse}: even a small proportion of synthetic data prevents model performance from improving under data scaling, leaving a non-vanishing excess risk floor, which serves as a key motivation for our analysis. However, most of these works focus on static estimators and asymptotic regimes, without capturing the dynamics of practical training procedures based on iterative optimization algorithms (e.g., SGD), nor providing a detailed finite-sample characterization. 

In this paper, we study the effect of synthetic data on model generalization through the lens of one-pass SGD in high-dimensional linear regression. We consider a two-source setting where real and synthetic data share the same covariate distribution but differ in their labeling functions, capturing common synthetic-supervision pipelines where real inputs are labeled by a biased teacher, and analyze SGD under both mixed and two-stage training protocols. We establish finite-sample upper and lower bounds showing that mixed training leads to \emph{strong model collapse} under SGD, while a simple two-stage protocol avoids this issue and achieves vanishing excess risk. Further, we derive scaling laws under a random sketch model, characterizing how model size may affect synthetic-induced degradation and how synthetic data may benefit learning through bias reduction. Finally We give an exact finite-sample condition for two-stage training to strictly outperform real-only training with the same real-data budget and real-stage updates. \newline
\noindent Our contributions can be summarized as follows:
\begin{itemize}[leftmargin=*]
    \item In Section~\ref{sec:mix-train}, we derive finite-sample upper and lower bounds for last-iterate one-pass SGD under mixed training. The upper bound decomposes the synthetic-data-induced degradation into $\mathsf{fluctuation}$ and $\mathsf{drift}$, characterizing how source mismatch affects optimization dynamics. Together, the upper and lower bounds establish a non-vanishing excess risk floor under data scaling, which is exactly \emph{strong model collapse}. Moreover, we show that the same behavior persists for iterate-averaged SGD, indicating that strong model collapse is fundamentally induced by sample mixing in the training protocol, rather than an artifact of a specific optimization scheme.
    \item In Section~\ref{sec:two-stage-train}, we show that strong model collapse is not inevitable. We analyze a two-stage training protocol, where the model is first trained on synthetic data and then on real data, and show that this simple data curriculum avoids the non-vanishing excess risk floor. This highlights that the ordering and scheduling of data can be as important as the data itself. 
    \item In Section~\ref{sec:scaling-law}, we further establish scaling-law bounds for the two training protocols under a random sketch model, with tight results for mixed training in the optimization-saturated regime, showing that larger models may amplify synthetic-data-induced degradation in mixed training, and providing an explicit characterization of how high-quality synthetic data may reduce bias in two-stage training, thereby benefiting learning.
    \item In Section~\ref{sec:synthetic_benefit}, we establish an exact finite-sample condition under which two-stage training strictly outperforms real-data-only training with a fixed real-data budget and identical real-stage training; we also use a one-dimensional toy model to illustrate the roles of synthetic sample size, source mismatch, and label noise.
\end{itemize}

\paragraph{Notations.}
For two positive-valued functions $f(x)$ and $g(x)$, we write  $f(x)\lesssim g(x)$ (and $f(x)= \mathcal{O} (g(x))$) or $f(x)\gtrsim g(x)$ (and $f(x) = \Omega(g(x))$) if $f(x) \le cg(x)$ or $f(x) \ge cg(x)$ holds for some absolute (if not otherwise specified) constant $c>0$ respectively. 
We write $f(x) \eqsim g(x)$ (and $f(x)=\Theta(g(x))$) if $f(x) \lesssim g(x) \lesssim f(x)$.
For two vectors $\ub$ and $\vb$ in a Hilbert space, we denote their inner product by $\langle\ub, \vb\rangle$ or  $\ub^\top \vb$.
For two matrices $\Ab$ and $\Bb$ of appropriate dimensions, we define their inner product by$\langle \Ab, \Bb \rangle := \tr(\Ab^\top \Bb)$.
We use $\|\cdot\|$ to denote the operator norm for matrices and $\ell_2$-norm for vectors.
For a positive semi-definite (PSD) matrix $\Ab$ and a vector $\vb$ of appropriate dimension, we write $\|{\vb}\|_{\Ab}^2 := \vb^\top \Ab \vb$. Kronecker/tensor product is denoted by $\otimes$.

\section{Related Work}
\paragraph{Synthetic data and model collapse.}
Training with synthetic data has been shown to induce \textit{model collapse}~\citep{shumailov2023curse}, a phenomenon in which model performance degrades significantly when consistently trained with poor-quality synthetic data, as has been empirically verified by a number of subsequent works~\citep{hataya2023will,bohacek2023nepotistically,alemohammad2023self,guo2024curious,briesch2024largelanguagemodelssuffer,shumailov2024ai}. Theoretical studies of model collapse typically frame it as a regression problem under distribution mismatch between real and synthetic data~\citep{dohmatob2024tale,dohmatob2024model,seddik2024bad}, showing how synthetic data degrade performance, while others explore how to avoid collapse or leverage synthetic data to benefit learning~\citep{seddik2024bad,firdoussi2024maximizing,barzilai2025models,garg2025preventing}. In particular, \citet{dohmatob2024strong} introduce the concept of \textit{strong model collapse}, where even a small proportion of synthetic data prevents model performance from improving under data scaling, leaving a non-vanishing excess risk floor. Moreover, such collapse cannot be mitigated by simple strategies such as data reweighting (\citep{jain2024scaling,ferbach2024self}) unless the proportion of synthetic data vanishes asymptotically. However, most of them do not capture the dynamics of practical training, where the order and interplay of data sources matter. In this paper, we study finite-sample one-pass SGD and show that while mixed training induces strong model collapse, it can be avoided by a simple two-stage training protocol.

\paragraph{Learning with multiple data sources.}
Learning from multiple data sources has been widely studied in transfer learning, domain adaptation, and continual learning under covariate or model shift~\citep{blitzer2007learning,ben2010theory,wang2015generalization,lei2021generalization,kpotufe2018marginal,hanneke2019value,pathak2022new,li2023fixed,tahir2024features,li2025memory,kim2025transfer}. However, most of them focus on statistical estimators rather than optimization dynamics. More recently, several works have analyzed SGD under covariate shift. \citet{wu2022power} study pretraining-finetuning under covariate shift via SGD and establish excess risk bounds. \citet{ding2024understanding} extend the analysis to continual learning settings and show how covariate shifts across tasks affect SGD dynamics and forgetting. \citet{liu2025optimal} characterize the minimax optimality of SGD-type methods for high-dimensional linear regression under covariate shift. Beyond standard covariate shift, \citet{deng2025mixed} propose an adaptive mixed-sample SGD that further accounts for model shift while achieving transfer-optimal target risk. In this paper, we consider different data sources with shared covariates but mismatched labeling functions, as commonly arises with synthetic data, and characterize how synthetic data affects finite-sample SGD dynamics.   

\paragraph{SGD in high-dimensional linear regression.}
The generalization of stochastic gradient descent (SGD) in linear regression has been extensively studied. In the classical underparameterized regime, a large number of works have studied its behavior~\citep{polyak1992acceleration,bach2013non,defossez2015averaged,dieuleveut2017harder,jain2017markov,jain2018parallelizing,pillaud2018statistical}. One-pass SGD under different iterate schemes (e.g., last iterate, averaging) has also been rigorously studied in the overparameterized setting~\citep{dieuleveut2016nonparametricstochasticapproximationlarge,ge2019step,berthier2020tight,varre2021last,wulast,zou2023benign,zhang2024optimality,zhang2025learning}, providing framework for analyzing the generalization of SGD in high-dimensional regime. In parallel, Several works have extended the analysis to multipass SGD~\citep{lin2017optimal,pillaud2018statistical,mucke2019beating,lei2021generalization,zou2022risk}. More recently, a line of research has focused on the theoretical scaling laws of SGD in high-dimensional linear regression. \citet{linscaling} analyzed the last-iterate test error of one-pass SGD in a random sketch model, providing the first systematic derivation of a finite-sample joint scaling law. \citet{lin2025improved} further extended their analysis to multi-epoch SGD, followed by an even finer-grained characterization from \citet{yan2025larger}. \citet{li2025functional} established functional scaling laws and analyzed optimal learning rate and batch size schedules~\citep{li2026optimal,wang2026fast}. Overall, this paper builds on high-dimensional SGD and scaling-law analyses, and extend them from single-source learning to a two-source setting where source mismatch induces new drift and fluctuation effects.

\section{Preliminaries}\label{sec:setup}
\paragraph{Problem Setup.} We use $\xb \in \cH$ to denote a feature vector, where $\cH$ is a finite d-dimensional or countably infinite dimensional Hilbert space, and $y \in \mathbb{R}$ to denote its label.  \textit{Linear Regression} concerns the following objective:
\[
\min_{\wb} \R(\wb), \ \text{where}\ \R(\wb):= \frac{1}{2}\EE\big(\langle\xb,\wb\rangle-y\big)^2
\]
Here $\wb \in \cH$ is a weight vector to be learned and the expectation is over $(\xb,y)\sim \cP$ for some distribution $\cP$ on $\cH \times \mathbb{R}$.

Following the prior literature \citep{bartlett2020benign,wulast,zou2023benign,linscaling}, we make the following assumptions on the data distribution.

\begin{assumption}[Data Covariance]\label{assump:second-moment}
Let $\Hb := \EE[\xb \xb^\top]$ be the data covariance matrix and assume that $\tr(\Hb)$ and all entries of $\Hb$ are finite. In addition, we assume that all parameter vectors involved in the analysis have finite $\Hb$-norm, i.e., $\|\wb\|_{\Hb} < \infty$.
\end{assumption}

\begin{assumption}[Fourth moment conditions]\label{assump:fourth-moment}
(1)\label{item:fourth-moement-upper} There exists constant $\alpha > 0$ such that for every PSD matrix $\Ab$ it holds that $\EE[\xb \xb^\top \Ab \xb \xb^\top] \preceq \alpha  \tr (\Hb \Ab) \Hb$; (2)\label{item:fourth-moement-lower} There is a constant $\beta > 0$, such that for every PSD matrix $\Ab$, we have $\EE [\xb \xb^\top \Ab \xb \xb^\top] - \Hb \Ab \Hb \succeq \beta  \tr (\Hb \Ab)  \Hb$.
\end{assumption}
In this paper, motivated by \citep{dohmatob2024strong}, we consider a two-source setting, where data are drawn from either a real distribution $\cP_1$ or a synthetic distribution $\cP_2$. We assume that both distributions share the same feature marginal $\xb \sim \cD$, and differ only in their labeling functions, which allows us to isolate the effect of model mismatch and obtain a clean characterization of the learning dynamics. This formulation naturally captures the prevalent self-training and knowledge distillation paradigms, where a massive pool of unlabeled real-world features is annotated by a potentially biased teacher model to generate supervisory signals.

\begin{assumption}[Well-specified models]\label{assump:noise}
For $\ell \in \{1,2\}$, data from distribution $\cP_\ell$ follow a linear model $y = \langle \xb, \wb^*_\ell \rangle + \xi_\ell,$
where $\wb^*_\ell \in \cH$ is the underlying parameter, and the noise $\xi_\ell \sim N(0, \sigma_\ell^2)$ is independent of $\xb$. Define the model shift between the two sources as $\bm{\delta} := \wb^*_2 - \wb^*_1.$
\end{assumption}
While training may involve samples from both $\cP_1$ and $\cP_2$, our goal is to minimize the population risk on the real distribution $\R(\wb) := \frac{1}{2}\EE_{(\xb,y)\sim \cP_1}(\langle \xb, \wb \rangle - y)^2.$

\paragraph{Training protocol.} We consider two training protocols for combining real and synthetic data.

\textit{(1) Mixed training.}
We are given a dataset consisting of $M+N$ samples, among which $M$ are drawn from the synthetic distribution $\cP_2$ and $N$ from the real distribution $\cP_1$. Let $p := M/(M+N)$ denote the fraction of synthetic data. The training samples are presented in a random order. An equivalent fixed-budget formulation key to the proof is detailed in Appendix~\ref{ap:mixed-sampling}.

\textit{(2) Two-stage training.}
We first train on $M$ synthetic samples drawn from $\cP_2$, followed by training on $N$ real samples drawn from $\cP_1$.

\paragraph{SGD iteration.}
We consider training via \textit{stochastic gradient descent} (SGD) with step sizes $\{\gamma_t\}$:
\[
\wb_{t+1} = \wb_t - \gamma_t \big( \langle \xb_t, \wb_t \rangle - y_t \big) \xb_t\quad t\ge1 .
\]
where $\{(\xb_t,y_t)\}_{t=1}^{M+N}$ follow one of the two protocols above, and $\{\gamma_t\}_{t=1}^{M+N}$ are the step sizes. We adopt a well-used geometric decaying stepsize scheduler \citep{wulast,wu2022power,linscaling,lin2025improved}:
\begin{equation}\label{eq:geometry-tail-decay-lr}
    \text{for } t = 1, \dots, T, \quad \gamma_t := \gamma/2^{\ell_t}, \text{ and } \ell_t = \lfloor t/(T/\log(T)) \rfloor. 
\end{equation}
Here the initial stepsize $\gamma$ is a hyperparameter for the algorithm and $T$ denotes the total number of iterations. In the mixed-training setting, we take $T = M+N$, while in the two-stage setting the same schedule is applied separately within each stage.

The output of the algorithm is the last iterate $\wb_{M+N}$ and we study the \textit{excess risk} on the real distribution:
\[
\mathsf{Excess} := \mathcal{E}_1(\wb_{M+N})=\R(\wb_{M+N}) - \R(\wb^*_1).
\]

\paragraph{Additional Notations.} let $\Hb=\sum_i \lambda_i\vb_i\vb_i^\top$ be the eigen-decomposition of $\Hb$, where $\{\lambda_i\}^{\infty}_{i=1}$ are the eigenvalues of $\Hb$ sorted in non-increasing order and $\vb_i$ are the corresponding eigenvectors. Following \citep{zou2023benign,wu2022power}, we denote:
\begin{equation*}
\Hb_{0:k}
:=
\sum_{i=1}^k \lambda_i \vb_i \vb_i^\top,
\quad
\Hb_{k:\infty}
:=
\sum_{i>k} \lambda_i \vb_i \vb_i^\top,
\quad
\bm{\delta}
=
\sum_i \delta_i \vb_i.
\end{equation*}

\section{Main Results}
\subsection{Mixed training}\label{sec:mix-train}
In this subsection, we study SGD trained on a mixture of real and synthetic data. We show that, in this setting, the presence of synthetic data induces a non-vanishing excess risk floor, leading to the phenomenon of strong model collapse~\cite{dohmatob2024strong}. Detailed proof is deferred to Appendix~\ref{ap:mix-upper} and~\ref{ap:mix-lower}.

\begin{theorem}[An upper bound for mixed training]\label{thm:generalization_error}
Given $M$ synthetic and $N$ real samples. Consider last iterate SGD with stepsize scheme \eqref{eq:geometry-tail-decay-lr}.
Suppose Assumptions~\ref{assump:second-moment}, ~\ref{assump:fourth-moment} and ~\ref{assump:noise} hold.
Let $ \widetilde{N}_{\mathtt{eff}}:= (M+N) / \log (M+N)$ and $\bar \sigma^2=(1-p)\sigma_1^2+p\sigma_2^2$.
Suppose $\gamma < 1/(4\alpha\tr(\Hb))$. We have 

\begin{align*}
    \mathbb{E} [\mathcal{E}_1({\mathbf{w}}_{M+N})] & \lesssim \underbrace{\big\| \prod_{t=1}^{M+N}(\mathbf{I}-\gamma_t \mathbf{H}) (\mathbf{w}_0 - \mathbf{w}_1^*) \big\|^2_{\mathbf{H}} +\alpha\|\mathbf{w}_0 - \mathbf{w}_1^* \|^2_{\frac{\mathbf{I}_{0:k^*}}{\gamma \widetilde{N}_{\mathtt{eff}}}+\mathbf{H}_{k^*:\infty}}\frac{\widetilde{D}_{\mathtt{eff}}}{\widetilde{N}_{\mathtt{eff}}}}_{\mathsf{BiasError}}\\
    &+\underbrace{\bar \sigma^2 \frac{\widetilde{D}_{\mathtt{eff}}}{\widetilde{N}_{\mathtt{eff}}}}_{\mathsf{VarError}}
    +  \underbrace{\alpha\, p \|\bm{\delta}\|^2_{\mathbf{H}} \frac{\widetilde{D}_{\mathtt{eff}}}{\widetilde{N}_{\mathtt{eff}}}}_{\mathsf{FlucError}}
    + \underbrace{ p^2\Big\| \big(\mathbf{I}-\prod_{t=1}^{M+N}(\mathbf{I}-\gamma_t\mathbf{H})\big)\bm{\delta}\Big\|^2_{\mathbf{H}}}_{\mathsf{DriftError}},
\end{align*}
where  $k^* = \max \{k: \lambda_k \ge \frac{1}{\gamma \widetilde{N}_{\mathtt{eff}}}\}$ and $\widetilde{D}_{\mathtt{eff}} := k^* + \gamma^2 \widetilde{N}_{\mathtt{eff}}^2 \sum_{i>k^*}\lambda_i^2.$
\end{theorem}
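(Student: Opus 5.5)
We will prove the bound by working with the centered iterate $\eta_t := \wb_t - \wb_1^*$ and using the equivalent fixed-budget formulation of mixed sampling (Appendix~\ref{ap:mixed-sampling}). In that formulation, at each step the source is an independent Bernoulli$(p)$ draw $b_t$. The label is $y_t = \langle \xb_t, \wb_1^*\rangle + b_t\langle \xb_t,\bm\delta\rangle + \xi_t$, where $\xi_t$ has variance $\sigma_1^2$ or $\sigma_2^2$ according to the source. The recursion then reads
\[
\eta_{t} = (\Ib - \gamma_t \xb_t\xb_t^\top)\eta_{t-1} + \gamma_t\xi_t\xb_t + \gamma_t b_t \xb_t\xb_t^\top\bm\delta .
\]
We split the forcing term as $b_t\xb_t\xb_t^\top\bm\delta = p\Hb\bm\delta + (b_t\xb_t\xb_t^\top - p\Hb)\bm\delta$. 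The first part is a deterministic drift. The second is mean zero, and by Assumption~\ref{assump:fourth-moment} its second moment satisfies
\[
\EE[b_t\xb_t\xb_t^\top\bm\delta\bm\delta^\top\xb_t\xb_t^\top]\preceq \alpha p\|\bm\delta\|_{\Hb}^2\Hb .
\]
This is the key structural fact: the mean-zero part behaves exactly like additive noise with level $\alpha p\|\bm\delta\|_{\Hb}^2$.

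\textbf{Step 1: decompose the iterate.} By linearity we write $\eta_t = \eta_t^{\mathrm{b}} + \eta_t^{\mathrm{v}} + \eta_t^{\mathrm{d}}$, with each piece driven by the same random operators $\Ib-\gamma_t\xb_t\xb_t^\top$:
\begin{itemize}
\item $\eta_t^{\mathrm{b}}$ starts at $\eta_0$ and has no forcing;
\item $\eta_t^{\mathrm{v}}$ starts at $0$ and is forced by the mean-zero terms $\gamma_t\xi_t\xb_t + \gamma_t(b_t\xb_t\xb_t^\top - p\Hb)\bm\delta$;
\item $\eta_t^{\mathrm{d}}$ starts at $0$ and is forced by $\gamma_t p\Hb\bm\delta$.
\end{itemize}
By Cauchy--Schwarz, $\EE\|\eta\|_{\Hb}^2 \le 3(\cdots)$, so it suffices to bound each piece separately. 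Cross terms are not an issue once we accept constant factors.

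\textbf{Step 2: bias and variance pieces.} For $\eta^{\mathrm{b}}$ we invoke the standard last-iterate bias analysis under geometric decay (as in \citep{wulast,linscaling}). This yields the two $\mathsf{BiasError}$ terms with $k^*$ and $\widetilde D_{\mathtt{eff}}$ as defined, using $\gamma<1/(4\alpha\tr(\Hb))$.

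For $\eta^{\mathrm{v}}$, the noise terms at different steps are martingale differences, and their conditional covariance is bounded by
\[
\bigl((1-p)\sigma_1^2+p\sigma_2^2 + \alpha p\|\bm\delta\|_{\Hb}^2\bigr)\Hb .
\]
The Gaussian label noise is independent of the source indicator and of $\xb_t$, so the two contributions add. The variance lemma of \citep{wulast}, which only requires a noise covariance bound of the form $\sigma^2\Hb$, then gives $(\bar\sigma^2+\alpha p\|\bm\delta\|_{\Hb}^2)\widetilde D_{\mathtt{eff}}/\widetilde N_{\mathtt{eff}}$. This is $\mathsf{VarError}+\mathsf{FlucError}$.

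\textbf{Step 3: drift piece (main obstacle).} Here the forcing is deterministic but the propagators are random. We split $\eta_t^{\mathrm{d}} = \bar\eta_t + \zeta_t$. The mean $\bar\eta_t$ solves
\[
\bar\eta_t = (\Ib-\gamma_t\Hb)\bar\eta_{t-1} + \gamma_t p\Hb\bm\delta ,
\]
so $\bar\eta_T = p\bigl(\Ib-\prod_t(\Ib-\gamma_t\Hb)\bigr)\bm\delta$, which is exactly $\mathsf{DriftError}$. The fluctuation $\zeta_t$ satisfies
\[
\zeta_t = (\Ib-\gamma_t\xb_t\xb_t^\top)\zeta_{t-1} + \gamma_t(\Hb-\xb_t\xb_t^\top)\bar\eta_{t-1} .
\]
This is again a variance-type recursion, with noise covariance $\preceq \alpha\|\bar\eta_{t-1}\|_{\Hb}^2\Hb$. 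Since $\|\bar\eta_{t-1}\|_{\Hb}^2 \le p^2\|\bm\delta\|_{\Hb}^2 \le p\|\bm\delta\|_{\Hb}^2$, this contribution is absorbed into $\mathsf{FlucError}$.

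The delicate point is that the variance lemma must be applied with a time-varying (though uniformly bounded) noise level. We must also check that the step-size condition controls the fourth-moment feedback, so that the last-iterate covariance recursion stays within a constant of the additive-noise case. I would first try to redo the operator-recursion argument of \citep{wulast}, using the uniform bound $\sup_t\|\bar\eta_t\|_{\Hb}^2\le p^2\|\bm\delta\|_{\Hb}^2$. Combining the three pieces then yields the theorem.
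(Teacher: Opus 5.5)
Your bias step and your drift step (mean part plus centered part with forcing $\preceq\alpha p^2\|\bm\delta\|_{\Hb}^2\Hb$) coincide with the paper's argument. Step~2's treatment of the source mismatch, however, rests on a misreading of the sampling model.

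\textbf{The gap.} Write $T:=M+N$. The fixed-budget formulation of Appendix~\ref{ap:mixed-sampling} does \emph{not} produce independent Bernoulli$(p)$ indicators. The schedule $\mathbf z$ is uniform over binary vectors with exactly $M$ ones; this is what ``$M$ synthetic and $N$ real samples in random order'' means. So the indicators are exchangeable but negatively correlated, $\EE[(z_s-p)(z_r-p)]=-p(1-p)/(T-1)$ for $s\neq r$, and the paper deliberately avoids assuming independence.

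Under this model your forcing $(z_t\xb_t\xb_t^\top-p\Hb)\bm\delta$ is not a martingale difference. Given the past sources, $\EE[z_t\mid z_{<t}]=(M-\sum_{s<t}z_s)/(T-t+1)\neq p$. Given the whole schedule, its mean is $(z_t-p)\Hb\bm\delta\neq\mathbf 0$.

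Concretely, consider the cross term $\gamma_t\EE[(\Ib-\gamma_t\xb_t\xb_t^\top)\eta^{\mathrm v}_{t-1}\otimes(z_t\xb_t\xb_t^\top-p\Hb)\bm\delta]$. Since $\xb_t$ is independent of the schedule and the past, and $\EE[\eta^{\mathrm v}_{t-1}]=\mathbf 0$, it reduces to a fixed linear map (involving fourth moments of $\xb_t$) applied to $\EE[z_t\eta^{\mathrm v}_{t-1}]$. With independent indicators this vector is $p\,\EE[\eta^{\mathrm v}_{t-1}]=\mathbf 0$. Under the fixed budget it equals $-\frac{p(1-p)}{T-1}\bigl(\Ib-\prod_{j<t}(\Ib-\gamma_j\Hb)\bigr)\bm\delta\neq\mathbf 0$.

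Hence the second moment of your combined process $\eta^{\mathrm v}$ does not satisfy the variance recursion, and the lemma of \citep{wulast} cannot be applied to it. As written, your argument proves the bound for i.i.d.\ Bernoulli mixing, not for the protocol in the theorem.

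\textbf{The missing idea.} The paper splits the mismatch forcing as $z_t\xb_t\xb_t^\top\bm\delta-p\Hb\bm\delta=z_t(\xb_t\xb_t^\top-\Hb)\bm\delta+(z_t-p)\Hb\bm\delta$, separating feature randomness from schedule randomness.

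The first piece has zero mean conditional on the entire schedule $\mathcal Z$ and the past features. So the process it drives has zero conditional mean given $\mathcal Z$, which is exactly what kills its cross terms. Its forcing covariance is $\preceq p\alpha\|\bm\delta\|_{\Hb}^2\Hb$, giving the $\alpha p$ part of $\mathsf{FlucError}$ as you intended.

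The second piece is $\mathcal Z$-measurable and must be handled combinatorially:
\begin{itemize}
\item Unrolling gives $\sum_{s}(z_s-p)\vb_s$ with $\vb_s=\gamma_s(\Ib-\gamma_T\xb_T\xb_T^\top)\cdots(\Ib-\gamma_{s+1}\xb_{s+1}\xb_{s+1}^\top)\Hb\bm\delta$.
\item Conditional on the features, the fixed-budget variance identity (Lemma~\ref{lem:fixed-budget-variance-identity-wu-direct-final}) bounds its $\Hb$-norm second moment by $\frac{Tp(1-p)}{T-1}\sum_s\|\vb_s\|_{\Hb}^2$. So the dependence costs at most a factor $T/(T-1)$ relative to the independent case.
\item $\sum_s\EE\|\vb_s\|_{\Hb}^2$ is the $\Hb$-trace of a matrix sequence obeying a variance-type recursion with forcing $\Hb\bm\delta\bm\delta^\top\Hb\preceq\|\bm\delta\|_{\Hb}^2\Hb$.
\end{itemize}
This yields $p(1-p)\|\bm\delta\|_{\Hb}^2\widetilde D_{\mathtt{eff}}/\widetilde N_{\mathtt{eff}}$, which is absorbed into $\mathsf{FlucError}$ because $\alpha\ge1$. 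Without this step, or some other control of the negatively correlated schedule fluctuations, the $\mathsf{FlucError}$ term is not established for the stated protocol.
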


\textbf{Interpretation.} This upper bound decomposes the excess risk into four terms: a bias term, a variance term, and two additional terms induced by the presence of synthetic data, which we refer to as the fluctuation error and the drift error. The bias and variance terms match those in classical analyses of SGD with geometrically decaying step sizes \citep{wu2022power}, capturing the optimization error and stochastic noise, respectively. The fluctuation error is upper bounded by terms involving $p(1-p),p^2$ and $p$, and is therefore summarized at the order level as $\mathcal{O}(p \|\bm{\delta}\|_{\Hb}^2 \cdot \frac{\widetilde{D}_{\mathtt{eff}}}{\widetilde{N}_{\mathtt{eff}}}).$ It arises from the stochastic mixing of the two data sources. Particularly, when $\|\bm{\delta}\|^2_{\Hb}\eqsim\bar\sigma^2$, this term is of the same order as the standard variance term and behaves similarly to it. In contrast, the drift error
$p^2 \| (\mathbf{I}-\prod_{t=1}^{M+N}(\mathbf{I}-\gamma_t\mathbf{H}))\bm{\delta}\|^2_{\Hb}$
captures a systematic bias in the optimization trajectory toward the synthetic model, which reflects the accumulation of updates aligned with $\wb^*_2$, leading the iterate to drift away from the target $\wb^*_1$.

Together, these two terms reveal a fundamental trade-off: while incorporating synthetic data can increase sample size to reduce bias and variance, it also introduces a bias proportional to the model mismatch $\bm{\delta}$, which may dominate and degrade performance. Further, when the total sample size grows to infinity with a fixed synthetic proportion p, all terms except the drift term vanish under the condition $\widetilde{D}_{\mathtt{eff}}=o(\widetilde{N}_{\mathtt{eff}})$. As a result, the upper bound converges to a non-zero floor of order $p^2\|\bm{\delta}\|^2_{\Hb}$. Next, we complement this upper bound with a lower bound, showing that this excess risk floor cannot be avoided in general.

\begin{theorem}[A lower bound for mixed training]\label{thm:tail-decay-lower-bound}
Given $M$ synthetic and $N$ real samples. Consider last iterate SGD with stepsize scheme \eqref{eq:geometry-tail-decay-lr}.
Suppose Assumptions~\ref{assump:second-moment}, ~\ref{assump:fourth-moment} and ~\ref{assump:noise} hold.
Let $ \widetilde{N}_{\mathtt{eff}}:= (M+N) / \log (M+N)$ and $\bar \sigma^2=(1-p)\sigma_1^2+p\sigma_2^2$.
Suppose $M+N \ge 500$ and $\gamma < 1 / \lambda_1$, then 
\begin{align*}
    \EE [\mathcal{E}_1(\wb_{M+N}) ] & \gtrsim \underbrace{ \Big\| \prod_{t=1}^{M+N}(\Ib-\gamma_t\Hb) (\wb_0 - \wb_1^*) +p\big(\Ib-\prod_{t=1}^{M+N}(\Ib-\gamma_t\Hb)\big)\bm{\delta}\Big\|^2_{\Hb}}_{\mathsf{Non-vanishing\ term}} \\
    &+\big(\beta\norm{\wb_0-\wb^*_1-p\bm{\delta}}_{\Hb_{k^*:\infty}}^2+\bar \sigma^2\big) \frac{\widetilde{D}_{\mathtt{eff}}}{\widetilde{N}_{\mathtt{eff}}} + p(1-p)\gamma^2 \widetilde{N}_{\mathtt{eff}} \norm{\bm{\delta}}_{\Hb^3_{k^*:\infty}}^2
\end{align*}
where  $k^* = \max \{k: \lambda_k \ge \frac{1}{\gamma \widetilde{N}_{\mathtt{eff}}}\}$ and $\widetilde{D}_{\mathtt{eff}} := k^* + \gamma^2 \widetilde{N}_{\mathtt{eff}}^2 \sum_{i>k^*}\lambda_i^2.$
\end{theorem}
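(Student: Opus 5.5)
The plan is to adapt the standard bias--variance lower-bound machinery for last-iterate SGD with geometrically decaying step sizes \citep{wulast,wu2022power} to the two-source setting. Write the mixed stream as in Appendix~\ref{ap:mixed-sampling}: at each step an i.i.d.\ indicator $b_t\sim\mathrm{Bernoulli}(p)$ selects the source, so $y_t=\langle\xb_t,\wb_1^*+b_t\bm{\delta}\rangle+\xi_t$. Define the centered iterate $\bm\eta_t:=\wb_t-\wb_1^*$. It obeys
\[
\bm\eta_{t+1}=(\Ib-\gamma_t\xb_t\xb_t^\top)\bm\eta_t+\gamma_t b_t\xb_t\xb_t^\top\bm{\delta}+\gamma_t\xi_t\xb_t .
\]

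First, take expectations. Since $\EE[b_t\xb_t\xb_t^\top]=p\Hb$, the mean recursion is $\EE\bm\eta_{t+1}=(\Ib-\gamma_t\Hb)\EE\bm\eta_t+\gamma_tp\Hb\bm{\delta}$. Its solution is
\[
\EE\bm\eta_T=\prod_t(\Ib-\gamma_t\Hb)\bm\eta_0+p\Big(\Ib-\prod_t(\Ib-\gamma_t\Hb)\Big)\bm{\delta},
\]
using $\sum_t\gamma_t\Hb\prod_{s>t}(\Ib-\gamma_s\Hb)=\Ib-\prod(\Ib-\gamma_s\Hb)$. Then use
\[
\EE\|\bm\eta_T\|_\Hb^2=\|\EE\bm\eta_T\|_\Hb^2+\langle\Hb,\mathrm{Cov}(\bm\eta_T)\rangle .
\]
This yields the non-vanishing term directly. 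It remains to lower bound the covariance part by the other three terms.

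Second, introduce $\bm\zeta_t:=\bm\eta_t-\bar{\bm\eta}_t$ with $\bar{\bm\eta}_t=\EE\bm\eta_t$, and write $\Cb_t=\EE\bm\zeta_t\bm\zeta_t^\top$. Expanding the recursion gives
\[
\Cb_{t+1}=(\mathcal I-\gamma_t\mathcal T)\circ\Cb_t+\gamma_t^2\,\Sigma_t ,
\]
where $\mathcal T$ is the usual SGD operator. The injected covariance $\Sigma_t$ collects three contributions. The first is the label noise, $\bar\sigma^2\Hb$. The second is the fourth-moment term of the mean iterate with the shift folded in. Writing $\mathbf{u}_t=\bar{\bm\eta}_t-b_t\bm{\delta}$, it is $\EE[\xb\xb^\top\mathbf{u}_t\mathbf{u}_t^\top\xb\xb^\top]-\Hb\bar{\bm\eta}_t(\cdot)^\top\Hb$ with the appropriate $p\bm{\delta}$-centering. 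The third is the Bernoulli variance $p(1-p)\Hb\bm{\delta}\bm{\delta}^\top\Hb$. All cross terms between $\bm\zeta_t$ and the fresh randomness vanish by independence.

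Each contribution is PSD. Also $\mathcal I-\gamma_t\mathcal T\succeq\mathcal I-\gamma_t\widetilde{\mathcal T}$ on PSD matrices, where $\widetilde{\mathcal T}(\Ab)=\Hb\Ab+\Ab\Hb-\gamma_t\Hb\Ab\Hb$; this uses the PSD part of the fourth moment, which is dropped. So we can replace the dynamics by the deterministic operator $\Ab\mapsto(\Ib-\gamma_t\Hb)\Ab(\Ib-\gamma_t\Hb)$. We then keep the three sources as follows.
\begin{itemize}
\item \emph{Noise.} By the known computation for the schedule \eqref{eq:geometry-tail-decay-lr} with $\gamma<1/\lambda_1$ and $T\ge500$, $\sum_t\gamma_t^2\langle\Hb,\prod_{s>t}(\Ib-\gamma_s\Hb)^2\Hb\rangle\gtrsim\widetilde D_{\mathtt{eff}}/\widetilde N_{\mathtt{eff}}$. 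This gives the $\bar\sigma^2$ term.
\item \emph{Fourth moment.} Use Assumption~\ref{assump:fourth-moment}(2) with $\beta\,\tr(\Hb\,\EE\mathbf{u}_t\mathbf{u}_t^\top)\Hb$. Note $\EE\mathbf{u}_t\mathbf{u}_t^\top\succeq(\bar{\bm\eta}_t-p\bm{\delta})(\cdot)^\top$. Restrict to the tail $\Hb_{k^*:\infty}$ over the first $\widetilde N_{\mathtt{eff}}$ steps, where the contraction is $\Theta(1)$ on tail coordinates. There $\bar{\bm\eta}_t-p\bm{\delta}\approx\bm\eta_0-p\bm{\delta}$ in the tail directions, which gives $\beta\|\wb_0-\wb_1^*-p\bm{\delta}\|^2_{\Hb_{k^*:\infty}}\widetilde D_{\mathtt{eff}}/\widetilde N_{\mathtt{eff}}$, mirroring the bias lower bound of \citet{wulast}.
\item \emph{Bernoulli variance.} On the tail, summing $\gamma^2\lambda_i^2\delta_i^2\cdot\lambda_i$ over roughly $\widetilde N_{\mathtt{eff}}$ steps of the first phase with $\Theta(1)$ decay yields $p(1-p)\gamma^2\widetilde N_{\mathtt{eff}}\|\bm{\delta}\|^2_{\Hb^3_{k^*:\infty}}$.
\end{itemize}

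The main obstacle is the fourth-moment term. The subtlety is that $\bar{\bm\eta}_t$ varies in time, so showing the tail component of $\bar{\bm\eta}_t-p\bm{\delta}$ stays comparable to $\bm\eta_0-p\bm{\delta}$ needs care. The natural identity is $\bar{\bm\eta}_t-p\bm{\delta}=\prod_{s<t}(\Ib-\gamma_s\Hb)(\bm\eta_0-p\bm{\delta})$. It holds exactly from the mean recursion and makes the tail argument clean: for $i>k^*$ and $t\le\widetilde N_{\mathtt{eff}}$ the product factor is at least $(1-\gamma\lambda_i)^{\widetilde N_{\mathtt{eff}}}\ge e^{-O(1)}$. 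The head part of $\widetilde D_{\mathtt{eff}}$ for this term is not needed, since the statement only uses the tail norm. Combining the three lower bounds with the mean part completes the proof.
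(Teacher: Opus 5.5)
\textbf{Genuine gap: the sampling model.} The theorem concerns exactly $M$ synthetic and $N$ real samples presented in uniformly random order. Appendix~\ref{ap:mixed-sampling} formalizes this as a schedule $\mathbf z$ uniform over $\mathcal{Z}_{N,p}$, and states explicitly that the analysis does \emph{not} assume independent source indicators. You replace this by i.i.d.\ $b_t\sim\mathrm{Bernoulli}(p)$, which is a different model: the number of synthetic samples becomes random. Under the fixed budget, $\EE[(z_s-p)(z_r-p)]=-p(1-p)/(T-1)$ for $s\neq r$, with $T=M+N$. So $z_t$ is correlated with $z_1,\dots,z_{t-1}$, hence with $\bm\zeta_t$, and your claim that all cross terms in the covariance recursion vanish by independence fails.

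These cross terms are not negligible. The schedule-driven part of the error is $\sum_u (z_u-p)\mathbf v_u$ with $\mathbf v_u=\gamma_u\prod_{j>u}(\Ib-\gamma_j\Hb)\Hb\bm{\delta}$, and its exact second moment is
\[
\EE\Big\|\sum_u (z_u-p)\mathbf v_u\Big\|_{\Hb}^2=\frac{Tp(1-p)}{T-1}\sum_u\|\mathbf v_u-\bar{\mathbf v}\|_{\Hb}^2 .
\]
This vanishes when the $\mathbf v_u$ are all equal, e.g.\ for constant stepsize on directions with $\gamma\lambda_i T\ll 1$. There the negative cross terms cancel the injected $p(1-p)\gamma_t^2\Hb\bm{\delta}\bm{\delta}^\top\Hb$ entirely. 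So your third bullet, summing $\gamma^2\lambda_i^3\delta_i^2$ over the first phase, is unjustified for the stated model.

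The missing idea is that the geometric decay makes early and late $\mathbf v_u$ differ. For $i>k^*$, first-phase indices satisfy $|(\mathbf v_u)_i|\ge e^{-4}\gamma\lambda_i|\delta_i|$, while indices in phases $\ell_t\ge 7$ satisfy $|(\mathbf v_r)_i|\le 2^{-7}\gamma\lambda_i|\delta_i|$; pairing these two groups in the identity above gives the $p(1-p)$ term. The same correlation also contaminates the fourth-moment part of your recursion, since the cross term involves $\EE[\bm\zeta_t(z_t-p)]\neq\mathbf 0$.

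The paper avoids a covariance recursion for the non-noise part altogether. It splits the centered residual as $\EE[\cdot\mid\mathcal Z]+\EE[\cdot\mid\mathcal X]+\text{remainder}$, which is orthogonal in $\|\cdot\|_{\Hb}$ using only independence of schedule and features. The $\mathcal X$-projection is a sum of uncorrelated feature-driven increments and yields the $\beta$ term via Assumption~\ref{assump:fourth-moment}(2) and Jensen. The $\mathcal Z$-projection yields the $p(1-p)$ term via the identity above and the pairing. Modulo the point below, your argument is a valid (and fairly clean) proof for i.i.d.\ Bernoulli sources, but that is not the theorem as stated.

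\textbf{Secondary error: the head part of $\widetilde D_{\mathtt{eff}}$.} For the $\beta$-term you restrict to the first $\widetilde N_{\mathtt{eff}}$ steps and assert that the head part of $\widetilde D_{\mathtt{eff}}$ ``is not needed''. It is needed: the statement multiplies the tail norm by the full $\widetilde D_{\mathtt{eff}}/\widetilde N_{\mathtt{eff}}$, including $k^*/\widetilde N_{\mathtt{eff}}$. That head contribution comes from injections in late phases. First-phase injections into directions with $\gamma\lambda_i\widetilde N_{\mathtt{eff}}\gg 1$ are damped by a factor $\exp(-\Omega(\gamma\lambda_i\widetilde N_{\mathtt{eff}}))$ during the remaining phases.

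The fix is easy. Because $\sum_t\gamma_t\le 2\gamma\widetilde N_{\mathtt{eff}}$, your identity $\bar{\bm\eta}_t-p\bm{\delta}=\prod_{s<t}(\Ib-\gamma_s\Hb)(\bm\eta_0-p\bm{\delta})$ gives
\[
\|\bar{\bm\eta}_t-p\bm{\delta}\|^2_{\Hb_{k^*:\infty}}\ge e^{-8}\|\wb_0-\wb_1^*-p\bm{\delta}\|^2_{\Hb_{k^*:\infty}}
\]
for every $t\le T$, not just $t\le \widetilde N_{\mathtt{eff}}$. You can therefore sum over all steps and invoke the full variance-sum lower bound, as the paper does.
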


\textbf{Interpretation.} The lower bound complements Theorem~\ref{thm:generalization_error} by showing that the excess risk cannot vanish in the large-sample regime. In particular, its first term captures the mean optimization error, jointly determined by the residual initialization error and the drift toward the synthetic model. At finite sample sizes, the spectral filter determines which directions remain dominated by the initialization and which have accumulated synthetic-induced bias. The remaining terms further quantify label-noise variance and mismatch-induced fluctuations, providing a more complete finite-sample characterization. Comparing with the upper bound, we observe that both bounds contain a term governed by the quantity $p\big(\Ib-\prod_{t=1}^{M+N}(\Ib-\gamma_t\Hb)\big)\bm{\delta},$ which characterizes the drift of the optimization trajectory toward the synthetic model. While the upper bound isolates this effect as the drift error, the lower bound shows that such a term is unavoidable as the initialization is progressively forgotten. Crucially, when the total sample size scales to infinity with a fixed synthetic proportion $p$, and under the condition $\widetilde{D}_{\mathtt{eff}}=o(\widetilde{N}_{\mathtt{eff}})$, all remaining terms vanish, and both bounds converge to the same non-zero limit of order $p^2 \|\bm{\delta}\|_{\Hb}^2.$ This establishes that the excess risk admits an non-vanishing floor, which is entirely induced by the mismatch between real and synthetic data.

This directly leads to the following corollary, which formalizes the phenomenon of \textit{strong model collapse}\citep{dohmatob2024strong}.
\begin{corollary}[Strong model collapse]\label{cor:strong-model-collapse}
Consider last iterate SGD with stepsize scheme \eqref{eq:geometry-tail-decay-lr}.
Suppose Assumptions~\ref{assump:second-moment}, ~\ref{assump:fourth-moment} and ~\ref{assump:noise} hold. Suppose $\gamma < 1 / 4\alpha\tr(\Hb)$ and $\widetilde{D}_{\mathtt{eff}}=o(\widetilde{N}_{\mathtt{eff}})$. As the total sample size $M+N$ scales to infinity with a fixed synthetic proportion $p=M/(M+N)$, we have
\[
\lim_{M+N\xrightarrow{}\infty}\EE[\mathcal{E}_1(\wb_{M+N})] \eqsim p^2\|\bm{\delta}\|_{\Hb}^2.
\]
\end{corollary}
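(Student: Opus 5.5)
We derive the corollary by taking limits in Theorem~\ref{thm:generalization_error} and Theorem~\ref{thm:tail-decay-lower-bound} under the scaling $M+N\to\infty$ with $p$ fixed. The only fact about the step sizes we need is the following. Under \eqref{eq:geometry-tail-decay-lr} with $T=M+N$, the first phase alone runs for $T/\log T$ steps at step size $\gamma$. Hence, for every eigenvalue $\lambda_i$,
\[
\prod_{t=1}^{M+N}(1-\gamma_t\lambda_i)\le (1-\gamma\lambda_i)^{\widetilde{N}_{\mathtt{eff}}}\to 0 .
\]
Since $\gamma\lambda_i<1$, each factor lies in $[0,1]$. So the filter $\Pi:=\prod_t(\Ib-\gamma_t\Hb)$ converges to $0$ coordinatewise in the eigenbasis, with entries bounded by $1$.

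\textbf{Upper bound.} We take each term of Theorem~\ref{thm:generalization_error} in turn.

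The first bias term is $\sum_i \lambda_i(\Pi)_{ii}^2 (w_{0,i}-w^*_{1,i})^2$. It tends to $0$ by dominated convergence, because $\|\wb_0-\wb_1^*\|_{\Hb}<\infty$ by Assumption~\ref{assump:second-moment}.

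For the second bias term, we use
\[
\|\wb_0-\wb_1^*\|^2_{\Ib_{0:k^*}/(\gamma\widetilde N_{\mathtt{eff}})}\le \|\wb_0-\wb_1^*\|_{\Hb_{0:k^*}}^2 ,
\]
which holds because $\lambda_i\ge 1/(\gamma\widetilde N_{\mathtt{eff}})$ for $i\le k^*$. Consequently the weighted norm in that term is at most $\|\wb_0-\wb_1^*\|_{\Hb}^2$, which is bounded. Multiplied by $\widetilde D_{\mathtt{eff}}/\widetilde N_{\mathtt{eff}}\to0$, this term vanishes.

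The variance and fluctuation terms are constants ($\bar\sigma^2$ and $\alpha p\|\bm\delta\|_{\Hb}^2$) times $\widetilde D_{\mathtt{eff}}/\widetilde N_{\mathtt{eff}}$, so both vanish.

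The drift term satisfies $\|(\Ib-\Pi)\bm\delta\|_{\Hb}^2\le\|\bm\delta\|_{\Hb}^2$, since $0\preceq\Ib-\Pi\preceq\Ib$. Together this gives $\limsup \EE[\mathcal E_1]\lesssim p^2\|\bm\delta\|_{\Hb}^2$.

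\textbf{Lower bound.} The condition $\gamma<1/(4\alpha\tr(\Hb))$ implies $\gamma<1/\lambda_1$, because $\alpha\ge1$ (take $\Ab$ rank one) and $\lambda_1\le\tr(\Hb)$. Also $M+N\ge500$ holds eventually, so Theorem~\ref{thm:tail-decay-lower-bound} applies.

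Drop the nonnegative extra terms and keep only the non-vanishing term. Write it as $\|\Pi(\wb_0-\wb_1^*)+p(\Ib-\Pi)\bm\delta\|_{\Hb}^2$ and expand in the eigenbasis, coordinate by coordinate. By the filter fact, each coordinate converges to $p^2\lambda_i\delta_i^2$. Each coordinate is also dominated by
\[
2\lambda_i\big[(w_{0,i}-w^*_{1,i})^2+p^2\delta_i^2\big],
\]
which is summable. Dominated convergence (or simply the triangle inequality together with the upper-bound limits above) then gives a limit of exactly $p^2\|\bm\delta\|_{\Hb}^2$. Hence $\liminf\EE[\mathcal E_1]\gtrsim p^2\|\bm\delta\|_{\Hb}^2$. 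Combined with the upper bound, this yields the stated $\eqsim$.

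\textbf{Main obstacle.} The delicate point is interchanging the limit with the infinite eigen-sum when $\cH$ is infinite-dimensional. Dominated convergence handles this, but it relies on the finiteness of $\|\wb_0-\wb_1^*\|_{\Hb}$ and $\|\bm\delta\|_{\Hb}$ granted by Assumption~\ref{assump:second-moment}. A secondary subtlety is that $k^*$ and the weighting in the second bias term depend on $N$. The bound $\|\cdot\|_{\Ib_{0:k^*}/(\gamma\widetilde N_{\mathtt{eff}})}\le\|\cdot\|_{\Hb_{0:k^*}}$ neutralizes that dependence uniformly in $N$.
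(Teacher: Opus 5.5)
Your proposal is correct and follows essentially the same route as the paper. Both pass to the limit term by term in Theorems~\ref{thm:generalization_error} and~\ref{thm:tail-decay-lower-bound}, using dominated convergence to show that the spectral filter $\prod_t(\mathbf{I}-\gamma_t\mathbf{H})$ vanishes in $\mathbf{H}$-norm. You also make explicit two points the paper's proof leaves implicit: that $\alpha\ge 1$ gives $\gamma<1/\lambda_1$, so the lower bound theorem applies, and that the $k^*$-dependent weighted norm in the second bias term is uniformly bounded by $\|\mathbf{w}_0-\mathbf{w}_1^*\|_{\mathbf{H}}^2$.
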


Corollary~\ref{cor:strong-model-collapse} shows that the strong model collapse phenomenon persists under SGD with decaying stepsizes. In particular, even when the total sample size $M+N$ tends to infinity, as long as the proportion of synthetic data $p$ remains non-vanishing, the excess risk does not converge to zero, but instead admits a strictly positive limit of order $p^2 \|\bm{\delta}\|_{\Hb}^2$. This demonstrates that strong model collapse is not merely a property of batch estimators or scaling-law limits, but also arises intrinsically from the optimization dynamics of SGD. Here the condition $\widetilde{D}_{\mathtt{eff}}
=o(\widetilde{N}_{\mathtt{eff}})$ requires the effective spectral dimension to grow sublinearly with the effective sample size, and holds for many standard eigendecay regimes; see \citep{zou2023benign,wulast}.

Our result should be compared with \citet{dohmatob2024strong}, who established strong model collapse for regression models trained on mixtures of real and synthetic data. In their classical linear estimator setting, the error floor is of order $p^2 \EE[\|\wb_2^*-\wb_1^*\|_\Hb^2].$ Hence, their expression can be viewed as taking an additional expectation over the model mismatch, while ours characterizes the floor for a fixed $\bm{\delta}$. In this sense, our result is fully consistent with the estimator-level picture of \citet{dohmatob2024strong}, but recast in stochastic optimization dynamics. However, a key distinction is that their analysis focuses on classical batch estimators (ridge/OLS) under a proportional high-dimensional asymptotic regime, and the resulting error floor is an estimator-level asymptotic characterization. In contrast, we analyze SGD under finite-sample training and establish non-asymptotic upper and lower bounds. Our corollary is also asymptotic, but it follows from these finite-sample bounds, showing that the same non-vanishing floor arises from the optimization dynamics of SGD.

More importantly, we show that \textit{strong model collapse} is fundamentally induced by sample mixing in the training protocol, rather than an artifact of the optimization scheme. In particular, the same phenomenon arises for constant stepsize SGD with iterate averaging, where $\gamma_t=\gamma$ and the output is $\bar\wb_{M+N}=\frac{1}{M+N}\sum_{i=0}^{M+N-1}\wb_i$. In this case, we have
$
\lim_{M+N\xrightarrow{}\infty}\EE[\mathcal{E}_1(\bar\wb_{M+N})] \eqsim p^2\|\bm{\delta}\|_{\Hb}^2,
$
demonstrating that the same non-vanishing error floor persists; see Appendix~\ref{ap:constant-sgd} for details.

This naturally raises an important question:
\[
\textit{\textbf{Q:} Is strong model collapse inevitable with fixed synthetic data proportion?}
\]
In the next subsection, we answer this question by showing that it can in fact be avoided. Specifically, we demonstrate that a two-stage training protocol eliminates the non-vanishing error floor, leading to vanishing excess risk, highlighting the importance of how synthetic and real data are scheduled.

\subsection{Two-stage Training}\label{sec:two-stage-train}
In this subsection, we study a two-stage training protocol, where the model is first trained on synthetic data and then further trained on real data. We show that, in contrast to the mixed-training setting, this training scheme avoids strong model collapse and leads to vanishing excess risk under mild conditions. The detailed proof is deferred to Appendix~\ref{ap:two-stage}.

\begin{theorem}[Two-stage training]\label{thm:two-stage}
Given $M$ synthetic samples and $N$ real samples. Consider last iterate SGD with stepsize scheme \eqref{eq:geometry-tail-decay-lr}.
Suppose Assumptions \ref{assump:second-moment},~\ref{assump:fourth-moment} and~\ref{assump:noise} hold.
Let $\Neff := N / \log N$ and $\mathcal{E}_{2}(\wb) := \frac{1}{2}\EE_{(\xb,y)\sim\cP_2}[(\langle\xb,\wb\rangle - y)^2] - \frac{1}{2}\EE_{(\xb,y)\sim\cP_2}[(\langle\xb,\wb_2^*\rangle - y)^2]$ denote the excess risk on the synthetic distribution.
Suppose $\gamma < 1/(4\alpha\tr(\Hb))$. Then we have 
\begin{align*}
\EE[\mathcal{E}_1(\wb_{M+N})]
&\lesssim
\underbrace{\vphantom{\prod_{t=1}^{N} \frac{\DIM}{\Neff}} \EE[\mathcal{E}_2(\wb_{M})]
+
\Big\|\prod_{t=1}^{N}(\mathbf{I}-\gamma_t \mathbf{H})\bm{\delta}\Big\|_{\Hb}^2}_{\mathsf{EffectiveBias}}
+
\underbrace{\vphantom{\prod_{t=1}^{N} \frac{\DIM}{\Neff}} \Big(
\alpha\,\EE[\mathcal{E}_2(\wb_{M})]
+
\alpha\|\bm{\delta}\|_{\Hb}^2
+
\sigma_1^2
\Big)\frac{\DIM}{\Neff}}_{\mathsf{EffectiveVariance}}.
\end{align*}
where  $k^* = \max \{k: \lambda_k \ge \frac{1}{\gamma \Neff}\}$ and $\DIM := k^* + \gamma^2 \Neff^2 \sum_{i>k^*}\lambda_i^2.$
\end{theorem}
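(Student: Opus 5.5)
The plan is to treat the second stage as ordinary single-source SGD on $\cP_1$ with $N$ samples, started from the random initialization $\wb_M$ produced by the first stage. I would then condition on $\wb_M$ and apply the standard last-iterate bias--variance bound for geometrically decaying stepsizes \citep{wu2022power}. This bound is exactly Theorem~\ref{thm:generalization_error} specialized to $p=0$, $T=N$, so the fluctuation and drift terms vanish and $\widetilde N_{\mathtt{eff}}$ becomes $\Neff$. Conditionally on $\wb_M$, since $\gamma<1/(4\alpha\tr(\Hb))$, this gives
\begin{equation*}
\EE[\mathcal{E}_1(\wb_{M+N})\mid \wb_M]\lesssim \Big\|\prod_{t=1}^N(\Ib-\gamma_t\Hb)(\wb_M-\wb_1^*)\Big\|_{\Hb}^2+\alpha\|\wb_M-\wb_1^*\|^2_{\frac{\Ib_{0:k^*}}{\gamma\Neff}+\Hb_{k^*:\infty}}\frac{\DIM}{\Neff}+\sigma_1^2\frac{\DIM}{\Neff}.
\end{equation*}
The remaining work is to take the outer expectation over the first stage and rewrite everything in terms of $\EE[\mathcal{E}_2(\wb_M)]$ and $\bm\delta$.

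For this I would decompose $\wb_M-\wb_1^*=(\wb_M-\wb_2^*)+\bm\delta$ and use $\|a+b\|^2\le 2\|a\|^2+2\|b\|^2$ in each weighted norm.

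\textbf{Bias term.} The contraction $\Ib-\gamma_t\Hb$ commutes with $\Hb$ and has norm at most $1$. Hence
\begin{equation*}
\Big\|\prod_{t=1}^N(\Ib-\gamma_t\Hb)(\wb_M-\wb_2^*)\Big\|_{\Hb}^2\le\|\wb_M-\wb_2^*\|_{\Hb}^2=2\mathcal{E}_2(\wb_M),
\end{equation*}
which uses that the synthetic model is well specified. The $\bm\delta$ part is kept unchanged as $\|\prod_{t=1}^N(\Ib-\gamma_t\Hb)\bm\delta\|_{\Hb}^2$. Together these give $\mathsf{EffectiveBias}$.

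\textbf{Variance-like term.} I need $\|\vb\|^2_{\Ib_{0:k^*}/(\gamma\Neff)+\Hb_{k^*:\infty}}\lesssim\|\vb\|_{\Hb}^2$ for any $\vb$. This holds because $\lambda_i\ge 1/(\gamma\Neff)$ for $i\le k^*$, so $\Ib_{0:k^*}/(\gamma\Neff)\preceq\Hb_{0:k^*}$. Applying it to both pieces of the decomposition bounds this term by $\alpha\big(\mathcal{E}_2(\wb_M)+\|\bm\delta\|_{\Hb}^2\big)\DIM/\Neff$. Adding $\sigma_1^2\DIM/\Neff$ and taking expectations gives $\mathsf{EffectiveVariance}$.

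The main obstacle is justifying the conditional use of the single-source bound with a random, data-dependent initialization. The second-stage samples must be independent of $\wb_M$, and the bound must be linear in the initial-error outer product $(\wb_M-\wb_1^*)(\wb_M-\wb_1^*)^\top$, so that the expectation passes inside. In the operator formulation of \citet{wu2022power}, the bias iterate $\Bb_t=\EE[\eta_t\eta_t^\top]$ evolves linearly from $\Bb_0$. I would therefore run their bias argument with $\Bb_0=\EE[(\wb_M-\wb_1^*)(\wb_M-\wb_1^*)^\top]$ directly. The variance argument, which only involves $\sigma_1^2$, is unaffected by the initialization. One must also check that the fourth-moment step uses only $\alpha\tr(\Hb\Ab)\Hb$ with $\Ab$ PSD, which remains valid for this averaged $\Bb_0$. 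I do not expect other difficulties, and the first-stage error is deliberately left as $\EE[\mathcal{E}_2(\wb_M)]$ rather than expanded.
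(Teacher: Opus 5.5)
Your proposal is correct and follows essentially the same route as the paper. The paper also conditions on the first stage and applies the real-only single-source last-iterate bound with initialization $\wb_M$. It then splits $\wb_M-\wb_1^*=(\wb_M-\wb_2^*)+\bm{\delta}$, uses that $\prod_{t}(\Ib-\gamma_t\Hb)$ is a contraction in $\|\cdot\|_{\Hb}$ so the first piece is bounded by $2\mathcal{E}_2(\wb_M)$, and uses $\frac{\Ib_{0:k^*}}{\gamma\Neff}+\Hb_{k^*:\infty}\preceq\Hb$. Your remark on the independence of the second-stage samples from $\wb_M$ is exactly what justifies the conditional application, which the paper states only briefly.
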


As a direct consequence of Theorem \ref{thm:two-stage}, by considering the asymptotic regime where the sample size goes to infinity, we can establish that strong model collapse does not occur:
\begin{corollary}[No strong model collapse]
\label{cor:no-collapse-two-stage}
Under the assumptions of Theorem~\ref{thm:two-stage}. As the total sample size $M+N$ scales to infinity with a fixed synthetic proportion $p=M/(M+N)$, suppose that $\DIM=o(\Neff)$, then $\EE[\mathcal{E}_2(\wb_{M})]\xrightarrow{M\xrightarrow{}\infty}0,$ and we have
\[
\lim_{M+N\to\infty}\EE[\mathcal{E}_1(\wb_{M+N})]=0.
\]
\end{corollary}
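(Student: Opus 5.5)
The plan is to apply Theorem~\ref{thm:two-stage} and check that each term on its right-hand side tends to zero as $M+N\to\infty$ with $p\in(0,1)$ fixed, so that $M=p(M+N)\to\infty$ and $N=(1-p)(M+N)\to\infty$. (If $p=0$ the statement reduces to single-source SGD; if $p=1$ there are no real samples and the claim is vacuous, so I take $0<p<1$.) The argument has two parts: first show $\EE[\mathcal{E}_2(\wb_M)]\to 0$ as $M\to\infty$, then bound the remaining terms of the two-stage bound.

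\textbf{Step 1: the synthetic stage.} The first stage is plain one-pass SGD on the single source $\cP_2$, which is well specified with parameter $\wb_2^*$ and noise level $\sigma_2^2$. I would apply Theorem~\ref{thm:generalization_error} with $N=0$, so $p=1$, and with the roles of $\wb_1^*$ and $\wb_2^*$ exchanged. Then $\bm{\delta}$ relative to the training source is zero, so the fluctuation and drift terms vanish, and we get
\[
\EE[\mathcal{E}_2(\wb_M)]\lesssim \Big\|\prod_{t=1}^{M}(\Ib-\gamma_t\Hb)(\wb_0-\wb_2^*)\Big\|_{\Hb}^2+\Big(\alpha\|\wb_0-\wb_2^*\|^2_{\frac{\Ib_{0:k^*}}{\gamma \widetilde N}+\Hb_{k^*:\infty}}+\sigma_2^2\Big)\frac{\widetilde D}{\widetilde N},
\]
where $\widetilde N=M/\log M$ and $\widetilde D$ is the effective dimension computed at $\widetilde N$.

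\begin{itemize}
\item \emph{The $\widetilde D/\widetilde N$ term.} I would argue that $\widetilde D/\widetilde N\to0$. The hypothesis $\DIM=o(\Neff)$ is stated at the real-stage scale $\Neff=N/\log N$. Since $M\eqsim N$ at fixed $p$, the same spectral condition transfers; in the standard power-law regimes $\widetilde D/\widetilde N$ is a fixed function of the sample size, so this is routine. The coefficient $\|\wb_0-\wb_2^*\|^2_{\Ib_{0:k^*}}/(\gamma\widetilde N)$ multiplying it is at most $\|\wb_0-\wb_2^*\|_{\Hb}^2$, because $\lambda_i\ge 1/(\gamma\widetilde N)$ for $i\le k^*$. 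This is finite by Assumption~\ref{assump:second-moment}.
\item \emph{The bias term.} The factor $\prod_t(\Ib-\gamma_t\Hb)$ acts coordinatewise as $\prod_t(1-\gamma_t\lambda_i)\le \exp(-\lambda_i\sum_t\gamma_t)$. Since $\sum_t\gamma_t\gtrsim \gamma M/\log M\to\infty$, each coordinate contribution $\lambda_i(w_{0,i}-w^*_{2,i})^2$ is multiplied by a factor tending to $0$ and bounded by $1$. Dominated convergence, using $\|\wb_0-\wb_2^*\|_{\Hb}<\infty$, then gives convergence to $0$. This establishes the first displayed claim, $\EE[\mathcal{E}_2(\wb_M)]\to0$.
\end{itemize}

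\textbf{Step 2: the real stage.} In Theorem~\ref{thm:two-stage}, the term $\|\prod_{t=1}^N(\Ib-\gamma_t\Hb)\bm{\delta}\|_{\Hb}^2\to0$ by the same dominated-convergence argument, using $\|\bm{\delta}\|_{\Hb}<\infty$ and $\sum_{t\le N}\gamma_t\to\infty$. The effective-variance factor $\alpha\EE[\mathcal{E}_2(\wb_M)]+\alpha\|\bm{\delta}\|_{\Hb}^2+\sigma_1^2$ is bounded uniformly, since its first term converges and the others are constants, while $\DIM/\Neff\to0$ by hypothesis. Combining these with Step 1, the right-hand side of Theorem~\ref{thm:two-stage} tends to $0$. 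The excess risk is nonnegative, so the limit equals $0$.

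\textbf{Main obstacle.} The delicate point is transferring the hypothesis $\DIM=o(\Neff)$ from the real-stage scale to the synthetic-stage quantity $\widetilde D/\widetilde N$. Relatedly, one must confirm that the initialization-bias terms vanish without any rate assumption, which is why I use dominated convergence rather than explicit rates. My first attempt would be a monotonicity argument: the variance-type bound $\widetilde D(n)/\widetilde N(n)$ equals
\[
\sum_i \min\{1,(\gamma\widetilde N\lambda_i)^2\}/\widetilde N,
\]
up to constants, and this can be compared across the scales $M\eqsim N$ by a constant factor. If that fails, I would interpret the hypothesis as holding along the sample sizes of both stages, which is how the corollary is phrased.
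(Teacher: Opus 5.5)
Your proof is correct and follows the route the paper intends. The paper gives no separate proof and calls the corollary a direct consequence of Theorem~\ref{thm:two-stage}. As in your argument, the stage-one error vanishes by the single-source SGD bound, and the filtered mismatch term $\|\prod_{t=1}^{N}(\Ib-\gamma_t\Hb)\bm{\delta}\|_{\Hb}^2$ vanishes by the same dominated-convergence argument used for Corollary~\ref{cor:strong-model-collapse}. One small fix: for stage one, cite Theorem~\ref{thm:generalization_error_real_only} applied to $\cP_2$ over horizon $M$ (zero mismatch), because ``$p=1$ with swapped roles'' taken literally trains on $\cP_1$. The transfer you flag as the main obstacle is harmless: $\DIM/\Neff=\sum_i\min\{1/\Neff,\gamma^2\Neff\lambda_i^2\}$ has summands bounded by $\gamma\lambda_i$ that tend to zero, so it vanishes at every scale whenever $\tr(\Hb)<\infty$, and your constant-factor comparison between the $M$- and $N$-scales also works.
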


Corollary~\ref{cor:no-collapse-two-stage} shows that, in sharp contrast to the mixed-training setting, strong model collapse can be avoided under a two-stage training protocol. The key distinction is that synthetic data no longer appears throughout the entire optimization trajectory. Instead, the synthetic stage only determines the initialization of the second stage, after which all subsequent updates are performed on real data. As a result, the synthetic model does not induce a persistent drift in the trajectory. Broadly, this suggests that successfully leveraging synthetic data may hinge crucially on the training curriculum, as the specific ordering and scheduling of data may be just as important as the data itself. 

Further, Theorem~\ref{thm:two-stage} sheds light on the mechanism through which synthetic data can be beneficial. The main benefit does \emph{not} come from reducing variance: the $\mathsf{EffectiveVariance}$ term still scales as $\DIM/\Neff$, which is the same order as in the standard single-source SGD bound trained on real data only \citep{wu2022power}. Instead, the main benefit of synthetic data lies in reducing the \emph{bias}. First, the synthetic training phase error $\EE[\mathcal{E}_2(\wb_M)]$ can be driven to zero as $M$ increases. Second, the residual mismatch term
$\|\prod_{t=1}^{N}(\mathbf{I}-\gamma_t \mathbf{H})\bm{\delta}\|_{\Hb}^2$
replaces the real-only initialization term
$\|\prod_{t=1}^{N}(\mathbf{I}-\gamma_t \mathbf{H})(\wb_0-\wb_1^*)\|_{\Hb}^2.$
Thus, two-stage training effectively replaces random initialization error by the source mismatch $\bm{\delta}=\wb_2^*-\wb_1^*$. When the synthetic data is of high quality so that $\|\bm{\delta}\|_{\Hb}^2$ is small, first-stage places the model substantially closer to the target $\wb_1^*$, yielding a smaller bias after second-stage training.

Consequently, synthetic data primarily acts as a \emph{bias-reduction mechanism}: it improves the initialization for the real-data optimization stage, thereby accelerating convergence and reducing the optimization error when real data is limited. This highlights that the effectiveness of synthetic data critically depends on its quality (i.e., the magnitude of $\bm{\delta}$), as well as on the training protocol used to incorporate it. In the next section, we make this intuition more precise by analyzing a random sketch model under a power-law eigendecay assumption on the data covariance, which allows us to characterize more explicitly when and how synthetic data improves performance.

\textbf{Learning-rate restart.} Throughout the main text, including the scaling-law analysis below, we restart the learning-rate schedule to treat the real-data phase as a separate fine-tuning stage, with a larger cumulative stepsize that accelerates forgetting of the synthetic initialization. Without restart, the mismatch filter becomes
$\prod_{t=M+1}^{M+N}(\Ib-\gamma_t\Hb),$
which generally forgets $\bm{\delta}$ more slowly. Nevertheless, restart does not affect the qualitative behavior: for fixed
$p\in(0,1)$, the cumulative real-stage stepsize diverges and the excess risk still has no non-vanishing floor; see detailed discussion in Appendix~\ref{ap:two-stage-no-restart}.

\subsection{Random Sketch Model \& Scaling laws}\label{sec:scaling-law}

The results in the previous subsections establish a sharp qualitative distinction between mixed training and two-stage training. However, these results do not yet reveal how the effect of synthetic data interacts with model size. To study this question and establish scaling laws for the two training protocols, we adopt the random sketch model with Gaussian sketch matrix~\citep{linscaling,lin2025improved,zhang2026scaling}. Detailed proof of this subsection is deferred to Appendix~\ref{ap:random-sketch}.

Concretely, following \citet{linscaling}, we introduce a Gaussian random sketch operator $\Sb:\cH\to\mathbb{R}^D$, where entries of $\Sb$ are independently sampled from $N(0,1/D)$, and replace the original feature vector $\xb$ in our problem setup by its $D$-dimensional sketch $\Sb\xb$. We then train a linear predictor with $D$ trainable parameters in the sketched space,
\[
f_{\vb}(\xb)=\langle \vb,\Sb\xb\rangle,\qquad \vb\in\mathbb{R}^D,
\]
starting from zero initialization $\vb_0 = \mathbf{0}$ and using the same two-source data model and training protocols introduced in Section~\ref{sec:setup}. Accordingly, the population risk on the real distribution becomes
\[
\R_D(\vb):=\frac12\EE_{(\xb,y)\sim\cP_1}\big(\langle \vb,\Sb\xb\rangle-y\big)^2.
\]
where the expectation is conditioned on $\Sb$. The corresponding optimal parameters are
\[
\vb_\ell^*:=(\Sb\Hb\Sb^\top)^{-1}\Sb\Hb\wb_\ell^*,\qquad \ell\in\{1,2\}.
\]
Thus, the sketch dimension $D$ plays the role of model size, allowing us to study how model scaling interacts with source mismatch and synthetic data. As in \citet{linscaling}, we can decompose the risk into irreducible, approximation, and excess terms:
\[
\R_D(\vb_{M+N})
=
\underbrace{\min \R(\cdot)}_{\mathsf{Irreducible}}
+
\underbrace{\min \R_D(\cdot)-\min \R(\cdot)}_{\mathsf{Approx}}
+
\underbrace{\R_D(\vb_{M+N})-\min \R_D(\cdot)}_{\mathsf{Excess}}.
\]
Following the prior literature \citep{linscaling,lin2025improved,li2025functional,zhang2026scaling}, we further make the following additional assumptions to establish scaling law upper bounds for the two training protocols and to explicitly characterize the effects of synthetic data and model capacity.

\begin{assumption}[Distributional conditions]\label{ass:distribution-condition}
We assume $\xb \sim \mathcal{N}(0,\Hb)$, and the random vectors $\wb_1^*$ and $\bm{\delta}:=\wb_2^*-\wb_1^*$ satisfy $\EE[\wb_1^*(\wb_1^*)^\top] = \mathbf I$, $\EE[\bm{\delta}\bm{\delta}^\top] = \bm{\Sigma}$, and $\EE[\wb_1^*\bm{\delta}^\top] = \mathbf 0$.
\end{assumption}

\begin{assumption}[Power-law spectrum and source condition]\label{ass:power-law-source}
Let $(\lambda_i,\vb_i)_{i\ge1}$ be the eigenvalue-eigenvector pairs of $\Hb$ with $(\lambda_i)_{i\ge1}$ in non-increasing order. We assume $\lambda_i\eqsim i^{-a}$ for some $a>1$, and for the mismatch $\bm{\delta}$ we assume $\EE[\langle \vb_i,\bm{\delta}\rangle\langle \vb_j,\bm{\delta}\rangle]=0$ for $i\neq j$ and $\EE[\lambda_i\langle \vb_i,\bm{\delta}\rangle^2]\eqsim i^{-b}$ for some $b>1$.
\end{assumption}
The exponent $b$ in Assumption~\ref{ass:power-law-source} characterizes how hard the mismatch $\bm{\delta}$ is relative to the data spectrum $\Hb$. A larger $b$ implies a simpler task, as the mismatch energy decays more rapidly along $\Hb$'s eigen-directions.

\begin{theorem}[Scaling law for mixed training]\label{thm:mix-scaling}
Given $M$ synthetic and $N$ real samples. Consider last iterate SGD with stepsize scheme \eqref{eq:geometry-tail-decay-lr}. Suppose that Assumptions~\ref{assump:second-moment},~\ref{assump:fourth-moment},~\ref{assump:noise},~\ref{ass:distribution-condition} and~\ref{ass:power-law-source} hold. Let $\widetilde{N}_{\mathtt{eff}}=(M+N)/\log (M+N)$. Suppose $\gamma<1/4\alpha\tr(\Sb\Hb\Sb^\top)$, $\sigma_1^2,\sigma_2^2\eqsim1$ and choose $\gamma \eqsim 1$, then with probability at least $1-e^{-\Omega(D)}$ over the randomness of the sketch matrix $\Sb$, we have
\begin{align*}
    \EE\R_D(\vb_{M+N})-\R(\wb_1^{*})
    &\lesssim  
    \underbrace{
        \frac{1}{D^{a-1}}+ \frac{1}{(\widetilde{N}_{\mathtt{eff}})^{1-1/a}}
    }_{\mathsf{Approx+Bias}} + 
    \underbrace{
        \big(\frac{M}{M+N}\big)^2(1-\frac{1}{D^{b-1}}) 
        \vphantom{\frac{1}{D^{a-1}}+ \frac{1}{(\Neff+\Meff)^{1-1/a}}}
    }_{
        \mathsf{Drift\ Floor} \vphantom{\mathsf{Approx+Bias}}
    }
\end{align*}
Moreover, if the sketched problem is optimization-saturated that
$\gamma\widetilde N_{\mathtt{eff}}\gtrsim D^a$, then the preceding
upper bound is sharp up to constants:
\begin{align*}
    \EE\R_D(\vb_{M+N})-\R(\wb_1^{*})
    \eqsim  D^{1-a}+ \widetilde N_{\mathtt{eff}}^{\,1/a-1}+
    p^2(1-D^{1-b}).
\end{align*}
\end{theorem}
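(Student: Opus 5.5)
The plan is to condition on the sketch $\Sb$ and apply Theorem~\ref{thm:generalization_error} and Theorem~\ref{thm:tail-decay-lower-bound} to the sketched problem. In that problem the features are $\Sb\xb$ with covariance $\Sb\Hb\Sb^\top$, the targets are $\vb_1^*,\vb_2^*$, and the mismatch is $\bm{\delta}_D:=\vb_2^*-\vb_1^*=(\Sb\Hb\Sb^\top)^{-1}\Sb\Hb\bm{\delta}$. Because $\xb$ is Gaussian, both fourth-moment conditions hold with absolute constants.

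The sketched labels are not exactly well-specified. The residual $\langle\xb,\wb_\ell^*\rangle-\langle\vb_\ell^*,\Sb\xb\rangle$ is Gaussian and uncorrelated with $\Sb\xb$, hence independent of it, so it can be absorbed into the noise. The effective noise levels become $\sigma_\ell^2+\mathsf{Approx}_\ell\eqsim1$, and Assumption~\ref{assump:noise} holds on the sketched space. The total risk then splits into $\mathsf{Approx}$ plus the excess risk of the sketched problem. For $\mathsf{Approx}$, I take the expectation over $\wb_1^*$ with $\EE[\wb_1^*\wb_1^{*\top}]=\Ib$. This gives $\tr(\Hb-\Hb\Sb^\top(\Sb\Hb\Sb^\top)^{-1}\Sb\Hb)$, which the sketch concentration lemmas of \citet{linscaling} bound by $D^{1-a}$ with probability $1-e^{-\Omega(D)}$, and matching lower bounds also hold there.

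Next I bound the terms of Theorem~\ref{thm:generalization_error} one at a time, using the high-probability spectral facts from \citet{linscaling}. The first fact is that the eigenvalues $\mu_j$ of $\Sb\Hb\Sb^\top$ satisfy $\mu_j\eqsim j^{-a}$ for $j\le D$. This gives $\widetilde{D}_{\mathtt{eff}}\lesssim\min\{D,(\gamma\widetilde{N}_{\mathtt{eff}})^{1/a}\}$, so the variance term is $\lesssim\widetilde{N}_{\mathtt{eff}}^{1/a-1}$.

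\begin{itemize}
\item \textbf{Bias.} With $\vb_0=\mathbf{0}$, taking the expectation over $\wb_1^*$ reduces the bias to the single-source computation of \citet{linscaling}, which gives $\widetilde{N}_{\mathtt{eff}}^{1/a-1}$.
\item \textbf{Fluctuation.} This term is $p\,\EE\|\bm{\delta}_D\|^2_{\Sb\Hb\Sb^\top}\cdot\widetilde{D}_{\mathtt{eff}}/\widetilde{N}_{\mathtt{eff}}$. Since $\EE\|\bm{\delta}_D\|^2_{\Sb\Hb\Sb^\top}\le\EE\|\bm{\delta}\|_{\Hb}^2\eqsim\sum_i i^{-b}\lesssim1$, it is absorbed into the $\widetilde{N}_{\mathtt{eff}}^{1/a-1}$ term.
\item \textbf{Drift.} I bound the filter crudely, $\|\Ib-\prod(\Ib-\gamma_t\Sb\Hb\Sb^\top)\|\le1$. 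This leaves $p^2\EE\|\bm{\delta}_D\|^2_{\Sb\Hb\Sb^\top}=p^2\,\EE\,\bm{\delta}^\top\Hb\Sb^\top(\Sb\Hb\Sb^\top)^{-1}\Sb\Hb\bm{\delta}$, which is the part of $\|\bm{\delta}\|_{\Hb}^2$ captured by the sketch. This projected energy equals $\sum_i i^{-b}$ minus an approximation-type residual for $\bm{\delta}$. Repeating the approximation argument with the source exponent $b$ shows the residual is $\eqsim D^{1-b}$, so the captured energy is $\eqsim1-D^{1-b}$ up to constants. I would state this as $(1-D^{1-b})$ after normalizing $\sum_i i^{-b}$.
\end{itemize}

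For the sharp statement I use Theorem~\ref{thm:tail-decay-lower-bound} on the sketched problem, together with the approximation lower bound. The key point is that when $\gamma\widetilde{N}_{\mathtt{eff}}\gtrsim D^a$, every sketched eigenvalue satisfies $\mu_j\gtrsim D^{-a}\gtrsim1/(\gamma\widetilde{N}_{\mathtt{eff}})$. The filter $\prod_t(\Ib-\gamma_t\Sb\Hb\Sb^\top)$ then has norm bounded by a constant $c<1$, and in fact it is exponentially small. This makes $\Ib-\prod$ comparable to $\Ib$, so the non-vanishing term in the lower bound is $\gtrsim p^2\EE\|\bm{\delta}_D\|^2_{\Sb\Hb\Sb^\top}-\text{(initialization residual)}$.

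The initialization and drift contributions need to be separated. I expand the square and use $\EE[\wb_1^*\bm{\delta}^\top]=0$, which kills the cross term in expectation. What remains is the sum of the initialization residual and the drift lower bound $p^2(1-D^{1-b})$. In this regime $k^*=D$, so $\widetilde{D}_{\mathtt{eff}}=D$, and the noise term $\bar\sigma^2D/\widetilde{N}_{\mathtt{eff}}$ must match $\widetilde{N}_{\mathtt{eff}}^{1/a-1}$ up to constants.

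I expect this last matching to be the main obstacle. Since $D\lesssim\widetilde{N}_{\mathtt{eff}}^{1/a}$, the variance lower bound $D/\widetilde{N}_{\mathtt{eff}}$ is smaller than $\widetilde{N}_{\mathtt{eff}}^{1/a-1}$, so I will have to argue that $\widetilde{N}_{\mathtt{eff}}^{1/a-1}\lesssim D^{1-a}$ in the saturated regime. That inequality is equivalent to $\widetilde{N}_{\mathtt{eff}}\gtrsim D^a$, so the $\widetilde{N}_{\mathtt{eff}}$ term is dominated by the approximation term and the stated equivalence holds with the approximation lower bound doing the work. Getting two-sided concentration of both the projected mismatch energy and the approximation error simultaneously, with probability $1-e^{-\Omega(D)}$, is the other technical step, and I will reuse the eigenvalue comparison lemmas of \citet{linscaling} for it.
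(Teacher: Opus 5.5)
Your proposal is correct and follows the paper's proof essentially step for step. You apply Theorems~\ref{thm:generalization_error} and~\ref{thm:tail-decay-lower-bound} to the sketched problem and control bias, variance and fluctuation through the sketched spectrum $\widetilde\lambda_j\eqsim j^{-a}$ from \citet{linscaling}. You bound the filter by one and write the captured mismatch energy as $\|\bm{\delta}\|_{\Hb}^2$ minus a source-$b$ approximation residual $\eqsim D^{1-b}$. In the saturated regime you use the uniform filter bound, the vanishing cross term from $\EE[\wb_1^*\bm{\delta}^\top]=\mathbf{0}$, and $\widetilde N_{\mathtt{eff}}^{1/a-1}\lesssim D^{1-a}$, exactly as the paper does.

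Your explicit absorption of the Gaussian sketch residual $\langle\xb,\wb_\ell^*\rangle-\langle\vb_\ell^*,\Sb\xb\rangle$ into the label noise is a step the paper leaves implicit, inheriting it from \citet{linscaling}. You handle it correctly.
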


\textbf{Interpretation.}
The first part of Theorem~\ref{thm:mix-scaling} provides a clean upper-bound scaling picture for mixed training under the random sketch model. When
$\gamma\widetilde N_{\mathtt{eff}}\gtrsim D^a$, the cumulative stepsize is
large enough to learn even the smallest sketched eigendirection, whose
eigenvalue is of order $D^{-a}$, and a matching lower bound holds in this optimization-saturated regime. The fluctuation and variance terms are absorbed into higher-order contributions and can be bounded by the same order as the $\mathsf{Approx+Bias}$ term. Hence, they do not affect the leading-order behavior, which is dominated by $\mathsf{Approx+Bias}$ and the drift floor. Second, synthetic data improves the reducible part of the risk at the pre-asymptotic level. Compared with the real-only scaling $\frac{1}{(\Neff)^{1-1/a}}$ in \citet{linscaling}, the term $\frac{1}{(\widetilde{N}_{\mathtt{eff}})^{1-1/a}}$ shows that synthetic and real samples contribute jointly, effectively increasing the sample size and reducing $\mathsf{Approx+Bias}$. However, this benefit comes with a fundamental trade-off. While increasing the sketch dimension $D$ decreases the approximation term $D^{1-a}$ and therefore improves $\mathsf{Approx+Bias}$, it simultaneously increases the drift-floor term $p^2(1-\frac{1}{D^{b-1}}).$
Indeed, as $D$ grows, the factor $D^{1-b}$ vanishes and the drift contribution approaches its limiting level of order $p^2$. Therefore, larger models fit the real signal better, but they may amplify synthetic-induced degradation by exposing the mismatch between the real and synthetic sources more fully, revealing a fundamental trade-off in mixed training.

\begin{theorem}[Scaling law upper bound for two-stage training]\label{thm:two-stage-scaling}
Given $M$ synthetic and $N$ real samples. Consider last iterate SGD with stepsize scheme \eqref{eq:geometry-tail-decay-lr}. Suppose that Assumptions~\ref{assump:second-moment},~\ref{assump:fourth-moment},~\ref{assump:noise},~\ref{ass:distribution-condition} and~\ref{ass:power-law-source} hold. Let $\Neff=N/\log N$ and $\Meff=M/\log M$. Suppose $\gamma<1/4\alpha\tr(\Sb\Hb\Sb^\top)$ and $\sigma_1^2,\sigma_2^2\eqsim1$. Choose $\gamma \eqsim 1$, and assume $b < a+1$ for simplicity and clarity, then with probability at least $1-e^{-\Omega(D)}$ over the randomness of the sketch matrix $\Sb$, we have
\begin{align*}
    \EE\R_D(\vb_{M+N})-\R(\wb^{*}_1) &\lesssim \underbrace{\frac{1}{D^{a-1}}}_{\mathsf{Approx}}+  \underbrace{\frac{\min\{D,(\Neff)^{1/a}\}}{\Neff}}_{\mathsf{Variance}}\\
    & +\underbrace{e^{-\Omega(\frac{\Neff}{D^a})}\cdot\max\big\{\frac{1}{D^{a\wedge b-1}},\frac{1}{(\Meff)^{\frac{a\wedge b-1}{a}}}  \big\}+\max\big\{\frac{1}{D^{b-1}},\frac{1}{(\Neff)^{\frac{b-1}{a}}}  \big\}}_{\mathsf{Bias}} 
\end{align*}
\end{theorem}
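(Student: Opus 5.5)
We plan to apply Theorem~\ref{thm:two-stage} conditionally on the sketch $\Sb$, in the sketched problem with covariance $\Hb_S:=\Sb\Hb\Sb^\top$, targets $\vb_1^*,\vb_2^*$ and sketched mismatch $\bm{\delta}_S:=\vb_2^*-\vb_1^*=(\Sb\Hb\Sb^\top)^{-1}\Sb\Hb\bm{\delta}$. We then take expectation over $\wb_1^*,\bm{\delta}$ using Assumptions~\ref{ass:distribution-condition} and~\ref{ass:power-law-source}. Since the sketched labels are misspecified, the first step is to reduce to the well-specified setting. Because $\xb$ is Gaussian, $\Sb\xb$ is jointly Gaussian with $\langle\xb,\wb_\ell^*\rangle$. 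Therefore the residual $y-\langle\vb_\ell^*,\Sb\xb\rangle$ is Gaussian and independent of $\Sb\xb$, with variance $\sigma_\ell^2+\mathsf{Approx}_\ell$. This places us exactly in Assumption~\ref{assump:noise} with inflated noise levels, which remain $\eqsim1$ once we show $\EE\,\mathsf{Approx}\lesssim1$. The risk splits into $\mathsf{Approx}+\mathsf{Excess}$. For the approximation term we invoke the bound of \citet{linscaling}: with probability $1-e^{-\Omega(D)}$, $\EE_{\wb_1^*}\mathsf{Approx}\lesssim D^{1-a}$ for isotropic $\wb_1^*$.

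For the excess term, Theorem~\ref{thm:two-stage} gives three groups of terms. \textbf{(i) Variance.} The factor $(\alpha\EE\mathcal{E}_2(\vb_M)+\alpha\|\bm{\delta}_S\|_{\Hb_S}^2+\sigma_1^2)\DIM/\Neff$ is handled in two pieces. The coefficient is $O(1)$, since $\|\bm{\delta}_S\|^2_{\Hb_S}\le\|\bm{\delta}\|_{\Hb}^2\lesssim\sum i^{-b}<\infty$. For $\DIM$, we use the standard sketch spectrum estimate from \citet{linscaling}: the eigenvalues of $\Hb_S$ satisfy $\mu_j\eqsim j^{-a}$ for $j\le D$ with high probability. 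This gives $\DIM\lesssim\min\{D,\Neff^{1/a}\}$.

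\textbf{(ii) Residual mismatch.} We bound $\EE\|\prod_{t=1}^N(\Ib-\gamma_t\Hb_S)\bm{\delta}_S\|^2_{\Hb_S}$ by the single-source bias computation of \citet{linscaling}, with the prior on the target replaced by the source condition on $\bm{\delta}$. Directions with $\mu_j\gtrsim1/\Neff$ are exponentially damped. The remaining directions, together with the sketch's projection of the spectral tail, contribute $\sum_{j\gtrsim\min\{D,\Neff^{1/a}\}}j^{-b}$. This yields $\max\{D^{1-b},\Neff^{-(b-1)/a}\}$.

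\textbf{(iii) Synthetic-stage error.} We first control $\EE\mathcal{E}_2(\vb_M)$. It is itself a one-source SGD excess risk on $\cP_2$ from zero initialization with target $\vb_2^*=\vb_1^*+\bm{\delta}_S$, whose second moment in the $\Hb$ eigenbasis decays like $i^{-a}+i^{-b}\lesssim i^{-(a\wedge b)}$. The same sketched bias computation therefore gives bias $\max\{D^{1-a\wedge b},\Meff^{-(a\wedge b-1)/a}\}$, plus a variance term $\min\{D,\Meff^{1/a}\}/\Meff$. We would absorb that variance term into the bias, using $a\wedge b-1\le a-1$; the assumption $b<a+1$ is, we expect, where this bookkeeping is simplified.

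The main obstacle is the multiplicative factor $e^{-\Omega(\Neff/D^a)}$ on $\EE\mathcal{E}_2(\vb_M)$. Theorem~\ref{thm:two-stage} as stated only adds $\EE\mathcal{E}_2(\vb_M)$ without any decay. To obtain the decay, we would reopen its proof: the second-stage bias is $\|\prod_{t=1}^N(\Ib-\gamma_t\Hb_S)(\vb_M-\vb_1^*)\|^2_{\Hb_S}$, and we decompose $\vb_M-\vb_1^*=(\vb_M-\vb_2^*)+\bm{\delta}_S$. The key observation is that in the sketched space every eigenvalue satisfies $\mu_j\gtrsim D^{-a}$, with high probability over $\Sb$. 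Hence the filter contracts all directions by at least $\exp(-\gamma\sum_t\gamma_t\mu_D)\le e^{-\Omega(\Neff/D^a)}$, using the fact that the geometric schedule gives $\sum_t\gamma_t\eqsim\gamma\Neff$. This factor multiplies the $\vb_M-\vb_2^*$ contribution, whose expected $\Hb_S$-norm is $\EE\mathcal{E}_2(\vb_M)$; the cross term is handled by Cauchy--Schwarz. The delicate points are twofold. First, the fourth-moment (variance-propagation) part also carries $\EE\mathcal{E}_2(\vb_M)$, so we need it either to be similarly damped or to be absorbed into the variance bound $\DIM/\Neff$. Second, the lower bound $\mu_D\gtrsim D^{-a}$ for the Gaussian sketch must hold with probability $1-e^{-\Omega(D)}$. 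For the latter we rely on the concentration lemmas in \citet{linscaling}. Combining (i)--(iii) with the approximation bound yields the stated rate.
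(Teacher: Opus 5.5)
Your proposal is correct and follows the paper's proof essentially step for step. The paper likewise reopens the proof of Theorem~\ref{thm:two-stage} at the point before the contraction $\|\Pb_N\ub\|_{\Hb}\le\|\ub\|_{\Hb}$ is used, damps the $\vb_M-\vb_2^*$ part by $\exp(-c\,\gamma\Neff\tilde\lambda_D)$ with $\tilde\lambda_D\eqsim D^{-a}$, and bounds $\EE\mathcal{E}_2(\vb_M)$ through the prior $\lambda_i(1+\tau_i)\eqsim i^{-(a\wedge b)}$, absorbing its variance implicitly as you do. It then cites the source-condition lemma of \citet{linscaling} for $\Pb_N(\vb_2^*-\vb_1^*)$ and absorbs the $\alpha(\cdot)\DIM/\Neff$ term because its coefficient is $O(1)$; your Gaussian reduction of the misspecified sketched labels to well-specified noise is a point the paper leaves implicit.

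One correction: $b<a+1$ is not what lets you absorb the first-stage variance, which only uses $a\wedge b\le a$ and holds for every $b$. In the paper it is used in your step (ii), because the cited bound on $\EE_{\bm{\delta}}\|\Pb_N(\vb_2^*-\vb_1^*)\|^2_{\Sb\Hb\Sb^\top}$ carries an extra $\log\Neff$ factor when $b\ge a+1$.
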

\textbf{Interpretation.}
The variance term is of the same order as in the real-only scaling law of \citet{linscaling}, namely $\frac{\min\{D,\Neff^{1/a}\}}{\Neff},$
and therefore synthetic data does not improve the variance, which is consistent with our earlier analysis. The bias term is split into two parts, reflecting two distinct effects of synthetic training phase. The first term, $e^{-\Omega(\Neff/D^a)}\max\{D^{1-a\wedge b},\Meff^{-(a\wedge b-1)/a}\},$
captures the residual error from the first stage, and is exponentially forgotten during training on real data. The second term, $\max\{D^{1-b},\Neff^{-(b-1)/a}\},$ captures the remaining source mismatch after the second stage real-data training. Compared with the real-only bias upper bound $\max\{D^{1-a},\Neff^{-(a-1)/a}\}$\citep{linscaling},
this term is strictly smaller when $b>a$, corresponding to high-quality synthetic data. In this regime, the mismatch is easier than the original task, so synthetic training phase places the model closer to the target and reduces the bias more effectively than random initialization. So both bias contributions can be improved at the upper-bound level than in the real-only case. Therefore, unlike mixed training, two-stage training can genuinely benefit from synthetic data: high-quality synthetic data can reduce the bias without introducing any non-vanishing drift floor.

This subsection establishes only an upper bound for two-stage training, since
a uniform termwise matching lower bound is generally impossible, because upper-bound analysis separately bounds the propagated first-stage error and the source-mismatch contribution, although their cross term may be negative and substantially cancel in the
actual final iterate. Accordingly,
Theorem~\ref{thm:two-stage-scaling} should be interpreted as an upper-bound
scaling characterization of how
high-quality synthetic pretraining may reduce bias rather than claiming
that it uniformly outperforms real-only training.

\subsection{Finite-Sample Conditions for Synthetic Benefit}\label{sec:synthetic_benefit}

We now ask when the potential synthetic bias-reduction translates into a strict improvement in the actual risk. In this subsection, under a fixed real-data budget and identical real-stage training, we establish an exact finite-sample necessary-and-sufficient condition for two-stage training to outperform real-only training, and then use a one-dimensional toy model to characterize the roles of synthetic sample size, label noise, and source mismatch. Detailed proof is deferred to Appendix~\ref{ap:synthetic-benefit}.

\begin{theorem}[Exact condition for strict synthetic benefit]
\label{thm:exact-synthetic-benefit}
Let $\wb_N^{\mathrm{real}}$ and $\wb_{M+N}$ denote the outputs of real-only
and two-stage SGD, respectively. Both procedures start from $\wb_0$; the
two-stage procedure additionally uses $M$ synthetic samples before the two
procedures undergo the same $N$ real updates, with identical samples, order,
and stepsizes. Define $\Phi_N:=\prod_{t=N}^{1}(\Ib-\gamma_t\xb_t\xb_t^\top)$ and
$\mathbf{K}_N:=\EE_{\xb_1,\ldots,\xb_N}[\Phi_N^\top\Hb\Phi_N].$ Under Assumptions~\ref{assump:second-moment}
and~\ref{assump:noise},
\begin{align*}
2\left(\EE[\mathcal{E}_1(\wb_N^{\mathrm{real}})]-\EE[\mathcal{E}_1(\wb_{M+N})]\right)=
\|\wb_0-\wb_1^*\|_{\mathbf{K}_N}^{2}
-\|\EE[\wb_M]-\wb_1^*\|_{\mathbf{K}_N}^{2}-
\tr\left(\mathbf{K}_N\operatorname{Cov}(\wb_M)
\right).
\end{align*}
Consequently, two-stage training strictly outperforms real-only training if
and only if
\begin{equation*}
\|\EE[\wb_M]-\wb_1^*\|_{\mathbf{K}_N}^{2}
+\tr\left(\mathbf{K}_N\operatorname{Cov}(\wb_M)
\right)
<\|\wb_0-\wb_1^*\|_{\mathbf{K}_N}^{2}.
\end{equation*}
\end{theorem}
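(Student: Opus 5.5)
We write the real-stage SGD as an affine map of its starting point and then use a bias--variance decomposition over the randomness of that starting point.

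\textbf{Step 1 (affine recursion).} Let $\ub_0$ be the iterate at the start of the real stage: $\ub_0=\wb_0$ for real-only training and $\ub_0=\wb_M$ for two-stage training. On real data, $y_t=\langle\xb_t,\wb_1^*\rangle+\xi_t$. Centering at $\wb_1^*$ gives
\[
\ub_t-\wb_1^*=(\Ib-\gamma_t\xb_t\xb_t^\top)(\ub_{t-1}-\wb_1^*)+\gamma_t\xi_t\xb_t .
\]
Unrolling,
\[
\ub_N-\wb_1^*=\Phi_N(\ub_0-\wb_1^*)+\bm{\zeta}_N,
\]
where $\bm{\zeta}_N=\sum_{t}\gamma_t\xi_t\,\Phi_{N:t+1}\xb_t$ depends only on the real samples. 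The same samples, order and stepsizes are used in both procedures, so $\Phi_N$ and $\bm{\zeta}_N$ are identical functions of the real data in both.

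\textbf{Step 2 (risk formula).} By Assumption~\ref{assump:noise}, $2\mathcal{E}_1(\wb)=\|\wb-\wb_1^*\|_{\Hb}^2$. We need two independence facts.
\begin{itemize}
\item $\ub_0$ is independent of the real samples, since the synthetic stage uses only synthetic samples.
\item Each $\xi_t$ is zero-mean and independent of all the $\xb$'s and of $\ub_0$.
\end{itemize}
Expanding $\|\Phi_N(\ub_0-\wb_1^*)+\bm{\zeta}_N\|_{\Hb}^2$, the cross term has zero expectation: conditioning on all covariates and on $\ub_0$ makes $\EE[\bm{\zeta}_N]=0$. The noise term $\EE\|\bm{\zeta}_N\|_{\Hb}^2$ is the same quantity $V$ in both procedures. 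For the remaining term, condition on $\ub_0$ and average over $\xb_1,\ldots,\xb_N$:
\[
\EE\|\Phi_N(\ub_0-\wb_1^*)\|_{\Hb}^2=\EE_{\ub_0}\big[(\ub_0-\wb_1^*)^\top\mathbf{K}_N(\ub_0-\wb_1^*)\big].
\]
Hence
\[
2\,\EE[\mathcal{E}_1(\ub_N)]=\EE\|\ub_0-\wb_1^*\|_{\mathbf{K}_N}^2+V .
\]

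\textbf{Step 3 (bias--variance for $\wb_M$).} For a random $\ub_0$, the quadratic form decomposes as
\[
\EE\|\ub_0-\wb_1^*\|_{\mathbf{K}_N}^2=\|\EE\ub_0-\wb_1^*\|_{\mathbf{K}_N}^2+\tr\big(\mathbf{K}_N\operatorname{Cov}(\ub_0)\big).
\]
For $\ub_0=\wb_0$, which is deterministic, the covariance term vanishes. Subtracting the two instances of Step 2 cancels $V$ and yields the stated identity. The strict-benefit equivalence is immediate from the identity, since the difference is positive if and only if the stated inequality holds.

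\textbf{Obstacle: integrability.} The only delicate point is justifying the expectations, in particular $\tr(\mathbf{K}_N\operatorname{Cov}(\wb_M))<\infty$ and $\mathbf{K}_N$ being well defined. I would handle this with Assumption~\ref{assump:second-moment} (finite $\Hb$-norms and trace) together with Gaussian noise, so that all iterates have finite second moments. If needed, I would carry out the argument under the standing stepsize condition, or otherwise interpret the identity in extended reals.
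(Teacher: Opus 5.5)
Your proof is correct and is essentially the paper's argument. Both use the same affine unrolling with a shared real-stage noise term, kill the cross term by conditioning on $(\wb_M,\xb_1,\ldots,\xb_N)$, apply the tower property (via independence of $\wb_M$ from the real covariates) to produce $\mathbf{K}_N$, and finish with the mean--covariance split of $\EE\|\wb_M-\wb_1^*\|_{\mathbf{K}_N}^2$.

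On your integrability remark, which the paper does not address: Assumption~\ref{assump:second-moment} plus Gaussian label noise does not make $\mathbf{K}_N$ finite. In one dimension with $\EE[x^2]=1$ one has $\mathbf{K}_1=1-2\gamma_1+\gamma_1^2\EE[x^4]$, which is infinite when $\EE[x^4]=\infty$. What is actually needed is a fourth-moment bound such as Assumption~\ref{assump:fourth-moment}(1). Reading the identity in extended reals does not help either, since the left side could then be $\infty-\infty$.
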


This condition compares three quantities.
$\|\wb_0-\wb_1^*\|_{\mathbf{K}_N}^{2}$ is the real-only initialization error that remains after
$N$ real updates. $\|\EE[\wb_M]-\wb_1^*\|_{\mathbf{K}_N}^{2}$ is the remaining bias of the synthetic initialization; it includes both finite-$M$ optimization bias and
source mismatch. $\tr\left(\mathbf{K}_N\operatorname{Cov}(\wb_M)
\right)$ is the remaining uncertainty generated by
finite synthetic training, including synthetic sampling and label noise. The role of $\mathbf{K}_N$ follows from
$\|\ub\|_{\mathbf{K}_N}^2=\EE\|\Phi_N\ub\|_{\Hb}^2$, which measures the expected error remaining after applying the same $N$ real-data updates to an initial error $\ub$. Therefore, synthetic pretraining is beneficial exactly when $\wb_M$ provides a better starting point than $\wb_0$ after both are
followed by the same real-data training and it must improve the initialization in directions that cannot be efficiently corrected by the available real data.

Theorem~\ref{thm:exact-synthetic-benefit} gives an exact criterion once the
mean and covariance of $\wb_M$ are characterized. In general, however, these
quantities depend jointly on the synthetic sample size, label noise, source
mismatch, data spectrum, and SGD dynamics, making their individual effects
difficult to separate. To disentangle these factors and obtain an explicit sample-size threshold, we next specialize the condition to a one-dimensional toy model.

\textbf{One-dimensional illustration.}
Let $x$ be Rademacher, independent of the label noises, and consider
\[
    y^{\mathrm{real}}=\theta x+\xi_1,
    \qquad
    y^{\mathrm{syn}}=(\theta+\delta)x+\xi_2,
    \qquad
    \EE[\xi_\ell]=0,
    \quad
    \operatorname{Var}(\xi_\ell)=\sigma_\ell^2.
\]
Starting from $w_0$, the synthetic stage performs $M$ SGD-updates with
stepsize $1/s$ at iteration $s$. Afterward, both procedures use the same
$N$ real samples and stepsizes $0<\gamma_t<1$.

\begin{corollary}[One-dimensional strict-benefit condition]
\label{cor:one-dimensional-benefit}
For every $M,N\geq1$,
\begin{align*}
\EE[\mathcal{E}_1(w_{M+N})]-\EE[\mathcal{E}_1(w_N^{\mathrm{real}})]=
\frac{1}{2}
\left(\prod_{t=1}^{N}(1-\gamma_t)
\right)^2\left[\delta^2+\frac{\sigma_2^2}{M}-(w_0-\theta)^2\right].
\end{align*}
Hence, synthetic pretraining strictly improves the expected risk if and only
if $\delta^2+\sigma_2^2/M<(w_0-\theta)^2$. Such an improvement is achievable
for some $M$ if and only if $\delta^2<(w_0-\theta)^2$; under this condition,
the exact requirement on the synthetic sample size is $M>\sigma_2^2/\big((w_0-\theta)^2-\delta^2\big)$.
\end{corollary}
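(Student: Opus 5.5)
We specialize Theorem~\ref{thm:exact-synthetic-benefit} to the one-dimensional model. Since $x\in\{\pm1\}$, we have $x_t^2=1$ deterministically, so $\Hb=1$. Moreover $\Phi_N=\prod_{t=1}^N(1-\gamma_t)$ is deterministic, and hence $\mathbf{K}_N=\big(\prod_{t=1}^N(1-\gamma_t)\big)^2$. Substituting these into the identity of Theorem~\ref{thm:exact-synthetic-benefit} gives
\[
\EE[\mathcal{E}_1(w_{M+N})]-\EE[\mathcal{E}_1(w_N^{\mathrm{real}})]
=\tfrac12\Big(\prod_{t=1}^N(1-\gamma_t)\Big)^2\Big[(\EE w_M-\theta)^2+\operatorname{Var}(w_M)-(w_0-\theta)^2\Big].
\]
It therefore remains to compute the mean and variance of $w_M$. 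Because $\prod_t(1-\gamma_t)>0$ for $0<\gamma_t<1$, the sign of the difference is decided by the bracket alone.

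For the synthetic stage with stepsize $1/s$, we use $x_s^2=1$ to write the update as
\[
w_s=w_{s-1}-\tfrac1s\big(w_{s-1}-(\theta+\delta)-\xi_{2,s}x_s\big)=\big(1-\tfrac1s\big)w_{s-1}+\tfrac1s\big(\theta+\delta+\xi_{2,s}x_s\big).
\]
At $s=1$ the coefficient $1-1/s$ vanishes, so $w_0$ is forgotten completely. Induction then shows that $w_M$ is the running average
\[
w_M=\theta+\delta+\frac1M\sum_{s=1}^M\xi_{2,s}x_s .
\]
The terms $\xi_{2,s}x_s$ are independent, have mean zero (noise is independent of $x$), and have variance $\sigma_2^2$ because $x_s^2=1$. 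Hence $\EE w_M=\theta+\delta$ and $\operatorname{Var}(w_M)=\sigma_2^2/M$. Plugging these in gives the bracket $\delta^2+\sigma_2^2/M-(w_0-\theta)^2$, which is the displayed identity.

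The remaining claims are elementary consequences. A strict improvement holds if and only if the bracket is negative, i.e.\ $\delta^2+\sigma_2^2/M<(w_0-\theta)^2$. Since $\sigma_2^2/M$ decreases to $0$ as $M$ grows, some $M$ achieves this if and only if $\delta^2<(w_0-\theta)^2$; this uses $\sigma_2^2/M\ge0$ for necessity. Under that condition, rearranging gives exactly $M>\sigma_2^2/((w_0-\theta)^2-\delta^2)$.

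I expect no real obstacle here. The only points needing care are that Theorem~\ref{thm:exact-synthetic-benefit} applies, which requires checking that it uses only second moments and independence of the noise rather than Gaussianity, and the observation that the $1/s$ schedule erases $w_0$ at the first step.
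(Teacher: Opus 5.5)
Your proposal is correct and follows the paper's proof essentially step for step. You specialize Theorem~\ref{thm:exact-synthetic-benefit} with $\Hb=1$ and deterministic $\Phi_N$, show the $1/s$ schedule makes $w_M=\theta+\delta+\frac1M\sum_{s}\xi_{2,s}x_s$ with mean $\theta+\delta$ and variance $\sigma_2^2/M$, and read the sign condition off the strictly positive factor $\big(\prod_{t}(1-\gamma_t)\big)^2$. Your remark that the theorem's proof uses only centered noise independent of the covariates (not Gaussianity) is a worthwhile check that the paper leaves implicit, since the toy model specifies only the mean and variance of $\xi_\ell$.
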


This condition displays the bias-variance trade-off directly. The mismatch $\delta^2$ is the error that remains even with infinitely many synthetic samples, whereas $\sigma_2^2/M$ is the uncertainty caused by finite and noisy synthetic data. Thus, increasing $M$ can eliminate the synthetic variance but cannot compensate for a mismatch larger than the
original initialization error. The required synthetic sample size increases linearly with $\sigma_2^2$ and diverges as $\delta^2$ approaches $(w_0-\theta)^2$.

\begin{figure}[t]
    \centering
    \includegraphics[width=1.01\linewidth]{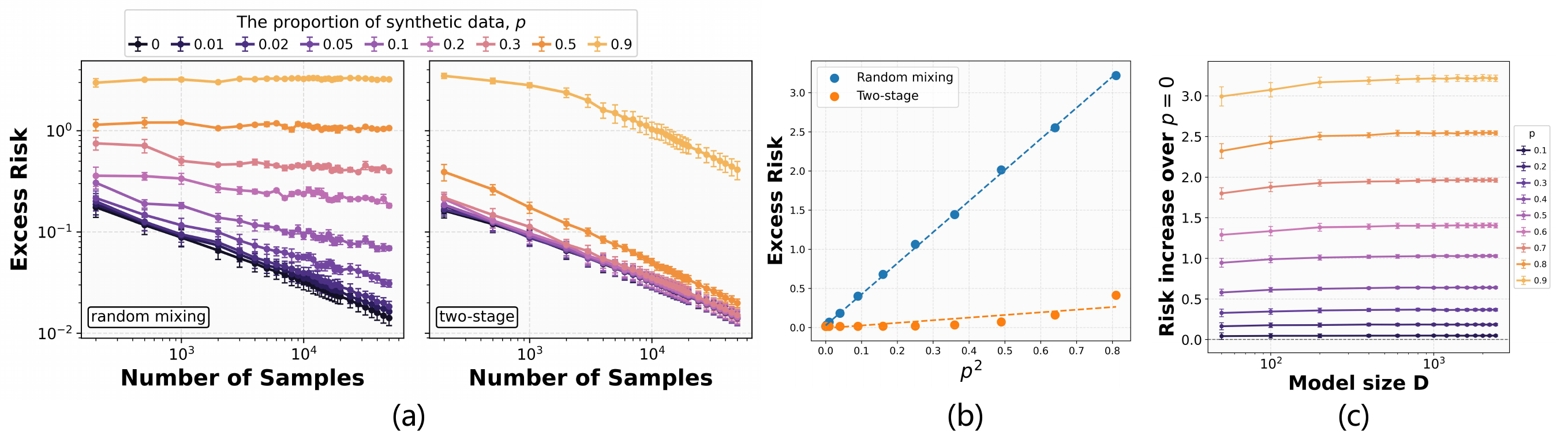}
    \caption{
\textbf{Simulation results (i).}
(a) Excess risk of two protocols in full high-dimensional linear model: mixed training exhibits a clear $p$-dependent error floor as $T$ scales, while two-stage training continues to improve without saturation.
(b) Excess risk at fixed $T=50000$ versus $p^2$: under mixing, the excess risk scales approximately linearly with $p^2$, consistent with the theoretical prediction $p^2 \|\bm\delta\|_\Hb^2$, whereas two-stage training shows a much weaker dependence, matching our theory.
(c) Risk increase over $p=0$ under mixed training in the random sketch model at fixed $T=50000$: the degradation grows with model size $D$ and saturates, consistent with the theoretical scaling $p^2(1 - D^{1-b})$, indicating that larger models amplify the contamination from synthetic mixed training.
}

    \label{fig:results}
\end{figure}

\section{Experiments}\label{sec:experiments}
\paragraph{Simulation.}
We first validate our theory in a Gaussian high-dimensional linear regression setting with dimension $d=5000$. Features are drawn from a Gaussian distribution with covariance spectrum $\lambda_i \propto i^{-1.5}$, and the synthetic teacher satisfies $\bm w_2^*=\bm w_1^*+\bm\delta$ with source condition $\lambda_i \mathbb{E}[\delta_i^2]\propto i^{-b}$. We train last-iterate SGD under the mixed and two-stage protocols in Section~\ref{sec:setup}, and evaluate excess risk on the real distribution. This experiments contain two parts. The first part varies the synthetic proportion $p\in(0,1)$, sample size $T\in[50,50000]$, and sketch width $D$ to validate the predicted synthetic-induced error floor. The second part keeps the same high-dimensional setup but varies the synthetic quality $b$, synthetic sample size $M$, and synthetic label noise $\sigma_2^2$ to study when synthetic data helps. All experiments are repeated with 5 random seeds, , and the results of the two parts are reported in Figure~\ref{fig:results} and Figure~\ref{fig:synthetic-quality}, respectively.

\begin{figure}[t]
\centering
\includegraphics[width=1.03\linewidth]{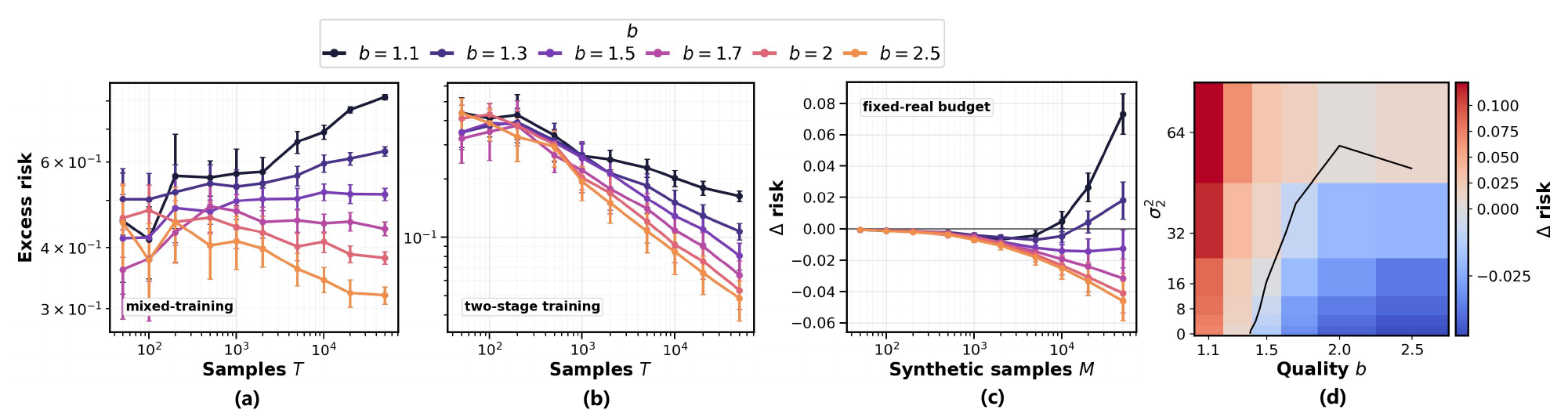}
\caption{
\textbf{Simulation results (ii).}
(a) Excess risk of mixed training under different synthetic qualities: low-quality synthetic data induces a larger error floor, while increasing $b$ substantially reduces the degradation.
(b) Excess risk of two-stage training under different synthetic qualities: two-stage training continues to improve with $T$ for all $b$, and higher-quality synthetic data leads to lower final risk.
(c) Risk change from adding synthetic samples under a fixed real-data budget: synthetic data helps when $b$ is large, but low-quality synthetic data can eventually hurt as $M$ increases.
(d) Benefit region over synthetic quality and label noise: additional synthetic data improves real risk only when the teacher mismatch is sufficiently benign and $\sigma_2^2$ is not too large.
}
\label{fig:synthetic-quality}
\end{figure}

\paragraph{Real-world data.}
We further extend the experiments to validate our theory on $\mathtt{CIFAR10}$~\citep{krizhevsky2009learning} using a 5-layer CNN and on $\mathtt{WikiText-103}$ via language model training. Synthetic data is generated by a pretrained teacher model. We observe the same qualitative behavior as predicted by our theory. Full experimental details are provided in Appendix~\ref{app:real-data}.

\section{Conclusion}
We studied how synthetic data affects the generalization of one-pass SGD in high-dimensional linear regression with model shift. We establish finite-sample risk bounds for mixed and two-stage training that characterize how source mismatch enters the SGD dynamics under different training protocols. As a consequence, mixed training exhibits \textit{strong model collapse} while two-stage training avoids this by using synthetic data only in the first stage, showing that collapse is not inevitable through a simple data curriculum. Our random sketch model analysis further establishes scaling-law bounds for both protocols, showing that larger models may amplify synthetic-induced degradation and giving an explicit characterization of how high-quality synthetic pre-training may reduce bias. We also establish an exact finite-sample condition under which two-stage training strictly outperforms real-data-only training with a fixed real-data budget and identical real-stage training. Overall, our results highlight that synthetic data is not inherently harmful or beneficial; its effect depends critically on both its quality and how it is incorporated during training.
\paragraph{Limitations.} Our analysis focuses on one-pass SGD with shared covariates. Extensions to joint covariate and model shift, as well as multi-epoch SGD, are natural next steps beyond our current setting. In addition, the mixed-training scaling law is tight only in the optimization-saturated regime, while the two-stage result remains an upper bound; this disparity between the two protocols leaves tight characterizations for data-limited and two-stage settings open.

\bibliography{reference}

\bibliographystyle{plainnat}

\newpage

\appendix

\newpage
\tableofcontents
\resettocdepth{}

\tableofcontents
\section{Technical Operator Preliminaries}

\paragraph{Operators.}
We first summarize the linear operators (on symmetric matrices) to be used in the proof:
\begin{gather*}
    \cI = \Ib \otimes \Ib,\quad
    \cM = \EE [ \xb \otimes \xb \otimes \xb \otimes \xb ],\quad
    \tilde{\cM} = \Hb \otimes \Hb, \\
    \cT_t = \Hb \otimes \Ib + \Ib \otimes \Hb - \gamma_t\cM, \quad
    \tilde \cT_t = \Hb \otimes \Ib + \Ib \otimes \Hb - \gamma_t\Hb\otimes\Hb.
\end{gather*}
We use the notation $\mathcal{O}\circ \Ab$ to denotes the operator
$\mathcal{O}$ acting on a symmetric matrix $\Ab$. For example, with these definitions, we have that for a symmetric matrix $\Ab$, \citep{zou2023benign}:
\begin{gather}
\cI \circ \Ab = \Ab, \ \ \ \cM\circ \Ab = \EE [ (\xb^\top \Ab \xb) \xb \xb^\top ], \ \ \  \tilde{\cM} \circ \Ab = \Hb \Ab \Hb, \notag \\     
(\cI - \gamma_t \cT_t) \circ \Ab = \EE [ (\Ib - \gamma_t \xb \xb^\top)\Ab (\Ib - \gamma_t \xb \xb^\top) ], \ \  (\cI-\gamma_t\tilde\cT_t)\circ\Ab = (\Ib-\gamma_t\Hb)\Ab(\Ib-\gamma_t\Hb). \label{eq:0005}
\end{gather}

For the linear operators we have the following technical lemma from \citet{zou2023benign}.
\begin{lemma}[Lemma B.1, \citet{zou2023benign}]\label{lemma:operators}
An operator $\mathcal{O}$ defined on symmetric matrices is called PSD mapping, if $\Ab \succeq 0$ implies $\mathcal{O} \circ \Ab \succeq 0$.
Then we have
\begin{enumerate}
    \item $\cM$ and $\widetilde\cM$ are both PSD mappings.
    \item $\cI-\gamma\cT_t$ and $\cI-\gamma\widetilde\cT_t$ are both PSD mappings.
    \item $\cM - \widetilde\cM$ and $\widetilde \cT_t - \cT_t$ are both PSD mappings.
\item  If $0 < \gamma_t < 1/\lambda_1$, then $\widetilde{\cT_t}^{-1}$ exists, and is a PSD mapping.
\item  If $0 < \gamma_t < 1/(\alpha\tr(\Hb))$, then $\cT_t^{-1}\circ \Ab$ exists for PSD matrix $\Ab$, and $\cT_t^{-1}$ is a PSD mapping.
\end{enumerate}
\end{lemma}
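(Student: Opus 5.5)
The plan is to verify the five items directly from the definitions in \eqref{eq:0005}. Items 4 and 5 will use a Neumann-series representation of the inverse, in which every summand is a PSD mapping, so the inverse inherits PSD-ness.

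For items 1–3 the argument is a direct computation. For a PSD $\Ab$:
\begin{itemize}
\item $\cM\circ\Ab=\EE[(\xb^\top\Ab\xb)\xb\xb^\top]\succeq 0$, since $\xb^\top\Ab\xb\ge 0$.
\item $\widetilde\cM\circ\Ab=\Hb\Ab\Hb=\Hb^{1/2}(\Hb^{1/2}\Ab\Hb^{1/2})\Hb^{1/2}\succeq 0$.
\item $(\cI-\gamma\cT_t)\circ\Ab$ and $(\cI-\gamma\widetilde\cT_t)\circ\Ab$ are congruences of $\Ab$ (inside an expectation for the former), hence PSD.
\end{itemize}
For item 3, expand
\[
\EE[(\xb\xb^\top-\Hb)\Ab(\xb\xb^\top-\Hb)]=\EE[\xb\xb^\top\Ab\xb\xb^\top]-\Hb\Ab\Hb,
\]
using $\EE[\xb\xb^\top]=\Hb$. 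The left side is an average of congruences, so $\cM-\widetilde\cM$ is PSD. Since $\widetilde\cT_t-\cT_t=\gamma_t(\cM-\widetilde\cM)$, the second claim of item 3 follows.

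For item 4, work in the eigenbasis of $\Hb$. The operator $\widetilde\cT_t$ acts diagonally on $\vb_i\vb_j^\top$ with eigenvalue
\[
\lambda_i+\lambda_j-\gamma_t\lambda_i\lambda_j\ \ge\ \lambda_i+\lambda_j(1-\gamma_t\lambda_i)\ >\ 0,
\]
restricting to the range of $\Hb$ if some $\lambda_i=0$. So $\widetilde\cT_t$ is invertible. Writing $\widetilde\cT_t=\gamma_t^{-1}\big(\cI-(\cI-\gamma_t\widetilde\cT_t)\big)$ gives
\[
\widetilde\cT_t^{-1}=\gamma_t\sum_{k\ge0}(\cI-\gamma_t\widetilde\cT_t)^k .
\]
The series converges because $\cI-\gamma_t\widetilde\cT_t$ has eigenvalues $(1-\gamma_t\lambda_i)(1-\gamma_t\lambda_j)\in[0,1)$. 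Each summand is PSD by item 2, so the limit is a PSD mapping.

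Item 5 is the main obstacle, because $\cI-\gamma_t\cT_t$ is not diagonal and contraction must come from the fourth-moment bound. Fix PSD $\Ab$, set $\Ab_k:=(\cI-\gamma_t\cT_t)^k\circ\Ab$, and let $\Sb_K:=\gamma_t\sum_{k<K}\Ab_k$, which is increasing in the PSD order. The plan has three steps.

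\textbf{Trace recursion.} Assumption~\ref{assump:fourth-moment}(1) gives
\[
\tr(\Ab_{k+1})=\tr(\Ab_k)-2\gamma_t\tr(\Hb\Ab_k)+\gamma_t^2\tr(\cM\circ\Ab_k)\le \tr(\Ab_k)-\gamma_t\tr(\Hb\Ab_k),
\]
using $\gamma_t\alpha\tr(\Hb)<1$. Summing yields $\gamma_t\sum_k\tr(\Hb\Ab_k)\le\tr(\Ab)$.

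\textbf{Comparison with $\widetilde\cT_t$.} Split
\[
\Ab_{k+1}=(\cI-\gamma_t\widetilde\cT_t)\circ\Ab_k+\gamma_t^2(\cM-\widetilde\cM)\circ\Ab_k .
\]
The second piece is $\preceq \gamma_t^2\alpha\tr(\Hb\Ab_k)\Hb$. Unrolling and using that $\cI-\gamma_t\widetilde\cT_t$ preserves the PSD order gives
\[
\Sb_K\preceq \widetilde\cT_t^{-1}\circ\Ab+\gamma_t\alpha\Big(\gamma_t\sum_k\tr(\Hb\Ab_k)\Big)\,\widetilde\cT_t^{-1}\circ\Hb .
\]
In the eigenbasis, $\widetilde\cT_t^{-1}\circ\Hb$ has diagonal entries $1/(2-\gamma_t\lambda_i)\le 1$. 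Hence $\Sb_K$ is bounded in operator norm.

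\textbf{Limit.} A monotone bounded sequence of PSD operators converges strongly, say to $\Sb$. Telescoping gives
\[
\cT_t\circ\Sb_K=\Ab-\Ab_K,
\]
and $\Ab_K\to0$ because $\tr(\Hb\Ab_K)$ is summable. Passing to the limit shows $\cT_t\circ\Sb=\Ab$. So $\cT_t^{-1}\circ\Ab=\Sb$ exists and is PSD, being a limit of PSD partial sums.

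The delicate points are justifying the limit passage in infinite dimensions and the precise sense of ``$\Ab_K\to0$''. I would handle both in the weak sense, testing against $\Hb$-weighted rank-one matrices.
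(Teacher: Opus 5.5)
Your proof is correct where the lemma is meaningful: finite dimension, restricted to the range of $\Hb$ as you note in item 4. It is essentially the standard argument that the paper simply cites from \citet{zou2023benign}, namely congruence computations for items 1--3 and, for items 4--5, the Neumann series $\gamma_t\sum_{k\ge 0}(\cI-\gamma_t\cT_t)^k$, whose partial sums are PSD mappings.

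One calibration on item 5. In that finite-dimensional setting your trace recursion alone already gives $\sum_k\tr(\Ab_k)\le\tr(\Ab)/(\gamma_t\lambda_{\min})$, so the comparison with $\widetilde\cT_t$ is a bonus rather than a necessity; what it buys is the bound $\cT_t^{-1}\circ\Ab\preceq\widetilde\cT_t^{-1}\circ\Ab+\gamma_t\alpha\tr(\Ab)\,\Ib$. In infinite dimensions, however, your operator-norm bound on $\Sb_K$ does not hold in general. The reason is that $\Sb_K\succeq\gamma_t\sum_{k<K}(\cI-\gamma_t\widetilde\cT_t)^k\circ\Ab$, and $\widetilde\cT_t^{-1}\circ\Ab$ can be unbounded for trace-class $\Ab$ (e.g.\ diagonal $\Ab$ with $\sup_i\langle\vb_i,\Ab\vb_i\rangle/\lambda_i=\infty$). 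There, the object that genuinely converges is $\Hb^{1/2}\Sb_K\Hb^{1/2}$, whose trace is at most $\tr(\Ab)$ by your trace recursion.
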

\begin{proof}
See proof of Lemma B.1 in \citet{zou2023benign}.
\end{proof}

\section{Mixed Training Upper Bound Analysis}\label{ap:mix-upper}

\paragraph{Local notation.}
For notational simplicity, throughout this section we depart slightly from the
notation in the main text and use $N$ to denote the total sample size. Thus,
the numbers of synthetic and real samples are $pN$ and $(1-p)N$, respectively,
where $p\in[0,1]$ is the synthetic-data proportion. We assume that $pN$ and
$(1-p)N$ are integers; replacing them by the corresponding floor or ceiling
does not affect any of our results.

\paragraph{Fixed-budget sampling model.}
\label{ap:mixed-sampling}
We formalize the random-order mixed-training protocol used in our analysis.
Let
\[
    \mathcal{Z}_{N,p}
    :=
    \left\{
        \mathbf{z}=(z_1,\ldots,z_N)\in\{0,1\}^{N}
        :
        \sum_{t=1}^{N}z_t=pN
    \right\},
\]
where $z_t=1$ indicates that the sample at iteration $t$ is synthetic and
$z_t=0$ indicates that it is real. We first draw the source schedule
$\mathbf{z}$ uniformly from $\mathcal{Z}_{N,p}$. Conditional on the complete source schedule $\mathbf{z}$, the training samples
are generated according to
\[
    \xb_t\stackrel{\mathrm{i.i.d.}}{\sim}\cD,
    \qquad
    y_t
    =
    \left\langle
        \xb_t,\wb_1^*+z_t\bm{\delta}
    \right\rangle
    +\xi_t,
    \qquad
    t=1,\ldots,N,
\]
where $\bm{\delta}:=\wb_2^*-\wb_1^*$ and
\[
    \xi_t\mid z_t
    \sim
    \begin{cases}
        N(0,\sigma_1^2), & z_t=0,\\
        N(0,\sigma_2^2), & z_t=1.
    \end{cases}
\]
Conditional on $\mathbf{z}$, the noises $\{\xi_t\}_{t=1}^{N}$ are independent
across iterations and independent of all covariates.

This construction is equivalent in distribution to independently drawing
$pN$ complete samples from $\cP_2$ and $(1-p)N$ complete samples from $\cP_1$,
and then uniformly shuffling their union. Indeed, a uniform shuffle induces a
source schedule that is uniform over $\mathcal{Z}_{N,p}$. Conditional on any
such schedule, the observations at positions with $z_t=1$ are independent
samples from $\cP_2$, while those at positions with $z_t=0$ are independent
samples from $\cP_1$. Since samples within each source are exchangeable, the
two constructions induce the same joint distribution over the ordered
training sequence.

Our analysis does not assume independence of the source indicators. To identify the conditional independence used in the SGD recursions, define
the filtration
\[
    \mathcal{F}_{t-1}:=\sigma\!\left( \mathbf{z},\xb_1,\xi_1,\ldots,\xb_{t-1},\xi_{t-1}\right).
\]
Since the real and synthetic distributions share the same covariate marginal
$\cD$, the current covariate $\xb_t$ is independent of
$\mathcal{F}_{t-1}$ and satisfies
\[
\EE[\xb_t\xb_t^\top\mid\mathcal{F}_{t-1}] =\Hb.
\]
Moreover, the sample-wise noise construction gives
\[
    \EE[\xi_t\mid\mathcal{F}_{t-1},\xb_t]=0.
\]
Consequently, the noise cross terms appearing in the bias--variance
recursions vanish. Averaging over the source schedule further yields
\[
    \EE[\xi_t^2\xb_t\xb_t^\top]
    =\bar{\sigma}^2\Hb,\qquad
    \bar{\sigma}^2:=(1-p)\sigma_1^2+p\sigma_2^2.
\]
Therefore, the covariance and noise recursions used below remain exact under
the fixed-budget sampling protocol. They rely on fresh covariates and
sample-wise conditionally independent noises, rather than on independence of
the source indicators.

\subsection{Excess Risk Decomposition}
Starting from $\wb_0\in\mathbb{R}^\cH$, SGD on the squared loss performs
\[
\wb_t
=
\wb_{t-1}
+
\gamma_t (y_t-\xb_t^\top\wb_{t-1})\xb_t ,
\qquad t\ge1 .
\]

Substituting the label model yields
\[
\wb_t
=
\wb_{t-1}
+
\gamma_t\!\left(
\xb_t^\top(\wb_1^*+z_t\bm{\delta}-\wb_{t-1})
+
\xi_t
\right)\xb_t .
\]

Let $\bm{\eta}_t:=\wb_t-\wb_1^*$ denote the parameter error relative to the real-data optimum. Then equation~\eqref{eq:error-recursion} serves as the starting point of the analysis:
\begin{equation}
\bm{\eta}_t
=
(\Ib-\gamma_t\xb_t\xb_t^\top)\bm{\eta}_{t-1}
+
\gamma_t z_t\xb_t\xb_t^\top\bm{\delta}
+
\gamma_t\xi_t\xb_t .
\label{eq:error-recursion}
\end{equation}
With as light abuse of probability spaces, one can view the centered SGD
iterates as the sum of four random processes:
\[
\bm{\eta}_t=\bm{\eta}^{\bias}_t+\bm{\eta}^{\var}_t+\bm{\eta}^{\drift}_t+\bm{\eta}^{\fluct}_t, \qquad t=1,2,...N.
\]

where
\[
\begin{aligned}
&\begin{cases}
 \bm{\eta}^{\bias}_t = (\Ib - \gamma_t \xb_t \xb_t^\top) \bm{\eta}^{\bias}_{t-1}; \\ \bm{\eta}^{\bias}_0 = \wb_0 - \wb_1^*,
\end{cases}
&&\quad 
\begin{cases}
\bm{\eta}^{\var}_t = (\Ib - \gamma_t \xb_t \xb_t^\top) \bm{\eta}^{\var}_{t-1}+\gamma_t\xi_t\xb_t; \\ \bm{\eta}^{\var}_0 = \boldsymbol{0},
\end{cases}\\[2ex] 
&\begin{cases}
 \bm{\eta}^{\drift}_t = (\Ib - \gamma_t \xb_t \xb_t^\top) \bm{\eta}^{\drift}_{t-1}+\gamma_tp\Hb\bm{\delta}; \\ \bm{\eta}^{\drift}_0 = \boldsymbol{0},
\end{cases}
&&\quad
\begin{cases}
 \bm{\eta}^{\fluct}_t = (\Ib - \gamma_t \xb_t \xb_t^\top) \bm{\eta}^{\fluct}_{t-1}+\gamma_t z_t \xb_t\xb_t^\top\bm{\delta}-\gamma_tp\Hb\bm{\delta}; \\ \bm{\eta}^{\fluct}_0 = \boldsymbol{0},
\end{cases}
\end{aligned}
\]
Moreover, we denote
\begin{align*}
    &\Bb_t = \EE [\bm{\eta}^{\bias}_t \otimes \bm{\eta}^{\bias}_t], \quad
\Vb_t = \EE [\bm{\eta}^{\var}_t \otimes \bm{\eta}^{\var}_t],\\
&\Db_t = \EE [\bm{\eta}^{\drift}_t \otimes \bm{\eta}^{\drift}_t], \quad
\Fb_t = \EE [\bm{\eta}^{\fluct}_t \otimes \bm{\eta}^{\fluct}_t].
\end{align*}


Using the definition of $\Bb_t,\Vb_t,\Db_t,\Fb_t$, we can then decompose Excess Risk using Cauchy--Schwarz Inequality:
\begin{align*}
    \EE [ \R(\wb_N) - \R(\wb_1^*) ] &= \frac{1}{2}\big\langle\Hb, \EE[\bm{\eta}_N \otimes \bm{\eta}_N] \big\rangle \\
    &= \frac{1}{2}\big\langle\Hb, \EE[(\bm{\eta}_N^\bias + \bm{\eta}_N^\var + \bm{\eta}_N^\drift + \bm{\eta}_N^\fluct) \otimes (\bm{\eta}_N^\bias + \bm{\eta}_N^\var + \bm{\eta}_N^\drift + \bm{\eta}_N^\fluct)]\big \rangle \\
    &\le 2 \big\langle \Hb, \EE[\bm{\eta}_N^\bias \otimes \bm{\eta}_N^\bias] \big\rangle + 2 \big\langle \Hb, \EE[\bm{\eta}_N^\var \otimes \bm{\eta}_N^\var] \big\rangle \\
    &\quad + 2 \big\langle \Hb, \EE[\bm{\eta}_N^\drift \otimes \bm{\eta}_N^\drift] \big\rangle + 2 \big\langle \Hb, \EE[\bm{\eta}_N^\fluct \otimes \bm{\eta}_N^\fluct] \big\rangle \\
    &= 2 \big\langle \Hb, \Bb_N \big\rangle + 2 \big\langle \Hb, \Vb_N \big\rangle + 2 \big\langle \Hb, \Db_N \big\rangle + 2 \big\langle \Hb, \Fb_N \big\rangle.
\end{align*}

\subsection{Bias upper bound}
Using the defined operators, the update rule of the iterates imply the following
recursive form of $\Bb_t$:
\begin{equation}\label{eq:update_Bt}
    \Bb_t = (\cI - \gamma_t\cT_t)\circ \Bb_{t-1}, \quad \Bb_0 = \betab_0\otimes \betab_0,
\end{equation}
Note that this bias term $\big\langle \Hb, \Bb_N \big\rangle$ is the same as that in \cite{wulast}, so we can apply Lemma in \cite{wulast} to bound it directly:
\begin{lemma}[A bias upper bound]\label{lemma:HB-decay-lr-upper-bound}
Suppose Assumptions \ref{assump:second-moment} and \ref{assump:fourth-moment} hold. Let $\Neff=N/\log N$. 
Consider \eqref{eq:update_Bt}.
Suppose $\gamma < 1/(4\alpha\tr(\Hb))$. We have 
\begin{align*}
\big\langle \Hb, \Bb_N \big\rangle \lesssim \big\| \prod_{t=1}^N(\Ib-\gamma_t \Hb) (\wb_0 - \wb^*) \big\|^2_{\Hb} +\alpha\|\wb_0 - \wb^* \|^2_{\frac{\Ib_{0:k^*}}{\gamma \Neff}+\Hb_{k^*:\infty}} \cdot \frac{\DIM}{\Neff}
\end{align*}
where $k^* = \max \{k: \lambda_k \ge \frac{1}{\gamma \Neff}\}$ and $\DIM := k^* + \gamma^2 \Neff^2 \sum_{i>k^*}\lambda_i^2.$
\end{lemma}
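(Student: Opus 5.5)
The plan is to follow the standard bias analysis of \citet{wulast} and \citet{zou2023benign}, specialized to the recursion \eqref{eq:update_Bt}. The approach has four steps.

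\textbf{Step 1: split the operator.} By Lemma~\ref{lemma:operators}, the mapping $\cM-\widetilde\cM$ is PSD. Writing $\cI-\gamma_t\cT_t=(\cI-\gamma_t\widetilde\cT_t)+\gamma_t^2(\cM-\widetilde\cM)$ and unrolling \eqref{eq:update_Bt}, I obtain
\[
\Bb_N=\Big(\prod_{t=1}^N(\cI-\gamma_t\widetilde\cT_t)\Big)\circ\Bb_0+\sum_{t=1}^N\gamma_t^2\Big(\prod_{s=t+1}^N(\cI-\gamma_s\widetilde\cT_s)\Big)\circ(\cM-\widetilde\cM)\circ\Bb_{t-1}.
\]
Taking the inner product with $\Hb$, the first term is exactly $\|\prod_t(\Ib-\gamma_t\Hb)(\wb_0-\wb^*)\|_\Hb^2$, the leading term of the lemma. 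For the second term I use Assumption~\ref{assump:fourth-moment}, which gives $(\cM-\widetilde\cM)\circ\Bb_{t-1}\preceq\alpha\langle\Hb,\Bb_{t-1}\rangle\Hb$. The second term is then bounded by
\[
\alpha\sum_t\gamma_t^2\langle\Hb,\Bb_{t-1}\rangle\,\big\langle\Hb^2,\textstyle\prod_{s>t}(\Ib-\gamma_s\Hb)^2\big\rangle.
\]

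\textbf{Step 2: a crude uniform bound on $\langle\Hb,\Bb_t\rangle$.} This is the main obstacle: I need a self-consistent bound on $\langle\Hb,\Bb_{t-1}\rangle$ across the stage boundaries of the geometric schedule. Within stage $\ell$ the stepsize is the constant $\gamma/2^\ell$ and the stage length is $K=N/\log N$. There I run the constant-stepsize argument of \citet{zou2023benign}: apply the same operator split, sum the geometric series, and use $\gamma<1/(4\alpha\tr\Hb)$ so that the feedback term contracts by a factor at most $1/2$. This yields
\[
\sup_t\langle\Hb,\Bb_t\rangle\lesssim\frac{\|\wb_0-\wb^*\|^2_{\Ib_{0:k^*}}}{\gamma K}+\|\wb_0-\wb^*\|^2_{\Hb_{k^*:\infty}},
\]
using $\langle\Hb,(\Ib-\gamma\Hb)^{2t}\Bb_0\rangle\le\min\{1/(\gamma t),\lambda_i\}$ coordinatewise. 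For the first stage I would try the split $k^*$ directly. In later stages the iterate only contracts further in PSD order, so the same bound propagates; I would verify this by induction over stages with the PSD-monotonicity of $\cI-\gamma\cT$.

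\textbf{Step 3: sum the stepsize-weighted kernel.} Plugging the uniform bound into the sum from Step 1, it remains to bound
\[
\sum_t\gamma_t^2\sum_i\lambda_i^2\prod_{s>t}(1-\gamma_s\lambda_i)^2.
\]
Following \citet{wulast}, I split this by stages. For each $i\le k^*$, the contribution is $\lesssim1$ per stage, since a geometric sum with ratio $1-\gamma_\ell\lambda_i$ gives $\gamma_\ell\lambda_i\cdot(\text{decay from later stages})$. Summed over the $\log N$ stages with the halving stepsizes, this telescopes to $O(1)$ overall. For each $i>k^*$, the contribution is $\lesssim\lambda_i^2\sum_\ell\gamma_\ell^2K\lesssim\gamma^2K^2\lambda_i^2/K$, and after the normalization the total over $i>k^*$ is $\gamma^2\Neff^2\sum_{i>k^*}\lambda_i^2$. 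Together these give the factor $\DIM/\Neff$ after accounting for the $1/K$ arising from the sup bound.

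\textbf{Step 4: combine.} Putting Steps 1--3 together gives the stated bound, with $K=\Neff$.

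Since this is exactly the setting of \citet{wulast}, in the write-up I would simply cite their last-iterate bias lemma, noting that \eqref{eq:update_Bt} coincides with their recursion.
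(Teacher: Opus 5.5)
Your final step, citing the known last-iterate bias bound because \eqref{eq:update_Bt} is literally the recursion analyzed there, is exactly what the paper does: its proof is a one-line pointer to Theorem D.1 of \citet{wu2022power}. As a write-up that is fine. The self-contained sketch you give as justification, however, does not go through as written.

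\textbf{Step 2 fails.} The uniform bound $\sup_t\langle\Hb,\Bb_t\rangle\lesssim\|\wb_0-\wb^*\|^2_{\Ib_{0:k^*}}/(\gamma\Neff)+\|\wb_0-\wb^*\|^2_{\Hb_{k^*:\infty}}$ is false.
\begin{itemize}
\item The map $(\cI-\gamma_t\cT_t)-(\cI-\gamma_t\widetilde\cT_t)=\gamma_t^2(\cM-\widetilde\cM)$ is a PSD mapping, so induction gives $\Bb_t\succeq(\Ib-\gamma\Hb)^t\Bb_0(\Ib-\gamma\Hb)^t$ in stage $0$.
\item With $\wb_0-\wb^*=\vb_1$ this yields $\langle\Hb,\Bb_t\rangle\gtrsim\lambda_1$ for all $t\lesssim 1/(\gamma\lambda_1)$. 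Your right-hand side is $1/(\gamma\Neff)$, smaller by the unbounded factor $\gamma\lambda_1\Neff$. At $t=0$ the left side is the full $\|\wb_0-\wb^*\|_{\Hb}^2$.
\item Head directions reach level $1/(\gamma\Neff)$ only \emph{on average} over stage $0$, not pointwise.
\end{itemize}
The propagation to later stages is also unfounded. $\cI-\gamma\cT$ being a PSD mapping makes it order-preserving, but it does not give $\Bb_t\preceq\Bb_{t-1}$. That contraction fails in general, because $\Hb\Bb_{t-1}+\Bb_{t-1}\Hb$ is indefinite and enters at order $\gamma$.

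\textbf{What works in stage $0$.} Use the summed bound of \citet{zou2023benign},
\[
\sum_{t\le\Neff}\langle\Hb,\Bb_{t-1}\rangle\lesssim\|\wb_0-\wb^*\|^2_{\Ib_{0:k^*}}/\gamma+\Neff\|\wb_0-\wb^*\|^2_{\Hb_{k^*:\infty}}.
\]
Pair it with the kernel weight $\gamma_t^2\tr\big(\Hb^2\prod_{s>t}(\Ib-\gamma_s\Hb)^2\big)$. This weight is nondecreasing within a stage, and at the end of stage $0$ it is $\lesssim\DIM/\Neff^2$, because the later stages forget. The product is exactly $\alpha\|\wb_0-\wb^*\|^2_{\frac{\Ib_{0:k^*}}{\gamma\Neff}+\Hb_{k^*:\infty}}\DIM/\Neff$. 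A pointwise bound on $\langle\Hb,\Bb_t\rangle$ is only plausible for $t\ge\Neff$, and it needs its own argument.

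\textbf{Step 3 bookkeeping.} Each head coordinate $i\le k^*$ contributes $\lesssim 1/\Neff$, not $O(1)$, to $\sum_t\gamma_t^2\lambda_i^2\prod_{s>t}(1-\gamma_s\lambda_i)^2$. With $x_\ell=\gamma_\ell\lambda_i\Neff$, stage $\ell$ gives $\lesssim\Neff^{-1}\min\{x_\ell,x_\ell^2\}e^{-cx_\ell}$, which sums to $O(1/\Neff)$ over the halving sequence. There is no spare $1/\Neff$ in the sup bound to borrow: that factor already sits inside the norm $\|\cdot\|_{\Ib_{0:k^*}/(\gamma\Neff)}$.

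These are precisely the technicalities the cited theorem handles. Either cite it, as the paper does, or rebuild Steps 2--3 along these lines.
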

\begin{proof}
See proof of Theorem D.1 in \cite{wu2022power}.
\end{proof}

\subsection{Variance upper bound}
Using the defined operators, the update rule of the iterates imply the following
recursive form of $\Vb_t$:
\begin{equation}\label{eq:update_Ct}
\Vb_t = (\cI-\gamma_t\cT_t) \circ \Vb_{t-1} + \gamma_t^2\bSigma_t,\qquad \Vb_0 = \boldsymbol{0}.
\end{equation}
where $\bSigma_t=\EE[\xi_t^2\xb_t\xb_t^\top]$. Note that $\{\xb_t\}$ are independent of $\{z_t\}$. So we have:
$$\bSigma_t=\EE[\EE[\xi_t^2\xb_t\xb_t^\top]|z_t]=(1-p)\bSigma_1+p\bSigma_2 \preceq((1-p)\sigma_1^2+p\sigma_2^2) \Hb.$$
Then we can apply Lemma C.2 in \cite{wulast} to bound variance term $\big \langle \Hb, \Vb_N \big \rangle$ by simply replace $\sigma^2$ with $(1-p)\sigma_1^2+p\sigma_2^2$.

\begin{lemma}[A variance bound]\label{thm:HC-decay-lr-upper-bound}
Suppose Assumptions \ref{assump:second-moment}, \ref{assump:fourth-moment} and \ref{assump:noise} hold.
Consider \eqref{eq:update_Ct}.
Let $\Neff=N/\log N$.
Suppose $\gamma < 1/(\alpha\tr(\Hb))$.  We have
\begin{equation*}
    \big \langle \Hb, \Vb_N \big \rangle \le \frac{8\sigma^2}{1-\gamma \alpha\tr(\Hb)}\cdot\frac{\DIM}{\Neff},
\end{equation*}
where $k^* = \max \{k: \lambda_k \ge \frac{1}{\gamma \Neff}\}$ and $\DIM := k^* + \gamma^2 \Neff^2 \sum_{i>k^*}\lambda_i^2.$
\end{lemma}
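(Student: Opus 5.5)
The plan is to reduce the statement to the single-source variance bound of \citet{wulast} (their Lemma C.2), with the effective noise level $\bar\sigma^2=(1-p)\sigma_1^2+p\sigma_2^2$ taking the place of $\sigma^2$. Under this reading, $\sigma^2$ in the display of Lemma~\ref{thm:HC-decay-lr-upper-bound} stands for $\bar\sigma^2$.

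\textbf{Step 1: exact recursion.} From the error recursion \eqref{eq:error-recursion}, the variance process is $\bm{\eta}^{\var}_t=(\Ib-\gamma_t\xb_t\xb_t^\top)\bm{\eta}^{\var}_{t-1}+\gamma_t\xi_t\xb_t$. Take the outer product and condition on $\mathcal{F}_{t-1}$. Because $\xb_t$ is fresh with $\EE[\xb_t\xb_t^\top\mid\mathcal{F}_{t-1}]=\Hb$, the quadratic term gives $(\cI-\gamma_t\cT_t)\circ\Vb_{t-1}$ via \eqref{eq:0005}. Because $\EE[\xi_t\mid\mathcal{F}_{t-1},\xb_t]=0$, the cross terms vanish. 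This yields \eqref{eq:update_Ct} with $\bSigma_t=\EE[\xi_t^2\xb_t\xb_t^\top]$.

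\textbf{Step 2: noise covariance.} Under the fixed-budget model, each position is synthetic with marginal probability $p$, and conditional on $z_t$ the noise $\xi_t$ is independent of $\xb_t$. Hence $\bSigma_t=\bar\sigma^2\Hb$ exactly, for every $t$. The source schedule is not independent across positions, but this does not matter: only second moments at a single time enter the recursion, so no independence across $t$ is needed.

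\textbf{Step 3: apply the known bound.} We now have $\Vb_t=(\cI-\gamma_t\cT_t)\circ\Vb_{t-1}+\gamma_t^2\bar\sigma^2\Hb$, which is literally the single-source recursion of \citet{wulast} with noise $\bar\sigma^2$. Their Lemma C.2 then gives the stated bound with the constant $8/(1-\gamma\alpha\tr(\Hb))$.

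For completeness, here is how that proof goes. First dominate $\cI-\gamma_t\cT_t$ by $(\cI-\gamma_t\widetilde\cT_t)+\gamma_t^2\cM$. Then use Assumption~\ref{assump:fourth-moment}(1) together with a crude uniform bound $\tr(\Hb\Vb_t)\le \gamma\bar\sigma^2/(1-\gamma\alpha\tr\Hb)$. That crude bound comes from the stationary-point argument using Lemma~\ref{lemma:operators}(5). It lets one replace the fourth-moment term by an additional noise-like term $\gamma_t^2\alpha\tr(\Hb\Vb_{t-1})\Hb$, which yields the factor $1/(1-\gamma\alpha\tr\Hb)$. The linearized recursion is then unrolled eigendirection by eigendirection over the $\log N$ geometric phases. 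Directions with $\lambda_i\ge1/(\gamma\Neff)$ contribute $O(1/\Neff)$ each. Directions below the threshold contribute $O(\gamma^2\Neff\lambda_i^2)$ each. Summing gives $\DIM/\Neff$.

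\textbf{Main obstacle.} The only nontrivial point is the justification in Steps 1–2 that the dependent, non-i.i.d. source indicators do not break the exact recursion. This is settled by the filtration $\mathcal{F}_{t-1}$, which contains $\mathbf{z}$: conditioning on it makes the noises conditionally independent and mean zero, and averaging over $\mathbf{z}$ happens only at the last step, through the marginal $\Pr(z_t=1)=p$.
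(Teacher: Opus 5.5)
Your proposal is correct and takes the same route as the paper. The paper observes that $\bSigma_t=(1-p)\bSigma_1+p\bSigma_2\preceq\bar\sigma^2\Hb$ and invokes the single-source variance bound of \citet{wulast} with $\sigma^2$ replaced by $\bar\sigma^2$; your Steps 1--2 just spell out the conditioning on $\mathcal{F}_{t-1}$ more carefully than the paper does.

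One slip in your optional sketch: the crude bound should read $\Vb_t\preceq\frac{\gamma\bar\sigma^2}{1-\gamma\alpha\tr(\Hb)}\Ib$, which gives $\tr(\Hb\Vb_t)\le\frac{\gamma\bar\sigma^2\tr(\Hb)}{1-\gamma\alpha\tr(\Hb)}$. Without the factor $\tr(\Hb)$ the inequality is false in general (it is not even scale-invariant), and it would not produce the identity $1+\frac{\gamma\alpha\tr(\Hb)}{1-\gamma\alpha\tr(\Hb)}=\frac{1}{1-\gamma\alpha\tr(\Hb)}$ that yields the stated constant.
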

\begin{proof}
See proof of Theorem C.2 in \cite{wulast}.
\end{proof}

\subsection{Drift upper bound}
We now analyze the drift term
\[
\big\langle \Hb,\Db_N\big\rangle
=
\big\langle \Hb,\EE[\bm{\eta}^{\drift}_N\otimes \bm{\eta}^{\drift}_N]\big\rangle .
\]

\begin{lemma}[A drift bound]\label{thm:drift-decay-lr-upper-bound}
Suppose Assumptions \ref{assump:second-moment}, \ref{assump:fourth-moment} and \ref{assump:noise} hold.
Consider \eqref{eq:update_Ct}.
Let $\Neff=N/\log N$.
Suppose $\gamma < 1/(\alpha\tr(\Hb))$.  We have
\begin{equation*}
    \big \langle \Hb, \Db_N \big \rangle \le p^2\Bigl\|
\Bigl(\Ib-\prod_{t=1}^N(\Ib-\gamma_t\Hb)\Bigr)\bm{\delta}
\Bigr\|_{\Hb}^2
+
\frac{8\alpha p^2\|\bm{\delta}\|_{\Hb}^2}{1-\gamma \alpha\tr(\Hb)}
\cdot
\frac{\DIM}{\Neff},
\end{equation*}
where $k^* = \max \{k: \lambda_k \ge \frac{1}{\gamma \Neff}\}$ and $\DIM := k^* + \gamma^2 \Neff^2 \sum_{i>k^*}\lambda_i^2.$
\end{lemma}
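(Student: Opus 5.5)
The plan is to split the drift process into its deterministic mean and a mean-zero remainder, and to recognize the remainder as a variance-type process already controlled by Lemma~\ref{thm:HC-decay-lr-upper-bound}.

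\textbf{Mean part.} Set $\bar{\bm{\eta}}_t:=\EE[\bm{\eta}^{\drift}_t]$. Since $\xb_t$ is independent of $\mathcal{F}_{t-1}$ with $\EE[\xb_t\xb_t^\top]=\Hb$, the mean satisfies the deterministic recursion
\[
\bar{\bm{\eta}}_t=(\Ib-\gamma_t\Hb)\bar{\bm{\eta}}_{t-1}+\gamma_t p\Hb\bm{\delta},\qquad \bar{\bm{\eta}}_0=\boldsymbol{0}.
\]
Writing $\bar{\bm{\eta}}_t=-p(\bm{\delta}-\ub_t)$, we get $\bm{\delta}-\ub_t=(\Ib-\gamma_t\Hb)(\bm{\delta}-\ub_{t-1})$, and hence
\[
\bar{\bm{\eta}}_N=p\Bigl(\Ib-\prod_{t=1}^N(\Ib-\gamma_t\Hb)\Bigr)\bm{\delta},
\]
up to sign. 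The factors commute, so the product order is irrelevant. This gives the first term of the bound.

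\textbf{Fluctuation part.} Set $\bm{\zeta}_t:=\bm{\eta}^{\drift}_t-\bar{\bm{\eta}}_t$. Subtracting the two recursions gives
\[
\bm{\zeta}_t=(\Ib-\gamma_t\xb_t\xb_t^\top)\bm{\zeta}_{t-1}+\gamma_t(\Hb-\xb_t\xb_t^\top)\bar{\bm{\eta}}_{t-1},\qquad \bm{\zeta}_0=\boldsymbol{0}.
\]
The driving term is mean zero conditional on $\mathcal{F}_{t-1}$, and $\bar{\bm{\eta}}_{t-1}$ is deterministic. Because $\EE\|\bm{\eta}^{\drift}_N\|_\Hb^2=\|\bar{\bm{\eta}}_N\|_\Hb^2+\EE\|\bm{\zeta}_N\|_\Hb^2$, there is no cross term. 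The covariance $\Zb_t=\EE[\bm{\zeta}_t\otimes\bm{\zeta}_t]$ then obeys
\[
\Zb_t=(\cI-\gamma_t\cT_t)\circ\Zb_{t-1}+\gamma_t^2\bSigma'_t+(\text{cross terms}),
\]
where $\bSigma'_t=\EE[(\Hb-\xb\xb^\top)\bar{\bm{\eta}}_{t-1}\bar{\bm{\eta}}_{t-1}^\top(\Hb-\xb\xb^\top)]$.

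I expect the cross terms to be the main obstacle. They arise because the noise $(\Hb-\xb_t\xb_t^\top)\bar{\bm{\eta}}_{t-1}$ and the multiplier $\xb_t\xb_t^\top$ share the same $\xb_t$. To get around this I would unroll to $\bm{\zeta}_N=\sum_t\Pi_{t+1:N}\gamma_t(\Hb-\xb_t\xb_t^\top)\bar{\bm{\eta}}_{t-1}$. The summands are orthogonal martingale differences, and $\Pi_{t+1:N}$ is independent of $\xb_t$. Therefore $\EE\|\bm{\zeta}_N\|_\Hb^2=\sum_t\gamma_t^2\langle \Gb_t,\bSigma'_t\rangle$, where $\Gb_t$ is the backward propagation of $\Hb$ under $\cI-\gamma_s\cT_s$ for $s>t$. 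This is exactly the form of the variance computation with noise covariance $\bSigma'_t$.

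It then remains to bound $\bSigma'_t\preceq\EE[\xb\xb^\top\bar{\bm{\eta}}\bar{\bm{\eta}}^\top\xb\xb^\top]\preceq\alpha\|\bar{\bm{\eta}}_{t-1}\|_\Hb^2\Hb$ by Assumption~\ref{assump:fourth-moment}. Next, $\|\bar{\bm{\eta}}_{t-1}\|_\Hb^2\le p^2\|\bm{\delta}\|_\Hb^2$, since $0\preceq\Ib-\prod(\Ib-\gamma_s\Hb)\preceq\Ib$ when $\gamma<1/\lambda_1$. So $\bSigma'_t\preceq \alpha p^2\|\bm{\delta}\|_\Hb^2\Hb$, a noise level of $\sigma^2=\alpha p^2\|\bm{\delta}\|_\Hb^2$.

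Finally, by monotonicity of the PSD mappings in Lemma~\ref{lemma:operators}, Lemma~\ref{thm:HC-decay-lr-upper-bound} applies verbatim with this $\sigma^2$. This yields $\frac{8\alpha p^2\|\bm{\delta}\|_\Hb^2}{1-\gamma\alpha\tr(\Hb)}\cdot\frac{\DIM}{\Neff}$, and adding the mean part gives the stated bound.
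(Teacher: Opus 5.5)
Your proposal is correct and follows the paper's proof. You split $\bm{\eta}^{\drift}_t$ into its deterministic mean $p\bigl(\Ib-\prod_{s\le t}(\Ib-\gamma_s\Hb)\bigr)\bm{\delta}$ and a centered variance-type process with forcing $\preceq\alpha p^2\|\bm{\delta}\|_{\Hb}^2\Hb$, then apply Lemma~\ref{thm:HC-decay-lr-upper-bound}; the mean formula holds exactly, with no sign ambiguity, via $\bar{\bm{\eta}}_t=p\bm{\delta}-\cb_t$, $\cb_t=(\Ib-\gamma_t\Hb)\cb_{t-1}$, $\cb_0=p\bm{\delta}$. The cross terms you flag as the main obstacle already vanish in the one-step covariance recursion: they are linear in $\bm{\zeta}_{t-1}$, which is mean-zero and independent of $\xb_t$, so the shared $\xb_t$ plays no role and your unrolling/orthogonality argument is valid but not needed.
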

\begin{proof}

Recall that $\bm{\eta}^{\drift}_t$ satisfies
\[
\bm{\eta}^{\drift}_t
=
(\Ib-\gamma_t\xb_t\xb_t^\top)\bm{\eta}^{\drift}_{t-1}
+
\gamma_t p\Hb\bm{\delta},
\qquad
\bm{\eta}^{\drift}_0=\boldsymbol{0}.
\]

We first center $\bm{\eta}^{\drift}_t$ by separating its mean and fluctuation parts. Define $\overline{\bm{\eta}}^{\drift}_t:=\EE[\bm{\eta}^{\drift}_t].$
Since $\xb_t$ is independent of $\bm{\eta}^{\drift}_{t-1}$ and $\EE[\xb_t\xb_t^\top]=\Hb$, taking expectation on both sides yields
\begin{equation}\label{eq:drift-mean-recursion}
\overline{\bm{\eta}}^{\drift}_t
=
(\Ib-\gamma_t\Hb)\overline{\bm{\eta}}^{\drift}_{t-1}
+
\gamma_t p\Hb\bm{\delta},
\qquad
\overline{\bm{\eta}}^{\drift}_0=\boldsymbol{0}.
\end{equation}
Define the centered drift fluctuation
$\widetilde{\bm{\eta}}^{\drift}_t:=\bm{\eta}^{\drift}_t-\overline{\bm{\eta}}^{\drift}_t$
and subtracting \eqref{eq:drift-mean-recursion} from the recursion of $\bm{\eta}^{\drift}_t$, we obtain
\begin{align}
\widetilde{\bm{\eta}}^{\drift}_t
&=
(\Ib-\gamma_t\xb_t\xb_t^\top)\bm{\eta}^{\drift}_{t-1}
-
(\Ib-\gamma_t\Hb)\overline{\bm{\eta}}^{\drift}_{t-1} \notag\\
&=
(\Ib-\gamma_t\xb_t\xb_t^\top)\widetilde{\bm{\eta}}^{\drift}_{t-1}
+
\gamma_t(\Hb-\xb_t\xb_t^\top)\overline{\bm{\eta}}^{\drift}_{t-1},
\qquad
\widetilde{\bm{\eta}}^{\drift}_0=\boldsymbol{0}.
\label{eq:drift-centered-recursion}
\end{align}
By definition,
\[
\EE[\widetilde{\bm{\eta}}^{\drift}_t]=\boldsymbol{0},
\qquad \forall t\ge 1.
\]
Using $\bm{\eta}^{\drift}_t=\overline{\bm{\eta}}^{\drift}_t+\widetilde{\bm{\eta}}^{\drift}_t,$
we can expand $\Db_t$ as
\begin{align*}
\Db_t
&=
\EE\!\left[
(\overline{\bm{\eta}}^{\drift}_t+\widetilde{\bm{\eta}}^{\drift}_t)
\otimes
(\overline{\bm{\eta}}^{\drift}_t+\widetilde{\bm{\eta}}^{\drift}_t)
\right] \\
&=
\overline{\bm{\eta}}^{\drift}_t\otimes \overline{\bm{\eta}}^{\drift}_t
+
\EE[\widetilde{\bm{\eta}}^{\drift}_t\otimes \widetilde{\bm{\eta}}^{\drift}_t]
+
\overline{\bm{\eta}}^{\drift}_t\otimes \EE[\widetilde{\bm{\eta}}^{\drift}_t]
+
\EE[\widetilde{\bm{\eta}}^{\drift}_t]\otimes \overline{\bm{\eta}}^{\drift}_t .
\end{align*}
Since $\EE[\widetilde{\bm{\eta}}^{\drift}_t]=\boldsymbol{0}$, the two cross terms vanish. Hence
\begin{equation*}\label{eq:Dt-split}
\Db_t
=
\overline{\bm{\eta}}^{\drift}_t\otimes \overline{\bm{\eta}}^{\drift}_t
+
\widetilde{\Db}_t,
\qquad
\widetilde{\Db}_t
:=
\EE[\widetilde{\bm{\eta}}^{\drift}_t\otimes \widetilde{\bm{\eta}}^{\drift}_t].
\end{equation*}

Therefore the drift contribution to the excess risk can be decomposed into two parts:
\begin{equation}\label{eq:drift-risk-split}
\big\langle \Hb,\Db_N\big\rangle
=
\big\|\overline{\bm{\eta}}^{\drift}_N\big\|_{\Hb}^2
+
\big\langle \Hb,\widetilde{\Db}_N\big\rangle .
\end{equation}

In the following, we will bound the two terms in \eqref{eq:drift-risk-split} separately.

\paragraph{Mean part of the drift term.}
We now derive the explicit form of
\(
\big\|\overline{\bm{\eta}}^{\drift}_N\big\|_{\Hb}^2
\).
Recall from \eqref{eq:drift-mean-recursion} that
\[
\overline{\bm{\eta}}^{\drift}_t
=
(\Ib-\gamma_t\Hb)\overline{\bm{\eta}}^{\drift}_{t-1}
+
\gamma_t p\Hb\bm{\delta},
\qquad
\overline{\bm{\eta}}^{\drift}_0=\boldsymbol{0}.
\]
Unrolling the recursion gives
\begin{align}
\overline{\bm{\eta}}^{\drift}_N
&=
p\sum_{s=1}^N
\gamma_s
\left(
\prod_{t=s+1}^N (\Ib-\gamma_t\Hb)
\right)\Hb\bm{\delta}.
\label{eq:drift-mean-unroll}
\end{align}
Since each factor \((\Ib-\gamma_t\Hb)\) is a polynomial in \(\Hb\), it commutes with \(\Hb\). Hence
\[
\left(
\prod_{t=s+1}^N (\Ib-\gamma_t\Hb)
\right)\Hb
=
\Hb
\left(
\prod_{t=s+1}^N (\Ib-\gamma_t\Hb)
\right),
\]
and therefore
\begin{align*}
\sum_{s=1}^N
\gamma_s
\left(
\prod_{t=s+1}^N (\Ib-\gamma_t\Hb)
\right)\Hb
&=
\Ib-\prod_{t=1}^N(\Ib-\gamma_t\Hb).
\end{align*}
Substituting this identity into \eqref{eq:drift-mean-unroll}, we obtain
\begin{equation*}\label{eq:drift-mean-closed-form}
\overline{\bm{\eta}}^{\drift}_N
=
p\Bigl(\Ib-\prod_{t=1}^N(\Ib-\gamma_t\Hb)\Bigr)\bm{\delta}.
\end{equation*}
As a consequence,
\begin{equation}\label{eq:drift-mean-risk-form}
\big\|\overline{\bm{\eta}}^{\drift}_N\big\|_{\Hb}^2
=
p^2
\Bigl\|
\Bigl(\Ib-\prod_{t=1}^N(\Ib-\gamma_t\Hb)\Bigr)\bm{\delta}
\Bigr\|_{\Hb}^2.
\end{equation}
This is exactly the first drift term appearing in the Lemma.

\paragraph{Fluctuation part of the drift term.}
We now bound the second term in \eqref{eq:drift-risk-split},
namely
\[
\big\langle \Hb,\widetilde{\Db}_N\big\rangle,
\qquad
\widetilde{\Db}_t
:=
\EE[\widetilde{\bm{\eta}}^{\drift}_t\otimes \widetilde{\bm{\eta}}^{\drift}_t].
\]

Recall from \eqref{eq:drift-centered-recursion} that
\[
\widetilde{\bm{\eta}}^{\drift}_t
=
(\Ib-\gamma_t\xb_t\xb_t^\top)\widetilde{\bm{\eta}}^{\drift}_{t-1}
+
\gamma_t(\Hb-\xb_t\xb_t^\top)\overline{\bm{\eta}}^{\drift}_{t-1},
\qquad
\widetilde{\bm{\eta}}^{\drift}_0=\boldsymbol{0}.
\]
Define
\[
\widetilde{\bSigma}^{\drift}_t
:=
\EE\!\left[
\bigl((\Hb-\xb_t\xb_t^\top)\overline{\bm{\eta}}^{\drift}_{t-1}\bigr)
\otimes
\bigl((\Hb-\xb_t\xb_t^\top)\overline{\bm{\eta}}^{\drift}_{t-1}\bigr)
\right].
\]
Since \(\overline{\bm{\eta}}^{\drift}_{t-1}\) is deterministic, the recursion for
\(\widetilde{\Db}_t\) takes the form
\begin{equation}\label{eq:drift-fluct-recursion}
\widetilde{\Db}_t
=
(\cI-\gamma_t\cT_t)\circ \widetilde{\Db}_{t-1}
+
\gamma_t^2 \widetilde{\bSigma}^{\drift}_t,
\qquad
\widetilde{\Db}_0=\boldsymbol{0}.
\end{equation}
Indeed, the cross term vanishes because
\[
\EE\big[(\Hb-\xb_t\xb_t^\top)\overline{\bm{\eta}}^{\drift}_{t-1}\big]
=
(\Hb-\EE[\xb_t\xb_t^\top])\overline{\bm{\eta}}^{\drift}_{t-1}
=
\boldsymbol{0}.
\]

Let $
\Ab_t
:=
\overline{\bm{\eta}}^{\drift}_{t-1}
\otimes
\overline{\bm{\eta}}^{\drift}_{t-1},$
then
\begin{align*}
\widetilde{\bSigma}^{\drift}_t
&=
\EE\!\left[
(\Hb-\xb_t\xb_t^\top)\Ab_t(\Hb-\xb_t\xb_t^\top)
\right] \\
&=
\EE[\xb_t\xb_t^\top \Ab_t \xb_t\xb_t^\top]
-\Hb \Ab_t \Hb \\
&=
(\cM-\widetilde{\cM})\circ \Ab_t .
\end{align*}
Since \(\cM-\widetilde{\cM}\) is a PSD mapping by Lemma~\ref{lemma:operators},
we have \(\widetilde{\bSigma}^{\drift}_t \succeq 0\). On the other hand, by
Assumption~\ref{assump:fourth-moment},
\[
\EE[\xb_t\xb_t^\top \Ab_t \xb_t\xb_t^\top]
=
\cM\circ \Ab_t
\preceq
\alpha\,\tr(\Hb \Ab_t)\,\Hb
=
\alpha\,\|\overline{\bm{\eta}}^{\drift}_{t-1}\|_{\Hb}^2\,\Hb.
\]
Therefore,
\begin{equation}\label{eq:drift-sigma-upper}
\widetilde{\bSigma}^{\drift}_t
\preceq
\alpha\,\|\overline{\bm{\eta}}^{\drift}_{t-1}\|_{\Hb}^2\,\Hb.
\end{equation}

Next, using \eqref{eq:drift-mean-closed-form},
$
\overline{\bm{\eta}}^{\drift}_{t-1}
=
p\Bigl(\Ib-\prod_{s=1}^{t-1}(\Ib-\gamma_s\Hb)\Bigr)\bm{\delta},
$
hence
\begin{align}
\|\overline{\bm{\eta}}^{\drift}_{t-1}\|_{\Hb}^2
&=
p^2\sum_i
\lambda_i
\left(
1-\prod_{s=1}^{t-1}(1-\gamma_s\lambda_i)
\right)^2
\delta_i^2
\notag\\
&\le
p^2\sum_i \lambda_i\delta_i^2
=
p^2\|\bm{\delta}\|_{\Hb}^2 .
\label{eq:drift-mean-H-bound}
\end{align}
Combining \eqref{eq:drift-sigma-upper} and \eqref{eq:drift-mean-H-bound}, we obtain
\begin{equation}\label{eq:drift-sigma-final}
\widetilde{\bSigma}^{\drift}_t
\preceq
\alpha p^2\|\bm{\delta}\|_{\Hb}^2 \Hb.
\end{equation}

The recursion \eqref{eq:drift-fluct-recursion} has exactly the same form as the
variance recursion \eqref{eq:update_Ct}. Therefore, applying
Lemma~\ref{thm:HC-decay-lr-upper-bound} with
\[
\sigma^2
\;\leftarrow\;
\alpha p^2\|\bm{\delta}\|_{\Hb}^2,
\]
we immediately get
\begin{equation}\label{eq:drift-fluctuation-final}
\big\langle \Hb,\widetilde{\Db}_N\big\rangle
\le
\frac{8\alpha p^2\|\bm{\delta}\|_{\Hb}^2}{1-\gamma \alpha\tr(\Hb)}
\cdot
\frac{\DIM}{\Neff},
\end{equation}
where
\[
k^* = \max \left\{k:\lambda_k\ge \frac{1}{\gamma \Neff}\right\},
\qquad
\DIM := k^*+\gamma^2\Neff^2\sum_{i>k^*}\lambda_i^2 .
\]

Combining \eqref{eq:drift-mean-risk-form} and \eqref{eq:drift-fluctuation-final},
we conclude that
\[
\big\langle \Hb,\Db_N\big\rangle
=
\big\|\overline{\bm{\eta}}^{\drift}_N\big\|_{\Hb}^2
+
\big\langle \Hb,\widetilde{\Db}_N\big\rangle
\le
p^2
\Bigl\|
\Bigl(\Ib-\prod_{t=1}^N(\Ib-\gamma_t\Hb)\Bigr)\bm{\delta}
\Bigr\|_{\Hb}^2
+
\frac{8\alpha p^2\|\bm{\delta}\|_{\Hb}^2}{1-\gamma \alpha\tr(\Hb)}
\cdot
\frac{\DIM}{\Neff}.
\]

\end{proof}

\subsection{Fluctuation upper bound}
We now analyze the drift term
\[
\big\langle \Hb,\Fb_N\big\rangle
=
\big\langle \Hb,\EE[\bm{\eta}^{\fluct}_N\otimes \bm{\eta}^{\fluct}_N]\big\rangle .
\]

\begin{lemma}[A fluctuation bound]\label{thm:fluct-decay-lr-upper-bound}
Suppose Assumptions \ref{assump:second-moment}, \ref{assump:fourth-moment} and \ref{assump:noise} hold.
Consider \eqref{eq:update_Ct}.
Let $\Neff=N/\log N$.
Suppose $\gamma < 1/(\alpha\tr(\Hb))$.  We have
\begin{equation*}
    \big \langle \Hb, \Fb_N \big \rangle \le \frac{8Np(1-p)\|\bm{\delta}\|_{\Hb}^2}{(N-1)\bigl(1-\alpha\gamma\tr(\Hb)\bigr)}\cdot \frac{\DIM}{\Neff}
+
\frac{16\alpha p\|\bm{\delta}\|_{\Hb}^2}{1-\gamma \alpha\tr(\Hb)}
\cdot
\frac{\DIM}{\Neff},
\end{equation*}
where $k^* = \max \{k: \lambda_k \ge \frac{1}{\gamma \Neff}\}$ and $\DIM := k^* + \gamma^2 \Neff^2 \sum_{i>k^*}\lambda_i^2.$
\end{lemma}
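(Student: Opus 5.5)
The plan is to split the forcing term of the fluctuation recursion into a ``source-schedule'' part and a ``covariate'' part, and to reduce each part to the variance recursion \eqref{eq:update_Ct}, so that Lemma~\ref{thm:HC-decay-lr-upper-bound} can be applied. Write
\[
z_t\xb_t\xb_t^\top\bm{\delta}-p\Hb\bm{\delta}=(z_t-p)\Hb\bm{\delta}+z_t(\xb_t\xb_t^\top-\Hb)\bm{\delta}.
\]
By linearity, $\bm{\eta}^{\fluct}_t=\bm{\eta}^{A}_t+\bm{\eta}^{B}_t$. Here $\bm{\eta}^A$ is driven by $\gamma_t(z_t-p)\Hb\bm{\delta}$ and $\bm{\eta}^B$ by $\gamma_t z_t(\xb_t\xb_t^\top-\Hb)\bm{\delta}$, and both use the same propagators $(\Ib-\gamma_t\xb_t\xb_t^\top)$ and start from zero. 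Then $\langle\Hb,\Fb_N\rangle\le 2\langle\Hb,\EE[\bm{\eta}^A_N\otimes\bm{\eta}^A_N]\rangle+2\langle\Hb,\EE[\bm{\eta}^B_N\otimes\bm{\eta}^B_N]\rangle$. I would try to show the cross term vanishes so that the factor $2$ on the first piece disappears, but I am not confident of this, and up to absolute constants the split is harmless.

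\textbf{Covariate part.} Condition on the whole schedule $\mathbf{z}$ and use the filtration $\mathcal{F}_{t-1}$ from the fixed-budget sampling model. Since $\xb_t$ is independent of $\mathcal{F}_{t-1}$, we have $\EE[(\xb_t\xb_t^\top-\Hb)\bm{\delta}\mid\mathcal{F}_{t-1}]=0$, and the cross term with $(\Ib-\gamma_t\xb_t\xb_t^\top)\bm{\eta}^B_{t-1}$ vanishes. (This uses that $\EE[\xb_t\xb_t^\top\bm{\eta}\,\bm{\delta}^\top(\xb_t\xb_t^\top-\Hb)]$ must be handled carefully; I would instead add and subtract the mean exactly as in the drift-fluctuation recursion \eqref{eq:drift-fluct-recursion}.)

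This gives the recursion
\[
\Bb^B_t\preceq(\cI-\gamma_t\cT_t)\circ\Bb^B_{t-1}+\gamma_t^2 z_t(\cM-\widetilde\cM)\circ(\bm{\delta}\bm{\delta}^\top).
\]
Assumption~\ref{assump:fourth-moment} bounds the forcing by $\gamma_t^2\alpha z_t\|\bm{\delta}\|_{\Hb}^2\Hb$. Averaging over the uniform schedule gives $\EE z_t=p$. Lemma~\ref{thm:HC-decay-lr-upper-bound} with $\sigma^2\leftarrow\alpha p\|\bm{\delta}\|_{\Hb}^2$ then yields $\frac{8\alpha p\|\bm{\delta}\|_{\Hb}^2}{1-\gamma\alpha\tr(\Hb)}\frac{\DIM}{\Neff}$. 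After the factor $2$ this is the second term of the claimed bound.

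\textbf{Schedule part.} This is the main obstacle, because the $z_t$ are not independent under the fixed budget: $\operatorname{Var}(z_t)=p(1-p)$ and $\operatorname{Cov}(z_s,z_t)=-p(1-p)/(N-1)$. Unrolling gives
\[
\bm{\eta}^A_N=\sum_{s}(z_s-p)\,\mathbf{a}_s,\qquad \mathbf{a}_s:=\gamma_s\prod_{t>s}(\Ib-\gamma_t\xb_t\xb_t^\top)\Hb\bm{\delta}.
\]
The vectors $\mathbf{a}_s$ depend only on the covariates, which are independent of $\mathbf{z}$. Conditioning on the covariates, $\langle\Hb,\bm{\eta}^A_N\otimes\bm{\eta}^A_N\rangle=\mathbf{u}^\top\mathbf{Q}\mathbf{u}$ with $\mathbf{u}=\mathbf{z}-p\mathbf{1}$ and $\mathbf{Q}_{ss'}=\langle\mathbf{a}_s,\Hb\mathbf{a}_{s'}\rangle\succeq0$. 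Since
\[
\operatorname{Cov}(\mathbf{u})=\tfrac{Np(1-p)}{N-1}\big(\Ib-\tfrac1N\mathbf{1}\mathbf{1}^\top\big)\preceq\tfrac{Np(1-p)}{N-1}\Ib,
\]
we get $\EE_{\mathbf{z}}[\mathbf{u}^\top\mathbf{Q}\mathbf{u}]\le\frac{Np(1-p)}{N-1}\sum_s\|\mathbf{a}_s\|_{\Hb}^2$.

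Taking expectation over the covariates, $\sum_s\EE\|\mathbf{a}_s\|_{\Hb}^2$ is exactly $\langle\Hb,\Vb_N\rangle$ for the variance recursion \eqref{eq:update_Ct} with independent forcing covariance $\gamma_t^2\Hb\bm{\delta}\bm{\delta}^\top\Hb$. By Cauchy--Schwarz, $\Hb\bm{\delta}\bm{\delta}^\top\Hb\preceq\|\bm{\delta}\|_{\Hb}^2\Hb$. Applying Lemma~\ref{thm:HC-decay-lr-upper-bound} with $\sigma^2\leftarrow\|\bm{\delta}\|_{\Hb}^2$ gives the first term, $\frac{8Np(1-p)\|\bm{\delta}\|_{\Hb}^2}{(N-1)(1-\alpha\gamma\tr(\Hb))}\frac{\DIM}{\Neff}$, up to the constant from the split. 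Combining the two parts proves the lemma.
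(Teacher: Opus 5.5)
Your plan is essentially the paper's proof. It uses the same split of the forcing into $(z_t-p)\Hb\bm{\delta}$ and $z_t(\xb_t\xb_t^\top-\Hb)\bm{\delta}$ with the factor-$2$ Cauchy--Schwarz. The schedule part is handled by the same fixed-budget covariance bound, reducing it to a variance recursion with forcing $\Hb\bm{\delta}\bm{\delta}^\top\Hb\preceq\|\bm{\delta}\|_{\Hb}^2\Hb$. The covariate part gets the same reduction with $\sigma^2\leftarrow\alpha p\|\bm{\delta}\|_{\Hb}^2$.

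Your unease about the constant $8$ is justified, but it is not a defect of your argument. The schedule--covariate cross term does not vanish in general: pairs $s<r$ carry weight $-p(1-p)/(N-1)$, and there $\mathbf{a}_s$ contains the factor $\Ib-\gamma_r\xb_r\xb_r^\top$. The paper's own proof likewise only yields $16Np(1-p)/(N-1)$ for that term, so the $8$ in the statement is a slip. The one step to state precisely is the covariate cross term. It is not zero conditionally on $\mathcal{F}_{t-1}$, because $\Ib-\gamma_t\xb_t\xb_t^\top$ also depends on $\xb_t$. It is, however, linear in $\bm{\eta}^{B}_{t-1}$, and $\EE[\bm{\eta}^{B}_{t-1}\mid\mathbf{z}]=\mathbf{0}$ by induction; this is exactly how the paper disposes of it.
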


\begin{proof}

Recall that \(\bm{\eta}^{\fluct}_t\) satisfies
\[
\bm{\eta}^{\fluct}_t
=
(\Ib-\gamma_t\xb_t\xb_t^\top)\bm{\eta}^{\fluct}_{t-1}
+
\gamma_t z_t \xb_t\xb_t^\top \bm{\delta}
-
\gamma_t p \Hb \bm{\delta},
\qquad
\bm{\eta}^{\fluct}_0=\boldsymbol{0}.
\]
We decompose the driving term as
\[
z_t \xb_t\xb_t^\top \bm{\delta}-p\Hb\bm{\delta}
=
z_t(\xb_t\xb_t^\top-\Hb)\bm{\delta}
+
(z_t-p)\Hb\bm{\delta}.
\]
Accordingly, define two auxiliary processes
\[
\bm{\eta}^{\fluct}_t
=
\bm{\eta}^{\fluct,1}_t+\bm{\eta}^{\fluct,2}_t,
\qquad t\ge 0,
\]
where
\[
\begin{cases}
\bm{\eta}^{\fluct,1}_t
=
(\Ib-\gamma_t\xb_t\xb_t^\top)\bm{\eta}^{\fluct,1}_{t-1}
+
\gamma_t z_t(\xb_t\xb_t^\top-\Hb)\bm{\delta},
\\[0.5ex]
\bm{\eta}^{\fluct,1}_0=\boldsymbol{0},
\end{cases}
\quad
\begin{cases}
\bm{\eta}^{\fluct,2}_t
=
(\Ib-\gamma_t\xb_t\xb_t^\top)\bm{\eta}^{\fluct,2}_{t-1}
+
\gamma_t (z_t-p)\Hb\bm{\delta},
\\[0.5ex]
\bm{\eta}^{\fluct,2}_0=\boldsymbol{0}.
\end{cases}
\]
Then by Cauchy--Schwarz,
\begin{align}
\big\langle \Hb,\Fb_N\big\rangle
&=
\big\langle
\Hb,
\EE\big[
(\bm{\eta}^{\fluct,1}_N+\bm{\eta}^{\fluct,2}_N)
\otimes
(\bm{\eta}^{\fluct,1}_N+\bm{\eta}^{\fluct,2}_N)
\big]
\big\rangle
\notag\\
&\le
2\big\langle \Hb,\Fb_N^{(1)}\big\rangle
+
2\big\langle \Hb,\Fb_N^{(2)}\big\rangle ,
\label{eq:fluct-split-risk}
\end{align}
where
\[
\Fb_t^{(1)}
:=
\EE[\bm{\eta}^{\fluct,1}_t\otimes \bm{\eta}^{\fluct,1}_t],
\qquad
\Fb_t^{(2)}
:=
\EE[\bm{\eta}^{\fluct,2}_t\otimes \bm{\eta}^{\fluct,2}_t].
\]

We first analyze the term \(\Fb_t^{(1)}\). 
Expanding the outer product gives
\begin{align*}
\Fb_t^{(1)}
&=
\EE\Big[
\Big((\Ib-\gamma_t\xb_t\xb_t^\top)\bm{\eta}^{\fluct,1}_{t-1}
+
\gamma_t z_t(\xb_t\xb_t^\top-\Hb)\bm{\delta}\Big)
\\
&\qquad\qquad\otimes
\Big((\Ib-\gamma_t\xb_t\xb_t^\top)\bm{\eta}^{\fluct,1}_{t-1}
+
\gamma_t z_t(\xb_t\xb_t^\top-\Hb)\bm{\delta}\Big)
\Big]
\\
&=
\EE\Big[
(\Ib-\gamma_t\xb_t\xb_t^\top)\bm{\eta}^{\fluct,1}_{t-1}
\otimes
(\Ib-\gamma_t\xb_t\xb_t^\top)\bm{\eta}^{\fluct,1}_{t-1}
\Big]
\\
&\quad
+
\gamma_t^2
\EE\Big[
z_t^2\big((\xb_t\xb_t^\top-\Hb)\bm{\delta}\big)
\otimes
\big((\xb_t\xb_t^\top-\Hb)\bm{\delta}\big)
\Big]
\\
&\quad
+
\gamma_t\EE\Big[
(\Ib-\gamma_t\xb_t\xb_t^\top)\bm{\eta}^{\fluct,1}_{t-1}
\otimes
z_t(\xb_t\xb_t^\top-\Hb)\bm{\delta}
\Big]
\\
&\quad
+
\gamma_t\EE\Big[
z_t(\xb_t\xb_t^\top-\Hb)\bm{\delta}
\otimes
(\Ib-\gamma_t\xb_t\xb_t^\top)\bm{\eta}^{\fluct,1}_{t-1}
\Big].
\end{align*}
Define
\begin{equation}\label{eq:Sigma-fluct1-def}
\bSigma_t^{\fluct,1}
:=
\EE\Big[
z_t^2\big((\xb_t\xb_t^\top-\Hb)\bm{\delta}\big)
\otimes
\big((\xb_t\xb_t^\top-\Hb)\bm{\delta}\big)
\Big].
\end{equation}
Then
\begin{align}
\Fb_t^{(1)}
&=
(\cI-\gamma_t\cT_t)\circ \Fb_{t-1}^{(1)}
+
\gamma_t^2\bSigma_t^{\fluct,1}
+
\Cb_t^{(1)}+(\Cb_t^{(1)})^\top,
\label{eq:F1-recursion-with-cross}
\end{align}
where
\[
\Cb_t^{(1)}
:=
\gamma_t\EE\Big[
(\Ib-\gamma_t\xb_t\xb_t^\top)\bm{\eta}^{\fluct,1}_{t-1}
\otimes
z_t(\xb_t\xb_t^\top-\Hb)\bm{\delta}
\Big].
\]

We now prove that \(\Cb_t^{(1)}=\boldsymbol{0}\).
Let
\[
\mathcal Z:=\sigma(z_1,\ldots,z_N)
\]
be the sigma-field generated by the entire source schedule. 
We first claim that
\begin{equation}
\EE\big[
\bm{\eta}^{\fluct,1}_t
\mid
\mathcal Z
\big]
=
\boldsymbol{0},
\qquad
\forall t\ge 0.
\label{eq:fluct1-cond-mean-zero}
\end{equation}
Indeed, the claim is trivial for \(t=0\). Suppose it holds at time \(t-1\).
Using the recursion
\[
\bm{\eta}^{\fluct,1}_t
=
(\Ib-\gamma_t\xb_t\xb_t^\top)\bm{\eta}^{\fluct,1}_{t-1}
+
\gamma_t z_t(\xb_t\xb_t^\top-\Hb)\bm{\delta},
\]
and using the independence of \(\xb_t\) from
\(\mathcal Z\vee\sigma(\xb_1,\ldots,\xb_{t-1})\), we have
\begin{align*}
\EE\big[
\bm{\eta}^{\fluct,1}_t
\mid
\mathcal Z
\big]
&=
\EE\Big[
\EE\big[
(\Ib-\gamma_t\xb_t\xb_t^\top)\bm{\eta}^{\fluct,1}_{t-1}
+
\gamma_t z_t(\xb_t\xb_t^\top-\Hb)\bm{\delta}
\mid
\mathcal Z,\xb_1,\ldots,\xb_{t-1}
\big]
\mid
\mathcal Z
\Big]
\\
&=
\EE\Big[
(\Ib-\gamma_t\Hb)\bm{\eta}^{\fluct,1}_{t-1}
+
\gamma_t z_t\EE\big[
(\xb_t\xb_t^\top-\Hb)\bm{\delta}
\big]
\mid
\mathcal Z
\Big]
\\
&=
(\Ib-\gamma_t\Hb)
\EE\big[
\bm{\eta}^{\fluct,1}_{t-1}
\mid
\mathcal Z
\big]
\\
&=
\boldsymbol{0}.
\end{align*}
This proves \eqref{eq:fluct1-cond-mean-zero} by induction.

We now show that the cross term vanishes. Define, for any deterministic vector
\(\ub\), the linear map
\[
\mathcal L_t(\ub)
:=
\EE_{\xb_t}\Big[
(\Ib-\gamma_t\xb_t\xb_t^\top)\ub
\otimes
(\xb_t\xb_t^\top-\Hb)\bm{\delta}
\Big].
\]
The map \(\mathcal L_t(\cdot)\) is linear in \(\ub\). Since \(z_t\) is
\(\mathcal Z\)-measurable, and since \(\xb_t\) is independent of
\(\mathcal Z\vee\sigma(\xb_1,\ldots,\xb_{t-1})\), we obtain
\begin{align*}
\Cb_t^{(1)}
&=
\gamma_t\EE\Big[
(\Ib-\gamma_t\xb_t\xb_t^\top)
\bm{\eta}^{\fluct,1}_{t-1}
\otimes
z_t(\xb_t\xb_t^\top-\Hb)\bm{\delta}
\Big]
\\
&=
\gamma_t
\EE\Big[
\EE\Big[
(\Ib-\gamma_t\xb_t\xb_t^\top)
\bm{\eta}^{\fluct,1}_{t-1}
\otimes
z_t(\xb_t\xb_t^\top-\Hb)\bm{\delta}
\mid
\mathcal Z,\xb_1,\ldots,\xb_{t-1}
\Big]
\Big]
\\
&=
\gamma_t
\EE\Big[
z_t\,
\mathcal L_t\big(
\bm{\eta}^{\fluct,1}_{t-1}
\big)
\Big]
\\
&=
\gamma_t
\EE\Big[
\EE\Big[
z_t\,
\mathcal L_t\big(
\bm{\eta}^{\fluct,1}_{t-1}
\big)
\mid
\mathcal Z
\Big]
\Big]
\\
&=
\gamma_t
\EE\Big[
z_t\,
\mathcal L_t\Big(
\EE[
\bm{\eta}^{\fluct,1}_{t-1}
\mid
\mathcal Z
]
\Big)
\Big]
\\
&=
\boldsymbol{0},
\end{align*}
where the penultimate equality uses the linearity of \(\mathcal L_t\), and the
last equality follows from \eqref{eq:fluct1-cond-mean-zero}. Hence
\[
\Cb_t^{(1)}=\boldsymbol{0}.
\]
Taking transpose also gives
\[
(\Cb_t^{(1)})^\top=\boldsymbol{0}.
\]

Therefore \eqref{eq:F1-recursion-with-cross} reduces to
\begin{equation}\label{eq:F1-recursion}
\Fb_t^{(1)}
=
(\cI-\gamma_t\cT_t)\circ \Fb_{t-1}^{(1)}
+
\gamma_t^2\bSigma_t^{\fluct,1},
\qquad
\Fb_0^{(1)}=\boldsymbol{0}.
\end{equation}

Since \(z_t\in\{0,1\}\), we have $\EE[z_t^2]=\EE[z_t]=p$. Moreover,
\begin{align*}
\bSigma_t^{\fluct,1}
&=
\EE\Big[
z_t^2\big((\xb_t\xb_t^\top-\Hb)\bm{\delta}\big)
\otimes
\big((\xb_t\xb_t^\top-\Hb)\bm{\delta}\big)
\Big]
\\
&\preceq
p\,\EE\Big[
\big((\xb_t\xb_t^\top-\Hb)\bm{\delta}\big)
\otimes
\big((\xb_t\xb_t^\top-\Hb)\bm{\delta}\big)
\Big].
\end{align*}
Write \(\Ab:=\bm{\delta}\otimes\bm{\delta}\). Then
\begin{align*}
&\EE\Big[
\big((\xb_t\xb_t^\top-\Hb)\bm{\delta}\big)
\otimes
\big((\xb_t\xb_t^\top-\Hb)\bm{\delta}\big)
\Big]
\\
&\qquad=
\EE\Big[
(\xb_t\xb_t^\top-\Hb)\Ab(\xb_t\xb_t^\top-\Hb)
\Big]
\\
&\qquad=
\EE[\xb_t\xb_t^\top \Ab \xb_t\xb_t^\top]
-
\Hb\Ab\Hb.
\end{align*}
Since \(\Hb\Ab\Hb\succeq 0\), it follows that
\begin{align}
\bSigma_t^{\fluct,1}
&\preceq
p\,\EE[\xb_t\xb_t^\top \Ab \xb_t\xb_t^\top].
\label{eq:Sigma-fluct1-first-bound}
\end{align}
Now apply Assumption~\ref{assump:fourth-moment} with the PSD matrix
\(\Ab=\bm{\delta}\otimes\bm{\delta}\):
\begin{align}
\EE[\xb_t\xb_t^\top \Ab \xb_t\xb_t^\top]
&\preceq
\,\alpha\,\tr(\Hb\Ab)\,\Hb
=
\,\alpha\,\|\bm{\delta}\|_{\Hb}^2\,\Hb.
\label{eq:Sigma-fluct1-second-bound}
\end{align}
Combining \eqref{eq:Sigma-fluct1-first-bound} and \eqref{eq:Sigma-fluct1-second-bound}, we obtain
\begin{equation}\label{eq:Sigma-fluct1-final-bound}
\bSigma_t^{\fluct,1}
\preceq
p\,\alpha\|\bm{\delta}\|_{\Hb}^2\,\Hb.
\end{equation}

Comparing \eqref{eq:F1-recursion} with the variance recursion \eqref{eq:update_Ct}, we see that
\(\Fb_t^{(1)}\) satisfies exactly the same recursion form as the variance term, with noise covariance
\(\bSigma_t^{\fluct,1}\) in place of \(\bSigma_t\). By \eqref{eq:Sigma-fluct1-final-bound}, we may apply
Lemma~\ref{thm:HC-decay-lr-upper-bound} with
\[
\sigma^2
\leftarrow
p\alpha\|\bm{\delta}\|_{\Hb}^2.
\]
Therefore,
\begin{equation}\label{eq:F1-final-bound}
\big\langle \Hb,\Fb_N^{(1)}\big\rangle
\le
\frac{8p\alpha\|\bm{\delta}\|_{\Hb}^2}{1-\gamma\alpha\tr(\Hb)}
\cdot
\frac{\DIM}{\Neff},
\end{equation}
where
\[
k^*=\max\left\{k:\lambda_k\ge \frac{1}{\gamma\Neff}\right\},
\qquad
\DIM:=k^*+\gamma^2\Neff^2\sum_{i>k^*}\lambda_i^2.
\]

Consequently, by \eqref{eq:fluct-split-risk},
\[
\big\langle \Hb,\Fb_N\big\rangle
\le
2\big\langle \Hb,\Fb_N^{(1)}\big\rangle
+
2\big\langle \Hb,\Fb_N^{(2)}\big\rangle,
\]
and the first term is controlled by \eqref{eq:F1-final-bound}. The second term
\(\big\langle \Hb,\Fb_N^{(2)}\big\rangle\) is be bounded by Lemma~\ref{lem:fluctuation-second-part-upper}.

\end{proof}

\begin{lemma}[Fixed-budget variance identity]
\label{lem:fixed-budget-variance-identity-wu-direct-final}
Let \(m\in\{0,1,\dots,N\}\), and set
\[
p:=\frac{m}{N}.
\]
Suppose that \((z_1,\dots,z_N)\) is uniformly distributed over all binary vectors in \(\{0,1\}^N\) with exactly \(m\) ones. Let \(\bm{v}_1,\dots,\bm{v}_N\in\mathbb{R}^{\mathcal H}\) be deterministic vectors, and define
\[
\bar{\bm{v}}:=\frac1N\sum_{s=1}^N \bm{v}_s.
\]
Then
\begin{equation}\label{eq:fixed-budget-variance-identity}
\EE\left\|
\sum_{s=1}^N (z_s-p)\bm{v}_s
\right\|_{\Hb}^2
=
\frac{Np(1-p)}{N-1}
\sum_{s=1}^N \|\bm{v}_s-\bar{\bm{v}}\|_{\Hb}^2.
\end{equation}
In particular,
\begin{equation}\label{eq:fixed-budget-variance-identity-upper}
\EE\left\|
\sum_{s=1}^N (z_s-p)\bm{v}_s
\right\|_{\Hb}^2
\le
\frac{Np(1-p)}{N-1}
\sum_{s=1}^N \|\bm{v}_s\|_{\Hb}^2.
\end{equation}
\end{lemma}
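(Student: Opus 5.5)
The plan is to expand the squared $\Hb$-norm as a double sum and use the exact pairwise moments of the sampling-without-replacement indicators. Write $\bm{S}:=\sum_{s=1}^N (z_s-p)\bm{v}_s$. Then
\[
\EE\|\bm{S}\|_{\Hb}^2=\sum_{s,r=1}^N \EE[(z_s-p)(z_r-p)]\,\langle \bm{v}_s,\Hb\bm{v}_r\rangle ,
\]
and the vectors are deterministic, so only the covariance structure of $\mathbf{z}$ is needed.

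First I compute this covariance. Under the uniform distribution on binary vectors with exactly $m$ ones, $\EE[z_s]=p$. Hence $\EE[(z_s-p)^2]=p(1-p)$. For $s\neq r$, $\EE[z_sz_r]=\frac{m(m-1)}{N(N-1)}$, which gives
\[
\EE[(z_s-p)(z_r-p)]=\frac{m(m-1)}{N(N-1)}-p^2=-\frac{p(1-p)}{N-1}.
\]
This value is forced by $\sum_s(z_s-p)=0$ together with exchangeability: the row sums of the covariance matrix must vanish, so the off-diagonal entries equal $-p(1-p)/(N-1)$. The covariance matrix is therefore $\frac{Np(1-p)}{N-1}\bigl(\Ib_N-\frac{1}{N}\mathbf{1}\mathbf{1}^\top\bigr)$, i.e.\ a multiple of the centering projection.

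Next I substitute this into the double sum:
\[
\EE\|\bm{S}\|_{\Hb}^2=\frac{Np(1-p)}{N-1}\Bigl(\sum_s\|\bm{v}_s\|_{\Hb}^2-\frac1N\Bigl\|\sum_s\bm{v}_s\Bigr\|_{\Hb}^2\Bigr)=\frac{Np(1-p)}{N-1}\Bigl(\sum_s\|\bm{v}_s\|_{\Hb}^2-N\|\bar{\bm v}\|_{\Hb}^2\Bigr).
\]
The bracket equals $\sum_s\|\bm{v}_s-\bar{\bm v}\|_{\Hb}^2$ by the standard parallel-axis identity, which proves \eqref{eq:fixed-budget-variance-identity}. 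The upper bound \eqref{eq:fixed-budget-variance-identity-upper} then follows by dropping the nonpositive term $-N\|\bar{\bm v}\|_{\Hb}^2$.

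The only care needed is with edge cases. When $N=1$ or $m\in\{0,N\}$, we have $p(1-p)=0$ or $\mathbf z$ is deterministic, so both sides vanish; for $N=1$ the factor $1/(N-1)$ should be read with $p(1-p)=0$. In infinite dimensions, the interchange of sum and expectation is trivial because the sum is finite and the $\bm v_s$ have finite $\Hb$-norm by Assumption~\ref{assump:second-moment}.
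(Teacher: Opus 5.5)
Your proposal is correct and matches the paper's proof step for step. Both compute $\EE[(z_s-p)^2]=p(1-p)$ and $\EE[(z_s-p)(z_r-p)]=-p(1-p)/(N-1)$, expand the quadratic form, and invoke $\sum_s\|\bm{v}_s-\bar{\bm v}\|_{\Hb}^2=\sum_s\|\bm{v}_s\|_{\Hb}^2-N\|\bar{\bm v}\|_{\Hb}^2$. Writing the covariance as a multiple of the centering projection, and noting the degenerate cases $N=1$ and $m\in\{0,N\}$ (which the paper leaves implicit), are harmless refinements.
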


\begin{proof}
Since the law of \((z_1,\dots,z_N)\) is exchangeable and exactly \(m\) coordinates are equal to \(1\), we have
\[
\EE[z_s]=\frac{m}{N}=p,
\qquad s=1,\dots,N.
\]
Moreover, because \(z_s\in\{0,1\}\),
\[
\EE[(z_s-p)^2]
=
\mathrm{Var}(z_s)
=
p(1-p).
\]
For \(s\neq r\), using the fixed-budget constraint \(\sum_{t=1}^N z_t=m\), we also have
\[
\EE[z_sz_r]
=
\frac{m(m-1)}{N(N-1)}.
\]
Therefore
\begin{align*}
\EE[(z_s-p)(z_r-p)]
&=
\EE[z_sz_r]-p^2 \\
&=
\frac{m(m-1)}{N(N-1)}-\frac{m^2}{N^2} \\
&=
-\frac{m(N-m)}{N^2(N-1)} \\
&=
-\frac{p(1-p)}{N-1}.
\end{align*}

Now expand the quadratic form:
\begin{align*}
\EE\left\|
\sum_{s=1}^N (z_s-p)\bm{v}_s
\right\|_{\Hb}^2
&=
\EE\left[
\sum_{s=1}^N\sum_{r=1}^N
(z_s-p)(z_r-p)\langle \bm{v}_s,\bm{v}_r\rangle_{\Hb}
\right]
\\
&=
\sum_{s=1}^N \EE[(z_s-p)^2]\|\bm{v}_s\|_{\Hb}^2
+
\sum_{s\neq r}\EE[(z_s-p)(z_r-p)]\langle \bm{v}_s,\bm{v}_r\rangle_{\Hb}
\\
&=
p(1-p)\sum_{s=1}^N \|\bm{v}_s\|_{\Hb}^2
-
\frac{p(1-p)}{N-1}
\sum_{s\neq r}\langle \bm{v}_s,\bm{v}_r\rangle_{\Hb}.
\end{align*}
Thus
\begin{equation}\label{eq:fixed-budget-proof-mid}
\EE\left\|
\sum_{s=1}^N (z_s-p)\bm{v}_s
\right\|_{\Hb}^2
=
\frac{p(1-p)}{N-1}
\left[
(N-1)\sum_{s=1}^N \|\bm{v}_s\|_{\Hb}^2
-
\sum_{s\neq r}\langle \bm{v}_s,\bm{v}_r\rangle_{\Hb}
\right].
\end{equation}

On the other hand,
\begin{align*}
\sum_{s=1}^N \|\bm{v}_s-\bar{\bm{v}}\|_{\Hb}^2
&=
\sum_{s=1}^N \|\bm{v}_s\|_{\Hb}^2
-
2\sum_{s=1}^N \langle \bm{v}_s,\bar{\bm{v}}\rangle_{\Hb}
+
\sum_{s=1}^N \|\bar{\bm{v}}\|_{\Hb}^2
\\
&=
\sum_{s=1}^N \|\bm{v}_s\|_{\Hb}^2
-
N\|\bar{\bm{v}}\|_{\Hb}^2.
\end{align*}
Since
\[
N^2\|\bar{\bm{v}}\|_{\Hb}^2
=
\left\|\sum_{s=1}^N \bm{v}_s\right\|_{\Hb}^2
=
\sum_{s=1}^N \|\bm{v}_s\|_{\Hb}^2
+
\sum_{s\neq r}\langle \bm{v}_s,\bm{v}_r\rangle_{\Hb},
\]
it follows that
\begin{align*}
N\sum_{s=1}^N \|\bm{v}_s-\bar{\bm{v}}\|_{\Hb}^2
&=
N\sum_{s=1}^N \|\bm{v}_s\|_{\Hb}^2
-
\left(
\sum_{s=1}^N \|\bm{v}_s\|_{\Hb}^2
+
\sum_{s\neq r}\langle \bm{v}_s,\bm{v}_r\rangle_{\Hb}
\right)
\\
&=
(N-1)\sum_{s=1}^N \|\bm{v}_s\|_{\Hb}^2
-
\sum_{s\neq r}\langle \bm{v}_s,\bm{v}_r\rangle_{\Hb}.
\end{align*}
Substituting this identity into \eqref{eq:fixed-budget-proof-mid} yields
\[
\EE\left\|
\sum_{s=1}^N (z_s-p)\bm{v}_s
\right\|_{\Hb}^2
=
\frac{Np(1-p)}{N-1}
\sum_{s=1}^N \|\bm{v}_s-\bar{\bm{v}}\|_{\Hb}^2,
\]
which proves \eqref{eq:fixed-budget-variance-identity}.

Finally, since
\[
\sum_{s=1}^N \|\bm{v}_s-\bar{\bm{v}}\|_{\Hb}^2
=
\sum_{s=1}^N \|\bm{v}_s\|_{\Hb}^2
-
N\|\bar{\bm{v}}\|_{\Hb}^2
\le
\sum_{s=1}^N \|\bm{v}_s\|_{\Hb}^2,
\]
we obtain \eqref{eq:fixed-budget-variance-identity-upper}.
\end{proof}

\begin{lemma}[An upper bound for the second fluctuation term]
\label{lem:fluctuation-second-part-upper}
Recall that
\[
\bm{\eta}^{\fluct,2}_t
=
(\Ib-\gamma_t\xb_t\xb_t^\top)\bm{\eta}^{\fluct,2}_{t-1}
+
\gamma_t (z_t-p)\Hb\bm{\delta},
\qquad
\bm{\eta}^{\fluct,2}_0=\boldsymbol{0},
\]
and define
\[
\Fb_t^{(2)}
:=
\EE[\bm{\eta}^{\fluct,2}_t\otimes \bm{\eta}^{\fluct,2}_t].
\]
Suppose Assumption~\ref{assump:fourth-moment} holds and $\gamma<\frac{1}{\alpha\tr(\Hb)}.$ Let
$\Neff:=N/\log N,$
Then
\[
\big\langle \Hb,\Fb_N^{(2)}\big\rangle
\le
\frac{8Np(1-p)}{(N-1)\bigl(1-\alpha\gamma\tr(\Hb)\bigr)}
\|\bm{\delta}\|_{\Hb}^2\cdot \frac{\DIM}{\Neff},
\]
where $k^* = \max \{k: \lambda_k \ge \frac{1}{\gamma \Neff}\}$ and $\DIM := k^* + \gamma^2 \Neff^2 \sum_{i>k^*}\lambda_i^2.$
\end{lemma}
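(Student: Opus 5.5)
The plan is to handle the dependence created by the fixed-budget schedule by conditioning on the whole source schedule $\mathcal Z=\sigma(z_1,\dots,z_N)$, mimicking the mean/centered split used in the proof of Lemma~\ref{thm:drift-decay-lr-upper-bound}. The difficulty is that, unlike $\bm{\eta}^{\fluct,1}$, the driving term $(z_t-p)\Hb\bm{\delta}$ is not conditionally mean zero given $\mathcal Z$. Moreover, $z_t$ is correlated with $z_1,\dots,z_{t-1}$, and hence with $\bm{\eta}^{\fluct,2}_{t-1}$. So the cross term in the naive second-moment recursion does not vanish.

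I would first set $\mb_t:=\EE[\bm{\eta}^{\fluct,2}_t\mid\mathcal Z]$. Since $\xb_t$ is independent of $\mathcal Z\vee\sigma(\xb_1,\dots,\xb_{t-1})$,
\[
\mb_t=(\Ib-\gamma_t\Hb)\mb_{t-1}+\gamma_t(z_t-p)\Hb\bm{\delta}.
\]
Unrolling gives $\mb_N=\sum_{s}(z_s-p)\vb_s$, where the vectors $\vb_s:=\gamma_s\prod_{t>s}(\Ib-\gamma_t\Hb)\Hb\bm{\delta}$ are deterministic.

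Next I define the centered part $\widetilde{\bm{\eta}}_t:=\bm{\eta}^{\fluct,2}_t-\mb_t$. It obeys
\[
\widetilde{\bm{\eta}}_t=(\Ib-\gamma_t\xb_t\xb_t^\top)\widetilde{\bm{\eta}}_{t-1}+\gamma_t(\Hb-\xb_t\xb_t^\top)\mb_{t-1}.
\]
Conditionally on $\mathcal Z$, we have $\EE[\widetilde{\bm{\eta}}_t\mid\mathcal Z]=\boldsymbol 0$, so $\EE\langle\Hb,\bm{\eta}^{\fluct,2}_N\otimes\bm{\eta}^{\fluct,2}_N\rangle=\EE\|\mb_N\|_\Hb^2+\EE\|\widetilde{\bm{\eta}}_N\|_\Hb^2$ with no cross term.

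For the mean part, I apply Lemma~\ref{lem:fixed-budget-variance-identity-wu-direct-final}:
\[
\EE\|\mb_N\|_\Hb^2\le\frac{Np(1-p)}{N-1}\sum_s\|\vb_s\|_\Hb^2=\frac{Np(1-p)}{N-1}\sum_i\lambda_i\delta_i^2\Big[\lambda_i^2\sum_s\gamma_s^2\prod_{t>s}(1-\gamma_t\lambda_i)^2\Big].
\]
The bracket is exactly the per-direction contribution in the variance analysis of \citet{wulast} with $\sigma^2=1$ and deterministic $\Hb$ (the $\alpha=0$ "noiseless-fourth-moment" case). It is therefore bounded by the total variance sum, which is at most $8\,\DIM/\Neff$ under the geometric schedule. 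This yields $\frac{8Np(1-p)}{N-1}\|\bm{\delta}\|_\Hb^2\frac{\DIM}{\Neff}$. The same bound holds for every intermediate $\EE\|\mb_{t-1}\|_\Hb^2$, since the per-direction filter sum over a prefix is dominated by the same quantity.

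For the centered part, conditionally on $\mathcal Z$ the recursion for $\EE[\widetilde{\bm{\eta}}_t\otimes\widetilde{\bm{\eta}}_t\mid\mathcal Z]$ has exactly the form of \eqref{eq:update_Ct}. Its noise covariance is $(\cM-\widetilde\cM)\circ(\mb_{t-1}\otimes\mb_{t-1})\preceq\alpha\|\mb_{t-1}\|_\Hb^2\Hb$, by Assumption~\ref{assump:fourth-moment} and Lemma~\ref{lemma:operators}. Taking expectation over $\mathcal Z$ and using linearity and PSD-monotonicity of the operators $\cI-\gamma_t\cT_t$, the effective noise level is $\sup_t\alpha\,\EE\|\mb_{t-1}\|_\Hb^2$. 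Lemma~\ref{thm:HC-decay-lr-upper-bound} then bounds this part by a term of order $\frac{\alpha Np(1-p)}{N-1}\|\bm{\delta}\|_\Hb^2(\DIM/\Neff)^2/(1-\gamma\alpha\tr(\Hb))$. This is of higher order, and it is absorbed into the stated constant when $\DIM\lesssim\Neff$ and $\alpha\gtrsim$ const.

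The main obstacle I expect is the per-direction filter estimate: showing $\lambda^2\sum_s\gamma_s^2\prod_{t>s}(1-\gamma_t\lambda)^2\le 8\,\DIM/\Neff$ uniformly in $\lambda$ and in the prefix length. I would extract this directly from the stage-wise geometric-schedule computation in \citet{wulast}, bounding it by $\min\{1,\gamma^2\Neff^2\lambda^2\}/\Neff\le\DIM/\Neff$. I would also take care with the $1/(1-\alpha\gamma\tr(\Hb))$ factor, so that the final constant matches the statement.
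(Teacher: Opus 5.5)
Your split is sound as far as it goes. Conditioning on $\mathcal{Z}$ makes $\mb_N$ and $\widetilde{\bm{\eta}}_N$ uncorrelated in the $\Hb$-inner product. The fixed-budget identity applies to $\mb_N=\sum_s(z_s-p)\vb_s$ because these $\vb_s$ are deterministic. The centered recursion does have forcing $(\cM-\widetilde{\cM})\circ(\mb_{t-1}\mb_{t-1}^\top)$. The proof breaks where you close the centered part, and there are two gaps.

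\textbf{Gap 1: the uniform-in-$t$ claim is false.} You assert $\EE\|\mb_{t-1}\|_{\Hb}^2\le\frac{8Np(1-p)}{N-1}\|\bm{\delta}\|_{\Hb}^2\,\DIM/\Neff$ for every $t$. The smallness of the final-time filter comes from the stepsize decay, which has not yet happened at intermediate times.
\begin{itemize}
\item In the first stage ($\gamma_s=\gamma$) the per-direction sum equals $\gamma\lambda\bigl(1-(1-\gamma\lambda)^{2t}\bigr)/(2-\gamma\lambda)$.
\item Take $\bm{\delta}$ along the top eigenvector with $\gamma\lambda_1\eqsim1$. At the end of the first stage, $\EE\|\mb_t\|_{\Hb}^2$ is of order $p(1-p)\|\bm{\delta}\|_{\Hb}^2$; the subtracted mean term in the fixed-budget identity is only $O(\|\bm{\delta}\|_{\Hb}^2/N)$.
\item Meanwhile $\DIM/\Neff\to0$, e.g.\ in finite dimension.
\end{itemize}
What is true uniformly in $t$ is $\lambda^2\sum_{s\le t}\gamma_s^2\prod_{s<j\le t}(1-\gamma_j\lambda)^2\le\gamma\lambda$. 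This follows from $\gamma_s\le\gamma$, $(1-\gamma_j\lambda)^2\le1-\gamma_j\lambda$, and telescoping. So the centered part is of the same order as the target, not of higher order.

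\textbf{Gap 2: absorption does not prove an explicit inequality.} Even granting your estimate, ``absorbed into the stated constant when $\DIM\lesssim\Neff$ and $\alpha\gtrsim1$'' does not establish the lemma. The lemma has an explicit constant and no such hypotheses. Write $c:=Np(1-p)/(N-1)$.
\begin{itemize}
\item Your mean part already uses $8c\|\bm{\delta}\|_{\Hb}^2\DIM/\Neff$. If you take it from Lemma~\ref{thm:HC-decay-lr-upper-bound}, which carries the factor $(1-\alpha\gamma\tr(\Hb))^{-1}$, this is the entire budget.
\item At most $8c\|\bm{\delta}\|_{\Hb}^2\frac{\alpha\gamma\tr(\Hb)}{1-\alpha\gamma\tr(\Hb)}\frac{\DIM}{\Neff}$ remains.
\item With your constants, the centered bound fits only if $8\DIM/\Neff\le\gamma\tr(\Hb)$. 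In general one only has $\DIM/\Neff\le\gamma\tr(\Hb)$.
\end{itemize}

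\textbf{How to repair it.} Bound the two pieces jointly rather than separately.
\begin{itemize}
\item Let $\ub_{s,t}:=\gamma_s\prod_{s<j\le t}(\Ib-\gamma_j\Hb)\Hb\bm{\delta}$ and $\widetilde{\Sb}_t:=\sum_{s\le t}\ub_{s,t}\ub_{s,t}^\top$. Then $\widetilde{\Sb}_t=(\cI-\gamma_t\widetilde{\cT}_t)\circ\widetilde{\Sb}_{t-1}+\gamma_t^2\Hb\bm{\delta}\bm{\delta}^\top\Hb$.
\item The matrix form of the fixed-budget identity gives $\EE[\mb_t\mb_t^\top]\preceq c\,\widetilde{\Sb}_t$.
\item Since $\cI-\gamma_t\cT_t=\cI-\gamma_t\widetilde{\cT}_t+\gamma_t^2(\cM-\widetilde{\cM})$, the matrix $\Gb_t:=c\,\widetilde{\Sb}_t+\EE[\widetilde{\bm{\eta}}_t\widetilde{\bm{\eta}}_t^\top]$ satisfies $\Gb_t\preceq(\cI-\gamma_t\cT_t)\circ\Gb_{t-1}+c\,\gamma_t^2\Hb\bm{\delta}\bm{\delta}^\top\Hb$.
\item Hence $\EE\|\bm{\eta}^{\fluct,2}_N\|_{\Hb}^2\le\langle\Hb,\Gb_N\rangle$. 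One application of Lemma~\ref{thm:HC-decay-lr-upper-bound} with $\sigma^2=\|\bm{\delta}\|_{\Hb}^2$, via $\Hb\bm{\delta}\bm{\delta}^\top\Hb\preceq\|\bm{\delta}\|_{\Hb}^2\Hb$, gives exactly the stated bound.
\end{itemize}

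This is the recursion the paper obtains directly by conditioning on the features $\xb_1,\dots,\xb_N$ instead of on $\mathcal{Z}$. Then $\bm{\eta}^{\fluct,2}_N=\sum_s(z_s-p)\vb_s$ with $\vb_s$ depending only on the features, and the fixed-budget identity applies conditionally. The sum $\sum_s\EE\|\vb_s\|_{\Hb}^2$ is the trace of a $\cT_t$-recursion with forcing $\gamma_t^2\Hb\bm{\delta}\bm{\delta}^\top\Hb$. No mean/centered split and no intermediate-time estimate are needed.
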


\begin{proof}
For each \(t=1,\dots,N\), define
\[
\Bb_t:=\Ib-\gamma_t\xb_t\xb_t^\top.
\]
Iterating the recursion for \(\bm{\eta}^{\fluct,2}_t\) yields
\[
\bm{\eta}^{\fluct,2}_N
=
\sum_{s=1}^N
\Bb_N\Bb_{N-1}\cdots \Bb_{s+1}\,
\gamma_s(z_s-p)\Hb\bm{\delta},
\]
where the empty product is interpreted as \(\Ib\). Therefore
\[
\bm{\eta}^{\fluct,2}_N
=
\sum_{s=1}^N (z_s-p)\bm{v}_s,
\qquad
\bm{v}_s
:=
\gamma_s\Bb_N\Bb_{N-1}\cdots \Bb_{s+1}\Hb\bm{\delta}.
\]

Conditional on the feature sequence \((\xb_1,\dots,\xb_N)\), the vectors
\(\bm{v}_1,\dots,\bm{v}_N\) are deterministic. Since the source indicators are independent
of the features and follow the fixed-budget law, Lemma~\ref{lem:fixed-budget-variance-identity-wu-direct-final}
gives
\[
\EE\!\left[
\|\bm{\eta}^{\fluct,2}_N\|_{\Hb}^2
\,\middle|\, \xb_1,\dots,\xb_N
\right]
\le
\frac{Np(1-p)}{N-1}\sum_{s=1}^N \|\bm{v}_s\|_{\Hb}^2.
\]
Taking expectation over the features gives
\begin{equation}\label{eq:F2-step1}
\big\langle \Hb,\Fb_N^{(2)}\big\rangle
=
\EE\!\left[\|\bm{\eta}^{\fluct,2}_N\|_{\Hb}^2\right]
\le
\frac{Np(1-p)}{N-1}
\sum_{s=1}^N \EE\!\left[\|\bm{v}_s\|_{\Hb}^2\right].
\end{equation}

For each \(t=0,1,\dots,N\) and each \(s\in\{1,\dots,N\}\), define
\[
\bm{v}_{s,t}
:=
\begin{cases}
\gamma_s \Bb_t\Bb_{t-1}\cdots \Bb_{s+1}\Hb\bm{\delta}, & 1\le s\le t,\\[1mm]
\boldsymbol{0}, & s>t.
\end{cases}
\]
In particular, \(\bm{v}_{s,N}=\bm{v}_s\). Now define
\[
\Sb_t:=\sum_{s=1}^t \bm{v}_{s,t}\bm{v}_{s,t}^\top,
\qquad
t=0,1,\dots,N,
\]
with the convention \(\Sb_0:=\boldsymbol{0}\). Then
\[
\langle \Hb,\Sb_N\rangle
=
\sum_{s=1}^N \langle \Hb,\bm{v}_s\bm{v}_s^\top\rangle
=
\sum_{s=1}^N \|\bm{v}_s\|_{\Hb}^2.
\]
Moreover, for \(s<t\),
\[
\bm{v}_{s,t}=\Bb_t\bm{v}_{s,t-1},
\qquad\text{while}\qquad
\bm{v}_{t,t}=\gamma_t\Hb\bm{\delta}.
\]
Hence
\[
\Sb_t
=
\Bb_t\Sb_{t-1}\Bb_t^\top
+
\gamma_t^2 \Ob,
\qquad
\Ob:=\Hb\bm{\delta}\bm{\delta}^\top\Hb.
\]

Define $\bar{\Sb}_t:=\EE[\Sb_t].$
Since \(\Sb_{t-1}\) is measurable with respect to
\(\sigma(\xb_1,\dots,\xb_{t-1})\) and \(\xb_t\) is independent of the past,
taking conditional expectation yields
\[
\bar{\Sb}_t
=
(\cI-\gamma_t\cT_t)\circ \bar{\Sb}_{t-1}
+
\gamma_t^2 \Ob,
\qquad
\bar{\Sb}_0=\boldsymbol{0}.
\]

We first bound the forcing matrix \(\Ob\) by a multiple of \(\Hb\). For any vector \(\ub\),
\[
\ub^\top \Ob \ub
=
(\bm{\delta}^\top \Hb \ub)^2
\le
(\bm{\delta}^\top \Hb \bm{\delta})(\ub^\top \Hb \ub)
=
\|\bm{\delta}\|_{\Hb}^2\,\ub^\top \Hb \ub,
\]
where we used Cauchy--Schwarz in the inner product induced by \(\Hb\). Therefore
\begin{equation}\label{eq:O-upper-H}
\Ob\preceq \|\bm{\delta}\|_{\Hb}^2\,\Hb.
\end{equation}

Now define an auxiliary matrix sequence \((\Cb_t^\delta)_{t=0}^N\) by
\[
\Cb_0^\delta:=\boldsymbol{0},
\qquad
\Cb_t^\delta
=
(\cI-\gamma_t\cT_t)\circ \Cb_{t-1}^\delta
+
\gamma_t^2 \|\bm{\delta}\|_{\Hb}^2\,\Hb,
\qquad
t=1,\dots,N.
\]
By Lemma~\ref{lemma:operators}, the operator \((\cI-\gamma_t\cT_t)\) is a PSD mapping.
Since \(\bar{\Sb}_0=\Cb_0^\delta=\boldsymbol{0}\) and \eqref{eq:O-upper-H} holds, an induction on \(t\)
shows that
\[
\bar{\Sb}_t\preceq \Cb_t^\delta,
\qquad
t=0,1,\dots,N.
\]
Consequently,
\begin{equation}\label{eq:F2-compare}
\sum_{s=1}^N \EE\!\left[\|\bm{v}_s\|_{\Hb}^2\right]
=
\langle \Hb,\bar{\Sb}_N\rangle
\le
\langle \Hb,\Cb_N^\delta\rangle.
\end{equation}

The recursion defining \(\Cb_t^\delta\) has exactly the same form as the variance recursion
\eqref{eq:update_Ct}, with noise covariance
$
\bSigma^\delta:=\|\bm{\delta}\|_{\Hb}^2\,\Hb.
$ Applying Lemma~\ref{thm:HC-decay-lr-upper-bound} with
\[
\sigma^2\leftarrow \|\bm{\delta}\|_{\Hb}^2
\]
yields
\[
\langle \Hb,\Cb_N^\delta\rangle
\le
\frac{8\|\bm{\delta}\|_{\Hb}^2}{1-\alpha\gamma\tr(\Hb)}
\cdot
\frac{\DIM}{\Neff}.
\]
Combining this with \eqref{eq:F2-step1} and \eqref{eq:F2-compare}, we obtain
\[
\big\langle \Hb,\Fb_N^{(2)}\big\rangle
\le
\frac{Np(1-p)}{N-1}
\cdot
\frac{8\|\bm{\delta}\|_{\Hb}^2}{1-\alpha\gamma\tr(\Hb)}
\cdot
\frac{\DIM}{\Neff},
\]
\end{proof}

\subsection{Proof of Theorem~\ref{thm:generalization_error}}
\begin{proof}
 Combining Bias upper bound, Variance upper bound, Drift upper bound and Fluctuation upper bound yields the desired result.   
\end{proof}

\section{Mixed Training Lower Bound Analysis}\label{ap:mix-lower}

In this section, we derive a lower bound for the mixed-training excess risk.  
The starting point is again the error recursion
\begin{equation}
\bm{\eta}_t
=
(\Ib-\gamma_t\xb_t\xb_t^\top)\bm{\eta}_{t-1}
+
\gamma_t z_t \xb_t\xb_t^\top\bm{\delta}
+
\gamma_t\xi_t\xb_t,
\qquad
\bm{\eta}_0=\wb_0-\wb_1^*.
\label{eq:lower-error-recursion}
\end{equation}
Recall that
\[
\bm{\eta}_t=\wb_t-\wb_1^* .
\]
By definition of excess risk,
\[
\EE[\R(\wb_N)-\R(\wb_1^*)]
=
\frac12\,\big\langle \Hb,\EE[\bm{\eta}_N\otimes\bm{\eta}_N]\big\rangle .
\]

To derive a lower bound, we directly center the total error process \(\bm{\eta}_t\) around its mean. Define
\[
\bar{\bm{\eta}}_t:=\EE[\bm{\eta}_t],
\qquad
\widetilde{\bm{\eta}}_t:=\bm{\eta}_t-\bar{\bm{\eta}}_t.
\]
Then
\[
\bm{\eta}_t=\bar{\bm{\eta}}_t+\widetilde{\bm{\eta}}_t,
\qquad
\EE[\widetilde{\bm{\eta}}_t]=\boldsymbol{0}.
\]

Therefore,
\begin{align*}
\EE[\bm{\eta}_N\otimes\bm{\eta}_N]
&=
\EE\big[
(\bar{\bm{\eta}}_N+\widetilde{\bm{\eta}}_N)
\otimes
(\bar{\bm{\eta}}_N+\widetilde{\bm{\eta}}_N)
\big]
\\
&=
\bar{\bm{\eta}}_N\otimes\bar{\bm{\eta}}_N
+
\EE[\widetilde{\bm{\eta}}_N\otimes\widetilde{\bm{\eta}}_N]
+
\bar{\bm{\eta}}_N\otimes\EE[\widetilde{\bm{\eta}}_N]
+
\EE[\widetilde{\bm{\eta}}_N]\otimes\bar{\bm{\eta}}_N
\\
&=
\bar{\bm{\eta}}_N\otimes\bar{\bm{\eta}}_N
+
\EE[\widetilde{\bm{\eta}}_N\otimes\widetilde{\bm{\eta}}_N].
\end{align*}
Hence
\begin{equation}
\EE[\R(\wb_N)-\R(\wb_1^*)]
=
\frac12\|\bar{\bm{\eta}}_N\|_{\Hb}^2
+
\frac12\big\langle \Hb,\EE[\widetilde{\bm{\eta}}_N\otimes\widetilde{\bm{\eta}}_N]\big\rangle.
\label{eq:lower-bound-by-mean}
\end{equation}

\subsection{The mean recursion}

Taking expectation on both sides of \eqref{eq:lower-error-recursion}, and using that
\[
\EE[\xb_t\xb_t^\top]=\Hb,
\qquad
\EE[z_t]=p,
\qquad
\EE[\xi_t\xb_t]=\boldsymbol{0},
\]
we obtain
\begin{equation*}
\bar{\bm{\eta}}_t
=
(\Ib-\gamma_t\Hb)\bar{\bm{\eta}}_{t-1}
+
\gamma_t p \Hb\bm{\delta},
\qquad
\bar{\bm{\eta}}_0=\wb_0-\wb_1^*.
\label{eq:mean-recursion-lower}
\end{equation*}

Unrolling the recursion gives
\begin{align}
\bar{\bm{\eta}}_N
&=
\prod_{t=1}^N(\Ib-\gamma_t\Hb)(\wb_0-\wb_1^*)
+
p\sum_{s=1}^N
\gamma_s
\left(
\prod_{t=s+1}^N(\Ib-\gamma_t\Hb)
\right)\Hb\bm{\delta}.
\label{eq:mean-unrolled-lower}
\end{align}
Since each factor \((\Ib-\gamma_t\Hb)\) is a polynomial in \(\Hb\), it commutes with \(\Hb\). Therefore,
\[
\sum_{s=1}^N
\gamma_s
\left(
\prod_{t=s+1}^N(\Ib-\gamma_t\Hb)
\right)\Hb
=
\Ib-\prod_{t=1}^N(\Ib-\gamma_t\Hb).
\]
Substituting this identity into \eqref{eq:mean-unrolled-lower}, we get the closed form
\begin{equation*}
\bar{\bm{\eta}}_N
=
\prod_{t=1}^{N}(\Ib-\gamma_t\Hb)(\wb_0-\wb_1^*)
+
p\Bigl(\Ib-\prod_{t=1}^{N}(\Ib-\gamma_t\Hb)\Bigr)\bm{\delta}.
\label{eq:mean-closed-form-lower}
\end{equation*}

Consequently,
\begin{equation}
\|\bar{\bm{\eta}}_N\|_{\Hb}^2
=
\Big\|
\prod_{t=1}^{N}(\Ib-\gamma_t\Hb) (\wb_0 - \wb_1^*)
+
p\big(\Ib-\prod_{t=1}^{N}(\Ib-\gamma_t\Hb)\big)\bm{\delta}
\Big\|^2_{\Hb}.
\label{eq:mean-risk-term-lower}
\end{equation}

The remaining task is to lower bound the centered fluctuation term
\[
\frac12\big\langle \Hb,\EE[\widetilde{\bm{\eta}}_N\otimes\widetilde{\bm{\eta}}_N]\big\rangle,
\]
which will be handled next.

\subsection{The centered fluctuation term}
First, we introduce the lower bound Theorem of $\big \langle\Hb,\Vb_N\big\rangle$ in \cite{wulast}, which will be used in our analysis.
\begin{lemma}[A variance lower bound]\label{lemma:C-lower-bound}
Suppose Assumptions \ref{assump:second-moment} and \ref{assump:noise} hold.
Let $\Neff = N / \log N$.
Suppose $\Neff \ge 10$ and $\gamma < 1/\lambda_1$. We have 
\[
\big \langle\Hb,\Vb_N\big\rangle \ge \frac{\sigma_1^2}{400}\cdot\frac{\DIM}{\Neff},
\]
where $k^*=\max\left\{k:\lambda_k\ge \frac{1}{\gamma\Neff}\right\},
\qquad
\DIM:=k^*+\gamma^2\Neff^2\sum_{i>k^*}\lambda_i^2.$
\end{lemma}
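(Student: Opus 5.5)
We read $\Vb_N$ here as the variance iterate of \eqref{eq:update_Ct} driven by real-source noise, i.e.\ with forcing $\bSigma_t=\EE[\xi_t^2\xb_t\xb_t^\top]=\sigma_1^2\Hb$. This is the single-source setting of \citet{wulast}, with $\sigma^2=\sigma_1^2$. We then reproduce their argument in three steps: reduce to a deterministic recursion, diagonalize, and lower bound a scalar sum under the schedule \eqref{eq:geometry-tail-decay-lr}. The same argument gives the bound with $\sigma_1^2$ replaced by any $\sigma^2$ satisfying $\bSigma_t\succeq\sigma^2\Hb$.

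\textbf{Step 1 (comparison with a deterministic recursion).} Write $\cI-\gamma_t\cT_t=(\cI-\gamma_t\widetilde\cT_t)+\gamma_t^2(\cM-\widetilde\cM)$. By Lemma~\ref{lemma:operators}, $\cM-\widetilde\cM$ is a PSD mapping, so $(\cI-\gamma_t\cT_t)\circ\Ab\succeq(\cI-\gamma_t\widetilde\cT_t)\circ\Ab$ for every $\Ab\succeq0$. Moreover $\cI-\gamma_t\widetilde\cT_t$ is itself a PSD mapping. An induction on $t$, starting from $\Vb_0=\boldsymbol 0$, then gives $\Vb_t\succeq\widetilde\Vb_t$, where
\[
\widetilde\Vb_t=(\Ib-\gamma_t\Hb)\widetilde\Vb_{t-1}(\Ib-\gamma_t\Hb)+\gamma_t^2\sigma_1^2\Hb,\qquad \widetilde\Vb_0=\boldsymbol 0 .
\]
Consequently
\[
\langle\Hb,\Vb_N\rangle\ge\sigma_1^2\sum_i\lambda_i^2\sum_{s=1}^N\gamma_s^2\prod_{t=s+1}^N(1-\gamma_t\lambda_i)^2 .
\]
This step needs only Assumptions~\ref{assump:second-moment} and~\ref{assump:noise}; no fourth-moment lower bound is used.

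\textbf{Step 2 (scalar lower bound per eigendirection).} We restrict the inner sum to a single stage $\ell$ of the schedule, of length $K=N/\log N=\Neff$, on which the stepsize is constant, $\gamma_s=\gamma/2^\ell$. Every later stage has total stepsize at most $\gamma/2^{\ell}$, since $\sum_{j>\ell}\gamma 2^{-j}K\le\gamma 2^{-\ell}K$. Because $\gamma\lambda_i<1$, we have $(1-x)\ge e^{-2x}$ for small $x$, so the later stages contribute a factor at least $\exp(-4\gamma2^{-\ell}K\lambda_i)$. The within-stage geometric sum is $\tfrac{\gamma_\ell^2(1-(1-\gamma_\ell\lambda_i)^{2K})}{1-(1-\gamma_\ell\lambda_i)^2}\gtrsim\gamma_\ell\min\{1/\lambda_i,\gamma_\ell K\}$. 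The choice of stage depends on the direction:
\begin{itemize}
\item For $i\le k^*$ (so $\lambda_i\ge1/(\gamma\Neff)$), pick $\ell$ with $\gamma2^{-\ell}K\lambda_i\in[1,2)$. This $\ell$ lies in $[0,\log N)$ because $\gamma\lambda_i<1$ and $\Neff\ge10$. Both the survival factor and the stage sum are then constants, so the $i$-th term is $\gtrsim\lambda_i^2\gamma_\ell/\lambda_i\eqsim1/K$.
\item For $i>k^*$, take $\ell=0$. Then $\gamma K\lambda_i<1$, the survival factor is $\gtrsim1$, and the stage sum is $\gtrsim\gamma^2K$, giving a term $\gtrsim\gamma^2K\lambda_i^2$.
\end{itemize}
Summing over $i$ gives $\gtrsim\big(k^*+\gamma^2\Neff^2\sum_{i>k^*}\lambda_i^2\big)/\Neff=\DIM/\Neff$.

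\textbf{Main obstacle.} The delicate part is Step 2: tracking explicit constants to reach $1/400$, and handling the integrality and boundary issues (the first and last stage, and $\lfloor\cdot\rfloor$ in $\ell_t$). We would follow the bookkeeping in the proof of the corresponding lower bound in \citet{wulast}, which uses $\Neff\ge10$ exactly to guarantee that the needed stage exists and that $(1-1/\Neff)^{\Neff}$-type factors are bounded below by absolute constants.
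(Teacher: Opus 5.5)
Your proposal is correct and follows essentially the route the paper relies on. The paper just cites \citet{wulast}, and its later application of this lemma to the deterministic sum $\sum_u\gamma_u^2\langle\Hb,\prod_{j>u}(\Ib-\gamma_j\Hb)^2\Hb\rangle$ in the residual-fluctuation bound reflects exactly your Step~1 reduction to the $\widetilde\cT_t$ recursion, followed by an eigendirection-wise, stage-by-stage scalar bound.

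One caution on the explicit constant. The crude inequality $1-x\ge e^{-2x}$ costs a factor $e^{-8}$ in the head case, which leaves you near $10^{-4}$, below $1/400$. To reach the stated constant, use that every later-stage factor satisfies $\gamma_t\lambda_i\le 1/\Neff\le 1/10$, so $\prod_t(1-\gamma_t\lambda_i)^2\ge e^{-4.4}$. The head and tail constants then come out near $5\times10^{-3}$ and $10^{-2}$, both above $1/400$.
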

\begin{proof}
See proof of Theorem C.2 in \cite{wulast}.
\end{proof}

We now continue the analysis of the centered fluctuation term
\[
\frac12\big\langle \Hb,\EE[\widetilde{\bm{\eta}}_N\otimes\widetilde{\bm{\eta}}_N]\big\rangle .
\]

Starting from the total recursion
\[
\bm{\eta}_t
=
(\Ib-\gamma_t\xb_t\xb_t^\top)\bm{\eta}_{t-1}
+
\gamma_t z_t\xb_t\xb_t^\top\bm{\delta}
+
\gamma_t\xi_t\xb_t,
\qquad
\bm{\eta}_0=\wb_0-\wb_1^*,
\]
and subtracting the mean recursion
\[
\bar{\bm{\eta}}_t
=
(\Ib-\gamma_t\Hb)\bar{\bm{\eta}}_{t-1}
+
\gamma_t p\Hb\bm{\delta},
\qquad
\bar{\bm{\eta}}_0=\wb_0-\wb_1^*,
\]
we obtain
\begin{align}
\widetilde{\bm{\eta}}_t
&=
(\Ib-\gamma_t\xb_t\xb_t^\top)\widetilde{\bm{\eta}}_{t-1}
+
\gamma_t\xi_t\xb_t
+
\gamma_t\bigl(z_t\xb_t\xb_t^\top\bm{\delta}-p\Hb\bm{\delta}\bigr)
+
\gamma_t(\Hb-\xb_t\xb_t^\top)\bar{\bm{\eta}}_{t-1}.
\label{eq:centered-eta-recursion}
\end{align}

We further decompose
\[
\widetilde{\bm{\eta}}_t
=
\widetilde{\bm{\eta}}^{\noise}_t+\widetilde{\bm{\eta}}^{\res}_t,
\qquad t=0,1,\dots,N,
\]
where
\[
\begin{cases}
\widetilde{\bm{\eta}}^{\noise}_t
=
(\Ib-\gamma_t\xb_t\xb_t^\top)\widetilde{\bm{\eta}}^{\noise}_{t-1}
+
\gamma_t\xi_t\xb_t,
\\[0.5ex]
\widetilde{\bm{\eta}}^{\noise}_0=\boldsymbol{0},
\end{cases}
\begin{cases}
\widetilde{\bm{\eta}}^{\res}_t
=
(\Ib-\gamma_t\xb_t\xb_t^\top)\widetilde{\bm{\eta}}^{\res}_{t-1}
+
\gamma_t\bigl(z_t\xb_t\xb_t^\top\bm{\delta}-p\Hb\bm{\delta}\bigr)
+
\gamma_t(\Hb-\xb_t\xb_t^\top)\bar{\bm{\eta}}_{t-1},
\\[0.5ex]
\widetilde{\bm{\eta}}^{\res}_0=\boldsymbol{0}.
\end{cases}
\]
Thus
\[
\widetilde{\bm{\eta}}_N
=
\widetilde{\bm{\eta}}^{\noise}_N+\widetilde{\bm{\eta}}^{\res}_N.
\]

Expanding the second moment gives
\begin{align}
\EE[\widetilde{\bm{\eta}}_N\otimes\widetilde{\bm{\eta}}_N]
&=
\EE[\widetilde{\bm{\eta}}^{\noise}_N\otimes\widetilde{\bm{\eta}}^{\noise}_N]
+
\EE[\widetilde{\bm{\eta}}^{\res}_N\otimes\widetilde{\bm{\eta}}^{\res}_N]
\notag\\
&\quad
+
\EE[\widetilde{\bm{\eta}}^{\noise}_N\otimes\widetilde{\bm{\eta}}^{\res}_N]
+
\EE[\widetilde{\bm{\eta}}^{\res}_N\otimes\widetilde{\bm{\eta}}^{\noise}_N].
\label{eq:centered-second-moment-expand}
\end{align}
We now prove that
\begin{equation}
\EE[\widetilde{\bm{\eta}}^{\noise}_t\otimes\widetilde{\bm{\eta}}^{\res}_t]
=
\boldsymbol{0},
\qquad
\forall t\ge 0.
\label{eq:cross-zero-claim}
\end{equation}

Let
\[
\Bb_t:=\Ib-\gamma_t\xb_t\xb_t^\top,
\qquad
\mathcal{G}:=\sigma\big((\xb_s,z_s):1\le s\le N\big)
\]
be the sigma-field generated by the entire feature sequence and source schedule.

We first observe that
\[
\widetilde{\bm{\eta}}^{\res}_t
\quad\text{is } \mathcal{G}\text{-measurable for every }t.
\]
Indeed, \(\widetilde{\bm{\eta}}^{\res}_0=\boldsymbol{0}\), and the recursion
\[
\widetilde{\bm{\eta}}^{\res}_t
=
\Bb_t\widetilde{\bm{\eta}}^{\res}_{t-1}
+
\gamma_t\bigl(z_t\xb_t\xb_t^\top\bm{\delta}-p\Hb\bm{\delta}\bigr)
+
\gamma_t(\Hb-\xb_t\xb_t^\top)\bar{\bm{\eta}}_{t-1}
\]
only involves \((\xb_s,z_s)_{s\le t}\) and the deterministic vector
\(\bar{\bm{\eta}}_{t-1}\). Hence, by induction,
\(\widetilde{\bm{\eta}}^{\res}_t\) is \(\mathcal{G}\)-measurable.

Next, we show that
\begin{equation}
\EE\big[
\widetilde{\bm{\eta}}^{\noise}_t
\mid
\mathcal{G}
\big]
=
\boldsymbol{0},
\qquad
\forall t\ge 0.
\label{eq:noise-cond-mean-zero}
\end{equation}
To see this, unroll the noise recursion
\[
\widetilde{\bm{\eta}}^{\noise}_t
=
\Bb_t\widetilde{\bm{\eta}}^{\noise}_{t-1}
+
\gamma_t\xi_t\xb_t,
\qquad
\widetilde{\bm{\eta}}^{\noise}_0=\boldsymbol{0}.
\]
This gives
\[
\widetilde{\bm{\eta}}^{\noise}_t
=
\sum_{s=1}^t
\Bigl(\Bb_t\Bb_{t-1}\cdots \Bb_{s+1}\Bigr)
\gamma_s \xi_s\xb_s,
\]
where the empty product is understood as the identity matrix.
Conditional on \(\mathcal{G}\), all matrices
\(\Bb_j\) and vectors \(\xb_s\) are deterministic. Moreover, by the zero-mean noise
assumption,
\[
\EE[\xi_s\mid \mathcal{G}]=0,
\qquad s=1,\ldots,N.
\]
Therefore,
\[
\EE\big[
\widetilde{\bm{\eta}}^{\noise}_t
\mid
\mathcal{G}
\big]
=
\sum_{s=1}^t
\Bigl(\Bb_t\Bb_{t-1}\cdots \Bb_{s+1}\Bigr)
\gamma_s \xb_s
\EE[\xi_s\mid \mathcal{G}]
=
\boldsymbol{0},
\]
which proves \eqref{eq:noise-cond-mean-zero}.

Now, since \(\widetilde{\bm{\eta}}^{\res}_t\) is \(\mathcal{G}\)-measurable, we have
\begin{align*}
\EE\big[
\widetilde{\bm{\eta}}^{\noise}_t
\otimes
\widetilde{\bm{\eta}}^{\res}_t
\big]
&=
\EE\Big[
\EE\big[
\widetilde{\bm{\eta}}^{\noise}_t
\otimes
\widetilde{\bm{\eta}}^{\res}_t
\mid
\mathcal{G}
\big]
\Big]
\\
&=
\EE\Big[
\EE\big[
\widetilde{\bm{\eta}}^{\noise}_t
\mid
\mathcal{G}
\big]
\otimes
\widetilde{\bm{\eta}}^{\res}_t
\Big]
\\
&=
\boldsymbol{0}.
\end{align*}
This proves \eqref{eq:cross-zero-claim}. Taking transpose also gives
\[
\EE\big[
\widetilde{\bm{\eta}}^{\res}_t
\otimes
\widetilde{\bm{\eta}}^{\noise}_t
\big]
=
\boldsymbol{0}.
\]
Consequently,
\[
\EE[\widetilde{\bm{\eta}}_N\otimes\widetilde{\bm{\eta}}_N]
=
\EE[\widetilde{\bm{\eta}}^{\noise}_N\otimes
\widetilde{\bm{\eta}}^{\noise}_N]
+
\EE[\widetilde{\bm{\eta}}^{\res}_N\otimes
\widetilde{\bm{\eta}}^{\res}_N].
\]

Hence
\begin{align}
\big\langle \Hb,\EE[\widetilde{\bm{\eta}}_N\otimes\widetilde{\bm{\eta}}_N]\big\rangle
&=
\big\langle \Hb,\EE[\widetilde{\bm{\eta}}^{\noise}_N\otimes\widetilde{\bm{\eta}}^{\noise}_N]\big\rangle
+
\big\langle \Hb,\EE[\widetilde{\bm{\eta}}^{\res}_N\otimes\widetilde{\bm{\eta}}^{\res}_N]\big\rangle.
\label{eq:centered-fluct-lower-split}
\end{align}

We now identify the first term with a variance-type recursion. Define
\[
\Vb_t^{\mix}
:=
\EE[\widetilde{\bm{\eta}}^{\noise}_t\otimes\widetilde{\bm{\eta}}^{\noise}_t].
\]
Then \(\Vb_t^{\mix}\) satisfies
\[
\Vb_t^{\mix}
=
(\cI-\gamma_t\cT_t)\circ \Vb_{t-1}^{\mix}
+
\gamma_t^2 \bar{\bSigma},
\qquad
\Vb_0^{\mix}=\boldsymbol{0},
\]
where
\[
\bar{\bSigma}
:=
\EE[\xi_t^2\xb_t\xb_t^\top]
=
(1-p)\bSigma_1+p\bSigma_2.
\]
Under Assumption~\ref{assump:noise}, we have
\[
\bSigma_1=\sigma_1^2\Hb,
\qquad
\bSigma_2=\sigma_2^2\Hb,
\]
and therefore
\[
\bar{\bSigma}
=
\bigl((1-p)\sigma_1^2+p\sigma_2^2\bigr)\Hb.
\]
Denote
\[
\bar{\sigma}^2:=(1-p)\sigma_1^2+p\sigma_2^2.
\]
Then \(\Vb_t^{\mix}\) is exactly the same variance recursion as in Lemma~\ref{lemma:C-lower-bound}, with \(\sigma_1^2\) replaced by \(\bar{\sigma}^2\). Hence
\begin{equation}
\big\langle \Hb,\EE[\widetilde{\bm{\eta}}^{\noise}_N\otimes\widetilde{\bm{\eta}}^{\noise}_N]\big\rangle
=
\big\langle \Hb,\Vb_N^{\mix}\big\rangle
\ge
\frac{\bar{\sigma}^2}{400}\cdot\frac{\DIM}{\Neff}.
\label{eq:variance-part-lower}
\end{equation}

The remaining term
\[
\big\langle \Hb,\EE[\widetilde{\bm{\eta}}^{\res}_N\otimes\widetilde{\bm{\eta}}^{\res}_N]\big\rangle
\]
will be lower bounded separately by the following Lemma.

\begin{lemma}[A lower bound for the residual fluctuation term]
\label{lem:residual-fluctuation-lower-bound}
Recall that
\[
\widetilde{\bm{\eta}}^{\res}_t
=
(\Ib-\gamma_t\xb_t\xb_t^\top)\widetilde{\bm{\eta}}^{\res}_{t-1}
+
\gamma_t\bigl(z_t\xb_t\xb_t^\top\bm{\delta}-p\Hb\bm{\delta}\bigr)
+
\gamma_t(\Hb-\xb_t\xb_t^\top)\bar{\bm{\eta}}_{t-1},
\qquad
\widetilde{\bm{\eta}}^{\res}_0=\boldsymbol{0},
\]
Suppose Assumptions~\ref{assump:second-moment},
\ref{assump:fourth-moment} and \ref{assump:noise} hold.
Let $\Neff:=N/\log N,$ Assume \(N\ge 500\) and \(\gamma<1/\lambda_1\). Then
\begin{equation*}
\big\langle \Hb,\EE[\widetilde{\bm{\eta}}^{\res}_N\otimes\widetilde{\bm{\eta}}^{\res}_N]\big\rangle
\ge
\frac{\beta e^{-8}}{400}\,
\|\wb_0-\wb_1^*-p\bm{\delta}\|_{\Hb_{k^*:\infty}}^2
\cdot
\frac{\DIM}{\Neff}
+
\frac{(e^{-4}-2^{-7})^2}{40}\,
p(1-p)\gamma^2\Neff
\sum_{i>k^*}\lambda_i^3\delta_i^2,
\label{eq:residual-fluctuation-lower}
\end{equation*}
where $k^*=\max\left\{k:\lambda_k\ge \frac{1}{\gamma\Neff}\right\},
\qquad
\DIM:=k^*+\gamma^2\Neff^2\sum_{i>k^*}\lambda_i^2.$
\end{lemma}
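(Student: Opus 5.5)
The plan is to split $\widetilde{\bm{\eta}}^{\res}_t$ according to the source schedule, using conditioning on $\mathcal Z=\sigma(z_1,\dots,z_N)$. First I would rewrite the forcing term as
\[
\bm{\zeta}_t:=z_t\xb_t\xb_t^\top\bm{\delta}-p\Hb\bm{\delta}+(\Hb-\xb_t\xb_t^\top)\bar{\bm{\eta}}_{t-1}
=(\xb_t\xb_t^\top-\Hb)\bigl(z_t\bm{\delta}-\bar{\bm{\eta}}_{t-1}\bigr)+(z_t-p)\Hb\bm{\delta}.
\]
The first piece has zero conditional mean given $\mathcal F_{t-1}$. The second piece is $\mathcal Z$-measurable. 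So I set $\mathbf m_t:=\EE[\widetilde{\bm{\eta}}^{\res}_t\mid\mathcal Z]$; it satisfies the deterministic-given-$\mathcal Z$ recursion
\[
\mathbf m_t=(\Ib-\gamma_t\Hb)\mathbf m_{t-1}+\gamma_t(z_t-p)\Hb\bm{\delta}.
\]
The law of total variance then gives
\[
\EE\|\widetilde{\bm{\eta}}^{\res}_N\|_{\Hb}^2=\EE\|\mathbf m_N\|_{\Hb}^2+\EE\|\widetilde{\bm{\eta}}^{\res}_N-\mathbf m_N\|_{\Hb}^2,
\]
and I will aim to show that the first summand produces the $p(1-p)\gamma^2\Neff\sum_{i>k^*}\lambda_i^3\delta_i^2$ term and the second produces the $\beta$ term.

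\textbf{Conditional-mean part.} Unrolling gives $\mathbf m_N=\sum_s(z_s-p)\bm v_s$ with the deterministic vectors $\bm v_s=\gamma_s\prod_{t>s}(\Ib-\gamma_t\Hb)\Hb\bm{\delta}$. The identity in Lemma~\ref{lem:fixed-budget-variance-identity-wu-direct-final} therefore gives
\[
\EE\|\mathbf m_N\|_{\Hb}^2=\tfrac{Np(1-p)}{N-1}\sum_s\|\bm v_s-\bar{\bm v}\|_{\Hb}^2\ge p(1-p)\sum_i\lambda_i^3\delta_i^2\sum_s(a_{s,i}-\bar a_i)^2,
\]
where $a_{s,i}=\gamma_s\prod_{t>s}(1-\gamma_t\lambda_i)$. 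For $i>k^*$ the product stays in $[e^{-4},1]$ (roughly, since $\sum_t\gamma_t\lambda_i\lesssim 2\gamma\Neff\lambda_i\le 2$). Meanwhile $\gamma_s$ takes the value $\gamma$ on the first $\Neff$ steps and at most $\gamma/2^{7}$ on the late phases. Hence on a block of $\gtrsim\Neff$ indices $a_{s,i}\ge\gamma e^{-4}$, and on another such block $a_{s,i}\le\gamma 2^{-7}$. Because $\bar a_i$ must be far from one of these two levels, the centered sum of squares is $\gtrsim(e^{-4}-2^{-7})^2\gamma^2\Neff$. This is the source of the constant $(e^{-4}-2^{-7})^2/40$, and it needs $N\ge500$ so that the phases are long enough.

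\textbf{Conditional-variance part.} The process $\mathbf r_t:=\widetilde{\bm{\eta}}^{\res}_t-\mathbf m_t$ satisfies
\[
\mathbf r_t=(\Ib-\gamma_t\xb_t\xb_t^\top)\mathbf r_{t-1}+\gamma_t(\xb_t\xb_t^\top-\Hb)\bigl(z_t\bm{\delta}-\bar{\bm{\eta}}_{t-1}-\mathbf m_{t-1}\bigr).
\]
Here $(\xb_t\xb_t^\top-\Hb)$ is independent of the past, so the cross terms vanish. The second moment $\EE[\mathbf r_t\otimes\mathbf r_t]$ then follows the variance recursion with forcing $\gamma_t^2(\cM-\widetilde\cM)\circ\EE[\mathbf u_t\otimes\mathbf u_t]$, where $\mathbf u_t:=z_t\bm{\delta}-\bar{\bm{\eta}}_{t-1}-\mathbf m_{t-1}$. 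By Assumption~\ref{assump:fourth-moment}(2) this forcing is $\succeq\beta\,\EE\|\mathbf u_t\|_{\Hb}^2\,\Hb$. Since $\mathbf m_{t-1}$ is built only from $z_1,\dots,z_{t-1}$, I lower bound by the coordinatewise squared mean for $i>k^*$:
\[
\EE\|\mathbf u_t\|_{\Hb}^2\ge\sum_{i>k^*}\lambda_i\bigl(p\delta_i-\bar\eta_{t-1,i}\bigr)^2.
\]
From the closed form of $\bar{\bm{\eta}}$, $p\delta_i-\bar\eta_{t-1,i}=-\prod_{s<t}(1-\gamma_s\lambda_i)(\wb_0-\wb_1^*-p\bm{\delta})_i$, and the product is at least $e^{-4}$ in the tail. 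This gives the uniform bound $\succeq\beta e^{-8}\|\wb_0-\wb_1^*-p\bm{\delta}\|_{\Hb_{k^*:\infty}}^2\Hb$. A PSD comparison via Lemma~\ref{lemma:operators} then lets me apply Lemma~\ref{lemma:C-lower-bound} with $\sigma_1^2\leftarrow\beta e^{-8}\|\cdot\|^2_{\Hb_{k^*:\infty}}$.

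\textbf{Main obstacle.} The step I expect to be hardest is the conditional-mean block argument. The issue is making the spread of $a_{s,i}$ uniform over all $i>k^*$ with explicit constants under the $\lfloor t/(N/\log N)\rfloor$ schedule, and handling the fact that $\mathbf m_{t-1}$ and $z_t$ are correlated under the fixed budget. For the latter, I would first try to sidestep the correlation by conditioning on $\mathcal Z$ throughout, as above, rather than computing $\EE\|\mathbf u_t\|_{\Hb}^2$ jointly.
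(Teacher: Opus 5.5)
Your proof is correct. On the schedule half it is essentially the paper's argument. Your $\mathbf m_N$ is exactly the paper's $\Sb_Z=\EE[\widetilde{\bm{\eta}}^{\res}_N\mid\mathcal Z]$, and you bound it with the same fixed-budget identity and the same contrast between the early phase ($\gamma_u=\gamma$) and the phases with $\ell_t\ge 7$. The paper restricts the pairwise form $\frac{p(1-p)}{2(N-1)}\sum_{u,r}\|\bm v_u-\bm v_r\|_{\Hb}^2$ to $I_-\times I_+$. Your ``$\bar a_i$ is far from one of the two levels'' argument also works and even gives a better constant than $1/40$. Since $I_-$ and $I_+$ do not depend on $i$, uniformity over $i>k^*$ is automatic.

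The two routes differ on the $\beta$-term. The paper uses a three-way orthogonal split $\widetilde{\bm{\eta}}^{\res}_N=\Sb_Z+\Sb_X+\Rb^\circ$, where $\Sb_X=\EE[\widetilde{\bm{\eta}}^{\res}_N\mid\sigma(\xb_1,\dots,\xb_N)]$, and discards the interaction term $\Rb^\circ$. It then unrolls $\Sb_X$ into orthogonal summands driven by the deterministic vectors $p\bm{\delta}-\bar{\bm{\eta}}_{u-1}$. Finally it applies Jensen to the random propagator, which reduces everything to the deterministic sum $\sum_u\gamma_u^2\sum_i\lambda_i^2\prod_{j>u}(1-\gamma_j\lambda_i)^2$, before invoking Lemma~\ref{lemma:C-lower-bound}.

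You instead condition only on $\mathcal Z$, so your $\mathbf r_N=\Sb_X+\Rb^\circ$ keeps the interaction term. You then run the exact operator recursion with $\cI-\gamma_t\cT_t$, so Lemma~\ref{lemma:C-lower-bound} applies verbatim through a PSD comparison, with no unrolling. The price is a random, $\mathcal Z$-dependent forcing $\mathbf u_t$, and your Jensen step handles it: $\EE\|\mathbf u_t\|_{\Hb}^2\ge\|\EE\mathbf u_t\|_{\Hb}^2=\|p\bm{\delta}-\bar{\bm{\eta}}_{t-1}\|_{\Hb}^2$, because $\EE z_t=p$ and $\EE\mathbf m_{t-1}=\mathbf 0$. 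So the correlation between $z_t$ and $\mathbf m_{t-1}$ that you flag as an obstacle is irrelevant. Both routes give the same constant $\beta e^{-8}/400$.

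One justification needs repair. Independence of $\xb_t$ from the past does not by itself remove the cross term in the recursion for $\EE[\mathbf r_t\otimes\mathbf r_t]$. Averaging $\gamma_t(\Ib-\gamma_t\xb_t\xb_t^\top)\mathbf r_{t-1}\mathbf u_t^\top(\xb_t\xb_t^\top-\Hb)$ over $\xb_t$ kills the order-$\gamma_t$ piece. It leaves $-\gamma_t^2\bigl(\EE_{\xb_t}[\xb_t\xb_t^\top\mathbf r_{t-1}\mathbf u_t^\top\xb_t\xb_t^\top]-\Hb\mathbf r_{t-1}\mathbf u_t^\top\Hb\bigr)$, a fourth-moment term that is not zero for fixed $\mathbf r_{t-1},\mathbf u_t$.

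This term does vanish after the outer expectation, for three reasons: it is linear in $\mathbf r_{t-1}\mathbf u_t^\top$, $\mathbf u_t$ is $\mathcal Z$-measurable, and $\EE[\mathbf r_{t-1}\mid\mathcal Z]=\mathbf 0$ by construction. This is the same mechanism the paper uses to show $\Cb_t^{(1)}=\mathbf 0$ in the fluctuation upper bound. With that line added, your recursion $\EE[\mathbf r_t\otimes\mathbf r_t]=(\cI-\gamma_t\cT_t)\circ\EE[\mathbf r_{t-1}\otimes\mathbf r_{t-1}]+\gamma_t^2(\cM-\widetilde{\cM})\circ\EE[\mathbf u_t\otimes\mathbf u_t]$ is exact, and the rest of your argument goes through.
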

\begin{proof}
Let$
\cb_{t-1}=p\bm{\delta}-\bar{\bm{\eta}}_{t-1},$
we can rewrite the driving term in the recursion of \(\widetilde{\bm{\eta}}^{\res}_t\) as
\[
z_t\xb_t\xb_t^\top\bm{\delta}-p\Hb\bm{\delta}
+
(\Hb-\xb_t\xb_t^\top)\bar{\bm{\eta}}_{t-1}
=
(z_t-p)\xb_t\xb_t^\top\bm{\delta}
+
(\xb_t\xb_t^\top-\Hb)\cb_{t-1}.
\]
Hence
\begin{equation}
\widetilde{\bm{\eta}}^{\res}_t
=
(\Ib-\gamma_t\xb_t\xb_t^\top)\widetilde{\bm{\eta}}^{\res}_{t-1}
+
\gamma_t(z_t-p)\xb_t\xb_t^\top\bm{\delta}
+
\gamma_t(\xb_t\xb_t^\top-\Hb)\cb_{t-1},
\qquad
\widetilde{\bm{\eta}}^{\res}_0=\boldsymbol{0}.
\label{eq:res-rec-rewrite}
\end{equation}
On the other hand, from the closed form of the mean process,
\[
\bar{\bm{\eta}}_t
=
(\Ib-\gamma_t\Hb)\bar{\bm{\eta}}_{t-1}
+
\gamma_t p\Hb\bm{\delta},
\]
we obtain
\[
\cb_t
=
p\bm{\delta}-\bar{\bm{\eta}}_t
=
(\Ib-\gamma_t\Hb)(p\bm{\delta}-\bar{\bm{\eta}}_{t-1})
=
(\Ib-\gamma_t\Hb)\cb_{t-1},
\]
In particular,
\[
\cb_0=p\bm{\delta}-(\wb_0-\wb_1^*)
=-(\wb_0-\wb_1^*-p\bm{\delta}).
\]

Let
\[
\mathcal Z:=\sigma(z_1,\dots,z_N),
\qquad
\mathcal X:=\sigma(\xb_1,\dots,\xb_N),
\]
 and define
\begin{equation}
\Sb_Z:=\EE[\widetilde{\bm{\eta}}^{\res}_N\mid \mathcal Z],
\qquad
\Sb_X:=\EE[\widetilde{\bm{\eta}}^{\res}_N\mid \mathcal X],
\qquad
\Rb^\circ:=\widetilde{\bm{\eta}}^{\res}_N-\Sb_Z-\Sb_X.
\label{eq:ZXR-def}
\end{equation}
We first show that
\begin{equation}
\EE[\widetilde{\bm{\eta}}^{\res}_N]=\boldsymbol{0}.
\label{eq:res-mean-zero}
\end{equation}
Indeed, taking expectation in \eqref{eq:res-rec-rewrite} and using
\[
\EE[z_t-p]=0,
\qquad
\EE[\xb_t\xb_t^\top-\Hb]=\boldsymbol{0},
\qquad
\EE[\xb_t\xb_t^\top]=\Hb,
\]
gives
\[
\EE[\widetilde{\bm{\eta}}^{\res}_t]
=
(\Ib-\gamma_t\Hb)\EE[\widetilde{\bm{\eta}}^{\res}_{t-1}],
\]
and since \(\widetilde{\bm{\eta}}^{\res}_0=\boldsymbol{0}\), \eqref{eq:res-mean-zero} follows.

Next, \(\mathcal Z\) and \(\mathcal X\) are independent, and \(\EE[\Sb_X]=\EE[\widetilde{\bm{\eta}}^{\res}_N]=0\), hence
\[
\EE[\Sb_X\mid \mathcal Z]=\EE[\Sb_X]=0.
\]
Therefore,
\[
\EE\langle \Sb_Z,\Hb\Sb_X\rangle
=
\EE\Big[\langle \Sb_Z,\Hb\,\EE[\Sb_X\mid \mathcal Z]\rangle\Big]
=0.
\]
Moreover, by \eqref{eq:ZXR-def},
\[
\EE[\Rb^\circ\mid \mathcal Z]
=
\EE[\widetilde{\bm{\eta}}^{\res}_N\mid \mathcal Z]-\Sb_Z-\EE[\Sb_X\mid \mathcal Z]
=
0,
\]
and similarly \(\EE[\Rb^\circ\mid \mathcal X]=0\). Thus
\[
\EE\langle \Sb_Z,\Hb\Rb^\circ\rangle=0,
\qquad
\EE\langle \Sb_X,\Hb\Rb^\circ\rangle=0.
\]
Consequently,
\begin{equation}
\EE\|\widetilde{\bm{\eta}}^{\res}_N\|_{\Hb}^2
=
\EE\|\Sb_Z\|_{\Hb}^2
+
\EE\|\Sb_X\|_{\Hb}^2
+
\EE\|\Rb^\circ\|_{\Hb}^2
\ge
\EE\|\Sb_Z\|_{\Hb}^2
+
\EE\|\Sb_X\|_{\Hb}^2.
\label{eq:ZXR-orth}
\end{equation}
Since \(\big\langle \Hb,\EE[\widetilde{\bm{\eta}}^{\res}_N\otimes\widetilde{\bm{\eta}}^{\res}_N]\big\rangle=\EE\|\widetilde{\bm{\eta}}^{\res}_N\|_{\Hb}^2\), it remains to lower bound the two terms on the right-hand side of \eqref{eq:ZXR-orth}.

Define $
\fb_t:=\EE[\widetilde{\bm{\eta}}^{\res}_t\mid \mathcal X]. $Taking conditional expectation in \eqref{eq:res-rec-rewrite} and using \(\EE[z_t-p\mid \mathcal X]=0\), we obtain
\begin{equation}
\fb_t
=
(\Ib-\gamma_t\xb_t\xb_t^\top)\fb_{t-1}
+
\gamma_t(\xb_t\xb_t^\top-\Hb)\cb_{t-1},
\qquad
\fb_0=\boldsymbol{0}.
\label{eq:ft-feature}
\end{equation}
Thus \(\Sb_X=\fb_N\). Further define
\[
\bzeta_u:=(\xb_u\xb_u^\top-\Hb)\cb_{u-1},
\qquad
\Kb_{u,N}:=\gamma_u\prod_{j=u+1}^N(\Ib-\gamma_j\xb_j\xb_j^\top).
\]
Unrolling \eqref{eq:ft-feature} gives
\begin{equation}
\Sb_X=\sum_{u=1}^N \Ub_u,
\qquad
\Ub_u:=\Kb_{u,N}\bzeta_u.
\label{eq:SX-decomp}
\end{equation}

We first prove the cross terms vanish. Let \(u<v\). Since \(\Kb_{u,N}\) and \(\Ub_v\) are measurable with respect to \(\sigma(\xb_{u+1},\dots,\xb_N)\), while \(\bzeta_u\) depends only on \(\xb_u\), we have
\[
\EE[\bzeta_u]
=
(\EE[\xb_u\xb_u^\top]-\Hb)\cb_{u-1}
=
0.
\]
Hence
\begin{align*}
\EE\big[\langle \Ub_u,\Hb\Ub_v\rangle \mid \xb_{u+1},\dots,\xb_N\big]
&=
\EE\big[\langle \bzeta_u,\Kb_{u,N}^\top\Hb\Ub_v\rangle \mid \xb_{u+1},\dots,\xb_N\big]
=0,
\end{align*}
and therefore
\[
\EE\langle \Ub_u,\Hb\Ub_v\rangle=0,
\qquad u\neq v.
\]
It follows that
\begin{equation}
\EE\|\Sb_X\|_{\Hb}^2
=
\sum_{u=1}^N \EE\|\Ub_u\|_{\Hb}^2.
\label{eq:SX-orth}
\end{equation}

Now let
\[
\Ab_u:=\cb_{u-1}\cb_{u-1}^\top\succeq 0.
\]
Using the independence of \(\Kb_{u,N}\) and \(\bzeta_u\), we have
\begin{align}
\EE\|\Ub_u\|_{\Hb}^2
&=
\EE\big[\bzeta_u^\top \Kb_{u,N}^\top\Hb\Kb_{u,N}\bzeta_u\big]
=
\big\langle \EE[\Kb_{u,N}^\top\Hb\Kb_{u,N}],\,\Gb_u\big\rangle,
\label{eq:single-u-feature}
\end{align}
where
\[
\Gb_u:=\EE[\bzeta_u\bzeta_u^\top]
=
\EE[(\xb_u\xb_u^\top-\Hb)\Ab_u(\xb_u\xb_u^\top-\Hb)].
\]
By expansion,
\[
\Gb_u
=
\EE[\xb_u\xb_u^\top \Ab_u\xb_u\xb_u^\top]-\Hb \Ab_u\Hb.
\]
Applying Assumption~\ref{assump:fourth-moment} with \(A=\Ab_u\), we obtain
\[
\Gb_u
\succeq
\beta\,\tr(\Hb \Ab_u)\,\Hb
=
\beta\,\|\cb_{u-1}\|_{\Hb}^2\,\Hb.
\]
Substituting this into \eqref{eq:single-u-feature} yields
\begin{equation}
\EE\|\Ub_u\|_{\Hb}^2
\ge
\beta\,\|\cb_{u-1}\|_{\Hb}^2\,
\EE\!\big[\tr(\Hb\Kb_{u,N}\Hb\Kb_{u,N}^\top)\big].
\label{eq:single-u-feature-lb}
\end{equation}

Set
\[
\Mb_u:=\Hb^{1/2}\Kb_{u,N}\Hb^{1/2}.
\]
Then
\[
\tr(\Hb\Kb_{u,N}\Hb\Kb_{u,N}^\top)=\|\Mb_u\|_F^2.
\]
By Jensen's inequality,
\[
\EE\|\Mb_u\|_F^2
\ge
\|\EE[\Mb_u]\|_F^2.
\]
Moreover,
\[
\EE[\Kb_{u,N}]
=
\gamma_u\prod_{j=u+1}^N(\Ib-\gamma_j\Hb).
\]
Hence, in the eigenbasis of \(\Hb\),
\[
\|\EE[\Mb_u]\|_F^2
=
\sum_{i\ge 1}\gamma_u^2\lambda_i^2\prod_{j=u+1}^N(1-\gamma_j\lambda_i)^2.
\]
Define
\begin{equation}
\kappa_u
:=
\sum_{i\ge 1}\gamma_u^2\lambda_i^2\prod_{j=u+1}^N(1-\gamma_j\lambda_i)^2.
\label{eq:kappa-def}
\end{equation}
Then \eqref{eq:single-u-feature-lb} gives
\[
\EE\|\Ub_u\|_{\Hb}^2
\ge
\beta\,\|\cb_{u-1}\|_{\Hb}^2\,\kappa_u.
\]
Combining this with \eqref{eq:SX-orth},
\begin{equation}
\EE\|\Sb_X\|_{\Hb}^2
\ge
\beta\sum_{u=1}^N \|\cb_{u-1}\|_{\Hb}^2\,\kappa_u.
\label{eq:SX-sum}
\end{equation}

We now lower bound \(\|\cb_{u-1}\|_{\Hb}^2\) uniformly on the tail space. In the eigenbasis of \(\Hb\),
\[
(\cb_{u-1})_i
=
-\prod_{j=1}^{u-1}(1-\gamma_j\lambda_i)\,(\wb_0-\wb_1^*-p\bm{\delta})_i.
\]
Hence
\begin{equation}
\|\cb_{u-1}\|_{\Hb_{k^*:\infty}}^2
=
\sum_{i>k^*}\lambda_i\prod_{j=1}^{u-1}(1-\gamma_j\lambda_i)^2
(\wb_0-\wb_1^*-p\bm{\delta})_i^2.
\label{eq:c-tail}
\end{equation}
Since \(i>k^*\), we have \(\gamma\lambda_i<1/\Neff\le 1/10\) for \(\Neff\ge 10\). Also
\[
\sum_{t=1}^N\gamma_t\le 2\gamma\Neff.
\]
Using \(1-a\ge e^{-2a}\) for \(0\le a\le 1/2\), we obtain
\[
\prod_{j=1}^{u-1}(1-\gamma_j\lambda_i)^2
\ge
\exp\!\left(-4\lambda_i\sum_{j=1}^{u-1}\gamma_j\right)
\ge
\exp(-8\gamma\Neff\lambda_i)
\ge e^{-8}.
\]
Substituting this into \eqref{eq:c-tail} gives
\begin{equation}
\|\cb_{u-1}\|_{\Hb}^2
\ge
\|\cb_{u-1}\|_{\Hb_{k^*:\infty}}^2
\ge
e^{-8}\|\wb_0-\wb_1^*-p\bm{\delta}\|_{\Hb_{k^*:\infty}}^2,
\qquad
u=1,\dots,N.
\label{eq:c-force}
\end{equation}
Therefore \eqref{eq:SX-sum} yields
\begin{equation}
\EE\|\Sb_X\|_{\Hb}^2
\ge
\beta e^{-8}
\|\wb_0-\wb_1^*-p\bm{\delta}\|_{\Hb_{k^*:\infty}}^2
\sum_{u=1}^N \kappa_u.
\label{eq:SX-reduce}
\end{equation}

Finally, by applying Lemma~\ref{lemma:C-lower-bound},
\[
\sum_{u=1}^N \kappa_u
=
\left\langle
\Hb,\;
\sum_{u=1}^N \gamma_u^2
\prod_{j=u+1}^N(\Ib-\gamma_j\Hb)^2\Hb
\right\rangle
\ge
\frac1{400}
\left(
\frac{k^*}{\Neff}
+\gamma^2\Neff\sum_{i>k^*}\lambda_i^2
\right).
\]
Substituting this into \eqref{eq:SX-reduce}, we conclude that
\begin{equation}
\EE\|\Sb_X\|_{\Hb}^2
\ge
\frac{\beta e^{-8}}{400}\,
\|\wb_0-\wb_1^*-p\bm{\delta}\|_{\Hb_{k^*:\infty}}^2
\left(
\frac{k^*}{\Neff}
+\gamma^2\Neff\sum_{i>k^*}\lambda_i^2
\right).
\label{eq:SX-final}
\end{equation}

Then we lower bound schedule component \(\Sb_Z\). For each \(t\), define $\ab_t:=\EE[\widetilde{\bm{\eta}}^{\res}_t\mid \mathcal Z].$ Taking conditional expectation in \eqref{eq:res-rec-rewrite} and using independence of \(\xb_t\) from \((\mathcal Z,\xb_1,\dots,\xb_{t-1})\), we obtain
\[
\ab_t
=
(\Ib-\gamma_t\Hb)\ab_{t-1}
+
\gamma_t(z_t-p)\Hb\bm{\delta},
\qquad
\ab_0=0.
\]
Thus \(\Sb_Z=\ab_N\), and unrolling gives
\begin{equation}
\Sb_Z
=
\sum_{u=1}^N (z_u-p)\vb_u,
\qquad
\vb_u:=\gamma_u\Bigl(\prod_{j=u+1}^N(\Ib-\gamma_j\Hb)\Bigr)\Hb\bm{\delta}.
\label{eq:SZ-decomp}
\end{equation}
In the eigenbasis of \(\Hb\),
\[
(\vb_u)_i
=
\gamma_u\lambda_i\prod_{j=u+1}^N(1-\gamma_j\lambda_i)\delta_i.
\]

By the fixed-budget variance identity,
\begin{equation}
\EE\|\Sb_Z\|_{\Hb}^2
=
\frac{p(1-p)}{2(N-1)}
\sum_{u=1}^N\sum_{r=1}^N
\|\vb_u-\vb_r\|_{\Hb}^2.
\label{eq:SZ-fixed}
\end{equation}

Let
\[
I_-:=\{t:\ell_t=0\},
\qquad
I_+:=\{t:\ell_t\ge 7\}.
\]
For \(N\ge 500\), one has
\[
|I_-|\ge \frac{\Neff}{2},
\qquad
|I_+|\ge \frac{N}{10}.
\]
Restricting \eqref{eq:SZ-fixed} to pairs \((u,r)\in I_-\times I_+\) yields
\begin{equation}
\EE\|\Sb_Z\|_{\Hb}^2
\ge
\frac{p(1-p)}{2(N-1)}
\sum_{u\in I_-}\sum_{r\in I_+}\|\vb_u-\vb_r\|_{\Hb}^2.
\label{eq:SZ-restrict}
\end{equation}

Fix \(i>k^*\). Since \(\gamma\lambda_i<1/\Neff\le 1/10<1/2\), for \(u\in I_-\) we have \(\gamma_u=\gamma\), and
\[
|(\vb_u)_i|
=
\gamma\lambda_i\prod_{j=u+1}^N(1-\gamma_j\lambda_i)|\delta_i|
\ge
e^{-4}\gamma\lambda_i|\delta_i|.
\]
For \(r\in I_+\), we have \(\gamma_r\le 2^{-7}\gamma\), so
\[
|(\vb_r)_i|
\le
2^{-7}\gamma\lambda_i|\delta_i|.
\]
Therefore
\[
|(\vb_u-\vb_r)_i|
\ge
(e^{-4}-2^{-7})\gamma\lambda_i|\delta_i|,
\qquad
u\in I_-,
\quad
r\in I_+,
\quad
i>k^*.
\]
Hence
\begin{equation}
\|\vb_u-\vb_r\|_{\Hb}^2
\ge
(e^{-4}-2^{-7})^2\gamma^2
\sum_{i>k^*}\lambda_i^3\delta_i^2.
\label{eq:pair-lb}
\end{equation}
Substituting \eqref{eq:pair-lb} into \eqref{eq:SZ-restrict}, and using \(|I_-|\ge \Neff/2\), \(|I_+|\ge N/10\), we obtain
\[
\EE\|\Sb_Z\|_{\Hb}^2
\ge
\frac{(e^{-4}-2^{-7})^2}{40}\,
p(1-p)\gamma^2\Neff
\sum_{i>k^*}\lambda_i^3\delta_i^2.
\]
That is,
\begin{equation}
\EE\|\Sb_Z\|_{\Hb}^2
\ge
\frac{(e^{-4}-2^{-7})^2}{40}\,
p(1-p)\gamma^2\Neff
\sum_{i>k^*}\lambda_i^3\delta_i^2.
\label{eq:SZ-final}
\end{equation}

Finally, combining \eqref{eq:ZXR-orth}, \eqref{eq:SX-final}, and \eqref{eq:SZ-final}, we obtain
\[
\big\langle \Hb,\EE[\widetilde{\bm{\eta}}^{\res}_N\otimes\widetilde{\bm{\eta}}^{\res}_N]\big\rangle
\ge
\frac{\beta e^{-8}}{400}\,
\|\wb_0-\wb_1^*-p\bm{\delta}\|_{\Hb_{k^*:\infty}}^2
\cdot
\frac{\DIM}{\Neff}
+
\frac{(e^{-4}-2^{-7})^2}{40}\,
p(1-p)\gamma^2\Neff
\sum_{i>k^*}\lambda_i^3\delta_i^2,
\]
This completes the proof.
\end{proof}

\subsection{Proof of Theorem~\ref{thm:tail-decay-lower-bound}}
\begin{proof}
Combining Lemma~\ref{lem:residual-fluctuation-lower-bound} with Eq.~\ref{eq:variance-part-lower} yields the desired result.  
\end{proof}

\subsection{Proof of Corollary~\ref{cor:strong-model-collapse}}
\begin{proof}
Let
\[
T:=M+N,\qquad 
\Pb_T:=\prod_{t=1}^{T}(\Ib-\gamma_t\Hb).
\]
Since the synthetic proportion $p=M/(M+N)$ is fixed, it suffices to study the limit as $T\to\infty$.

We first show that $\Pb_T$ vanishes in $\Hb$-norm. Writing the eigendecomposition
\[
\Hb=\sum_{i}\lambda_i \vb_i\vb_i^\top,
\]
we have
\[
\Pb_T \vb_i = \Big(\prod_{t=1}^T(1-\gamma_t\lambda_i)\Big)\vb_i.
\]
Under the assumption on the stepsize, for every $i$ with $\lambda_i>0$,
\[
0\le 1-\gamma_t\lambda_i <1.
\]
Moreover, under the geometric tail-decay schedule,
\[
\sum_{t=1}^T \gamma_t \asymp \frac{T}{\log T}\sum_{\ell=0}^{\lfloor \log T \rfloor}2^{-\ell}
\asymp \frac{T}{\log T}\xrightarrow[T\to\infty]{}\infty.
\]
Hence for every $\lambda_i>0$,
\[
\prod_{t=1}^T(1-\gamma_t\lambda_i)\longrightarrow 0.
\]
Therefore, for any $\wb$ with $\|\wb\|_{\Hb}<\infty$,
\[
\|\Pb_T\wb\|_{\Hb}^2
=
\sum_i \lambda_i\Big(\prod_{t=1}^T(1-\gamma_t\lambda_i)\Big)^2 \langle \wb,\vb_i\rangle^2
\longrightarrow 0
\]
by dominated convergence.

We now turn to the upper bound in Theorem~\ref{thm:generalization_error} which gives
\begin{align*}
\EE[\mathcal{E}_1(\wb_T)]
&\lesssim
\underbrace{\|\Pb_T(\wb_0-\wb_1^*)\|_{\Hb}^2}_{(I)}
+\underbrace{\alpha\|\wb_0-\wb_1^*\|^2_{\frac{\Ib_{0:k^*}}{\gamma \widetilde N_{\mathtt{eff}}}+\Hb_{k^*:\infty}}
\frac{\widetilde D_{\mathtt{eff}}}{\widetilde N_{\mathtt{eff}}}}_{(II)}\\
&\quad+
\underbrace{\bar\sigma^2\frac{\widetilde D_{\mathtt{eff}}}{\widetilde N_{\mathtt{eff}}}}_{(III)}
+\underbrace{\alpha p\|\bm{\delta}\|_{\Hb}^2\frac{\widetilde D_{\mathtt{eff}}}{\widetilde N_{\mathtt{eff}}}}_{(IV)}
+\underbrace{p^2\|( \Ib-\Pb_T)\bm{\delta}\|_{\Hb}^2}_{(V)}.
\end{align*}
By the assumption $\widetilde D_{\mathtt{eff}}=o(\widetilde N_{\mathtt{eff}})$, terms $(II)$, $(III)$ and $(IV)$ all vanish as $T\to\infty$. By the argument above, $(I)\to 0$. Finally,
\[
\|( \Ib-\Pb_T)\bm{\delta}\|_{\Hb}
\to \|\bm{\delta}\|_{\Hb},
\]
because $\|\Pb_T\bm{\delta}\|_{\Hb}\to 0$. Therefore,
\[
\limsup_{T\to\infty}\EE[\mathcal{E}_1(\wb_T)]
\lesssim p^2\|\bm{\delta}\|_{\Hb}^2.
\]

Next, apply the lower bound in Theorem~\ref{thm:tail-decay-lower-bound}:
\begin{align*}
\EE[\mathcal{E}_1(\wb_T)]
&\gtrsim
\underbrace{\Big\| \Pb_T (\wb_0-\wb_1^*) +p(\Ib-\Pb_T)\bm{\delta}\Big\|_{\Hb}^2}_{(A)}\\
&\quad+
\underbrace{\big(\beta\norm{\wb_0-\wb_1^*-p\bm{\delta}}_{\Hb_{k^*:\infty}}^2+\bar \sigma^2\big)
\frac{\widetilde D_{\mathtt{eff}}}{\widetilde N_{\mathtt{eff}}}}_{(B)}
+\underbrace{p(1-p)\gamma^2 \widetilde N_{\mathtt{eff}} \norm{\bm{\delta}}_{\Hb^3_{k^*:\infty}}^2}_{(C)}.
\end{align*}
Since $(B)\ge 0$ and $(C)\ge 0$, it is enough to keep only $(A)$. Rewrite
\[
\Pb_T (\wb_0-\wb_1^*) +p(\Ib-\Pb_T)\bm{\delta}
=
\Pb_T(\wb_0-\wb_1^*-p\bm{\delta})+p\bm{\delta}.
\]
Again using $\|\Pb_T \wb\|_{\Hb}\to 0$ for every $\wb$ with finite $\Hb$-norm, we obtain
\[
\Big\| \Pb_T(\wb_0-\wb_1^*-p\bm{\delta})+p\bm{\delta}\Big\|_{\Hb}^2
\longrightarrow
p^2\|\bm{\delta}\|_{\Hb}^2.
\]
Hence,
\[
\liminf_{T\to\infty}\EE[\mathcal{E}_1(\wb_T)]
\gtrsim p^2\|\bm{\delta}\|_{\Hb}^2.
\]

Combining the upper and lower bounds yields
\[
\lim_{M+N\to\infty}\EE[\mathcal{E}_1(\wb_{M+N})]
\eqsim p^2\|\bm{\delta}\|_{\Hb}^2.
\]
This proves the corollary.
\end{proof}

\section{Two-stage Training Analysis}\label{ap:two-stage}
First, consider the upper bound for training on pure one-source data from \citep{wulast}, which can also be recovered by Theorem~\ref{thm:generalization_error}.
\begin{theorem}[Learning with only real data]\label{thm:generalization_error_real_only}
Consider last iterate SGD with stepsize scheme \eqref{eq:geometry-tail-decay-lr}.
Suppose Assumptions \ref{assump:second-moment},  \ref{assump:fourth-moment} and \ref{assump:noise} hold.
Let $\Neff := N / \log N $.
Suppose $\gamma < 1/(4\alpha\tr(\Hb))$. Then we have 
\begin{align*}
\EE [\R(\wb_{N})] - \R(\wb_1^*)
&\lesssim  \big\| (\prod_{t=1}^N\Ib-\gamma_t \Hb)(\wb_0^*-\wb_1^*)\big\|^2_{\Hb} + \big(\alpha\|\wb_0 - \wb_1^* \|^2_{\frac{\Ib_{0:k^*}}{\gamma \Neff}+\Hb_{k^*:\infty}}+\sigma_1^2\big)\frac{\DIM}{\Neff}
\end{align*}

with  $k^* = \max \{k: \lambda_k \ge \frac{1}{\gamma \Neff}\}$ and $\DIM := k^* + \gamma^2 \Neff^2 \sum_{i>k^*}\lambda_i^2.$
\end{theorem}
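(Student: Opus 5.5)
The plan is to obtain Theorem~\ref{thm:generalization_error_real_only} as the special case $p=0$ of Theorem~\ref{thm:generalization_error}, with all $N$ samples drawn from $\cP_1$. With $M=0$ we have $p=0$, so $\widetilde N_{\mathtt{eff}}=\Neff$, $\widetilde D_{\mathtt{eff}}=\DIM$ and $\bar\sigma^2=\sigma_1^2$. Concretely, I will rerun the decomposition $\bm{\eta}_t=\bm{\eta}^{\bias}_t+\bm{\eta}^{\var}_t+\bm{\eta}^{\drift}_t+\bm{\eta}^{\fluct}_t$ from Appendix~\ref{ap:mix-upper} with $z_t\equiv 0$.

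The first step is to check that the two mismatch processes vanish identically. In the recursion \eqref{eq:error-recursion}, the forcing $\gamma_t z_t\xb_t\xb_t^\top\bm{\delta}$ is zero when $z_t\equiv0$. The drift forcing $\gamma_t p\Hb\bm{\delta}$ is also zero, so $\bm{\eta}^{\drift}_t=\bm{\eta}^{\fluct}_t=\mathbf 0$ for all $t$. For this reason I will not cite Lemma~\ref{lem:fixed-budget-variance-identity-wu-direct-final}: its factor $N/(N-1)$ and its uniform schedule are degenerate when $m=0$, but they are never needed. Since the drift and fluctuation processes are exactly zero, I will write the excess risk as $\tfrac12\langle\Hb,\EE[(\bm{\eta}^{\bias}_N+\bm{\eta}^{\var}_N)^{\otimes2}]\rangle$ and bound it by $\langle\Hb,\Bb_N\rangle+\langle\Hb,\Vb_N\rangle$ via Cauchy--Schwarz. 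Alternatively, the cross term vanishes because $\EE[\xi_t\mid\xb_{1:N}]=0$.

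The second step is to bound each piece. For $\langle\Hb,\Bb_N\rangle$ I will apply Lemma~\ref{lemma:HB-decay-lr-upper-bound}, which holds for $\gamma<1/(4\alpha\tr(\Hb))$. This gives the term $\|\prod_{t=1}^N(\Ib-\gamma_t\Hb)(\wb_0-\wb_1^*)\|_\Hb^2$ plus $\alpha\|\wb_0-\wb_1^*\|^2_{\Ib_{0:k^*}/(\gamma\Neff)+\Hb_{k^*:\infty}}\DIM/\Neff$. For $\langle\Hb,\Vb_N\rangle$, the noise covariance is exactly $\bSigma_t=\sigma_1^2\Hb$. Lemma~\ref{thm:HC-decay-lr-upper-bound} then gives $8\sigma_1^2\DIM/\big((1-\gamma\alpha\tr(\Hb))\Neff\big)$, and since $\gamma\alpha\tr(\Hb)<1/4$ this is $\lesssim\sigma_1^2\DIM/\Neff$. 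Adding the two bounds and grouping the $\DIM/\Neff$ terms gives the stated inequality. I read the symbol $\wb_0^*$ in the statement as $\wb_0$.

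There is no genuine obstacle; the only care needed is the bookkeeping at $p=0$. One must confirm that the stepsize schedule \eqref{eq:geometry-tail-decay-lr} is the one with $T=N$, so that $k^*$ and $\DIM$ are defined with $\Neff=N/\log N$. One must also confirm that the constants absorbed into $\lesssim$ stay absolute under this stepsize condition.
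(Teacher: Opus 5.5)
Your proposal is correct and follows the paper's own route. The paper notes that this real-only bound is the single-source result of \citet{wulast} and can be recovered from Theorem~\ref{thm:generalization_error} at $p=0$; you do exactly that by noting the drift and fluctuation processes vanish identically and applying Lemma~\ref{lemma:HB-decay-lr-upper-bound} and Lemma~\ref{thm:HC-decay-lr-upper-bound} with $\bar\sigma^2=\sigma_1^2$. Your caution about the fixed-budget lemma is unnecessary but harmless: at $m=0$ the schedule is the deterministic all-zeros vector, so that lemma would simply return $0$.
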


We now turn to the two-stage training procedure. The key idea is to condition on the first-stage training iterate and then apply the pure real-data bound from Theorem~\ref{thm:generalization_error} to the real-data training phase.

Suppose that in the first stage we run SGD on \(M\) synthetic samples and obtain \(\wb_M\). In the second stage, starting from \(\wb_M\), we run SGD on \(N\) real samples and output \(\wb_{M+N}\). Our goal is to upper bound
\[
\EE\big[\R(\wb_{M+N})-\R(\wb_1^*)\big].
\]

Let $\mathcal{F}_M:=\sigma(\xb_1,y_1,\dots,\xb_M,y_M)$ be the sigma-field generated by the first stage. Conditional on \(\mathcal{F}_M\), the iterate \(\wb_M\) is deterministic, and the second-stage procedure is exactly SGD trained on \(N\) real samples, initialized at \(\wb_M\). Therefore, applying Theorem~\ref{thm:generalization_error} conditionally, we obtain
\begin{align}
\EE\big[\R(\wb_{M+N})-\R(\wb_1^*)\mid \mathcal{F}_M\big]
&\lesssim
\Big\|
\Big(\prod_{t=1}^N(\Ib-\gamma_t\Hb)\Big)(\wb_M-\wb_1^*)
\Big\|_{\Hb}^2
\notag\\
&\quad+
\left(
\alpha\|\wb_M-\wb_1^*\|^2_{\frac{\Ib_{0:k^*}}{\gamma \Neff}+\Hb_{k^*:\infty}}
+\sigma_1^2
\right)\frac{\DIM}{\Neff},
\label{eq:two-stage-cond-upper}
\end{align}

Taking expectation with respect to \(\mathcal{F}_M\) on both sides of \eqref{eq:two-stage-cond-upper}, we get
\begin{align}
\EE\big[\R(\wb_{M+N})-\R(\wb_1^*)\big]
&=
\EE\Big[\EE\big[\R(\wb_{M+N})-\R(\wb_1^*)\mid \mathcal{F}_M\big]\Big]
\notag\\
&\lesssim
\EE\Bigg[
\Big\|
\Big(\prod_{t=1}^N(\Ib-\gamma_t\Hb)\Big)(\wb_M-\wb_1^*)
\Big\|_{\Hb}^2
\Bigg]
\notag\\
&\quad+
\left(
\alpha\EE\Big[\|\wb_M-\wb_1^*\|^2_{\frac{\Ib_{0:k^*}}{\gamma \Neff}+\Hb_{k^*:\infty}}\Big]
+\sigma_1^2
\right)\frac{\DIM}{\Neff}.
\label{eq:two-stage-upper-before-expand}
\end{align}

We now relate the terms to the stage-one excess risk under the synthetic distribution. Recall that
\[
\wb_M-\wb_1^*
=
(\wb_M-\wb_2^*)+\bm{\delta}.
\]
Hence
\begin{align}
&\EE\Bigg[
\Big\|
\Big(\prod_{t=1}^N(\Ib-\gamma_t\Hb)\Big)(\wb_M-\wb_1^*)
\Big\|_{\Hb}^2
\Bigg]
\notag\\
&=
\EE\Bigg[
\Big\|
\Big(\prod_{t=1}^N(\Ib-\gamma_t\Hb)\Big)(\wb_M-\wb_2^*)
+
\Big(\prod_{t=1}^N(\Ib-\gamma_t\Hb)\Big)\bm{\delta}
\Big\|_{\Hb}^2
\Bigg]
\notag\\
&\le
2\EE\Bigg[
\Big\|
\Big(\prod_{t=1}^N(\Ib-\gamma_t\Hb)\Big)(\wb_M-\wb_2^*)
\Big\|_{\Hb}^2
\Bigg]
+
2\Big\|
\Big(\prod_{t=1}^N(\Ib-\gamma_t\Hb)\Big)\bm{\delta}
\Big\|_{\Hb}^2 .
\label{eq:two-stage-bias-split-correct}
\end{align}

Now denote
\[
\Pb_N:=\prod_{t=1}^N(\Ib-\gamma_t\Hb).
\]
Since \(\Pb_N\) is a polynomial in \(\Hb\), it commutes with \(\Hb\). Therefore
\[
\|\Pb_N \vb\|_{\Hb}^2
=
\langle \vb,\Pb_N \Hb \Pb_N \vb\rangle
=
\langle \vb,\Hb^{1/2}\Pb_N^2\Hb^{1/2}\vb\rangle
\le
\|\vb\|_{\Hb}^2,
\]
because \(0\preceq \Pb_N \preceq \Ib\) under the stepsize assumption \(\gamma_t<1/\lambda_1\). Hence
\begin{align}
\EE\Big[
\|\Pb_N(\wb_M-\wb_2^*)\|_{\Hb}^2
\Big]
&\le
\EE\Big[
\|\wb_M-\wb_2^*\|_{\Hb}^2
\Big]
\notag\\
&=
2\,\EE[L_2(\wb_M)-L_2(\wb_2^*)].
\label{eq:stage-one-excess-first}
\end{align}

Next, let
\[
\Bb:=\frac{\Ib_{0:k^*}}{\gamma \Neff}+\Hb_{k^*:\infty}.
\]
By the definition of \(k^*\),
\[
\lambda_i\ge \frac{1}{\gamma \Neff},
\qquad i\le k^*.
\]
Therefore
\[
\frac{\Ib_{0:k^*}}{\gamma \Neff}\preceq \Hb_{0:k^*},
\]
and hence
\begin{equation}
\Bb
=
\frac{\Ib_{0:k^*}}{\gamma \Neff}+\Hb_{k^*:\infty}
\preceq
\Hb_{0:k^*}+\Hb_{k^*:\infty}
=
\Hb.
\label{eq:B-less-H}
\end{equation}
It follows that
\begin{align}
\EE\Big[\|\wb_M-\wb_2^*\|_{\Bb}^2\Big]
&\le
\EE\Big[\|\wb_M-\wb_2^*\|_{\Hb}^2\Big]
\notag\\
&=
2\,\EE[L_2(\wb_M)-L_2(\wb_2^*)].
\label{eq:stage-one-excess-second}
\end{align}

On the other hand,
\begin{align}
\EE\Big[\|\wb_M-\wb_1^*\|_{\Bb}^2\Big]
&=
\EE\Big[\|(\wb_M-\wb_2^*)+\bm{\delta}\|_{\Bb}^2\Big]
\notag\\
&\le
2\EE\Big[\|\wb_M-\wb_2^*\|_{\Bb}^2\Big]
+
2\|\bm{\delta}\|_{\Bb}^2
\notag\\
&\le
4\,\EE[L_2(\wb_M)-L_2(\wb_2^*)]
+
2\|\bm{\delta}\|_{\Bb}^2,
\label{eq:two-stage-B-split-correct}
\end{align}
where in the last step we used \eqref{eq:stage-one-excess-second}.

Substituting \eqref{eq:two-stage-bias-split-correct}, \eqref{eq:stage-one-excess-first}, and \eqref{eq:two-stage-B-split-correct} into \eqref{eq:two-stage-upper-before-expand}, we obtain
\begin{align}
\EE\big[\R(\wb_{M+N})-\R(\wb_1^*)\big]
&\lesssim
\EE[\R_2(\wb_M)-\R_2(\wb_2^*)]
+
\big\|\Pb_N\bm{\delta}\big\|_{\Hb}^2
\notag\\
&\quad+
\left(
\alpha\,\EE[\R_2(\wb_M)-\R_2(\wb_2^*)]
+
\alpha\|\bm{\delta}\|_{\Bb}^2
+
\sigma_1^2
\right)\frac{\DIM}{\Neff}.
\label{eq:two-stage-upper-stage1-excess}
\end{align}

Finally, by \eqref{eq:B-less-H},
\[
\|\bm{\delta}\|_{\Bb}^2\le \|\bm{\delta}\|_{\Hb}^2.
\]
Therefore
\begin{align}
\EE\big[\R(\wb_{M+N})-\R(\wb_1^*)\big]
&\lesssim
\EE[\R_2(\wb_M)-\R_2(\wb_2^*)]
+
\big\|\Pb_N\bm{\delta}\big\|_{\Hb}^2
\notag\\
&\quad+
\Big(
\alpha\,\EE[\R_2(\wb_M)-\R_2(\wb_2^*)]
+
\alpha\|\bm{\delta}\|_{\Hb}^2
+
\sigma_1^2
\Big)\frac{\DIM}{\Neff}.
\label{eq:two-stage-upper-final-correct}
\end{align}

Equivalently, writing
\[
\mathtt{Excess}_2(\wb_M)
:=
\EE[\R_2(\wb_M)-\R_2(\wb_2^*)],
\]
we have
\begin{align}
\mathtt{Excess}(\wb_{M+N})
&\lesssim
\mathtt{Excess}_2(\wb_M)
+
\big\|\Pb_N\bm{\delta}\big\|_{\Hb}^2
+
\Big(
\alpha\,\mathtt{Excess}_2(\wb_M)
+
\alpha\|\bm{\delta}\|_{\Hb}^2
+
\sigma_1^2
\Big)\frac{\DIM}{\Neff}.
\label{eq:two-stage-upper-final-clean}
\end{align}
This is the desired two-stage upper bound.

\subsection{Two-stage training without learning-rate restart}
\label{ap:two-stage-no-restart}

The two-stage analysis restarts the geometrically decaying stepsize schedule
at the beginning of the real-data stage. We briefly discuss how the argument changes when a single uninterrupted schedule is used across both stages.

Let $T:=M+N$, and denote the global $T$-step schedule by
\[
    \gamma_t^{(T)}
    :=
    \frac{\gamma}{
        2^{\left\lfloor t/(T/\log T)\right\rfloor}
    },
    \qquad
    t=1,\ldots,T.
\]
Under the restarted protocol analyzed above, the real-data stage instead uses
a fresh $N$-step schedule
\[
    \gamma_s^{(N)}
    :=
    \frac{\gamma}{
        2^{\left\lfloor s/(N/\log N)\right\rfloor}
    },
    \qquad
    s=1,\ldots,N.
\]
Thus, without restarting, the contraction operator appearing in the
two-stage bound changes from
$\Pb_N^{\mathrm{rst}}:=\prod_{s=1}^{N}(\Ib-\gamma_s^{(N)}\Hb)$ to the real-data tail of the global schedule
$\Pb_{M,N}^{\mathrm{tail}}:=\prod_{t=M+1}^{M+N}(\Ib-\gamma_t^{(T)}\Hb).$

The proof follows the same conditional argument as above. Conditional on the
first-stage iterate $\wb_M$, the second stage remains single-source SGD on
real data. The decomposition $\wb_M-\wb_1^*=(\wb_M-\wb_2^*)+\bm{\delta}$
is unchanged, as are the subsequent comparisons between the stage-one excess
risk and the corresponding $\Hb$-weighted errors. The only modification is
that the proof of the single-source bound must be applied to the shifted
stepsize sequence $\{\gamma_{M+1}^{(T)},\ldots,\gamma_{M+N}^{(T)}\}$ rather than to a freshly restarted sequence
$\{\gamma_1^{(N)},\ldots,\gamma_N^{(N)}\}$. Consequently, every occurrence of
$\Pb_N^{\mathrm{rst}}$ is replaced by $\Pb_{M,N}^{\mathrm{tail}}$, and the
spectral cutoff and effective-dimension factor are defined using the
real-stage tail schedule.

In particular, the resulting bound has the same qualitative decomposition:
\[
\begin{aligned}
\EE[\mathcal{E}_1(\wb_{M+N})]
\lesssim{}&
\EE[\mathcal{E}_2(\wb_M)]
+
\big\|
    \Pb_{M,N}^{\mathrm{tail}}\bm{\delta}
\big\|_{\Hb}^{2} \\
&+
\Big(
    \alpha\EE[\mathcal{E}_2(\wb_M)]
    +
    \alpha\|\bm{\delta}\|_{\Hb}^{2}
    +
    \sigma_1^2
\Big)
\frac{D_{\mathrm{eff}}^{\mathrm{tail}}}
     {T_{\mathrm{eff}}},
\end{aligned}
\]
where $T_{\mathrm{eff}}:=T/\log T$,
$k_{\mathrm{tail}}^*:=\max\{k:\lambda_k\ge
1/(\gamma_{M+1}^{(T)}T_{\mathrm{eff}})\}$, and
$D_{\mathrm{eff}}^{\mathrm{tail}}
:=k_{\mathrm{tail}}^*
+(\gamma_{M+1}^{(T)}T_{\mathrm{eff}})^2
\sum_{i>k_{\mathrm{tail}}^*}\lambda_i^2$.

Nevertheless, the qualitative asymptotic conclusion is unchanged. For every
eigendirection with $\lambda_i>0$,
\[
\begin{aligned}
    \prod_{t=M+1}^{M+N}
    \bigl(1-\gamma_t^{(T)}\lambda_i\bigr)
    &\le
    \exp\left(
        -\lambda_i
        \sum_{t=M+1}^{M+N}\gamma_t^{(T)}
    \right).
\end{aligned}
\]
When $p=M/(M+N)\in(0,1)$ is fixed, the real-data tail of the geometric schedule
satisfies $\sum_{t=M+1}^{M+N}\gamma_t^{(T)}\rightarrow\infty.$ It follows that
$\big\|
        \Pb_{M,N}^{\mathrm{tail}}\bm{\delta}
    \big\|_{\Hb}^{2}
    \rightarrow 0.$
Moreover, under the corresponding effective-dimension consistency condition,
$D_{\mathrm{eff}}^{\mathrm{tail}}/T\rightarrow0.$
Therefore, using an uninterrupted schedule slows the finite-sample forgetting
of the synthetic initialization, but it does not introduce a persistent
synthetic drift during the real-data stage and does not create a
non-vanishing excess-risk floor.

\section{Random Sketch Model}\label{ap:random-sketch}

Following \citep{linscaling}, we begin by decomposing the population risk
into three components: irreducible risk, approximation error, and excess risk:
\begin{equation}\label{eq:approx-excess-decomp}
    \R_D(\vb_{M+N}) = \underbrace{\min \R(\cdot)}_{\mathsf{Irreducible}} + \underbrace{\min \R_D(\cdot) - \min \R(\cdot)}_{\mathsf{Approx}} + \underbrace{\R_D(\vb_{M+N}) - \min \R_D(\cdot)}_{\mathsf{Excess}}.
\end{equation}

For the Irreducible risk, under the well-specified model Assumption~\ref{ass:distribution-condition},
\[
\mathsf{Irreducible}=\R(\wb_1^*)=\frac12 \sigma_1^2.
\]
For Approximation Error, as established in Lemma C.4 in \citep{linscaling}, under Assumption~\ref{ass:distribution-condition}, with probability at least $1-e^{\Omega(D)}$,
\[
\EE_{\wb_1^*}\mathsf{Approx} \eqsim D^{1-a}.
\]
So in the following subsections, we focus on derive the bound for $\EE\mathsf{Excess}$ using risk upper bound under Assumption~\ref{ass:distribution-condition} and~\ref{ass:power-law-source}. Unless other illustration, expectations are conditioned on $\Sb$.

\subsection{Mixed Training Upper Bound}
Recall the upper bound in Theorem~\ref{thm:generalization_error}, where the data covariance becomes $\Sb\Hb\Sb^\top$ and the optimal parameters becomes $\vb_1^*,\vb_2^*$.

Suppose Assumptions \ref{assump:fourth-moment}, \ref{ass:distribution-condition} hold. Let $\widetilde{N}_{\mathtt{eff}} := (M+N) / \log(M+N)$, $\bar \sigma^2=(1-p)\sigma_1^2+p\sigma_2^2$ and $\tilde\lambda_j$ be the eigenvalue of $\Sb\Hb\Sb^\top$.
Suppose $\gamma < 1/(4\alpha\tr(\Sb\Hb\Sb^\top))$. Then we have 
\begin{align*}
\mathsf{Excess}
&\lesssim  \mathrm{Bias}+\mathrm{Var} +\underbrace{\alpha\, p \|\vb_2^*-\vb_1^*\|^2_{\Sb\Hb\Sb^\top} \frac{\widetilde{D}_{\mathtt{eff}}}{\widetilde{N}_{\mathtt{eff}}}}_{\mathsf{Fluct}}
+ \underbrace{p^2\big\| \vb_2^*-\vb_1^*\big\|^2_{\Sb\Hb\Sb^\top}}_{\mathsf{Drift}},
\end{align*}
with
\begin{align*}
    \vb^*_\ell:=(\Sb\Hb\Sb^\top)^{-1}\Sb\Hb\wb^*_\ell,\quad \text{and}\quad\DIM:= \#\{\tilde\lambda_j \ge 1/(\widetilde{N}_{\mathtt{eff}} \gamma)\} + (\widetilde{N}_{\mathtt{eff}} \gamma)^2 \sum_{\tilde \lambda_j < 1/(\widetilde{N}_{\mathtt{eff}} \gamma)}\tilde\lambda_j^2.
\end{align*}

Note that when $\sigma_1^2,\sigma_2^2\eqsim1$, Theorem A.4 in \cite{linscaling} shows that 
\[
\mathrm{Bias}+\mathrm{Var} \lesssim \big\| \prod_{t=1}^{M+N}(\Ib-\gamma_t \Sb\Hb\Sb^\top) (\vb_0 - \vb_1^*) \big\|^2_{\Sb\Hb\Sb^\top}+ \frac{\DIM}{\widetilde{N}_{\mathtt{eff}}}
\]

Future assume that Assumption~\ref{ass:power-law-source} holds and zero initialiazation where $\vb_0=\bm{0}$, \cite{linscaling}(Appendix~D and E) shows that with probability at least $1-e^{-\Omega(D)}$ over the randomness of the sketch matrix $\Sb$
\begin{align*}
   &\EE_{\wb_1^*}\big\| \prod_{t=1}^{M+N}(\Ib-\gamma_t \Sb\Hb\Sb^\top) (\vb_1^*) \big\|^2_{\Sb\Hb\Sb^\top}\lesssim \max\big\{ D^{1-a},\ (\widetilde{N}_{\mathtt{eff}} \gamma)^{1/a-1}\big\}, \\
   &\EE_{\wb_1^*}\big\| \prod_{t=1}^{M+N}(\Ib-\gamma_t \Sb\Hb\Sb^\top) (\vb_1^*) \big\|^2_{\Sb\Hb\Sb^\top}\gtrsim (\widetilde{N}_{\mathtt{eff}} \gamma)^{1/a-1}\text { when } (\widetilde{N}_{\mathtt{eff}} \gamma)^{1/a}\leq  D/c \text{\quad for some constant }c>0,
\end{align*}
and 
\begin{align*}
   \frac{\DIM}{\widetilde{N}_{\mathtt{eff}}}\eqsim {\min\big\{ D, \ (\widetilde{N}_{\mathtt{eff}} \gamma)^{1/a}\big\}}/{\widetilde{N}_{\mathtt{eff}}} (\widetilde{N}_{\mathtt{eff}} \gamma)^{1/a-1}.
\end{align*}

For the Fluct term, we have
\[
\mathrm{Fluct}\lesssim \|\vb_2^*-\vb_1^*\|_{\Sb\Hb\Sb^\top}^2\cdot \frac{\DIM}{\widetilde{N}_{\mathtt{eff}}}
\]
We verify that 
\begin{align*}
    \|\vb_2^*-\vb_1^*\|_{\Sb\Hb\Sb^\top}^2 &=  \|\Hb^{\frac{1}{2}} \Sb^\top (\vb_2^*-\vb_1^*)\|^2  \\ 
    &= \| \Hb^{\frac{1}{2}} \Sb^\top (\Sb\Hb\Sb^\top)^{-1} \Sb \Hb \bm{\delta}\|^2 \\ 
    &\leq \|\Hb^{\frac{1}{2}} \bm{\delta}\|^2= \|\bm{\delta}\|^2_\Hb,
\end{align*}
which implies that 
\begin{equation}\label{eq:drift_first}
    \EE_{\bm{\delta}}\|\vb_2^*-\vb_1^*\|_{\Sb\Hb\Sb^\top}^2 \lesssim \EE\|\bm{\delta}\|^2_\Hb \eqsim \sum_i \,i^{-b} \eqsim1
\end{equation}
So that with probability at least $1-e^{-\Omega(D)}$ over the randomness of the sketch matrix $\Sb$
\begin{align*}
    \EE_{\bm{\delta}}\mathrm{Fluct} \lesssim \frac{\DIM}{\widetilde{N}_{\mathtt{eff}}} \eqsim {\min\big\{ D, \ (\widetilde{N}_{\mathtt{eff}} \gamma)^{1/a}\big\}}/{\widetilde{N}_{\mathtt{eff}}} \lesssim D^{1-a}+(\widetilde{N}_{\mathtt{eff}} \gamma)^{1/a-1}.
\end{align*}
For the Drift term $p^2\|\vb_2^*-\vb_1^*\|_{\Sb\Hb\Sb^\top}^2$,
\begin{align*}
    \|\vb_2^*-\vb_1^*\|_{\Sb\Hb\Sb^\top}^2 & = \|(\Sb\Hb\Sb^\top)^{-1}\Sb\Hb(\wb_2^*-\wb_1^*)\|_{\Sb\Hb\Sb^\top}^2\\
    & = \bm{\delta}^\top\Hb\Sb^\top(\Sb\Hb\Sb^\top)^{-1}(\Sb\Hb\Sb^\top)(\Sb\Hb\Sb^\top)^{-1}\Sb\Hb\bm{\delta}\\
    & = \bm{\delta}^\top\Hb\Sb^\top(\Sb\Hb\Sb^\top)^{-1}\Sb\Hb\bm{\delta} + \bm{\delta}^\top\Hb\bm{\delta}-\bm{\delta}\top\Hb\bm{\delta}\\
    & = \bm{\delta}^\top\Hb\bm{\delta}-\bm{\delta}^\top(\Hb-\Hb\Sb^\top(\Sb\Hb\Sb^\top)^{-1}\Sb\Hb)\bm{\delta}\\
    & = \|\bm{\delta}\|^2_{\Hb}-\big\|\big(\Ib-\Hb^{\frac12}\Sb^\top(\Sb\Hb\Sb^\top)^{-1}\Sb^\top\Hb^{\frac12}\big)\Hb^\frac12\bm{\delta}\big\|^2
\end{align*}
For the second minus term, \cite{linscaling}(Lemma~C.5) shows that for any random vector $\xb$, let $(\lambda_i,\vb_i)_{i\ge1}$ be the eigenvalue-eigenvector pairs of $\Hb$, if the Assumption~\ref{ass:power-law-source} (power-law decay) satisfies and $\EE[\langle \vb_i,\xb\rangle\langle \vb_j,\xb\rangle]=0$ for $i\neq j$ and $\EE[\lambda_i\langle \vb_i,\xb\rangle^2]\eqsim i^{-b}$ for some $b>1$, Then with probability at least $1-e^{-\Omega(D)}$ over the randomness of the sketch matrix $\Sb$
\begin{equation*}
    \EE_{\xb}\mathsf{Approx}(\Sb,\Hb,\xb):=\big\|\big(\Ib-\Hb^{\frac12}\Sb^\top(\Sb\Hb\Sb^\top)^{-1}\Sb^\top\Hb^{\frac12}\big)\Hb^\frac12\xb\big\|^2\eqsim D^{1-b}
\end{equation*}
So that under Assumption~\ref{ass:power-law-source}, 
\begin{equation}\label{eq:drift_approx}
    \EE_{\bm{\delta}}\big\|\big(\Ib-\Hb^{\frac12}\Sb^\top(\Sb\Hb\Sb^\top)^{-1}\Sb^\top\Hb^{\frac12}\big)\Hb^\frac12\bm{\delta}\big\|^2\eqsim D^{1-b}
\end{equation}
Taking expectation in the decomposition and using
\eqref{eq:drift_approx}, we obtain
\begin{align}
    \EE_{\bm\delta}
    \|\vb_2^*-\vb_1^*\|_{\Sb\Hb\Sb^\top}^2
    &=
    \EE_{\bm\delta}\|\bm\delta\|_{\Hb}^2
    -
    \Theta(D^{1-b}).
    \label{eq:drift-captured-scaling}
\end{align}

Since
$\EE_{\bm\delta}\|\bm\delta\|_{\Hb}^2\eqsim1$, we suppress this
problem-dependent constant in the scaling-law statement and write, for
sufficiently large $D$,
\begin{equation}
    \EE_{\bm\delta}
    \|\vb_2^*-\vb_1^*\|_{\Sb\Hb\Sb^\top}^2
    \eqsim
    1-D^{1-b}.
    \label{eq:drift-scaling-short}
\end{equation}
Therefore,
\begin{align*}
    \EE_{\bm\delta}\mathsf{Drift}
    &=
    p^2
    \EE_{\bm\delta}
    \|\vb_2^*-\vb_1^*\|_{\Sb\Hb\Sb^\top}^2\\
    &\eqsim
    p^2(1-D^{1-b}).
\end{align*}

Finally, put $\mathsf{Irreducible},\mathsf{Approx}$ and $\mathsf{Excess}$ together and choose $\gamma \eqsim 1$, we have that with probability at least $1-e^{-\Omega(D)}$ over the randomness of the sketch matrix $\Sb$:
\begin{align*}
    \EE\R_D(\vb_{M+N}) &\lesssim \sigma_1^2+\frac{1}{D^{a-1}}+ \frac{1}{(\widetilde{N}_{\mathtt{eff}})^{1-1/a}} +\big(\frac{M}{M+N}\big)^2(1-\frac{1}{D^{b-1}})\\
\end{align*}

\subsection{Mixed Training Lower Bound}

We now establish the matching lower bound in the optimization-saturated
regime $\gamma\widetilde N_{\mathtt{eff}}\gtrsim D^a.$ 

Applying Theorem~\ref{thm:tail-decay-lower-bound} to the sketched problem,
with $\Hb$ replaced by $\Sb\Hb\Sb^\top$ and using $\vb_0=\mathbf 0$, gives
\begin{align*}
    \EE\mathsf{Excess}
    \gtrsim
    \EE_{\wb_1^*,\bm\delta}
    \Bigg\|-\Bigg(\prod_{t=1}^{M+N}(\Ib-\gamma_t\Sb\Hb\Sb^\top)\Bigg)\vb_1^*+p\Bigg(\Ib-\prod_{t=1}^{M+N}(\Ib-\gamma_t\Sb\Hb\Sb^\top)\Bigg)(\vb_2^*-\vb_1^*)\Bigg\|_{\Sb\Hb\Sb^\top}^{2},
\end{align*}
where we have dropped the remaining nonnegative terms in
Theorem~\ref{thm:tail-decay-lower-bound}.

By Assumption~\ref{ass:distribution-condition},
\begin{align*}
    \EE_{\wb_1^*,\bm\delta}
    \big[
        \vb_1^*(\vb_2^*-\vb_1^*)^\top
    \big]
    &=
    (\Sb\Hb\Sb^\top)^{-1}\Sb\Hb
    \EE[\wb_1^*\bm\delta^\top]
    \Hb\Sb^\top(\Sb\Hb\Sb^\top)^{-1}\\
    &=\mathbf 0.
\end{align*}
Therefore, the cross term vanishes after taking expectation over
$(\wb_1^*,\bm\delta)$, and hence
\begin{align}
    \EE\mathsf{Excess}
    &\gtrsim
    p^2\EE_{\bm\delta}
    \Bigg\|
        \Bigg(
            \Ib-
            \prod_{t=1}^{M+N}
            (\Ib-\gamma_t\Sb\Hb\Sb^\top)
        \Bigg)
        (\vb_2^*-\vb_1^*)
    \Bigg\|_{\Sb\Hb\Sb^\top}^{2}.
    \label{eq:mix-sketch-drift-lower}
\end{align}

It remains to control the spectral filter in
\eqref{eq:mix-sketch-drift-lower}. By Lemma~6.2 of
\citet{linscaling}, with probability at least $1-e^{-\Omega(D)}$,
the eigenvalues of $\Sb\Hb\Sb^\top$ satisfy
\[
    \widetilde\lambda_j\eqsim j^{-a},
    \qquad j=1,\ldots,D.
\]
Moreover, the geometric stepsize schedule satisfies $\sum_{t=1}^{M+N}\gamma_t\eqsim\gamma\widetilde N_{\mathtt{eff}}.$
Thus, for every $j=1,\ldots,D$,
\begin{align*}
    \prod_{t=1}^{M+N}
    (1-\gamma_t\widetilde\lambda_j)
    &\leq
    \exp\left(
        -\widetilde\lambda_j
        \sum_{t=1}^{M+N}\gamma_t
    \right)\\
    &\leq
    \exp\left(
        -c\gamma
        \widetilde N_{\mathtt{eff}}D^{-a}
    \right)
    \leq e^{-c},
\end{align*}
where the last inequality follows from
$\gamma\widetilde N_{\mathtt{eff}}\gtrsim D^a$. Let $(\widetilde\lambda_j,\widetilde\vb_j)_{j=1}^D$ be the
eigenvalue-eigenvector pairs of $\Sb\Hb\Sb^\top$. Expanding the norm in
this eigenbasis gives
\begin{align*}
    &\Bigg\|
        \Bigg(
            \Ib-
            \prod_{t=1}^{M+N}
            (\Ib-\gamma_t\Sb\Hb\Sb^\top)
        \Bigg)
        (\vb_2^*-\vb_1^*)
    \Bigg\|_{\Sb\Hb\Sb^\top}^{2}\\
    &=
    \sum_{j=1}^{D}
    \widetilde\lambda_j
    \Bigg(
        1-
        \prod_{t=1}^{M+N}
        (1-\gamma_t\widetilde\lambda_j)
    \Bigg)^2
    \left\langle
        \widetilde\vb_j,
        \vb_2^*-\vb_1^*
    \right\rangle^2\\
    &\geq
    (1-e^{-c})^2
    \sum_{j=1}^{D}
    \widetilde\lambda_j
    \left\langle
        \widetilde\vb_j,
        \vb_2^*-\vb_1^*
    \right\rangle^2\\
    &=
    (1-e^{-c})^2
    \|\vb_2^*-\vb_1^*\|_{\Sb\Hb\Sb^\top}^{2}.
\end{align*}
Therefore,
\begin{align*}
    \EE_{\bm\delta}
    \Bigg\|
        \Bigg(
            \Ib-
            \prod_{t=1}^{M+N}
            (\Ib-\gamma_t\Sb\Hb\Sb^\top)
        \Bigg)
        (\vb_2^*-\vb_1^*)
    \Bigg\|_{\Sb\Hb\Sb^\top}^{2}
    \gtrsim
    \EE_{\bm\delta}
    \|\vb_2^*-\vb_1^*\|_{\Sb\Hb\Sb^\top}^{2}.
\end{align*}

The upper-bound analysis above has already established that
\[
    \EE_{\bm\delta}
    \|\vb_2^*-\vb_1^*\|_{\Sb\Hb\Sb^\top}^{2}
    \eqsim
    1-D^{1-b}.
\]
Substituting this result into
\eqref{eq:mix-sketch-drift-lower} gives
$\EE\mathsf{Excess}\gtrsim p^2(1-D^{1-b}).$
In addition, the approximation-error result stated above gives
$\EE_{\wb_1^*}\mathsf{Approx}\eqsim D^{1-a}.$
Combining this with
\eqref{eq:approx-excess-decomp}, we obtain
\[
    \EE\R_D(\vb_{M+N})-\R(\wb_1^*)
    \gtrsim
    D^{1-a}
    +
    p^2(1-D^{1-b}).
\]

Finally, since $\gamma\eqsim1$ and
$\gamma\widetilde N_{\mathtt{eff}}\gtrsim D^a$, we have $\widetilde N_{\mathtt{eff}}^{1/a-1}\lesssim D^{1-a}.$
Therefore,
\[
    \EE\R_D(\vb_{M+N})-\R(\wb_1^*)
    \gtrsim
    D^{1-a}
    +
    \widetilde N_{\mathtt{eff}}^{1/a-1}
    +
    p^2(1-D^{1-b}),
\]
which matches the upper bound in
Theorem~\ref{thm:mix-scaling} up to constants.

\subsection{Two-stage Training Analysis}
First, Recall the upper bound in Theorem~\ref{thm:two-stage}. For a tighter analysis, we need a intermediate result from its proof. The results are for specific $\wb_1^*,\wb_2^*$ and the expectation is taken with respect to the one-stage M-step SGD.

Suppose Assumptions \ref{assump:second-moment}, \ref{assump:fourth-moment} and \ref{ass:distribution-condition} hold.
Let $\Neff :=  N / \log N $ and $\Meff=M / \log M$.
Suppose $\gamma < 1/(4\alpha\tr(\Sb\Hb\Sb^\top))$. Then we have 
\begin{align*}
\mathtt{Excess}(\vb_{M+N})
&\lesssim \EE\Bigg[
\Big\|
\Big(\prod_{t=1}^N(\Ib-\gamma_t\Sb\Hb\Sb^\top)\Big)(\vb_M-\vb_2^*)
\Big\|_{\Sb\Hb\Sb^\top}^2
\Bigg]+ \big\| (\prod_{t=1}^N\Ib-\gamma_t \Sb\Hb\Sb^\top)(\vb_2^*-\vb_1^*)\big\|^2_{\Sb\Hb\Sb^\top}\\
&+ \alpha\big(\mathsf{Excess}_2(\vb_M)+\|\vb_2^*-\vb_1^*\|^2_{\Sb\Hb\Sb^\top}\big)\frac{\DIM}{\Neff} + \mathsf{Var}
\end{align*}
where 
\begin{align*}
    \vb^*_\ell:=(\Sb\Hb\Sb^\top)^{-1}\Sb\Hb\wb^*_\ell,\quad \text{and}\quad\DIM:= \#\{\tilde\lambda_j \ge 1/(\Neff \gamma)\} + (\Neff \gamma)^2 \sum_{\tilde \lambda_j < 1/(\Neff \gamma)}\tilde\lambda_j^2.
\end{align*}
Suppose $\sigma_1^2,\sigma_2^2 \eqsim1$.
For the $\mathsf{Var}$ term, similar to the analysis in mixed training analysis and \citep{linscaling}(Lemma~E.1), with probability at least $1-e^{-\Omega(D)}$ over the randomness of the sketch matrix $\Sb$
\[
\mathsf{Var}\eqsim \frac{\DIM}{\Neff}\eqsim {\min\big\{ D, \ (\Neff \gamma)^{1/a}\big\}}/{\Neff}. 
\]

\paragraph{Analysis of $\mathsf{Excess}_2(\vb_M)$.}
We now analyze the first-stage excess risk when the teacher is $\wb_2^*$.
Conditioning on the sketch matrix $\Sb$, define
\[
\vb_2^*:=(\Sb\Hb\Sb^\top)^{-1}\Sb\Hb\wb_2^*,
\qquad
\Meff:=M/\log M.
\]
Using Theorem~6.1 in~\cite{linscaling}, for $\gamma<1/(4\alpha\,\tr(\Sb\Hb\Sb^\top))$ and zero initialization,
\[
\EE \mathsf{Excess}_2(\vb_M)\lesssim \mathsf{Bias}_2+\frac{\DIM_1}{\Meff},
\]
where
\[
\DIM_1:=\#\{\tilde\lambda_j\ge 1/(\Meff\gamma)\}+(\Meff\gamma)^2\sum_{\tilde\lambda_j<1/(\Meff\gamma)}\tilde\lambda_j^2,
\]
and the expectation is over the randomness of $\wb_2^*$ and the first-stage SGD.

By Assumption~\ref{ass:power-law-source}, let $(\lambda_i,\vb_i)_{i\ge1}$ be the eigenvalue-eigenvector pairs of $\Hb$, and
$\bm\Sigma$ shares the eigenvectors $(\vb_i)_{i\ge1}$ with $\Hb$, i.e.,
\[
\bm\Sigma \vb_i=\tau_i \vb_i,\qquad \lambda_i\eqsim i^{-a},\qquad \lambda_i\tau_i\eqsim i^{-b},
\]
for some $a>1$ and $b>1$. Then
\[
\EE\langle \vb_i,\wb_2^*\rangle^2
=
\EE\langle \vb_i,\wb_1^*\rangle^2+\EE\langle \vb_i,\bm\delta\rangle^2
=
1+\tau_i,
\]
where we used $\EE[\wb_1^*\bm\delta^\top]=0$.

For the bias term, applying Lemma~D.1 in~\cite{linscaling} with a cutoff $k\le D/3$ yields
\[
\EE \mathsf{Bias}_2
\lesssim
\frac{\sum_{i\le k}\EE\langle \vb_i,\wb_2^*\rangle^2}{\Meff\gamma}
+
\sum_{i>k}\lambda_i\EE\langle \vb_i,\wb_2^*\rangle^2.
\]
Substituting $\EE\langle \vb_i,\wb_2^*\rangle^2=1+\tau_i$ gives
\[
\EE \mathsf{Bias}_2
\lesssim
\frac{\sum_{i\le k}(1+\tau_i)}{\Meff\gamma}
+
\sum_{i>k}\lambda_i(1+\tau_i).
\]
Using $\tau_i\eqsim i^{a-b}$, we obtain
\[
\sum_{i\le k}(1+\tau_i)\eqsim k+k^{1+a-b},
\qquad
\sum_{i>k}\lambda_i(1+\tau_i)\eqsim k^{1-a}+k^{1-b},
\]
and hence
\[
\EE \mathsf{Bias}_2
\lesssim
\frac{k+k^{1+a-b}}{\Meff\gamma}+k^{1-a}+k^{1-b}.
\]
Let $c:=\min\{a,b\}.$ and distinguish two cases. If $b\ge a$, then $k^{1+a-b}\lesssim k$ and $k^{1-b}\lesssim k^{1-a}$, so
\[
\EE \mathsf{Bias}_2\lesssim \frac{k}{\Meff\gamma}+k^{1-a}.
\]
If $b<a$, then $k\lesssim k^{1+a-b}$ and $k^{1-a}\lesssim k^{1-b}$, so
\[
\EE \mathsf{Bias}_2\lesssim \frac{k^{1+a-b}}{\Meff\gamma}+k^{1-b}.
\]
Hence, in both cases,
\[
\EE \mathsf{Bias}_2\lesssim \frac{k^{1+a-c}}{\Meff\gamma}+k^{1-c},
\qquad c=\min\{a,b\}.
\]
Choosing
\[
k\eqsim \min\{D,(\Meff\gamma)^{1/a}\},
\]
yields
\[
\EE \mathsf{Bias}_2
\lesssim
\max\{D^{1-c},(\Meff\gamma)^{(1-c)/a}\}.
\]

For the lower bound, note that $
\EE[\wb_2^*(\wb_2^*)^\top]=\Ib+\bm\Sigma,$
under the assumption $\EE[\wb_1^*\bm\delta^\top]=0$. Since $\bm\Sigma$ shares the eigenvectors
of $\Hb$ and $\lambda_i\tau_i\eqsim i^{-b}$, the eigenvalues of $\Hb(\Ib+\bm\Sigma)$
satisfy
\[
\lambda_i(1+\tau_i)\eqsim i^{-a}+i^{-b}\eqsim i^{-c},
\qquad c=\min\{a,b\}.
\]
Therefore, applying the same argument as in Lemma~D.2 and Lemma~D.4 of~\cite{linscaling}, we obtain
\[
\EE \mathsf{Bias}_2 \gtrsim (\Meff\gamma)^{(1-c)/a}
\qquad\text{when}\qquad
(\Meff\gamma)^{1/a}\le D/c_0
\]
for some constant $c_0>0$.
Consequently, with probability at least $1-e^{-\Omega(D)}$ over the randomness of the sketch matrix $\Sb$,  the upper bound is
\[
\EE \mathsf{Bias}_2
\lesssim
\max\{D^{1-c},(\Meff\gamma)^{(1-c)/a}\},
\qquad c=\min\{a,b\},
\]
Moreover, in the data-limited regime $(\Meff\gamma)^{1/a}\le D/c_0$, this upper bound is tight up to constants.

Finally, by Lemma~E.1 in~\cite{linscaling},
\[
\frac{\DIM_1}{\Meff}\eqsim \frac{\min\{D,(\Meff\gamma)^{1/a}\}}{\Meff}
\]
with probability at least $1-e^{-\Omega(D)}$ over the randomness of the sketch matrix $\Sb$.
Combining the above estimates and choose $\gamma\eqsim1$, we conclude that with probability at least
$1-e^{-\Omega(D)}$ over $\Sb$,
\begin{align*}
\EE \mathsf{Excess}_2(\vb_M)
&\lesssim \max\{D^{1-a\wedge b},(\Meff\gamma)^{(1-a\wedge b)/a}\}+\frac{\min\{D,(\Meff\gamma)^{1/a}\}}{\Meff}\lesssim 1
\end{align*}

For the term $\alpha\big(\mathsf{Excess}_2(\vb_M)+\|\vb_2^*-\vb_1^*\|^2_{\Sb\Hb\Sb^\top}\big)\frac{\DIM}{\Neff},$
we claim that it is of the same order as $\DIM/\Neff$. Indeed, by the analysis in
mixed-training-analysis, with probability at least $1-e^{-\Omega(D)}$
over the randomness of the sketch matrix $\Sb$,
\begin{align*}
    \EE_{\bm{\delta}}\|\vb_2^*-\vb_1^*\|^2_{\Sb\Hb\Sb^\top}\lesssim 1.
\end{align*}

Therefore, on the intersection of the above high-probability events,
\begin{align*}
    \alpha\Big(\EE\mathsf{Excess}_2(\vb_M)+\EE_{\bm{\delta}}\|\vb_2^*-\vb_1^*\|^2_{\Sb\Hb\Sb^\top}\Big)\frac{\DIM}{\Neff}
    &\lesssim \frac{\DIM}{\Neff}
    \eqsim \frac{\min\{D,(\Neff\gamma)^{1/a}\}}{\Neff}.
\end{align*}

For the term $\big\| (\prod_{t=1}^N\Ib-\gamma_t \Sb\Hb\Sb^\top)(\vb_2^*-\vb_1^*)\big\|^2_{\Sb\Hb\Sb^\top}$, under Assumption~\ref{ass:power-law-source}, we can apply Lemma~D.4 in \cite{linscaling} directly and get that probability at least $1-e^{-\Omega(D)}$ over the randomness of the sketch matrix $\Sb$:
\begin{align*}
    \EE_{\bm\delta}\big\| (\prod_{t=1}^N\Ib-\gamma_t \Sb\Hb\Sb^\top)(\vb_2^*-\vb_1^*)\big\|^2_{\Sb\Hb\Sb^\top}\lesssim \max\big\{D^{1-b},(\Neff\gamma)^{(1-b)/a}  \big\},
    \end{align*}and
    \begin{align*}
      \EE_{\bm\delta}\big\| (\prod_{t=1}^N\Ib-\gamma_t \Sb\Hb\Sb^\top)(\vb_2^*-\vb_1^*)\big\|^2_{\Sb\Hb\Sb^\top}\gtrsim      (\Neff\gamma)^{(1-b)/a} 
    \end{align*} when $(\Neff\gamma)^{1/a} \leq D/c$ for some constant $c>0$.
    Moreover, when $b\geq a+1$
    \begin{align*}
     \EE_{\bm\delta}\big\| (\prod_{t=1}^N\Ib-\gamma_t \Sb\Hb\Sb^\top)(\vb_2^*-\vb_1^*)\big\|^2_{\Sb\Hb\Sb^\top}\lesssim \log \Neff\cdot\max\big\{(\Neff\gamma)^{(1-b)/a},\ D^{1-b}  \big\}     
    \end{align*}

For the last term
\[
\EE\Bigg[
\Big\|
\Big(\prod_{t=1}^N(\Ib-\gamma_t\Sb\Hb\Sb^\top)\Big)(\vb_M-\vb_2^*)
\Big\|_{\Sb\Hb\Sb^\top}^2
\Bigg],
\]
let $\Pb_N:=\prod_{t=1}^N(\Ib-\gamma_t\Sb\Hb\Sb^\top).$ Conditioning on the sketch matrix $\Sb$, we diagonalize $\Sb\Hb\Sb^\top=\Ub\diag(\tilde\lambda_1,\dots,\tilde\lambda_D)\Ub^\top$
with $\tilde\lambda_1\ge\cdots\ge\tilde\lambda_D>0$. Then
\[
\Pb_N^\top \Sb\Hb\Sb^\top \Pb_N
=
\Ub\diag\!\Big(\tilde\lambda_i\prod_{t=1}^N(1-\gamma_t\tilde\lambda_i)^2\Big)\Ub^\top
\preceq
\Big(\prod_{t=1}^N(1-\gamma_t\tilde\lambda_D)\Big)^2 \Sb\Hb\Sb^\top.
\]
Hence
\[
\EE\!\left[
\|\Pb_N(\vb_M-\vb_2^*)\|_{\Sb\Hb\Sb^\top}^2
\,\middle|\, \Sb
\right]
\le
\Big(\prod_{t=1}^N(1-\gamma_t\tilde\lambda_D)\Big)^2
\EE\!\left[
\|\vb_M-\vb_2^*\|_{\Sb\Hb\Sb^\top}^2
\,\middle|\, \Sb
\right].
\]
Using $1-u\le e^{-u}$ and $\sum_{t=1}^N\gamma_t\eqsim \Neff\gamma$, we further obtain
\[
\EE\!\left[
\|\Pb_N(\vb_M-\vb_2^*)\|_{\Sb\Hb\Sb^\top}^2
\,\middle|\, \Sb
\right]
\lesssim
\exp\!\big(-c\,\Neff\gamma\,\mu_D(\Sb\Hb\Sb^\top)\big)\,
\EE\!\left[
\|\vb_M-\vb_2^*\|_{\Sb\Hb\Sb^\top}^2
\,\middle|\, \Sb
\right]
\]
for some constant $c>0$. Since
\[
\EE\!\left[
\|\vb_M-\vb_2^*\|_{\Sb\Hb\Sb^\top}^2
\,\middle|\, \Sb
\right]
=
\EE\!\left[
\mathsf{Excess}_2(\vb_M)
\,\middle|\, \Sb
\right],
\]
it follows that
\[
\EE\!\left[
\|\Pb_N(\vb_M-\vb_2^*)\|_{\Sb\Hb\Sb^\top}^2
\,\middle|\, \Sb
\right]
\lesssim
\exp\!\big(-c\,\Neff\gamma\,\mu_D(\Sb\Hb\Sb^\top)\big)\,
\EE\!\left[
\mathsf{Excess}_2(\vb_M)
\,\middle|\, \Sb
\right].
\]

Now let $\mathcal E_{\mathrm{spec}}$ be the event that
\[
\mu_D(\Sb\Hb\Sb^\top)\eqsim D^{-a},
\]
and let $\mathcal E_{\mathrm{ex}}$ be the event that
\[
\EE\!\left[
\mathsf{Excess}_2(\vb_M)
\,\middle|\, \Sb
\right]
\lesssim
\max\{D^{1-a\wedge b},(\Meff\gamma)^{(1-a\wedge b)/a}\}.
\]
By Lemma~6.2 in~\cite{linscaling} and the above analysis of $\mathsf{Excess}_2(\vb_M)$, both events hold
with probability at least $1-e^{-\Omega(D)}$ over the randomness of the sketch matrix $\Sb$.
Therefore, on the intersection event $\mathcal E_{\mathrm{spec}}\cap\mathcal E_{\mathrm{ex}}$, we have
\[
\EE\Bigg[
\Big\|
\Big(\prod_{t=1}^N(\Ib-\gamma_t\Sb\Hb\Sb^\top)\Big)(\vb_M-\vb_2^*)
\Big\|_{\Sb\Hb\Sb^\top}^2
\Bigg]
\lesssim
\exp\!\left(-c\,\frac{\Neff\gamma}{D^a}\right)
\max\{D^{1-a\wedge b},(\Meff\gamma)^{(1-a\wedge b)/a}\}.
\]
Consequently, with probability at least $1-e^{-\Omega(D)}$ over the randomness of the sketch matrix $\Sb$,
\[
\EE\Bigg[
\Big\|
\Big(\prod_{t=1}^N(\Ib-\gamma_t\Sb\Hb\Sb^\top)\Big)(\vb_M-\vb_2^*)
\Big\|_{\Sb\Hb\Sb^\top}^2
\Bigg]
\lesssim
\exp\!\left(-c\,\frac{\Neff\gamma}{D^a}\right)
\max\{D^{1-a\wedge b},(\Meff\gamma)^{(1-a\wedge b)/a}\}.
\]

Finally, put $\mathsf{Irreducible},\mathsf{Approx}$ and $\mathsf{Excess}$ together. Choose $\gamma \eqsim 1$ and assume $b<a+1$ for simplicity and clarity, we have that with probability at least $1-e^{-\Omega(D)}$ over the randomness of the sketch matrix $\Sb$:
\begin{align*}
    \EE\R_D(\vb_{M+N}) &\lesssim \sigma_1^2+\frac{1}{D^{a-1}}+  \underbrace{\frac{\min\{D,(\Neff)^{1/a}\}}{\Neff}}_{\mathsf{Var}}\\
    & +\underbrace{e^{-\Omega(\frac{\Neff}{D^a})}\cdot\max\big\{\frac{1}{D^{a\wedge b-1}},\frac{1}{(\Meff)^{\frac{a\wedge b-1}{a}}}  \big\}+\max\big\{\frac{1}{D^{b-1}},\frac{1}{(\Neff)^{\frac{b-1}{a}}}  \big\}}_{\mathsf{Bias}} 
\end{align*}

\section{Exact Synthetic-Benefit Condition}
\label{ap:synthetic-benefit}

\begin{proof}[Proof of Theorem~\ref{thm:exact-synthetic-benefit}]
Reindex the $N$ real-stage samples as
$\{(\xb_t,y_t)\}_{t=1}^N$, where
\[
    y_t=\langle\xb_t,\wb_1^*\rangle+\xi_{1,t}.
\]
For any real-stage iterate, the error recursion is
\[
    \wb_t-\wb_1^*
    =
    (\Ib-\gamma_t\xb_t\xb_t^\top)
    (\wb_{t-1}-\wb_1^*)
    +
    \gamma_t\xi_{1,t}\xb_t.
\]
Iterating this recursion for real-only training gives
\begin{align}
\wb_N^{\mathrm{real}}-\wb_1^*
&=
\Phi_N(\wb_0-\wb_1^*)
\notag\\
&\quad+
\sum_{t=1}^{N}
\gamma_t
\Big[
    (\Ib-\gamma_N\xb_N\xb_N^\top)
    \cdots
    (\Ib-\gamma_{t+1}\xb_{t+1}\xb_{t+1}^\top)
\Big]
\xi_{1,t}\xb_t,
\label{eq:real-only-exact-recursion}
\end{align}
where the product inside the summation is interpreted as $\Ib$ when $t=N$.

The two-stage procedure uses exactly the same real samples, label noises,
and stepsizes, but starts the real stage from $\wb_M$. Therefore,
\begin{align}
\wb_{M+N}-\wb_1^*
&=
\Phi_N(\wb_M-\wb_1^*)
\notag\\
&\quad+
\sum_{t=1}^{N}
\gamma_t
\Big[
    (\Ib-\gamma_N\xb_N\xb_N^\top)
    \cdots
    (\Ib-\gamma_{t+1}\xb_{t+1}\xb_{t+1}^\top)
\Big]
\xi_{1,t}\xb_t.
\label{eq:two-stage-exact-recursion}
\end{align}
In particular, the real-stage noise accumulation in
\eqref{eq:real-only-exact-recursion} and
\eqref{eq:two-stage-exact-recursion} is identical.

Let $\bm{\nu}_N$ denote the common real-stage noise-accumulation term in
\eqref{eq:real-only-exact-recursion} and
\eqref{eq:two-stage-exact-recursion}. These two recursions can be written as
\[
    \wb_N^{\mathrm{real}}-\wb_1^*
    =
    \Phi_N(\wb_0-\wb_1^*)+\bm{\nu}_N,
    \qquad
    \wb_{M+N}-\wb_1^*
    =
    \Phi_N(\wb_M-\wb_1^*)+\bm{\nu}_N.
\]

Conditional on $(\wb_M,\xb_1,\ldots,\xb_N)$, the coefficients of all
$\xi_{1,t}$ in $\bm{\nu}_N$ are fixed. Since the real label noises are
independent of the covariates and the synthetic stage, and satisfy
$\EE[\xi_{1,t}]=0$, we have
\[
    \EE[
        \bm{\nu}_N
        \mid
        \wb_M,\xb_1,\ldots,\xb_N
    ]
    =
    \mathbf{0}.
\]
Both $\Phi_N(\wb_0-\wb_1^*)$ and
$\Phi_N(\wb_M-\wb_1^*)$ are measurable under this conditioning. Therefore,
by the tower property,
\begin{align*}
\EE\!\left[
    \big(\Phi_N(\wb_0-\wb_1^*)\big)^\top
    \Hb\bm{\nu}_N
\right]
&=0,\\
\EE\!\left[
    \big(\Phi_N(\wb_M-\wb_1^*)\big)^\top
    \Hb\bm{\nu}_N
\right]
&=0.
\end{align*}
Expanding the two squared errors consequently gives
\begin{align*}
\EE\|\wb_N^{\mathrm{real}}-\wb_1^*\|_{\Hb}^2
&=
\EE\|\Phi_N(\wb_0-\wb_1^*)\|_{\Hb}^2
+
\EE\|\bm{\nu}_N\|_{\Hb}^2,\\
\EE\|\wb_{M+N}-\wb_1^*\|_{\Hb}^2
&=
\EE\|\Phi_N(\wb_M-\wb_1^*)\|_{\Hb}^2
+
\EE\|\bm{\nu}_N\|_{\Hb}^2.
\end{align*}
The final terms are identical because the two procedures use the same real
samples, label noises, and stepsizes. They therefore cancel in the risk
difference.

Under Assumption~\ref{assump:noise}, for any $\wb$, $\mathcal{E}_1(\wb)=\frac{1}{2}
\|\wb-\wb_1^*\|_{\Hb}^2.$ Hence,
\begin{align*}
&2\left(
    \EE[\mathcal{E}_1(\wb_N^{\mathrm{real}})]
    -
    \EE[\mathcal{E}_1(\wb_{M+N})]
\right)\\
&\qquad=
\EE\|\Phi_N(\wb_0-\wb_1^*)\|_{\Hb}^2
-
\EE\|\Phi_N(\wb_M-\wb_1^*)\|_{\Hb}^2.
\end{align*}

Since $\wb_0$ is deterministic, the definition of $\mathbf{K}_N$ gives
\begin{align*}
\EE\|\Phi_N(\wb_0-\wb_1^*)\|_{\Hb}^2
&=
(\wb_0-\wb_1^*)^\top
\EE_{\xb_1,\ldots,\xb_N}
[\Phi_N^\top\Hb\Phi_N]
(\wb_0-\wb_1^*)\\
&=
\|\wb_0-\wb_1^*\|_{\mathbf{K}_N}^2.
\end{align*}
Similarly, because $\wb_M$ is independent of the real-stage covariates,
the tower property yields
\begin{align*}
\EE\|\Phi_N(\wb_M-\wb_1^*)\|_{\Hb}^2
&=
\EE_{\wb_M}\!\left[
    (\wb_M-\wb_1^*)^\top
    \EE_{\xb_1,\ldots,\xb_N}
    [\Phi_N^\top\Hb\Phi_N]
    (\wb_M-\wb_1^*)
\right]\\
&=
\EE\|\wb_M-\wb_1^*\|_{\mathbf{K}_N}^2.
\end{align*}

Finally, decompose $\wb_M-\wb_1^*=\big(\wb_M-\EE[\wb_M]\big)+
\big(\EE[\wb_M]-\wb_1^*\big)$ and Expandthe corresponding quadratic form to get
\begin{align*}
\EE\|\wb_M-\wb_1^*\|_{\mathbf{K}_N}^2
&=
\EE\|\wb_M-\EE[\wb_M]\|_{\mathbf{K}_N}^2
+
\|\EE[\wb_M]-\wb_1^*\|_{\mathbf{K}_N}^2\\
&\quad+
2\EE\!\left[
    \big(\wb_M-\EE[\wb_M]\big)^\top
    \mathbf{K}_N
    \big(\EE[\wb_M]-\wb_1^*\big)
\right].
\end{align*}
The last term is zero because
$\EE[\wb_M-\EE[\wb_M]]=\mathbf{0}$. Moreover,
\begin{align*}
\EE\|\wb_M-\EE[\wb_M]\|_{\mathbf{K}_N}^2
&=
\EE\tr\!\left(
    \mathbf{K}_N
    (\wb_M-\EE[\wb_M])
    (\wb_M-\EE[\wb_M])^\top
\right)\\
&=
\tr\!\left(
    \mathbf{K}_N\operatorname{Cov}(\wb_M)
\right).
\end{align*}
Combining the preceding identities yields
\begin{align*}
&2\left(
    \EE[\mathcal{E}_1(\wb_N^{\mathrm{real}})]
    -
    \EE[\mathcal{E}_1(\wb_{M+N})]
\right)\\
&\qquad=
\|\wb_0-\wb_1^*\|_{\mathbf{K}_N}^2
-
\|\EE[\wb_M]-\wb_1^*\|_{\mathbf{K}_N}^2
-
\tr\!\left(
    \mathbf{K}_N\operatorname{Cov}(\wb_M)
\right).
\end{align*}
Therefore,
$\EE[\mathcal{E}_1(\wb_{M+N})]
<\EE[\mathcal{E}_1(\wb_N^{\mathrm{real}})]$
if and only if
\[
\|\EE[\wb_M]-\wb_1^*\|_{\mathbf{K}_N}^2
+
\tr\!\left(
    \mathbf{K}_N\operatorname{Cov}(\wb_M)
\right)
<
\|\wb_0-\wb_1^*\|_{\mathbf{K}_N}^2,
\]
which proves Theorem~\ref{thm:exact-synthetic-benefit}.
\end{proof}

\begin{proof}[Proof of Corollary~\ref{cor:one-dimensional-benefit}]
Since $x_s^2=1$, the $s$-th synthetic update with stepsize $1/s$ satisfies
\begin{align*}
w_s
&=
w_{s-1}
-\frac{1}{s}
\big(
    w_{s-1}x_s-(\theta+\delta)x_s-\xi_{2,s}
\big)x_s\\
&=
\left(1-\frac{1}{s}\right)w_{s-1}
+
\frac{1}{s}
\big(
    \theta+\delta+\xi_{2,s}x_s
\big).
\end{align*}
Induction over $s$ therefore gives
\[
    w_M
    =
    \theta+\delta
    +
    \frac{1}{M}
    \sum_{s=1}^{M}\xi_{2,s}x_s.
\]
Because the synthetic noises are independent, centered, and independent of
the Rademacher covariates,
\begin{equation}
    \EE[w_M]=\theta+\delta,
    \qquad
    \operatorname{Var}(w_M)=\frac{\sigma_2^2}{M}.
\label{eq:one-dimensional-stage-one-moments}
\end{equation}

In one dimension, $\Hb=\EE[x^2]=1$. Moreover, every real-stage update has
multiplicative operator
$1-\gamma_t x_t^2=1-\gamma_t$. Thus,
\[
    \Phi_N=\prod_{t=1}^{N}(1-\gamma_t),
    \qquad
    \mathbf{K}_N=
    \left(
        \prod_{t=1}^{N}(1-\gamma_t)
    \right)^2.
\]
Applying Theorem~\ref{thm:exact-synthetic-benefit} and using
\eqref{eq:one-dimensional-stage-one-moments}, we obtain
\begin{align*}
&2\left(
    \EE[\mathcal{E}_1(w_N^{\mathrm{real}})]
    -
    \EE[\mathcal{E}_1(w_{M+N})]
\right)\\
&\qquad=
\left(
    \prod_{t=1}^{N}(1-\gamma_t)
\right)^2
\left[
    (w_0-\theta)^2
    -
    \delta^2
    -
    \frac{\sigma_2^2}{M}
\right].
\end{align*}
Since $0<\gamma_t<1$, the multiplicative factor is strictly positive. Therefore,
two-stage training has strictly smaller expected risk if and only if
\[
    \delta^2+\frac{\sigma_2^2}{M}
    <
    (w_0-\theta)^2.
\]
If $\delta^2\geq(w_0-\theta)^2$, this inequality cannot hold for any $M$.
Otherwise, it is equivalent to
\[
    M>
    \frac{\sigma_2^2}
    {(w_0-\theta)^2-\delta^2},
\]
which completes the proof.
\end{proof}

\section{constant stepsize SGD with iterate averaging}\label{ap:constant-sgd}
In this section, we also denote total sample size as $N$, where synthetic sample size is $pN$ and real sample size is $(1-p)N$. This light abuse of notation will achieve more simplicity and clarity.
\subsection{Upper bound analysis}
We use the same definition in Appendix~\ref{ap:mix-upper}. 
\begin{align*}
    &\Bb_t = \EE [\bm{\eta}^{\bias}_t \otimes \bm{\eta}^{\bias}_t], \quad
\Vb_t = \EE [\bm{\eta}^{\var}_t \otimes \bm{\eta}^{\var}_t],\\
&\Db_t = \EE [\bm{\eta}^{\drift}_t \otimes \bm{\eta}^{\drift}_t], \quad
\Fb_t = \EE [\bm{\eta}^{\fluct}_t \otimes \bm{\eta}^{\fluct}_t].
\end{align*}

Using the definition of $\Bb_t,\Vb_t,\Db_t,\Fb_t$, we can then decompose Excess Risk using Cauchy--Schwarz Inequality:
\begin{align*}
    \EE [ \R(\bar\wb_N) - \R(\wb_1^*) ] &= \frac{1}{2}\big\langle\Hb, \EE[\bar{\bm{\eta}}_N \otimes \bar{\bm{\eta}}_N] \big\rangle \\
    &= \frac{1}{2}\big\langle\Hb, \EE[(\bar{\bm{\eta}}_N^\bias + \bar{\bm{\eta}}_N^\var + \bar{\bm{\eta}}_N^\drift + \bar{\bm{\eta}}_N^\fluct) \otimes (\bar{\bm{\eta}}_N^\bias + \bar{\bm{\eta}}_N^\var + \bar{\bm{\eta}}_N^\drift + \bar{\bm{\eta}}_N^\fluct)]\big \rangle \\
    &\le 2 \big\langle \Hb, \EE[\bar{\bm{\eta}}_N^\bias \otimes \bar{\bm{\eta}}_N^\bias] \big\rangle + 2 \big\langle \Hb, \EE[\bar{\bm{\eta}}_N^\var \otimes \bar{\bm{\eta}}_N^\var] \big\rangle \\
    &\quad + 2 \big\langle \Hb, \EE[\bar{\bm{\eta}}_N^\drift \otimes \bar{\bm{\eta}}_N^\drift] \big\rangle + 2 \big\langle \Hb, \EE[\bar{\bm{\eta}}_N^\fluct \otimes \bar{\bm{\eta}}_N^\fluct] \big\rangle \\
    &= 2 \big\langle \Hb, \bar\Bb_N \big\rangle + 2 \big\langle \Hb, \bar\Vb_N \big\rangle + 2 \big\langle \Hb, \bar\Db_N \big\rangle + 2 \big\langle \Hb, \bar\Fb_N \big\rangle.
\end{align*}

\paragraph{Bias upper bound}
Using the defined operators, the update rule of the iterates imply the following
recursive form of $\Bb_t$:
\begin{equation}\label{eq:update_Bt_avg}
    \Bb_t = (\cI - \gamma\cT)\circ \Bb_{t-1}, \quad \Bb_0 = \betab_0\otimes \betab_0,
\end{equation}
Note that this bias term $\big\langle \Hb, \bar\Bb_N \big\rangle$ is the same as that in \cite{zou2023benign}, so we can apply Lemma B.11 in \cite{zou2023benign} to bound it directly:
\begin{lemma}[A bias upper bound for Avg-SGD]\label{lemma:HB-constant-upper-bound}
Suppose Assumptions \ref{assump:second-moment} and \ref{assump:fourth-moment} hold. Let $\Neff=N$. 
Consider \eqref{eq:update_Bt_avg}.
Suppose $\gamma < 1/(\alpha\tr(\Hb))$. We have 
\begin{align*}
\frac12\big\langle \Hb, \bar\Bb_N \big\rangle &\le \frac{1}{\gamma^2 \Neff^2}\cdot\|\wb_0-\wb^*\|_{\Hb_{0:k^*}^{-1}}^2+\|\wb_0-\wb^*\|_{\Hb_{k^*:\infty}}^{2}\notag\\
&\quad +\frac{2\alpha\big(\|\wb_0-\wb^*\|_{\Ib_{0:k^*}}^2 + \Neff\gamma\|\wb_0-\wb^*\|_{\Hb_{k^*:\infty}}^2\big)}{\Neff\gamma(1-\gamma \alpha\tr(\Hb))}\cdot\frac{\DIM}{\Neff}
\end{align*}
where $k^* = \max \{k: \lambda_k \ge \frac{1}{\gamma \Neff}\}$ and $\DIM := k^* + \gamma^2 \Neff^2 \sum_{i>k^*}\lambda_i^2.$
\end{lemma}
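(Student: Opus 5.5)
The plan is to reduce the averaged-iterate bias to the classical single-source analysis of constant-stepsize averaged SGD from \citet{zou2023benign}. The bias process $\bm{\eta}^{\bias}_t=(\Ib-\gamma\xb_t\xb_t^\top)\bm{\eta}^{\bias}_{t-1}$ with $\bm{\eta}^{\bias}_0=\wb_0-\wb_1^*$ involves neither the source indicators $z_t$ nor the label noise. Its law is therefore exactly that of the bias process of single-source SGD. The covariates are i.i.d.\ from $\cD$ and independent of the filtration generated by the schedule, as established in the fixed-budget sampling model. Consequently $\langle\Hb,\bar\Bb_N\rangle$, with $\bar\Bb_N=\EE[\bar{\bm{\eta}}^{\bias}_N\otimes\bar{\bm{\eta}}^{\bias}_N]$ and $\bar{\bm{\eta}}^{\bias}_N=\frac1N\sum_{t=0}^{N-1}\bm{\eta}^{\bias}_t$, coincides with the bias quantity in \citet{zou2023benign}.

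The steps would be as follows. First, I would expand the averaged outer product as $\frac{1}{N^2}\sum_{s,t}\EE[\bm{\eta}^{\bias}_s\otimes\bm{\eta}^{\bias}_t]$. For $s\le t$, conditioning on the past gives $\EE[\bm{\eta}^{\bias}_t\mid\mathcal F_s]=(\Ib-\gamma\Hb)^{t-s}\bm{\eta}^{\bias}_s$, which yields the standard upper bound
\begin{equation*}
\bar\Bb_N\preceq\frac{1}{N^2}\sum_{t=0}^{N-1}\sum_{k\ge 0}\big((\Ib-\gamma\Hb)^k\Bb_t+\Bb_t(\Ib-\gamma\Hb)^k\big).
\end{equation*}
Second, I would use the recursion $\Bb_t=(\cI-\gamma\cT)\circ\Bb_{t-1}$ and Lemma~\ref{lemma:operators}. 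Splitting $\cT=\tilde\cT-(\tilde\cM-\cM)$ gives $\Bb_t\preceq(\cI-\gamma\tilde\cT)^t\circ\Bb_0+\gamma\sum_{s}(\cI-\gamma\tilde\cT)^{t-1-s}\circ(\cM\circ\Bb_s)$. The fourth-moment bound then gives $\cM\circ\Bb_s\preceq\alpha\langle\Hb,\Bb_s\rangle\Hb$, and $\sum_s\langle\Hb,\Bb_s\rangle\lesssim\|\wb_0-\wb^*\|^2/(\gamma(1-\gamma\alpha\tr(\Hb)))$ follows by summing the trace recursion. Third, I would evaluate the deterministic part coordinatewise in the eigenbasis. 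I would split at $k^*$: the head contributes $\frac{1}{\gamma^2N^2}\|\wb_0-\wb^*\|^2_{\Hb^{-1}_{0:k^*}}$, using $\sum_k(1-\gamma\lambda_i)^k\le 1/(\gamma\lambda_i)$, and the tail contributes $\|\wb_0-\wb^*\|^2_{\Hb_{k^*:\infty}}$, using the crude bound $N$ on the geometric sums. Applying the same head/tail split to the fourth-moment remainder, with effective noise level $\alpha\sum_s\langle\Hb,\Bb_s\rangle/N$, gives the $\DIM/\Neff$ term. Separating the head initial error as $\|\cdot\|^2_{\Ib_{0:k^*}}$ and the tail as $N\gamma\|\cdot\|^2_{\Hb_{k^*:\infty}}$ reproduces the stated numerator.

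The main obstacle is the second step, namely controlling the accumulated fourth-moment remainder so that it yields exactly the factor $(\|\wb_0-\wb^*\|^2_{\Ib_{0:k^*}}+N\gamma\|\wb_0-\wb^*\|^2_{\Hb_{k^*:\infty}})/(N\gamma(1-\gamma\alpha\tr(\Hb)))$, rather than the cruder $\|\wb_0-\wb^*\|^2$. For this I would follow the refined argument of Lemma B.11 in \citet{zou2023benign}, bounding $\tr(\Bb_t)$ restricted to the head and the tail separately. Since the process is literally identical to theirs, in the write-up I would simply verify this identification and cite that lemma.
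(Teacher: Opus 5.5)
Your proposal is correct and matches the paper. The paper's entire proof is a citation of Lemma B.11 in \citet{zou2023benign}, justified exactly as you do: the bias recursion $\Bb_t=(\cI-\gamma\cT)\circ\Bb_{t-1}$ involves neither $z_t$ nor the label noise, so it is the single-source averaged-SGD bias process. (Two slips in your internal sketch don't matter once you cite the lemma, but fix them if you write it out: since $\cT=\tilde\cT-\gamma(\cM-\tilde\cM)$, the fourth-moment remainder carries $\gamma^2$ rather than $\gamma$, and the inner geometric sum must be truncated at $k\le N-1-t$, not taken over all $k\ge 0$, to obtain the tail term $\|\wb_0-\wb^*\|^2_{\Hb_{k^*:\infty}}$.)
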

\begin{proof}
See proof of Lemma B.11 in \cite{zou2023benign}.
\end{proof}

\paragraph{Variance upper bound}
Using the defined operators, the update rule of the variance process implies
\begin{equation}\label{eq:update_Vt_avg}
\Vb_t = (\cI-\gamma\cT)\circ \Vb_{t-1} + \gamma^2\bSigma_t,\qquad \Vb_0 = \boldsymbol{0},
\end{equation}
where
\[
\bSigma_t:=\EE[\xi_t^2 \xb_t\xb_t^\top].
\]
Since $\xb_t$ is independent of $z_t$, conditioning on the source indicator gives
\[
\bSigma_t
=
\EE\!\left[\EE[\xi_t^2 \xb_t\xb_t^\top \mid z_t]\right]
=
(1-p)\bSigma_1+p\bSigma_2
\preceq
\bigl((1-p)\sigma_1^2+p\sigma_2^2\bigr)\Hb.
\]
Define
\[
\bar\sigma^2:=(1-p)\sigma_1^2+p\sigma_2^2.
\]
Then \eqref{eq:update_Vt_avg} has exactly the same form as the variance recursion for
constant-stepsize SGD with iterate averaging analyzed in \cite{zou2023benign}, with noise
level upper bounded by $\sigma_{\mathrm{mix}}^2$. Therefore, by applying the averaged-SGD
variance bound in \cite{zou2023benign}, we obtain the following result. The corresponding sharp
variance characterization for iterate averaging is stated in Theorem~2.1 and developed in
Lemma B.6 of \cite{zou2023benign}.

\begin{lemma}[A variance upper bound for Avg-SGD]\label{lemma:HV-constant-upper-bound}
Suppose Assumptions \ref{assump:second-moment}, \ref{assump:fourth-moment} and
\ref{assump:noise} hold.
Consider \eqref{eq:update_Vt_avg}. Let $\Neff=N$ and $\bar\sigma^2=(1-p)\sigma_1^2+p\sigma_2^2$.
Suppose $\gamma < 1/(\alpha\tr(\Hb))$. Then
\begin{equation*}
\frac12\big\langle \Hb,\bar\Vb_N\big\rangle
\le
\frac{\bar\sigma^2}{1-\gamma\alpha\tr(\Hb)}
\cdot
\frac{\DIM}{\Neff},
\end{equation*}
where
$
k^*=\max\left\{k:\lambda_k\ge \frac{1}{\gamma \Neff}\right\},
$
and
$
\DIM:=k^*+\gamma^2\Neff^2\sum_{i>k^*}\lambda_i^2.
$
\end{lemma}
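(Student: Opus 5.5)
This bound would follow by importing the averaged-SGD variance analysis of \citet{zou2023benign}. The only genuinely new ingredient is the handling of the mixed noise covariance, and that has already been reduced above to $\bSigma_t \preceq \bar\sigma^2\Hb$.

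The plan has four steps.

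\textbf{Step 1: dominate by a homogeneous-noise recursion.} Define an auxiliary recursion
\[
\Vb'_t=(\cI-\gamma\cT)\circ\Vb'_{t-1}+\gamma^2\bar\sigma^2\Hb,\qquad \Vb'_0=\boldsymbol{0}.
\]
Since $\cI-\gamma\cT$ is a PSD mapping by Lemma~\ref{lemma:operators}, induction on $t$ gives $\Vb_t\preceq\Vb'_t$ for every $t$.

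\textbf{Step 2: pass the domination to the averaged iterate.} The averaged covariance is
\[
\bar\Vb_N=\EE[\bar{\bm\eta}^\var_N\otimes\bar{\bm\eta}^\var_N],\qquad \bar{\bm\eta}^\var_N=\frac1N\sum_{t=0}^{N-1}\bm\eta^\var_t.
\]
Following \citet{zou2023benign}, I would expand $\bar\Vb_N$ as
\[
\bar\Vb_N=\frac{1}{N^2}\sum_{t}\sum_{k\ge t}\Big[(\Ib-\gamma\Hb)^{k-t}\Vb_t+\Vb_t(\Ib-\gamma\Hb)^{k-t}\Big]
\]
minus the diagonal terms. This uses
\[
\EE[\bm\eta^\var_k\mid\bm\eta^\var_t]=(\Ib-\gamma\Hb)^{k-t}\bm\eta^\var_t,
\]
which holds because $\EE[\xi_t\xb_t\mid\mathcal F_{t-1}]=0$ under the fixed-budget construction.

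This gives
\[
\big\langle\Hb,\bar\Vb_N\big\rangle\le\frac{2}{N^2}\sum_{t=0}^{N-1}\Big\langle \sum_{k=t}^{N-1}(\Ib-\gamma\Hb)^{k-t}\Hb,\Vb_t\Big\rangle.
\]
The weighting matrix in this pairing is PSD because $\gamma<1/\lambda_1$. Hence the right-hand side is monotone in $\Vb_t$, and we may replace $\Vb_t$ by $\Vb'_t$.

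\textbf{Step 3: apply the single-source result.} For $\Vb'_t$, the right-hand side is exactly the quantity controlled in Lemma B.6 / Theorem 2.1 of \citet{zou2023benign} with noise level $\bar\sigma^2$. Their argument proceeds in two stages:
\begin{enumerate}
\item Bound $\Vb'_t\preceq \frac{\gamma\bar\sigma^2}{1-\gamma\alpha\tr(\Hb)}\big(\Ib-(\Ib-\gamma\Hb)^{2t}\big)$. This comes from the stationary bound $\cT^{-1}\circ\Hb$ combined with the fourth-moment Assumption~\ref{assump:fourth-moment}.
\item Sum the geometric series eigendirection-wise. Each direction contributes $\lesssim \min\{1/N,\ \gamma^2 N\lambda_i^2\}$. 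Splitting at $k^*$ then produces $\DIM/\Neff$ with $\Neff=N$.
\end{enumerate}
This yields the stated bound with the factor $\bar\sigma^2/(1-\gamma\alpha\tr(\Hb))$.

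\textbf{Main obstacle.} The main point of care is Step 2. One must verify that the cross-time expansion remains valid when the source indicators are dependent, since they are drawn without replacement. This holds because only the conditional mean-zero property of $\xi_t\xb_t$ given $\mathcal F_{t-1}\ni\mathbf z$ is used, never independence of the $z_t$. Once this is checked, everything else is a direct substitution of $\bar\sigma^2$ for $\sigma^2$ in the cited proof.
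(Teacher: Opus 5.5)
Your proposal is correct and follows the same route as the paper. The paper observes that $\bSigma_t\preceq\bar\sigma^2\Hb$ and invokes the averaged-SGD variance bound of \citet{zou2023benign} verbatim, with $\sigma^2$ replaced by $\bar\sigma^2$. Your comparison recursion, the monotonicity of the averaged pairing in $\Vb_t$, and the check that only $\EE[\xi_t\xb_t\mid\mathcal F_{t-1}]=\boldsymbol{0}$ is used (not independence of the $z_t$) make that ``verbatim'' claim precise; in fact $\bSigma_t=\bar\sigma^2\Hb$ holds with equality here, so your Step 1 is trivial.
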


\begin{proof}
The recursion \eqref{eq:update_Vt_avg} is identical to the variance recursion of
constant-stepsize SGD with iterate averaging, except that the noise covariance is now
$\bSigma_t$ instead of a single-source covariance matrix. Since
\[
\bSigma_t\preceq \bar\sigma^2 \Hb
\qquad \text{for all } t,
\]
the proof of the variance upper bound in \cite{zou2023benign} applies verbatim with
$\sigma^2$ replaced by $\sigma_{\mathrm{mix}}^2$.
Hence
\[
\frac12\big\langle \Hb,\bar\Vb_N\big\rangle
\le
\frac{\bar\sigma^2}{1-\gamma\alpha\tr(\Hb)}
\left(
\frac{k^*}{\Neff}
+\Neff\gamma^2\sum_{i>k^*}\lambda_i^2
\right)
=
\frac{\bar\sigma^2}{1-\gamma\alpha\tr(\Hb)}
\cdot
\frac{\DIM}{\Neff}.
\]
This completes the proof.
\end{proof}

\paragraph{Drift upper bound}
We now analyze the drift contribution
\[
\frac12\big\langle \Hb,\bar\Db_N\big\rangle
=
\frac12\big\langle \Hb,\EE[\bar{\bm{\eta}}^{\drift}_N\otimes \bar{\bm{\eta}}^{\drift}_N]\big\rangle,
\qquad
\bar{\bm{\eta}}^{\drift}_N:=\frac1N\sum_{t=0}^{N-1}\bm{\eta}^{\drift}_t.
\]

\begin{lemma}[A drift upper bound for Avg-SGD]\label{lemma:HD-constant-upper-bound}
Suppose Assumptions \ref{assump:second-moment}, \ref{assump:fourth-moment} and
\ref{assump:noise} hold.
Let $\Neff=N$.
Suppose $\gamma<1/(\alpha\tr(\Hb))$. Then
\begin{align*}
\frac12\big\langle \Hb,\bar\Db_N\big\rangle
&\le
\frac{p^2}{2}
\left\|
\left(
\Ib-\frac{1}{\gamma \Neff}\Hb^{-1}\bigl(\Ib-(\Ib-\gamma\Hb)^{\Neff}\bigr)
\right)\bm{\delta}
\right\|_{\Hb}^2 \\
&\quad +
\frac{\alpha p^2\|\bm{\delta}\|_{\Hb}^2}{1-\gamma\alpha\tr(\Hb)}
\cdot
\frac{\DIM}{\Neff},
\end{align*}
where
\[
k^*=\max\left\{k:\lambda_k\ge \frac{1}{\gamma \Neff}\right\},
\qquad
\DIM:=k^*+\gamma^2\Neff^2\sum_{i>k^*}\lambda_i^2.
\]
\end{lemma}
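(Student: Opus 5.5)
We follow the last-iterate drift analysis in Lemma~\ref{thm:drift-decay-lr-upper-bound}, now with constant stepsize $\gamma_t=\gamma$ and averaging. Average the drift process, $\bar{\bm{\eta}}^{\drift}_N=\frac1N\sum_{t=0}^{N-1}\bm{\eta}^{\drift}_t$, and split each iterate into its mean $\overline{\bm{\eta}}^{\drift}_t:=\EE[\bm{\eta}^{\drift}_t]$ and the centered part $\widetilde{\bm{\eta}}^{\drift}_t$. Then $\bar{\bm{\eta}}^{\drift}_N$ splits into a deterministic average plus a mean-zero average. The cross term vanishes, so
\[
\big\langle \Hb,\bar\Db_N\big\rangle=\Big\|\frac1N\sum_{t=0}^{N-1}\overline{\bm{\eta}}^{\drift}_t\Big\|_{\Hb}^2+\EE\Big\|\frac1N\sum_{t=0}^{N-1}\widetilde{\bm{\eta}}^{\drift}_t\Big\|_{\Hb}^2 .
\]

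\textbf{Mean part.} Using \eqref{eq:drift-mean-recursion} with $\gamma_t=\gamma$, the closed form is $\overline{\bm{\eta}}^{\drift}_t=p(\Ib-(\Ib-\gamma\Hb)^t)\bm{\delta}$. Summing the geometric series over $t=0,\dots,N-1$ gives
\[
\frac1N\sum_{t=0}^{N-1}(\Ib-\gamma\Hb)^t=\frac{1}{\gamma N}\Hb^{-1}\bigl(\Ib-(\Ib-\gamma\Hb)^N\bigr),
\]
interpreted spectrally so it is well defined even without invertibility. The averaged mean is therefore exactly $p\bigl(\Ib-\frac{1}{\gamma\Neff}\Hb^{-1}(\Ib-(\Ib-\gamma\Hb)^{\Neff})\bigr)\bm{\delta}$. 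Its squared $\Hb$-norm, times $\tfrac12$, is the first term of the lemma.

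\textbf{Fluctuation part.} The centered process satisfies \eqref{eq:drift-centered-recursion}: it is driven by $\gamma(\Hb-\xb_t\xb_t^\top)\overline{\bm{\eta}}^{\drift}_{t-1}$, which has zero conditional mean given the past. Its second moment $\widetilde{\Db}_t$ obeys $\widetilde{\Db}_t=(\cI-\gamma\cT)\circ\widetilde{\Db}_{t-1}+\gamma^2\widetilde{\bSigma}^{\drift}_t$.

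By \eqref{eq:drift-sigma-final} we have $\widetilde{\bSigma}^{\drift}_t\preceq\alpha p^2\|\bm{\delta}\|_{\Hb}^2\Hb$, since $\|\overline{\bm{\eta}}^{\drift}_{t-1}\|_{\Hb}\le p\|\bm{\delta}\|_{\Hb}$ still holds for constant stepsize. The averaged centered process is thus structurally identical to the averaged variance process of \eqref{eq:update_Vt_avg}: a zero-initialized linear recursion with martingale-difference forcing whose conditional covariance is dominated by $\sigma^2\Hb$ with $\sigma^2\leftarrow\alpha p^2\|\bm{\delta}\|_{\Hb}^2$.

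\textbf{Main obstacle.} Lemma~\ref{lemma:HV-constant-upper-bound} bounds the \emph{averaged} iterate, not just the last-iterate covariance. Its proof (Lemma B.6 of \citet{zou2023benign}) expands $\EE[\bar{\bm{\eta}}\otimes\bar{\bm{\eta}}]$ into the cross covariances $\EE[\bm{\eta}_t\otimes\bm{\eta}_s]=(\Ib-\gamma\Hb)^{s-t}\EE[\bm{\eta}_t\otimes\bm{\eta}_t]$ for $s\ge t$. It then uses only two facts: the forcing term is a martingale difference, and $\widetilde{\Db}_t\preceq$ the solution of the recursion with forcing $\sigma^2\Hb$, which holds by the PSD-mapping property in Lemma~\ref{lemma:operators}.

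Both facts hold here. For the first, $\EE[\widetilde{\bm{\eta}}^{\drift}_s\mid\mathcal F_t]=(\Ib-\gamma\Hb)^{s-t}\widetilde{\bm{\eta}}^{\drift}_t$, because the later forcing terms have zero conditional mean. Checking this cross-covariance identity is the one step that needs care.

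Once it is verified, Lemma~\ref{lemma:HV-constant-upper-bound} applies verbatim and gives $\frac12\EE\|\frac1N\sum_t\widetilde{\bm{\eta}}^{\drift}_t\|_{\Hb}^2\le\frac{\alpha p^2\|\bm{\delta}\|_{\Hb}^2}{1-\gamma\alpha\tr(\Hb)}\frac{\DIM}{\Neff}$. Adding the mean part completes the proof.
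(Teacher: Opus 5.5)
Your proposal is correct and follows essentially the same route as the paper. It splits the averaged drift into the deterministic averaged mean, computed in closed form via the geometric series, and the averaged centered process; it bounds $\widetilde{\bSigma}^{\drift}_t\preceq\alpha p^2\|\bm{\delta}\|_{\Hb}^2\Hb$ and then invokes Lemma~\ref{lemma:HV-constant-upper-bound} with $\bar\sigma^2\leftarrow\alpha p^2\|\bm{\delta}\|_{\Hb}^2$. Your explicit check that $\EE[\widetilde{\bm{\eta}}^{\drift}_s\mid\mathcal F_t]=(\Ib-\gamma\Hb)^{s-t}\widetilde{\bm{\eta}}^{\drift}_t$ is what justifies applying the averaged-iterate variance lemma, and the paper leaves this step implicit when it says the recursion ``has exactly the same form.''
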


\begin{proof}
Recall that the drift process satisfies
\[
\bm{\eta}^{\drift}_t
=
(\Ib-\gamma\xb_t\xb_t^\top)\bm{\eta}^{\drift}_{t-1}
+
\gamma p\Hb\bm{\delta},
\qquad
\bm{\eta}^{\drift}_0=\boldsymbol{0}.
\]

We first separate the drift process into its mean part and centered fluctuation part.
Define
\[
\mb_t:=\EE[\bm{\eta}^{\drift}_t].
\]
Since $\xb_t$ is independent of $\bm{\eta}^{\drift}_{t-1}$ and $\EE[\xb_t\xb_t^\top]=\Hb$,
taking expectation on both sides yields
\begin{equation}\label{eq:drift-mean-recursion-avg}
\mb_t
=
(\Ib-\gamma\Hb)\mb_{t-1}
+
\gamma p\Hb\bm{\delta},
\qquad
\mb_0=\boldsymbol{0}.
\end{equation}
Let
\[
\widetilde{\bm{\eta}}^{\drift}_t:=\bm{\eta}^{\drift}_t-\mb_t.
\]
Subtracting \eqref{eq:drift-mean-recursion-avg} from the recursion of
$\bm{\eta}^{\drift}_t$ gives
\begin{equation}\label{eq:drift-centered-recursion-avg}
\widetilde{\bm{\eta}}^{\drift}_t
=
(\Ib-\gamma\xb_t\xb_t^\top)\widetilde{\bm{\eta}}^{\drift}_{t-1}
+
\gamma(\Hb-\xb_t\xb_t^\top)\mb_{t-1},
\qquad
\widetilde{\bm{\eta}}^{\drift}_0=\boldsymbol{0}.
\end{equation}
Moreover, by construction,
\[
\EE[\widetilde{\bm{\eta}}^{\drift}_t]=\boldsymbol{0},
\qquad \forall t\ge 0.
\]

Now define the averaged quantities
\[
\bar{\mb}_N:=\frac1N\sum_{t=0}^{N-1}\mb_t,
\qquad
\bar{\widetilde{\bm{\eta}}}^{\drift}_N:=\frac1N\sum_{t=0}^{N-1}\widetilde{\bm{\eta}}^{\drift}_t.
\]
Then
\[
\bar{\bm{\eta}}^{\drift}_N=\bar{\mb}_N+\bar{\widetilde{\bm{\eta}}}^{\drift}_N.
\]
Since $\EE[\bar{\widetilde{\bm{\eta}}}^{\drift}_N]=\boldsymbol{0}$, the cross term vanishes and
\begin{equation}\label{eq:avg-drift-split}
\bar\Db_N
=
\bar{\mb}_N\otimes \bar{\mb}_N
+
\bar{\widetilde\Db}_N,
\qquad
\bar{\widetilde\Db}_N
:=
\EE\!\left[
\bar{\widetilde{\bm{\eta}}}^{\drift}_N
\otimes
\bar{\widetilde{\bm{\eta}}}^{\drift}_N
\right].
\end{equation}
Therefore,
\begin{equation}\label{eq:avg-drift-risk-split}
\frac12\big\langle \Hb,\bar\Db_N\big\rangle
=
\frac12\|\bar{\mb}_N\|_{\Hb}^2
+
\frac12\big\langle \Hb,\bar{\widetilde\Db}_N\big\rangle.
\end{equation}

Unrolling \eqref{eq:drift-mean-recursion-avg}, we obtain
\[
\mb_t
=
p\bigl(\Ib-(\Ib-\gamma\Hb)^t\bigr)\bm{\delta},
\qquad t\ge 0.
\]
Hence
\begin{align*}
\bar{\mb}_N
&=
\frac{p}{N}\sum_{t=0}^{N-1}\bigl(\Ib-(\Ib-\gamma\Hb)^t\bigr)\bm{\delta} \\
&=
p\left(
\Ib-\frac1N\sum_{t=0}^{N-1}(\Ib-\gamma\Hb)^t
\right)\bm{\delta}.
\end{align*}
Using the matrix geometric-series identity
\[
\sum_{t=0}^{N-1}(\Ib-\gamma\Hb)^t
=
(\gamma\Hb)^{-1}\bigl(\Ib-(\Ib-\gamma\Hb)^N\bigr),
\]
we get
\begin{equation}\label{eq:avg-drift-mean-closed}
\bar{\mb}_N
=
p\left(
\Ib-\frac{1}{\gamma N}\Hb^{-1}\bigl(\Ib-(\Ib-\gamma\Hb)^N\bigr)
\right)\bm{\delta}.
\end{equation}
Therefore,
\begin{equation}\label{eq:avg-drift-mean-risk}
\frac12\|\bar{\mb}_N\|_{\Hb}^2
=
\frac{p^2}{2}
\left\|
\left(
\Ib-\frac{1}{\gamma N}\Hb^{-1}\bigl(\Ib-(\Ib-\gamma\Hb)^N\bigr)
\right)\bm{\delta}
\right\|_{\Hb}^2.
\end{equation}

We next bound
\[
\frac12\big\langle \Hb,\bar{\widetilde\Db}_N\big\rangle.
\]
Define
\[
\widetilde{\Db}_t
:=
\EE[\widetilde{\bm{\eta}}^{\drift}_t\otimes \widetilde{\bm{\eta}}^{\drift}_t].
\]
From \eqref{eq:drift-centered-recursion-avg}, we obtain the recursion
\begin{equation}\label{eq:avg-drift-fluct-recursion}
\widetilde{\Db}_t
=
(\cI-\gamma\cT)\circ \widetilde{\Db}_{t-1}
+
\gamma^2 \widetilde{\bSigma}^{\drift}_t,
\qquad
\widetilde{\Db}_0=\boldsymbol{0},
\end{equation}
where
\[
\widetilde{\bSigma}^{\drift}_t
:=
\EE\!\left[
\bigl((\Hb-\xb_t\xb_t^\top)\mb_{t-1}\bigr)
\otimes
\bigl((\Hb-\xb_t\xb_t^\top)\mb_{t-1}\bigr)
\right].
\]
Indeed, the cross term vanishes because
\[
\EE[(\Hb-\xb_t\xb_t^\top)\mb_{t-1}]
=
(\Hb-\EE[\xb_t\xb_t^\top])\mb_{t-1}
=
\boldsymbol{0}.
\]

Let
\[
\Ab_t:=\mb_{t-1}\otimes \mb_{t-1}.
\]
Then
\begin{align*}
\widetilde{\bSigma}^{\drift}_t
&=
\EE\!\left[(\Hb-\xb_t\xb_t^\top)\Ab_t(\Hb-\xb_t\xb_t^\top)\right] \\
&=
\EE[\xb_t\xb_t^\top \Ab_t \xb_t\xb_t^\top]-\Hb \Ab_t \Hb \\
&=
(\cM-\widetilde{\cM})\circ \Ab_t.
\end{align*}
Since $\cM-\widetilde{\cM}$ is a PSD mapping, we have
$\widetilde{\bSigma}^{\drift}_t\succeq \boldsymbol{0}$.
On the other hand, by Assumption~\ref{assump:fourth-moment}\ref{item:fourth-moement-upper},
\[
\EE[\xb_t\xb_t^\top \Ab_t \xb_t\xb_t^\top]
=
\cM\circ \Ab_t
\preceq
\alpha\,\tr(\Hb\Ab_t)\,\Hb
=
\alpha\,\|\mb_{t-1}\|_{\Hb}^2\,\Hb.
\]
Therefore,
\begin{equation}\label{eq:avg-drift-sigma-upper}
\widetilde{\bSigma}^{\drift}_t
\preceq
\alpha\,\|\mb_{t-1}\|_{\Hb}^2\,\Hb.
\end{equation}

Next, by the explicit formula for $\mb_t$,
\[
\mb_{t-1}=p\bigl(\Ib-(\Ib-\gamma\Hb)^{t-1}\bigr)\bm{\delta},
\]
and hence
\begin{align*}
\|\mb_{t-1}\|_{\Hb}^2
&=
p^2\sum_i \lambda_i
\left(1-(1-\gamma\lambda_i)^{t-1}\right)^2\delta_i^2 \\
&\le
p^2\sum_i \lambda_i \delta_i^2
=
p^2\|\bm{\delta}\|_{\Hb}^2.
\end{align*}
Substituting this into \eqref{eq:avg-drift-sigma-upper} gives the uniform bound
\begin{equation}\label{eq:avg-drift-sigma-final}
\widetilde{\bSigma}^{\drift}_t
\preceq
\alpha p^2\|\bm{\delta}\|_{\Hb}^2\,\Hb,
\qquad \forall t.
\end{equation}

Now \eqref{eq:avg-drift-fluct-recursion} has exactly the same form as the variance
recursion for constant-stepsize SGD with iterate averaging, with noise level upper bounded by
\[
\sigma_{\drift}^2:=\alpha p^2\|\bm{\delta}\|_{\Hb}^2.
\]
Therefore, applying Lemma~\ref{lemma:HV-constant-upper-bound} with
$\bar\sigma^2$ replaced by $\sigma_{\drift}^2$, we obtain
\begin{equation}\label{eq:avg-drift-fluct-final}
\frac12\big\langle \Hb,\bar{\widetilde\Db}_N\big\rangle
\le
\frac{\alpha p^2\|\bm{\delta}\|_{\Hb}^2}{1-\gamma\alpha\tr(\Hb)}
\cdot
\frac{\DIM}{N}.
\end{equation}

Finally, combining \eqref{eq:avg-drift-risk-split}, \eqref{eq:avg-drift-mean-risk}
and \eqref{eq:avg-drift-fluct-final}, we conclude that
\begin{align*}
\frac12\big\langle \Hb,\bar\Db_N\big\rangle
&\le
\frac{p^2}{2}
\left\|
\left(
\Ib-\frac{1}{\gamma N}\Hb^{-1}\bigl(\Ib-(\Ib-\gamma\Hb)^N\bigr)
\right)\bm{\delta}
\right\|_{\Hb}^2 \\
&\quad +
\frac{\alpha p^2\|\bm{\delta}\|_{\Hb}^2}{1-\gamma\alpha\tr(\Hb)}
\cdot
\frac{\DIM}{N}.
\end{align*}
This completes the proof.
\end{proof}

\paragraph{Fluctuation upper bound}
Recall that the fluctuation process satisfies
\[
\bm{\eta}^{\fluct}_t
=
(\Ib-\gamma \xb_t\xb_t^\top)\bm{\eta}^{\fluct}_{t-1}
+
\gamma z_t \xb_t\xb_t^\top\bm{\delta}
-
\gamma p\Hb\bm{\delta},
\qquad
\bm{\eta}^{\fluct}_0=\boldsymbol{0}.
\]
We decompose the driving term into two parts:
\[
z_t \xb_t\xb_t^\top\bm{\delta}-p\Hb\bm{\delta}
=
z_t(\xb_t\xb_t^\top-\Hb)\bm{\delta}
+
(z_t-p)\Hb\bm{\delta}.
\]
Accordingly, define
\[
\bm{\eta}^{\fluct}_t
=
\bm{\eta}^{\cov}_t+\bm{\eta}^{\sch}_t,
\]
where
\[
\begin{cases}
\bm{\eta}^{\cov}_t
=
(\Ib-\gamma \xb_t\xb_t^\top)\bm{\eta}^{\cov}_{t-1}
+
\gamma z_t(\xb_t\xb_t^\top-\Hb)\bm{\delta},
\\[0.5ex]
\bm{\eta}^{\cov}_0=\boldsymbol{0},
\end{cases}
\qquad
\begin{cases}
\bm{\eta}^{\sch}_t
=
(\Ib-\gamma \xb_t\xb_t^\top)\bm{\eta}^{\sch}_{t-1}
+
\gamma(z_t-p)\Hb\bm{\delta},
\\[0.5ex]
\bm{\eta}^{\sch}_0=\boldsymbol{0}.
\end{cases}
\]
Thus, for the averaged iterates,
\[
\bar{\bm{\eta}}^{\fluct}_N
=
\bar{\bm{\eta}}^{\cov}_N+\bar{\bm{\eta}}^{\sch}_N.
\]
Hence, by $(a+b)^2\le 2a^2+2b^2$,
\begin{equation}\label{eq:fluctuation-split-avg}
\frac12\big\langle \Hb,\bar\Fb_N\big\rangle
=
\frac12\EE\big[\|\bar{\bm{\eta}}^{\fluct}_N\|_{\Hb}^2\big]
\le
\EE\big[\|\bar{\bm{\eta}}^{\cov}_N\|_{\Hb}^2\big]
+
\EE\big[\|\bar{\bm{\eta}}^{\sch}_N\|_{\Hb}^2\big].
\end{equation}

Define
$
\Fb_t^{\cov}
:=
\EE\!\left[
\bm{\eta}^{\cov}_t\otimes\bm{\eta}^{\cov}_t
\right].
$
Then $\Fb_t^{\cov}$ satisfies the recursion
\begin{equation}\label{eq:fluct-cov-recursion}
\Fb_t^{\cov}
=
(\cI-\gamma\cT)\circ \Fb_{t-1}^{\cov}
+
\gamma^2\bSigma_t^{\cov},
\qquad
\Fb_0^{\cov}=\boldsymbol{0},
\end{equation}
where
\[
\bSigma_t^{\cov}
:=
\EE\!\left[
\bigl(z_t(\xb_t\xb_t^\top-\Hb)\bm{\delta}\bigr)
\otimes
\bigl(z_t(\xb_t\xb_t^\top-\Hb)\bm{\delta}\bigr)
\right].
\]
Indeed, the cross term vanishes as we have proved in last-iterate analysis.

Next, since $z_t\in\{0,1\}$ and $\EE[z_t]=p$, we have
\begin{align*}
\bSigma_t^{\cov}
&=
\EE\!\left[
z_t^2
\bigl((\xb_t\xb_t^\top-\Hb)\bm{\delta}\bigr)
\otimes
\bigl((\xb_t\xb_t^\top-\Hb)\bm{\delta}\bigr)
\right] \\
&=
p\,
\EE\!\left[
\bigl((\xb_t\xb_t^\top-\Hb)\bm{\delta}\bigr)
\otimes
\bigl((\xb_t\xb_t^\top-\Hb)\bm{\delta}\bigr)
\right].
\end{align*}
Let $\Ab:=\bm{\delta}\otimes\bm{\delta}$. Then
\begin{align*}
\EE\!\left[
\bigl((\xb_t\xb_t^\top-\Hb)\bm{\delta}\bigr)
\otimes
\bigl((\xb_t\xb_t^\top-\Hb)\bm{\delta}\bigr)
\right]
&=
\EE\!\left[
(\xb_t\xb_t^\top-\Hb)\Ab(\xb_t\xb_t^\top-\Hb)
\right] \\
&=
\EE[\xb_t\xb_t^\top \Ab \xb_t\xb_t^\top]-\Hb\Ab\Hb \\
&=
(\cM-\widetilde{\cM})\circ \Ab.
\end{align*}
Since $\cM-\widetilde{\cM}$ is a PSD mapping, and by Assumption~\ref{assump:fourth-moment},
\[
\cM\circ \Ab
=
\EE[\xb_t\xb_t^\top \Ab \xb_t\xb_t^\top]
\preceq
\alpha\,\tr(\Hb\Ab)\,\Hb
=
\alpha\|\bm{\delta}\|_{\Hb}^2\,\Hb,
\]
we obtain the upper bound
\begin{equation}\label{eq:Sigma-cov-upper}
\bSigma_t^{\cov}
\preceq
p\alpha\|\bm{\delta}\|_{\Hb}^2\,\Hb.
\end{equation}

Therefore, the recursion \eqref{eq:fluct-cov-recursion} has exactly the same form as the
variance recursion for constant-stepsize SGD with iterate averaging, with noise level upper bounded by
\[
\sigma_{\cov}^2:=p\alpha\|\bm{\delta}\|_{\Hb}^2.
\]
Applying Lemma~\ref{lemma:HV-constant-upper-bound} with
$\bar\sigma^2$ replaced by $\sigma_{\cov}^2$, we get
\begin{equation}\label{eq:fluct-cov-final}
\frac12
\EE\big[\|\bar{\bm{\eta}}^{\cov}_N\|_{\Hb}^2\big]
=
\frac12\big\langle \Hb,\bar\Fb_N^{\cov}\big\rangle
\le
\frac{p\alpha\|\bm{\delta}\|_{\Hb}^2}{1-\gamma\alpha\tr(\Hb)}
\cdot
\frac{\DIM}{N}.
\end{equation}

We now bound the second component
\[
\bm{\eta}^{\sch}_t
=
(\Ib-\gamma \xb_t\xb_t^\top)\bm{\eta}^{\sch}_{t-1}
+
\gamma(z_t-p)\Hb\bm{\delta},
\qquad
\bm{\eta}^{\sch}_0=\boldsymbol{0}.
\]
Its averaged iterate is
\[
\bar{\bm{\eta}}^{\sch}_N
:=
\frac{1}{N}\sum_{t=1}^N \bm{\eta}^{\sch}_t.
\]

Then we use the following lemma to bound $\langle \Hb,\bar{\bm{\eta}}^{\sch}_N\otimes\bar{\bm{\eta}}^{\sch}_N\rangle$.

\begin{lemma}[A schedule-fluctuation upper bound for Avg-SGD]
\label{lem:schedule-fluctuation-clean}
Suppose Assumptions \ref{assump:second-moment} and \ref{assump:fourth-moment} hold.
Assume the source indicators $(z_1,\dots,z_N)$ follow the fixed-budget model with exactly $m$ ones, and let $p:=\frac{m}{N}.$
Consider the schedule fluctuation recursion
\[
\bm{\eta}^{\sch}_t
=
(\Ib-\gamma \xb_t\xb_t^\top)\bm{\eta}^{\sch}_{t-1}
+
\gamma(z_t-p)\Hb\bm{\delta},
\qquad
\bm{\eta}^{\sch}_0=\boldsymbol{0},
\]
and define its averaged iterate by
$\bar{\bm{\eta}}^{\sch}_N
:=
\frac1N\sum_{t=1}^N \bm{\eta}^{\sch}_t.$
Suppose $\gamma<1/(\alpha\tr(\Hb))$. Let
\[
k^*:=\max\left\{k:\lambda_k\ge \frac{1}{\gamma N}\right\},
\qquad
\DIM:=k^*+\gamma^2N^2\sum_{i>k^*}\lambda_i^2.
\]
Then
\begin{equation}\label{eq:schedule-fluctuation-clean-bound}
\frac12\EE\!\left[\|\bar{\bm{\eta}}^{\sch}_N\|_{\Hb}^2\right]
\le
\frac{p(1-p)\|\bm{\delta}\|_{\Hb}^2}{N-1}
\left(
1+
\frac{2\alpha\gamma\tr(\Hb)}{1-\gamma\alpha\tr(\Hb)}
\DIM
\right).
\end{equation}
\end{lemma}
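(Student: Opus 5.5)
The plan is to reduce the claim to the fixed-budget variance identity (Lemma~\ref{lem:fixed-budget-variance-identity-wu-direct-final}), exactly as in the last-iterate argument of Lemma~\ref{lem:fluctuation-second-part-upper}, and then control the remaining feature-only quantity by a bias/variance split.

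\textbf{Step 1: condition on the features.} Unrolling the recursion with $\Bb_j:=\Ib-\gamma\xb_j\xb_j^\top$ gives
$\bm{\eta}^{\sch}_t=\sum_{s=1}^t(z_s-p)\,\gamma\,\Bb_t\cdots\Bb_{s+1}\Hb\bm{\delta}$.
Averaging over $t$ then gives
\[
\bar{\bm{\eta}}^{\sch}_N=\sum_{s=1}^N(z_s-p)\ub_s,\qquad \ub_s:=\frac{\gamma}{N}\sum_{t=s}^{N}\Bb_t\cdots\Bb_{s+1}\Hb\bm{\delta}.
\]
Conditional on $(\xb_1,\dots,\xb_N)$ the vectors $\ub_s$ are deterministic, and the schedule is independent of the features. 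Lemma~\ref{lem:fixed-budget-variance-identity-wu-direct-final} therefore yields
\[
\EE\|\bar{\bm{\eta}}^{\sch}_N\|_{\Hb}^2\le\frac{Np(1-p)}{N-1}\sum_{s=1}^N\EE\|\ub_s\|_{\Hb}^2 .
\]
It then suffices to show
\[
\sum_s\EE\|\ub_s\|_{\Hb}^2\le\frac{2\|\bm{\delta}\|_{\Hb}^2}{N}\Big(1+\frac{2\alpha\gamma\tr(\Hb)}{1-\gamma\alpha\tr(\Hb)}\DIM\Big).
\]

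\textbf{Step 2: mean part.} For each fixed $s$, I view $\ub_s$ as $\gamma/N$ times the running sum of a bias-type SGD process. This process starts at $\Hb\bm{\delta}$ at time $s$ and evolves by $\Bb_j$. I write $\ub_s=\bar\ub_s+\widetilde\ub_s$, where $\bar\ub_s=\EE\ub_s$ replaces every $\Bb_j$ by $\Ib-\gamma\Hb$, and I use $\EE\|\ub_s\|^2_{\Hb}=\|\bar\ub_s\|_{\Hb}^2+\EE\|\widetilde\ub_s\|_{\Hb}^2$. The geometric-series identity already used in Lemma~\ref{lemma:HD-constant-upper-bound} gives $\bar\ub_s=\frac1N\big(\Ib-(\Ib-\gamma\Hb)^{N-s+1}\big)\bm{\delta}$. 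Hence $\|\bar\ub_s\|_{\Hb}^2\le\|\bm{\delta}\|_{\Hb}^2/N^2$, and summing over $s$ produces the leading term $\|\bm{\delta}\|_{\Hb}^2/N$. This is the ``$1$'' inside the bracket; the factor $1/2$ on the left leaves room for the constant.

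\textbf{Step 3: centered part (main obstacle).} The centered piece $\widetilde\ub_s$ is the averaged iterate of a process with zero start and forcing $\gamma(\Hb-\xb_j\xb_j^\top)\mb_{j-1}^{(s)}$, where $\mb^{(s)}_j=(\Ib-\gamma\Hb)^{j-s}\Hb\bm{\delta}$. As in the drift-fluctuation argument, its second moment follows the variance recursion with forcing covariance $\preceq\alpha\|\mb^{(s)}_{j-1}\|_{\Hb}^2\Hb$. The difficulty is that summing over $s$ must not lose a factor $N$. My first attempt will be to interchange the sums and use
\[
\sum_{s\le j}\|\mb^{(s)}_{j-1}\|_{\Hb}^2\le\sum_{k\ge0}\|(\Ib-\gamma\Hb)^k\Hb\bm{\delta}\|^2_{\Hb}\le\frac{\|\bm{\delta}\|_{\Hb}^2}{\gamma}\cdot\gamma\tr(\Hb)\text{-type bounds}.
\]
Concretely, I plan to bound $\sum_k\lambda_i^3(1-\gamma\lambda_i)^{2k}\delta_i^2\le\lambda_i^2\delta_i^2/\gamma$ together with $\lambda_i\le\tr(\Hb)$. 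The goal is an effective total forcing of size $\alpha\gamma\tr(\Hb)\|\bm{\delta}\|_{\Hb}^2\Hb$ per step, up to the $\gamma^2$ scaling. Once that is in place, applying Lemma~\ref{lemma:HV-constant-upper-bound} with $\bar\sigma^2\leftarrow\alpha\gamma\tr(\Hb)\|\bm{\delta}\|^2_{\Hb}$ gives $\frac{\alpha\gamma\tr(\Hb)\|\bm{\delta}\|_{\Hb}^2}{1-\gamma\alpha\tr(\Hb)}\cdot\frac{\DIM}{N}$. Multiplying by the prefactor $Np(1-p)/(N-1)$ from Step~1 then yields \eqref{eq:schedule-fluctuation-clean-bound}.

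The delicate point is justifying this interchange. Different $s$ share the same features, so I would use PSD monotonicity of $\cI-\gamma\cT$ (Lemma~\ref{lemma:operators}) on the summed outer-product matrix $\sum_s\widetilde\ub_s\widetilde\ub_s^\top$, in the same way as the $\Sb_t$ construction in Lemma~\ref{lem:fluctuation-second-part-upper}.
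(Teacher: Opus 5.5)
Your proposal is correct. Step~1 coincides with the paper's proof, but after that you take a different route.

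\textbf{The paper's route.} The paper never splits $\ub_s$ into mean and centered parts. Instead it observes that $\ub_s=\frac{n_s}{N}\,\bar{\ub}_{n_s}$, where $\bar{\ub}_n$ is the averaged bias-only constant-stepsize SGD iterate of horizon $n=N-s+1$ initialized at $\gamma\Hb\bm{\delta}$. It then proceeds as follows.
\begin{enumerate}
\item It applies the averaged-SGD bias bound (Lemma~\ref{lemma:HB-constant-upper-bound}) separately for each $n$.
\item It simplifies in the eigenbasis using $\lambda_i<1/(\gamma n)$ for $i>k_n$. The two deterministic terms collapse to $\|\bm{\delta}\|_{\Hb}^2/n^2$, and the numerator of the third term is at most $\gamma^2\tr(\Hb)\|\bm{\delta}\|_{\Hb}^2$.
\item It sums over $n$. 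The weight $(n/N)^2$ cancels the $1/n^2$, and $D_n\le\DIM$ finishes the argument.
\end{enumerate}

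\textbf{Your route.} Your argument essentially re-derives the interior of that bias lemma (a gradient-descent part plus a $(\cM-\widetilde\cM)$-driven variance part), but aggregated over $s$. You compute the mean part exactly. You stack the centered parts into one summed second-moment recursion, where the geometric decay of $\|(\Ib-\gamma\Hb)^k\Hb\bm{\delta}\|_{\Hb}^2$ in $k$ supplies the $\gamma\tr(\Hb)$ factor. A single application of Lemma~\ref{lemma:HV-constant-upper-bound} at horizon $N$ then gives $\DIM$ directly, with no comparison $D_n\le\DIM$ needed.

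\textbf{Trade-offs.} The paper's route is shorter because the bias lemma is a black box. Yours is more self-contained and actually lands a factor $2$ below the stated bound. It does, however, require two checks that you should write out.
\begin{enumerate}
\item[(i)] In the summed recursion, the cross terms $\EE[(\Ib-\gamma\xb_t\xb_t^\top)\widetilde{\mathbf a}^{(s)}_{t-1}\otimes(\Hb-\xb_t\xb_t^\top)\mb^{(s)}_{t-1}]$ are not zero conditionally on the past. They are, however, linear in $\widetilde{\mathbf a}^{(s)}_{t-1}$ with deterministic coefficients. They therefore vanish after full expectation, because each centered process has mean zero and $\mb^{(s)}_{t-1}$ is deterministic.
\item[(ii)] The averaged-iterate variance bound needs the cross-time covariances of the summed process to factor as $(\Ib-\gamma\Hb)^{t'-t}$ times the summed second moment. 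This holds because the forcing $(\Hb-\xb_t\xb_t^\top)\mb^{(s)}_{t-1}$ is conditionally mean zero.
\end{enumerate}

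\textbf{A correction in Step~3.} Your display there is garbled. The correct bound is $\sum_{k\ge0}\|(\Ib-\gamma\Hb)^k\Hb\bm{\delta}\|_{\Hb}^2\le\frac{1}{\gamma}\sum_i\lambda_i^2\delta_i^2\le\frac{\tr(\Hb)}{\gamma}\|\bm{\delta}\|_{\Hb}^2$, not $\tr(\Hb)\|\bm{\delta}\|_{\Hb}^2$. Your ``concretely'' sentence already gets this right. The noise level $\alpha\gamma\tr(\Hb)\|\bm{\delta}\|_{\Hb}^2$ that you feed into the variance lemma is correct once you work with the rescaled process $\gamma\widetilde{\mathbf a}^{(s)}$, whose $N$-step average is exactly $\widetilde{\ub}_s$.
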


\begin{proof}
Let
\[
\Bb_t:=\Ib-\gamma \xb_t\xb_t^\top.
\]
By repeated substitution of the recursion
\[
\bm{\eta}^{\sch}_t
=
\Bb_t \bm{\eta}^{\sch}_{t-1}
+
\gamma(z_t-p)\Hb\bm{\delta},
\qquad
\bm{\eta}^{\sch}_0=0,
\]
we obtain, for every $t\in\{1,\dots,N\}$,
\begin{equation}\label{eq:schedule-trajectory-expand}
\bm{\eta}^{\sch}_t
=
\gamma
\sum_{s=1}^t
\left(\prod_{j=s+1}^t \Bb_j\right)
(z_s-p)\Hb\bm{\delta},
\end{equation}
where the empty product is interpreted as the identity operator.

Therefore,
\begin{align}
\bar{\bm{\eta}}^{\sch}_N
&=
\frac1N\sum_{t=1}^N \bm{\eta}^{\sch}_t \notag\\
&=
\frac{\gamma}{N}
\sum_{t=1}^N
\sum_{s=1}^t
\left(\prod_{j=s+1}^t \Bb_j\right)
(z_s-p)\Hb\bm{\delta} \notag\\
&=
\sum_{s=1}^N (z_s-p)\vb_s,
\label{eq:schedule-weighted-sum}
\end{align}
where
\begin{equation}\label{eq:def-vs-schedule}
\vb_s
:=
\frac{\gamma}{N}
\sum_{t=s}^N
\left(\prod_{j=s+1}^t \Bb_j\right)\Hb\bm{\delta}.
\end{equation}

Condition on the sample sequence $(\xb_1,\dots,\xb_N)$. Then $\vb_1,\dots,\vb_N$ are
deterministic vectors. Since the feature sequence is independent of the fixed-budget schedule,
the indicators $(z_1,\dots,z_N)$ still follow the same fixed-budget law conditional on
$(\xb_1,\dots,\xb_N)$.

Under the fixed-budget model, for every $s$,
\[
\EE[z_s]=p,
\qquad
\EE[(z_s-p)^2]=p(1-p),
\]
and for every $s\neq r$,
\[
\EE[(z_s-p)(z_r-p)]
=
-\frac{p(1-p)}{N-1}.
\]
Hence, conditioned on $(\xb_1,\dots,\xb_N)$,
\begin{align}
\EE\!\left[
\left\|
\sum_{s=1}^N (z_s-p)\vb_s
\right\|_{\Hb}^2
\,\middle|\,
\xb_1,\dots,\xb_N
\right]
&=
\sum_{s=1}^N\sum_{r=1}^N
\EE[(z_s-p)(z_r-p)]
\langle \vb_s,\vb_r\rangle_{\Hb}
\notag\\
&=
p(1-p)\sum_{s=1}^N\|\vb_s\|_{\Hb}^2
-
\frac{p(1-p)}{N-1}\sum_{s\neq r}\langle \vb_s,\vb_r\rangle_{\Hb}.
\label{eq:conditional-fixed-budget-expand}
\end{align}
Let
\[
\bar \vb:=\frac1N\sum_{s=1}^N \vb_s.
\]
Using the identity
\[
\sum_{s=1}^N\|\vb_s-\bar\vb\|_{\Hb}^2
=
\sum_{s=1}^N\|\vb_s\|_{\Hb}^2
-\frac1N\left\|\sum_{s=1}^N \vb_s\right\|_{\Hb}^2,
\]
one checks that the right-hand side of \eqref{eq:conditional-fixed-budget-expand} equals
\[
\frac{Np(1-p)}{N-1}\sum_{s=1}^N \|\vb_s-\bar\vb\|_{\Hb}^2
\le
\frac{Np(1-p)}{N-1}\sum_{s=1}^N \|\vb_s\|_{\Hb}^2.
\]
Combining this with \eqref{eq:schedule-weighted-sum}, and taking expectation over the
samples, we obtain
\begin{equation}\label{eq:schedule-after-conditional}
\EE\!\left[\|\bar{\bm{\eta}}^{\sch}_N\|_{\Hb}^2\right]
\le
\frac{Np(1-p)}{N-1}
\sum_{s=1}^N \EE\!\left[\|\vb_s\|_{\Hb}^2\right].
\end{equation}

Fix $s\in\{1,\dots,N\}$ and let
$
n_s:=N-s+1.
$
Define an auxiliary bias-only process initialized at $\gamma\Hb\bm{\delta}$:
\[
\ub^{(s)}_0:=\gamma\Hb\bm{\delta},
\qquad
\ub^{(s)}_r:=\Bb_{s+r}\ub^{(s)}_{r-1},
\qquad
r=1,\dots,n_s-1.
\]
Then
\[
\ub^{(s)}_r
=
\left(\prod_{j=s+1}^{s+r}\Bb_j\right)\gamma\Hb\bm{\delta}.
\]
Therefore,
\begin{align}
\vb_s
&=
\frac{\gamma}{N}
\sum_{t=s}^N
\left(\prod_{j=s+1}^t \Bb_j\right)\Hb\bm{\delta}
\notag\\
&=
\frac1N\sum_{r=0}^{n_s-1}\ub^{(s)}_r
=
\frac{n_s}{N}\,\bar{\ub}^{(s)}_{n_s},
\label{eq:vs-bias-equivalence}
\end{align}
where
\[
\bar{\ub}^{(s)}_{n_s}
:=
\frac1{n_s}\sum_{r=0}^{n_s-1}\ub^{(s)}_r.
\]
Since the samples are i.i.d., the distribution of
\[
(\ub^{(s)}_0,\dots,\ub^{(s)}_{n_s-1})
\]
coincides with the distribution of a length-$n_s$ bias-only SGD trajectory with constant
stepsize $\gamma$ and initialization $\gamma\Hb\bm{\delta}$. Hence
\begin{equation}\label{eq:vs-bias-moment}
\EE[\|\vb_s\|_{\Hb}^2]
=
\left(\frac{n_s}{N}\right)^2
\EE\!\left[\|\bar{\ub}_{n_s}(\gamma\Hb\bm{\delta})\|_{\Hb}^2\right],
\end{equation}
where $\bar{\ub}_{n}(\gamma\Hb\bm{\delta})$ denotes the averaged bias-only SGD iterate of
horizon $n$ initialized at $\gamma\Hb\bm{\delta}$.

Substituting \eqref{eq:vs-bias-moment} into \eqref{eq:schedule-after-conditional}, and
changing variables from $s$ to $n=n_s$, yields
\begin{equation}\label{eq:schedule-reduction-to-bias}
\EE\!\left[\|\bar{\bm{\eta}}^{\sch}_N\|_{\Hb}^2\right]
\le
\frac{Np(1-p)}{N-1}
\sum_{n=1}^N
\left(\frac{n}{N}\right)^2
\EE\!\left[\|\bar{\ub}_{n}(\gamma\Hb\bm{\delta})\|_{\Hb}^2\right].
\end{equation}

For each $n\in\{1,\dots,N\}$, let
\[
k_n:=\max\left\{k:\lambda_k\ge \frac1{\gamma n}\right\},
\qquad
D_n:=k_n+n^2\gamma^2\sum_{i>k_n}\lambda_i^2.
\]
Applying Lemma~\ref{lemma:HB-constant-upper-bound} with horizon $n$ and initialization
$
\ub_0=\gamma\Hb\bm{\delta},
$
we obtain
\begin{align}
\frac12
\EE\!\left[\|\bar{\ub}_{n}(\gamma\Hb\bm{\delta})\|_{\Hb}^2\right]
\le\;&
\frac{1}{\gamma^2 n^2}
\|\gamma\Hb\bm{\delta}\|_{\Hb^{-1}_{0:k_n}}^2
+
\|\gamma\Hb\bm{\delta}\|_{\Hb_{k_n:\infty}}^2
\notag\\
&\quad+
\frac{2\alpha
\bigl(
\|\gamma\Hb\bm{\delta}\|_{\Ib_{0:k_n}}^2
+
n\gamma\|\gamma\Hb\bm{\delta}\|_{\Hb_{k_n:\infty}}^2
\bigr)}
{n\gamma(1-\gamma\alpha\tr(\Hb))}
\cdot
\frac{D_n}{n}.
\label{eq:bias-bound-at-n}
\end{align}

We now simplify each term in the eigenbasis of $\Hb$. Writing
\[
\bm{\delta}=\sum_i \delta_i \vb_i,
\qquad
\gamma\Hb\bm{\delta}
=
\gamma\sum_i \lambda_i\delta_i \vb_i,
\]
we have
\begin{align}
\frac{1}{\gamma^2 n^2}
\|\gamma\Hb\bm{\delta}\|_{\Hb^{-1}_{0:k_n}}^2
&=
\frac{1}{n^2}\sum_{i\le k_n}\lambda_i\delta_i^2,
\label{eq:term1-simplify}
\\
\|\gamma\Hb\bm{\delta}\|_{\Hb_{k_n:\infty}}^2
&=
\gamma^2\sum_{i>k_n}\lambda_i^3\delta_i^2,
\label{eq:term2-simplify}
\\
\|\gamma\Hb\bm{\delta}\|_{\Ib_{0:k_n}}^2
&=
\gamma^2\sum_{i\le k_n}\lambda_i^2\delta_i^2.
\label{eq:term3-simplify}
\end{align}

For the first two terms, since $\lambda_i<1/(\gamma n)$ for every $i>k_n$,
\[
\gamma^2\lambda_i^3
\le
\frac{\lambda_i}{n^2}.
\]
Therefore, by \eqref{eq:term1-simplify} and \eqref{eq:term2-simplify},
\begin{align}
\frac{1}{\gamma^2 n^2}
\|\gamma\Hb\bm{\delta}\|_{\Hb^{-1}_{0:k_n}}^2
+
\|\gamma\Hb\bm{\delta}\|_{\Hb_{k_n:\infty}}^2
&\le
\frac{1}{n^2}
\sum_{i\le k_n}\lambda_i\delta_i^2
+
\frac{1}{n^2}
\sum_{i>k_n}\lambda_i\delta_i^2
\notag\\
&=
\frac{\|\bm{\delta}\|_{\Hb}^2}{n^2}.
\label{eq:first-two-terms-bound}
\end{align}

For the third term, using again $\lambda_i<1/(\gamma n)$ for $i>k_n$, we have
\[
n\gamma \lambda_i^3\le \lambda_i^2.
\]
Hence
\begin{align}
\|\gamma\Hb\bm{\delta}\|_{\Ib_{0:k_n}}^2
+
n\gamma\|\gamma\Hb\bm{\delta}\|_{\Hb_{k_n:\infty}}^2
&=
\gamma^2\sum_{i\le k_n}\lambda_i^2\delta_i^2
+
n\gamma^3\sum_{i>k_n}\lambda_i^3\delta_i^2
\notag\\
&\le
\gamma^2\sum_i \lambda_i^2\delta_i^2
\notag\\
&\le
\gamma^2\tr(\Hb)\sum_i \lambda_i\delta_i^2
=
\gamma^2\tr(\Hb)\|\bm{\delta}\|_{\Hb}^2.
\label{eq:third-numerator-bound}
\end{align}
Substituting \eqref{eq:first-two-terms-bound} and \eqref{eq:third-numerator-bound} into
\eqref{eq:bias-bound-at-n}, we obtain
\begin{equation}\label{eq:bias-bound-clean-at-n}
\frac12
\EE\!\left[\|\bar{\ub}_{n}(\gamma\Hb\bm{\delta})\|_{\Hb}^2\right]
\le
\frac{\|\bm{\delta}\|_{\Hb}^2}{n^2}
\left(
1+
\frac{2\alpha\gamma\tr(\Hb)}{1-\gamma\alpha\tr(\Hb)}\,D_n
\right).
\end{equation}

Substituting \eqref{eq:bias-bound-clean-at-n} into \eqref{eq:schedule-reduction-to-bias}
yields
\begin{align}
\frac12
\EE\!\left[\|\bar{\bm{\eta}}^{\sch}_N\|_{\Hb}^2\right]
&\le
\frac{Np(1-p)}{N-1}
\sum_{n=1}^N
\left(\frac{n}{N}\right)^2
\cdot
\frac{\|\bm{\delta}\|_{\Hb}^2}{n^2}
\left(
1+
\frac{2\alpha\gamma\tr(\Hb)}{1-\gamma\alpha\tr(\Hb)}\,D_n
\right)
\notag\\
&=
\frac{p(1-p)\|\bm{\delta}\|_{\Hb}^2}{N(N-1)}
\sum_{n=1}^N
\left(
1+
\frac{2\alpha\gamma\tr(\Hb)}{1-\gamma\alpha\tr(\Hb)}\,D_n
\right).
\label{eq:before-final-Deff}
\end{align}

It remains to compare $D_n$ with the final effective dimension $\DIM$. Observe that
\[
D_n
=
k_n+n^2\gamma^2\sum_{i>k_n}\lambda_i^2
=
\sum_i \min\{1,n^2\gamma^2\lambda_i^2\}.
\]
Since $n\le N$,
\[
D_n
\le
\sum_i \min\{1,N^2\gamma^2\lambda_i^2\}.
\]
Now let
\[
k^*:=\max\left\{k:\lambda_k\ge \frac1{\gamma N}\right\}.
\]
Then
\[
\sum_i \min\{1,N^2\gamma^2\lambda_i^2\}
\le
k^*+\gamma^2N^2\sum_{i>k^*}\lambda_i^2
=
\DIM.
\]
Hence
\[
D_n\le \DIM,
\qquad \forall n\le N.
\]
Applying this to \eqref{eq:before-final-Deff} gives
\begin{align*}
\frac12
\EE\!\left[\|\bar{\bm{\eta}}^{\sch}_N\|_{\Hb}^2\right]
&\le
\frac{p(1-p)\|\bm{\delta}\|_{\Hb}^2}{N(N-1)}
\sum_{n=1}^N
\left(
1+
\frac{2\alpha\gamma\tr(\Hb)}{1-\gamma\alpha\tr(\Hb)}\,\DIM
\right) \\
&=
\frac{p(1-p)\|\bm{\delta}\|_{\Hb}^2}{N-1}
\left(
1+
\frac{2\alpha\gamma\tr(\Hb)}{1-\gamma\alpha\tr(\Hb)}\,\DIM
\right),
\end{align*}
which proves \eqref{eq:schedule-fluctuation-clean-bound}.
\end{proof}

\paragraph{An upper bound for excess risk.}
Combining the bounds for the bias, variance, drift, and fluctuation terms derived above,
we obtain the following upper bound on the excess risk.
\begin{theorem}[Excess risk upper bound for constant-stepsize SGD with iterate averaging]
\label{thm:avg-sgd-upper-bound}
Suppose Assumptions \ref{assump:second-moment}, \ref{assump:fourth-moment}, and \ref{assump:noise} hold. Let $\bar\sigma^2 := (1-p)\sigma_1^2 + p\sigma_2^2.$ 
Assume $\gamma < 1/(\alpha \tr(\Hb))$. Then
\begin{align*}
\EE [ \R(\bar\wb_N) - \R(\wb_1^*) ]
&\lesssim
\frac{1}{\gamma^2 N^2}\|\wb_0-\wb_1^*\|_{\Hb_{0:k^*}^{-1}}^2
+\|\wb_0-\wb_1^*\|_{\Hb_{k^*:\infty}}^{2}\\
&+\big(\alpha\|\wb_0-\wb_1^*\|_{\Hb}^{2}+\bar\sigma^2+\alpha p\|\bm{\delta}\|_{\Hb}^2\big)
\cdot\frac{\DIM}{N}+ \frac{p(1-p)\|\bm{\delta}\|_{\Hb}^2}{N},\\
&+
p^2
\left\|
\left(
\Ib-\frac{1}{\gamma N}\Hb^{-1}\bigl(\Ib-(\Ib-\gamma\Hb)^N\bigr)
\right)\bm{\delta}
\right\|_{\Hb}^2
\end{align*}
where
$
k^* = \max\left\{k:\lambda_k \ge \frac{1}{\gamma N}\right\},
$
and
$
\DIM := k^* + \gamma^2 N^2 \sum_{i>k^*}\lambda_i^2.
$
\end{theorem}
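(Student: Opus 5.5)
The plan is to assemble the four bounds just derived for the averaged iterate, using the Cauchy--Schwarz decomposition stated at the start of this appendix. Applied to $\bar{\bm{\eta}}_N=\bar{\bm{\eta}}^{\bias}_N+\bar{\bm{\eta}}^{\var}_N+\bar{\bm{\eta}}^{\drift}_N+\bar{\bm{\eta}}^{\fluct}_N$, it gives
\[
\EE[\R(\bar\wb_N)-\R(\wb_1^*)]\le 2\langle\Hb,\bar\Bb_N\rangle+2\langle\Hb,\bar\Vb_N\rangle+2\langle\Hb,\bar\Db_N\rangle+2\langle\Hb,\bar\Fb_N\rangle.
\]
This holds because the noise, drift and fluctuation processes share the same multiplicative operators $\Ib-\gamma\xb_t\xb_t^\top$ and their forcings add up to the forcing in \eqref{eq:error-recursion} with $\gamma_t\equiv\gamma$. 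Averaging is linear, so the decomposition passes to the averaged iterates. I then bound each piece separately, with all constants absorbed into $\lesssim$.

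For the bias piece, Lemma~\ref{lemma:HB-constant-upper-bound} with $\wb^*=\wb_1^*$ gives the first two terms directly. Its remaining term is $\alpha(\|\wb_0-\wb_1^*\|^2_{\Ib_{0:k^*}}/(N\gamma)+\|\wb_0-\wb_1^*\|^2_{\Hb_{k^*:\infty}})\DIM/N$. Since $1/(\gamma N)\le\lambda_i$ for $i\le k^*$, the operator satisfies $\Ib_{0:k^*}/(\gamma N)+\Hb_{k^*:\infty}\preceq\Hb$. Hence this term is at most $\alpha\|\wb_0-\wb_1^*\|_{\Hb}^2\DIM/N$, which is exactly the same reduction as in \eqref{eq:B-less-H}. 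The variance piece is Lemma~\ref{lemma:HV-constant-upper-bound} and gives $\bar\sigma^2\DIM/N$. The drift piece is Lemma~\ref{lemma:HD-constant-upper-bound}: it supplies the $p^2\|(\Ib-\frac{1}{\gamma N}\Hb^{-1}(\Ib-(\Ib-\gamma\Hb)^N))\bm{\delta}\|_{\Hb}^2$ term, and its extra part $\alpha p^2\|\bm{\delta}\|_{\Hb}^2\DIM/N$ is absorbed into $\alpha p\|\bm{\delta}\|_{\Hb}^2\DIM/N$ because $p\le1$.

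For the fluctuation piece, I use \eqref{eq:fluctuation-split-avg}. The covariate part is bounded by \eqref{eq:fluct-cov-final}, giving $\alpha p\|\bm{\delta}\|_{\Hb}^2\DIM/N$. For this step I need the cross term in \eqref{eq:fluct-cov-recursion} to vanish for the averaged process as well. This follows from the same conditional-mean-zero argument given $\mathcal Z$ as in \eqref{eq:fluct1-cond-mean-zero}: that identity holds for every $t$, and covariances of averages are sums of $\EE[\bm{\eta}_s\otimes\bm{\eta}_t]$, whose cross forcing terms vanish by the same conditioning. The schedule part is bounded by Lemma~\ref{lem:schedule-fluctuation-clean}. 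The $1$ inside the bracket gives $p(1-p)\|\bm{\delta}\|_{\Hb}^2/(N-1)\lesssim p(1-p)\|\bm{\delta}\|_{\Hb}^2/N$. The part involving $\DIM$ is $p(1-p)\|\bm{\delta}\|_{\Hb}^2\,\gamma\alpha\tr(\Hb)\DIM/((N-1)(1-\gamma\alpha\tr(\Hb)))$. Using $\gamma\alpha\tr(\Hb)<1$, I bound this by $\alpha p\|\bm{\delta}\|_{\Hb}^2\DIM/N$ up to the factor $1/(1-\gamma\alpha\tr(\Hb))$.

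The main obstacle is that factor $1/(1-\gamma\alpha\tr(\Hb))$, which appears throughout and is only an absolute constant under a strict margin such as $\gamma<1/(2\alpha\tr(\Hb))$. I would state that the implicit constant in $\lesssim$ depends on it, exactly as it does in the cited lemmas of \citet{zou2023benign}. Once that dependence is accepted, summing the four bounds gives the statement of Theorem~\ref{thm:avg-sgd-upper-bound}.
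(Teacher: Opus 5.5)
Your proposal is correct and is essentially the paper's own argument: the paper's proof combines the Cauchy--Schwarz split of the averaged error with Lemmas~\ref{lemma:HB-constant-upper-bound}, \ref{lemma:HV-constant-upper-bound} and \ref{lemma:HD-constant-upper-bound}, the bound \eqref{eq:fluct-cov-final}, and Lemma~\ref{lem:schedule-fluctuation-clean}, using the same absorptions you describe. Your caveat that the factor $1/(1-\gamma\alpha\tr(\Hb))$ is hidden inside $\lesssim$ is a fair point the paper glosses over; note also that absorbing the schedule term into $\alpha p\|\bm{\delta}\|_{\Hb}^2\DIM/N$ tacitly uses $\alpha\ge 1$, which holds because $\EE[\xb\xb^\top\Ab\xb\xb^\top]\succeq\Hb\Ab\Hb$.
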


\subsection{Lower bound analysis}
The starting point is again the error recursion
\begin{equation}
\bm{\eta}_t
=
(\Ib-\gamma\xb_t\xb_t^\top)\bm{\eta}_{t-1}
+
\gamma z_t \xb_t\xb_t^\top\bm{\delta}
+
\gamma\xi_t\xb_t,
\qquad
\bm{\eta}_0=\wb_0-\wb_1^*,
\label{eq:lower-error-recursion-avg}
\end{equation}
where
\[
\bm{\eta}_t=\wb_t-\wb_1^*.
\]
Recall that the averaged iterate is
\[
\bar\wb_N:=\frac1N\sum_{t=0}^{N-1}\wb_t,
\]
and correspondingly
\[
\bar{\bm{\eta}}_N
:=
\frac1N\sum_{t=0}^{N-1}\bm{\eta}_t
=
\bar\wb_N-\wb_1^*.
\]
By definition of the excess risk,
\[
\EE[\R(\bar\wb_N)-\R(\wb_1^*)]
=
\frac12\,\big\langle \Hb,\EE[\bar{\bm{\eta}}_N\otimes\bar{\bm{\eta}}_N]\big\rangle.
\]

To derive a lower bound, we center the averaged error around its mean. Define
\[
\bar{\bm{\mu}}_N:=\EE[\bar{\bm{\eta}}_N],
\qquad
\widetilde{\bar{\bm{\eta}}}_N:=\bar{\bm{\eta}}_N-\bar{\bm{\mu}}_N.
\]
Then
\[
\bar{\bm{\eta}}_N=\bar{\bm{\mu}}_N+\widetilde{\bar{\bm{\eta}}}_N,
\qquad
\EE[\widetilde{\bar{\bm{\eta}}}_N]=\boldsymbol{0}.
\]
Therefore,
\begin{align*}
\EE[\bar{\bm{\eta}}_N\otimes\bar{\bm{\eta}}_N]
&=
\EE\Big[
(\bar{\bm{\mu}}_N+\widetilde{\bar{\bm{\eta}}}_N)
\otimes
(\bar{\bm{\mu}}_N+\widetilde{\bar{\bm{\eta}}}_N)
\Big]
\\
&=
\bar{\bm{\mu}}_N\otimes\bar{\bm{\mu}}_N
+
\EE[\widetilde{\bar{\bm{\eta}}}_N\otimes\widetilde{\bar{\bm{\eta}}}_N]
+
\bar{\bm{\mu}}_N\otimes \EE[\widetilde{\bar{\bm{\eta}}}_N]
+
\EE[\widetilde{\bar{\bm{\eta}}}_N]\otimes \bar{\bm{\mu}}_N
\\
&=
\bar{\bm{\mu}}_N\otimes\bar{\bm{\mu}}_N
+
\EE[\widetilde{\bar{\bm{\eta}}}_N\otimes\widetilde{\bar{\bm{\eta}}}_N].
\end{align*}
Hence
\begin{equation}
\EE[\R(\bar\wb_N)-\R(\wb_1^*)]
=
\frac12\|\bar{\bm{\mu}}_N\|_{\Hb}^2
+
\frac12\big\langle \Hb,\EE[\widetilde{\bar{\bm{\eta}}}_N\otimes\widetilde{\bar{\bm{\eta}}}_N]\big\rangle.
\label{eq:lower-bound-by-mean-avg}
\end{equation}
Since the second term is nonnegative, we immediately obtain
\begin{equation}
\EE[\R(\bar\wb_N)-\R(\wb_1^*)]
\ge
\frac12\|\bar{\bm{\mu}}_N\|_{\Hb}^2.
\label{eq:lower-bound-mean-only-avg}
\end{equation}

We now characterize the mean trajectory of the SGD error process. Let
$
\bm{\mu}_t:=\EE[\bm{\eta}_t].
$
Taking expectation on both sides of \eqref{eq:lower-error-recursion-avg}, and using
\[
\EE[\xb_t\xb_t^\top]=\Hb,
\qquad
\EE[z_t]=p,
\qquad
\EE[\xi_t\xb_t]=\boldsymbol{0},
\]
we obtain
\begin{equation*}
\bm{\mu}_t
=
(\Ib-\gamma\Hb)\bm{\mu}_{t-1}
+
\gamma p\Hb\bm{\delta},
\qquad
\bm{\mu}_0=\wb_0-\wb_1^*.
\label{eq:mean-recursion-lower-avg}
\end{equation*}

Unrolling the recursion gives
\begin{align}
\bm{\mu}_t
&=
(\Ib-\gamma\Hb)^t(\wb_0-\wb_1^*)
+
p\sum_{s=1}^t
\gamma(\Ib-\gamma\Hb)^{t-s}\Hb\bm{\delta}.
\label{eq:mean-unrolled-lower-avg}
\end{align}
Since \((\Ib-\gamma\Hb)\) is a polynomial in \(\Hb\), it commutes with \(\Hb\). Therefore,
\[
\sum_{s=1}^t
\gamma(\Ib-\gamma\Hb)^{t-s}\Hb
=
\Ib-(\Ib-\gamma\Hb)^t.
\]
Substituting this identity into \eqref{eq:mean-unrolled-lower-avg}, we obtain the closed form
\begin{equation*}
\bm{\mu}_t
=
(\Ib-\gamma\Hb)^t(\wb_0-\wb_1^*)
+
p\bigl(\Ib-(\Ib-\gamma\Hb)^t\bigr)\bm{\delta}.
\label{eq:mean-closed-form-lower-avg}
\end{equation*}

Now average over $t=0,\dots,N-1$. Since
\[
\bar{\bm{\mu}}_N
=
\EE[\bar{\bm{\eta}}_N]
=
\frac1N\sum_{t=0}^{N-1}\bm{\mu}_t,
\]
we have
\begin{align*}
\bar{\bm{\mu}}_N
&=
\frac1N\sum_{t=0}^{N-1}(\Ib-\gamma\Hb)^t(\wb_0-\wb_1^*)
+
\frac{p}{N}\sum_{t=0}^{N-1}\bigl(\Ib-(\Ib-\gamma\Hb)^t\bigr)\bm{\delta}.
\label{eq:avg-mean-before-sum}
\end{align*}
Using the matrix geometric-series identity
\[
\sum_{t=0}^{N-1}(\Ib-\gamma\Hb)^t
=
(\gamma\Hb)^{-1}\bigl(\Ib-(\Ib-\gamma\Hb)^N\bigr),
\]
we obtain
\begin{equation*}
\bar{\bm{\mu}}_N
=
\frac{1}{\gamma N}\Hb^{-1}\bigl(\Ib-(\Ib-\gamma\Hb)^N\bigr)(\wb_0-\wb_1^*)
+
p\left(
\Ib-\frac{1}{\gamma N}\Hb^{-1}\bigl(\Ib-(\Ib-\gamma\Hb)^N\bigr)
\right)\bm{\delta}.
\label{eq:avg-mean-closed-form}
\end{equation*}

Consequently,
\begin{equation}
\|\bar{\bm{\mu}}_N\|_{\Hb}^2
=
\left\|
\frac{1}{\gamma N}\Hb^{-1}\bigl(\Ib-(\Ib-\gamma\Hb)^N\bigr)(\wb_0-\wb_1^*)
+
p\left(
\Ib-\frac{1}{\gamma N}\Hb^{-1}\bigl(\Ib-(\Ib-\gamma\Hb)^N\bigr)
\right)\bm{\delta}
\right\|_{\Hb}^2.
\label{eq:avg-mean-risk-term}
\end{equation}

Combining \eqref{eq:lower-bound-mean-only-avg} and \eqref{eq:avg-mean-risk-term}, we arrive at the lower bound
\begin{align*}
\EE [ \R(\bar\wb_N) - \R(\wb_1^*) ]
&\gtrsim
\left\|
\frac{1}{\gamma N}\Hb^{-1}\bigl(\Ib-(\Ib-\gamma\Hb)^N\bigr)(\wb_0-\wb_1^*)
+
p\left(
\Ib-\frac{1}{\gamma N}\Hb^{-1}\bigl(\Ib-(\Ib-\gamma\Hb)^N\bigr)
\right)\bm{\delta}
\right\|_{\Hb}^2.
\end{align*}

\subsection{Strong model collapse behavior}
\begin{corollary}[Strong model collapse in Avg-SGD]\label{cor:strong-model-collapse-avg}
Consider averaged SGD with constant stepsize $\gamma$. Suppose Assumptions~\ref{assump:second-moment}, ~\ref{assump:fourth-moment} and ~\ref{assump:noise} hold. Suppose $\gamma < 1 / \alpha\tr(\Hb)$ and $\DIM=o(M+N)$. Further assume that $\|\wb_0-\wb_1^*\|_{2}^2$ is finite. As the total sample size $M+N$ scales to infinity with a fixed synthetic proportion $p$, 
\[
\lim_{M+N\xrightarrow{}\infty}\EE[\mathcal{E}_1(\bar\wb_{M+N})] \eqsim p^2\|\bm{\delta}\|_{\Hb}^2.
\]
\end{corollary}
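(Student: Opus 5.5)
The plan mirrors the proof of Corollary~\ref{cor:strong-model-collapse}. Let $N:=M+N$ denote the total horizon, as in this appendix, and set
\[
\Qb_N:=\frac{1}{\gamma N}\Hb^{-1}\bigl(\Ib-(\Ib-\gamma\Hb)^N\bigr).
\]
In the eigenbasis of $\Hb$ this is the diagonal operator with entries $q_{N,i}=\frac{1-(1-\gamma\lambda_i)^N}{\gamma N\lambda_i}$. Since $\gamma<1/(\alpha\tr(\Hb))\le 1/\lambda_1$, we have $0\le 1-\gamma\lambda_i<1$, hence $0\le q_{N,i}\le 1$ and $q_{N,i}\le 1/(\gamma N\lambda_i)\to 0$ for every $\lambda_i>0$. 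The first step is the limit $\|\Qb_N\ub\|_{\Hb}\to0$ for every $\ub$ with $\|\ub\|_{\Hb}<\infty$. This follows from dominated convergence in $\sum_i\lambda_i q_{N,i}^2u_i^2$, with dominating function $\lambda_iu_i^2$.

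For the \emph{upper bound}, I would use Theorem~\ref{thm:avg-sgd-upper-bound} and show that every term except the drift term vanishes. First, the variance-type terms $(\alpha\|\wb_0-\wb_1^*\|_{\Hb}^2+\bar\sigma^2+\alpha p\|\bm\delta\|_{\Hb}^2)\DIM/N$ vanish because $\DIM=o(N)$. The term $p(1-p)\|\bm\delta\|_{\Hb}^2/N$ is $O(1/N)$. Next, the tail bias $\|\wb_0-\wb_1^*\|^2_{\Hb_{k^*:\infty}}$ vanishes because $k^*\to\infty$ and $\|\wb_0-\wb_1^*\|_{\Hb}<\infty$. If only finitely many eigenvalues are positive, this term is eventually zero.

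The head bias term is the main obstacle and the reason for the extra hypothesis that $\|\wb_0-\wb_1^*\|_2^2<\infty$. Writing $u=\wb_0-\wb_1^*$, it equals
\[
\frac{1}{\gamma^2N^2}\sum_{i\le k^*}u_i^2/\lambda_i .
\]
Each summand satisfies $\frac{u_i^2}{\gamma^2N^2\lambda_i}\le \frac{u_i^2}{\gamma N}$, because $\lambda_i\ge1/(\gamma N)$ for $i\le k^*$. Hence the term is at most $\|u\|_2^2/(\gamma N)\to0$. Finally, the drift term is $p^2\|(\Ib-\Qb_N)\bm\delta\|_{\Hb}^2$. By the first step and the triangle inequality, it converges to $p^2\|\bm\delta\|_{\Hb}^2$. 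Together these give $\limsup\EE[\mathcal E_1(\bar\wb_N)]\lesssim p^2\|\bm\delta\|_{\Hb}^2$.

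For the \emph{lower bound}, I would use \eqref{eq:lower-bound-mean-only-avg} together with the closed form \eqref{eq:avg-mean-risk-term}. These give
\[
\EE[\mathcal E_1(\bar\wb_N)]\ge\tfrac12\bigl\|\Qb_N(\wb_0-\wb_1^*-p\bm\delta)+p\bm\delta\bigr\|_{\Hb}^2 .
\]
Since $\wb_0-\wb_1^*-p\bm\delta$ has finite $\Hb$-norm by Assumption~\ref{assump:second-moment}, the first step shows that the $\Qb_N$ part vanishes in $\Hb$-norm. The right-hand side therefore converges to $\tfrac12p^2\|\bm\delta\|_{\Hb}^2$, which gives $\liminf\EE[\mathcal E_1(\bar\wb_N)]\gtrsim p^2\|\bm\delta\|_{\Hb}^2$. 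Combining the $\limsup$ and $\liminf$ bounds proves the corollary.
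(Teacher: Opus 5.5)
Your proposal is correct and follows the paper's proof essentially step for step. The paper also shows that the averaging operator (its $\Ab_T$) vanishes in $\Hb$-norm by dominated convergence, handles each term of the averaged-SGD upper bound the same way (including the bound $\|\wb_0-\wb_1^*\|_2^2/(\gamma N)$ on the head bias), and gets the lower bound from the same mean-only estimate rewritten as $\Ab_T(\wb_0-\wb_1^*-p\bm{\delta})+p\bm{\delta}$. Your remark that the tail bias term is eventually zero when $\Hb$ has finite rank slightly tidies the paper's bare assertion that $k^*\to\infty$.
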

\begin{proof}
Let
\[
T:=M+N,
\qquad
\Ab_T:=\frac{1}{\gamma T}\Hb^{-1}\bigl(\Ib-(\Ib-\gamma\Hb)^T\bigr).
\]
Since the synthetic proportion $p=M/(M+N)$ is fixed, it suffices to study the limit as
$T\to\infty$.

We first show that $\Ab_T$ vanishes in $\Hb$-norm. Writing the eigendecomposition
\[
\Hb=\sum_i \lambda_i \vb_i\vb_i^\top,
\]
we have
\[
\Ab_T \vb_i
=
\frac{1-(1-\gamma\lambda_i)^T}{\gamma T\lambda_i}\,\vb_i.
\]
Since $\gamma<1/(\alpha\tr(\Hb))$, we have $0\le \gamma\lambda_i\le \gamma\tr(\Hb)<1$, hence
\[
0\le 1-\gamma\lambda_i<1.
\]
For every $i$ with $\lambda_i>0$, it follows that
\[
\frac{1-(1-\gamma\lambda_i)^T}{\gamma T\lambda_i}\longrightarrow 0
\qquad\text{as }T\to\infty.
\]
Moreover, using the elementary inequality
\[
1-(1-a)^T\le Ta,\qquad 0\le a\le 1,
\]
we get
\[
0\le \frac{1-(1-\gamma\lambda_i)^T}{\gamma T\lambda_i}\le 1.
\]
Therefore, for any $\wb$ with $\|\wb\|_{\Hb}<\infty$,
\[
\|\Ab_T\wb\|_{\Hb}^2
=
\sum_i
\lambda_i
\left(
\frac{1-(1-\gamma\lambda_i)^T}{\gamma T\lambda_i}
\right)^2
\langle \wb,\vb_i\rangle^2
\longrightarrow 0
\]
by dominated convergence.

We now turn to the upper bound. By the upper bound theorem for averaged SGD,
\begin{align*}
\EE[\mathcal E_1(\bar\wb_T)]
&\lesssim
\underbrace{
\frac{1}{\gamma^2T^2}\|\wb_0-\wb_1^*\|_{\Hb_{0:k^*}^{-1}}^2
}_{(I)}
+
\underbrace{
\|\wb_0-\wb_1^*\|_{\Hb_{k^*:\infty}}^2
}_{(II)}
\\
&\quad+
\underbrace{
\big(\alpha\|\wb_0-\wb_1^*\|_{\Hb}^{2}+\bar\sigma^2+\alpha p\|\bm{\delta}\|_{\Hb}^2\big)
\cdot\frac{\DIM}{T}
}_{(III)}
+
\underbrace{
\frac{p(1-p)\|\bm{\delta}\|_{\Hb}^2}{T}
}_{(IV)}
\\
&\quad+
\underbrace{
p^2\|(\Ib-\Ab_T)\bm{\delta}\|_{\Hb}^2
}_{(V)},
\end{align*}
where
\[
k^*=\max\left\{k:\lambda_k\ge \frac{1}{\gamma T}\right\},
\qquad
\DIM:=k^*+\gamma^2T^2\sum_{i>k^*}\lambda_i^2.
\]

By the assumption $\DIM=o(T)$, terms $(III)$ and $(IV)$ vanish as $T\to\infty$.
For term $(I)$, note that
\[
\frac{1}{\gamma^2T^2}\|\wb_0-\wb_1^*\|_{\Hb_{0:k^*}^{-1}}^2
\le
\|\wb_0-\wb_1^*\|_{2}^2\cdot \frac{1}{\gamma T},
\]
because for every $i\le k^*$ we have $\lambda_i\ge 1/(\gamma T)$, hence
$\lambda_i^{-1}\le \gamma T$. Therefore $(I)\to 0$.
For term $(II)$, since $k^*\to\infty$ as $T\to\infty$ and $\|\wb_0-\wb_1^*\|_{\Hb}<\infty$,
we have
\[
\|\wb_0-\wb_1^*\|_{\Hb_{k^*:\infty}}^2
=
\sum_{i>k^*}\lambda_i\langle \wb_0-\wb_1^*,\vb_i\rangle^2
\longrightarrow 0.
\]
Finally,
\[
\|(\Ib-\Ab_T)\bm{\delta}\|_{\Hb}\to \|\bm{\delta}\|_{\Hb},
\]
because
\[
\|(\Ib-\Ab_T)\bm{\delta}-\bm{\delta}\|_{\Hb}
=
\|\Ab_T\bm{\delta}\|_{\Hb}\to 0.
\]
Therefore,
\[
\limsup_{T\to\infty}\EE[\mathcal E_1(\bar\wb_T)]
\lesssim
p^2\|\bm{\delta}\|_{\Hb}^2.
\]

Next, we apply the lower bound obtained from the mean term:
\[
\EE[\mathcal E_1(\bar\wb_T)]
\gtrsim
\left\|
\Ab_T(\wb_0-\wb_1^*)
+
p(\Ib-\Ab_T)\bm{\delta}
\right\|_{\Hb}^2.
\]
Rewrite the term inside the norm as
\[
\Ab_T(\wb_0-\wb_1^*)
+
p(\Ib-\Ab_T)\bm{\delta}
=
\Ab_T(\wb_0-\wb_1^*-p\bm{\delta})
+
p\bm{\delta}.
\]
Again using $\|\Ab_T\wb\|_{\Hb}\to 0$ for every $\wb$ with finite $\Hb$-norm, we obtain
\[
\left\|
\Ab_T(\wb_0-\wb_1^*-p\bm{\delta})
+
p\bm{\delta}
\right\|_{\Hb}^2
\longrightarrow
p^2\|\bm{\delta}\|_{\Hb}^2.
\]
Hence
\[
\liminf_{T\to\infty}\EE[\mathcal E_1(\bar\wb_T)]
\gtrsim
p^2\|\bm{\delta}\|_{\Hb}^2.
\]

Combining the upper and lower bounds yields
\[
\lim_{M+N\to\infty}\EE[\mathcal E_1(\bar\wb_{M+N})]
\eqsim
p^2\|\bm{\delta}\|_{\Hb}^2.
\]
This proves the corollary.
\end{proof}

\vspace{-0.5em}
\section{Additional Experiments}\label{app:real-data}
\subsection{CIFAR10 CNN Experiments}
\paragraph{CIFAR10 experiment setup.}
We conduct experiments on the $\mathtt{CIFAR10}$~\citep{krizhevsky2009learning} dataset, which consists of 50,000 training images and 10,000 test images across 10 classes. All images are of size $32 \times 32$. All experiments are run on a single NVIDIA RTX 3090 GPU.

We use a CNN with five convolutional layers. Each block consists of a convolution layer followed by batch normalization and ReLU activation. Model size is controlled by a width parameter $D$, which scales the number of channels in all convolutional layers proportionally. Specifically, the channel dimensions follow the pattern $(D, D, 2D, 2D, 4D)$. We vary $D$ in $\{8, 16, 32, 64, 128, 256\}$ to study the effect of model capacity.

Synthetic data shares the same input distribution as the real data, but labels are generated by a weak teacher model. The teacher is a smaller CNN (width $16$) trained on a random subset of 5,000 training samples for 5 epochs. For each input $x$, the teacher produces a soft label distribution $q_{\text{teacher}}(y \mid x)$ with temperature scaling. The synthetic label is then defined as
$
\tilde{y} =  q_{\text{teacher}}(y \mid x),
$
, corresponding to fully synthetic labels.

Similar to the simulation experiments, we consider two training protocols as described in Section~\ref{sec:setup}.
For a fixed total training budget $T$, we allocate $pT$ synthetic samples and $(1-p)T$ real samples, where $p \in [0,1]$ is the synthetic data proportion. We vary $p$ over $\{0.0, 0.1, \dots, 0.9\}$ and $T$ over a grid ranging from $200$ to $20{,}000$. To examine how the synthetic-induced floor depends on model size, we fix $p=0.9$ and vary the model width $D$ (while also varying $T$ across the same grid).

\paragraph{Evaluation.}
We evaluate performance on the real test set using both classification error and cross-entropy loss. Each experiment is repeated with 3 random seeds, and we report the mean and standard deviation. To isolate the effect of synthetic data from baseline model performance, we report the metrics difference relative to $p=0$ under the same $T$:
\[
\Delta_{error} = Error(p=0.9) - Error(p=0), \quad \Delta_{loss} = Loss(p=0.9) - Loss(p=0),
\]
which removes the intrinsic advantage of larger models and highlights the synthetic-data-induced degradation.

\begin{figure}[H]
    \centering
    \includegraphics[width=0.8\textwidth]{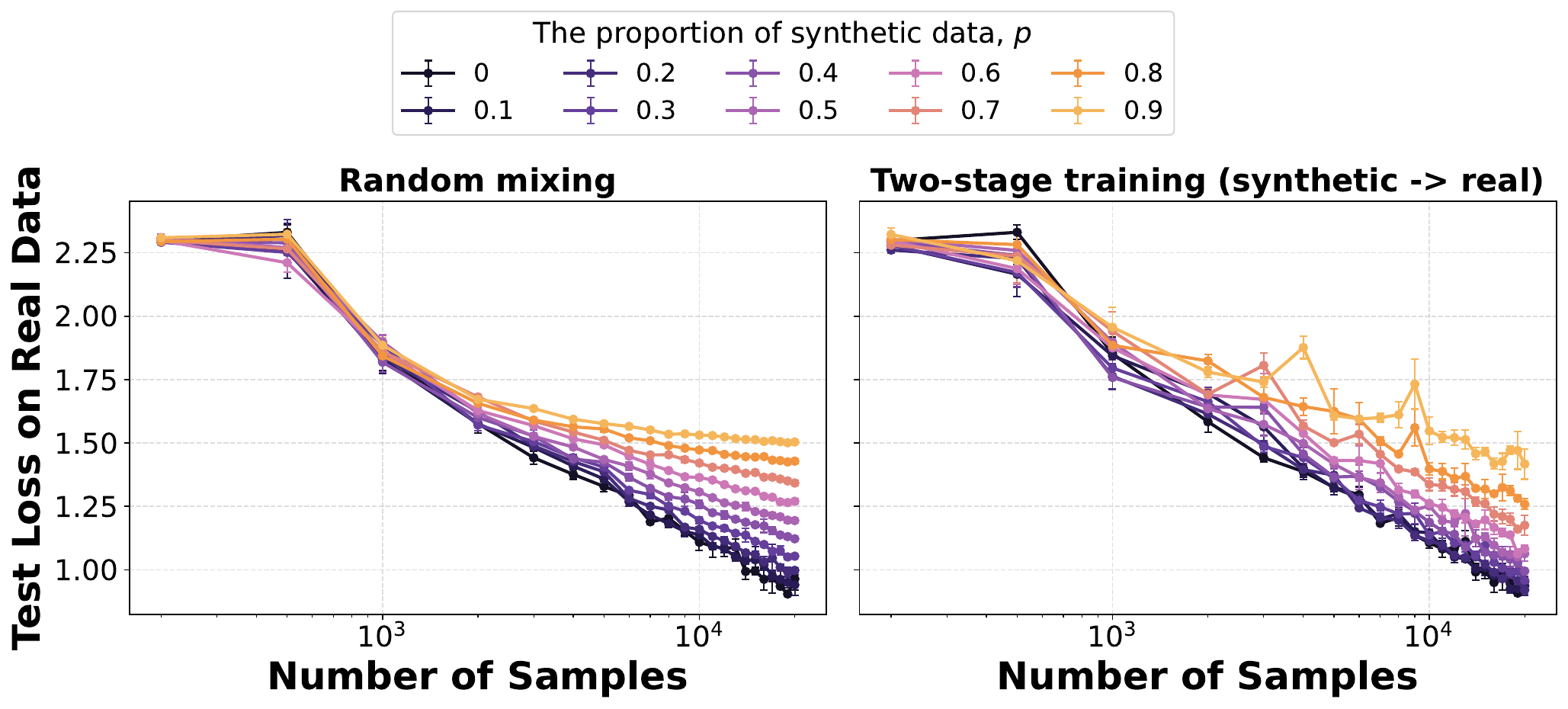}
    \caption{
\textbf{Real-data experiments of different training protocols.}
Test cross-entropy loss on the real distribution as a function of the number of training samples $T$, under different synthetic proportions $p$. 
\textbf{Left:} random mixing of real and synthetic data. 
\textbf{Right:} two-stage training (synthetic-data training followed by real-data training).
}
    \label{fig:real-data-mix_vs_two-stage}
\end{figure}

From Figure~\ref{fig:real-data-mix_vs_two-stage}, we observe the same qualitative behavior predicted by our theory. 
Under random mixing, the test loss quickly saturates as $T$ increases, forming a clear floor that depends on the synthetic proportion $p$. 
In particular, larger values of $p$ lead to systematically higher asymptotic error, indicating that synthetic data introduces an irreducible bias that cannot be removed by additional samples. In contrast, two-stage training mitigates this phenomenon: the test loss continues to decrease with $T$, and consistently achieves slightly better performance than mixing in the large-sample regime. 
These results confirm that the synthetic-data-induced floor is a consequence of mixing, rather than an inherent limitation of the data itself.
\begin{figure}[H]
\centering
\begin{subfigure}[t]{0.42\linewidth}
    \centering
    \includegraphics[width=\linewidth]{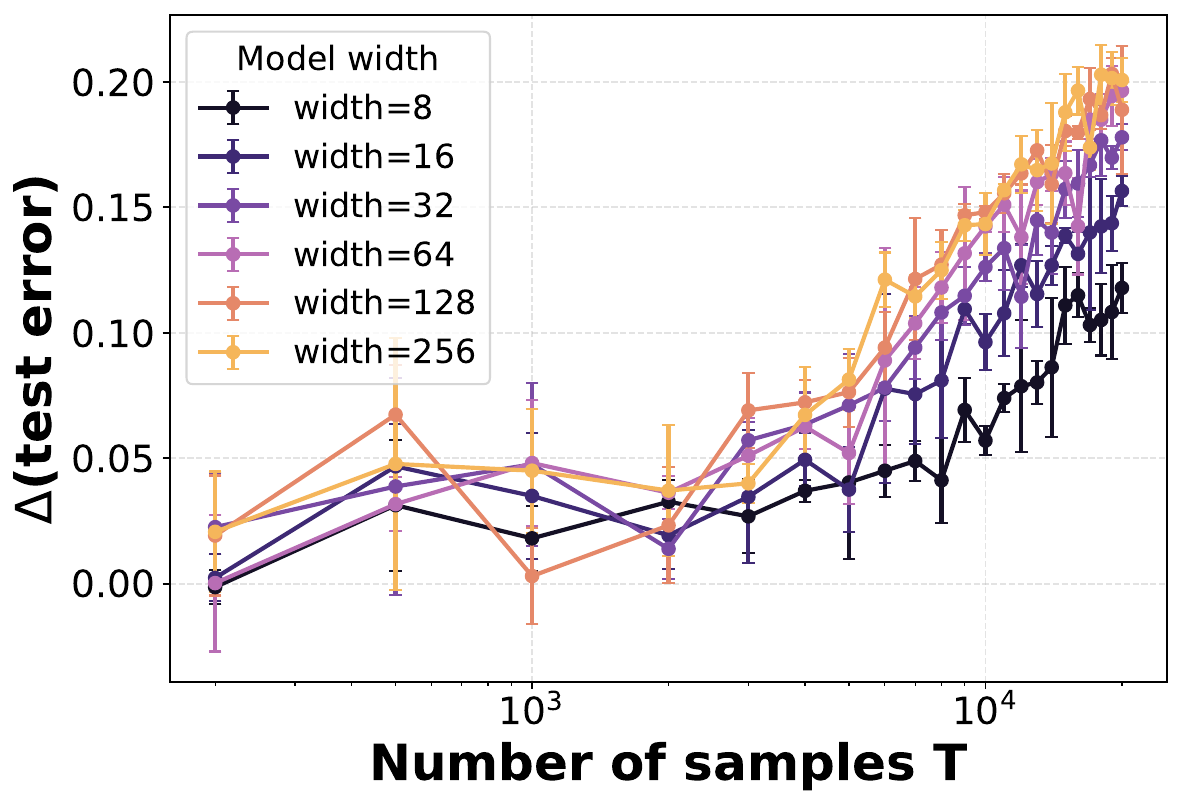}
    \caption{Test error difference}
\end{subfigure}
\begin{subfigure}[t]{0.42\linewidth}
    \centering
    \includegraphics[width=\linewidth]{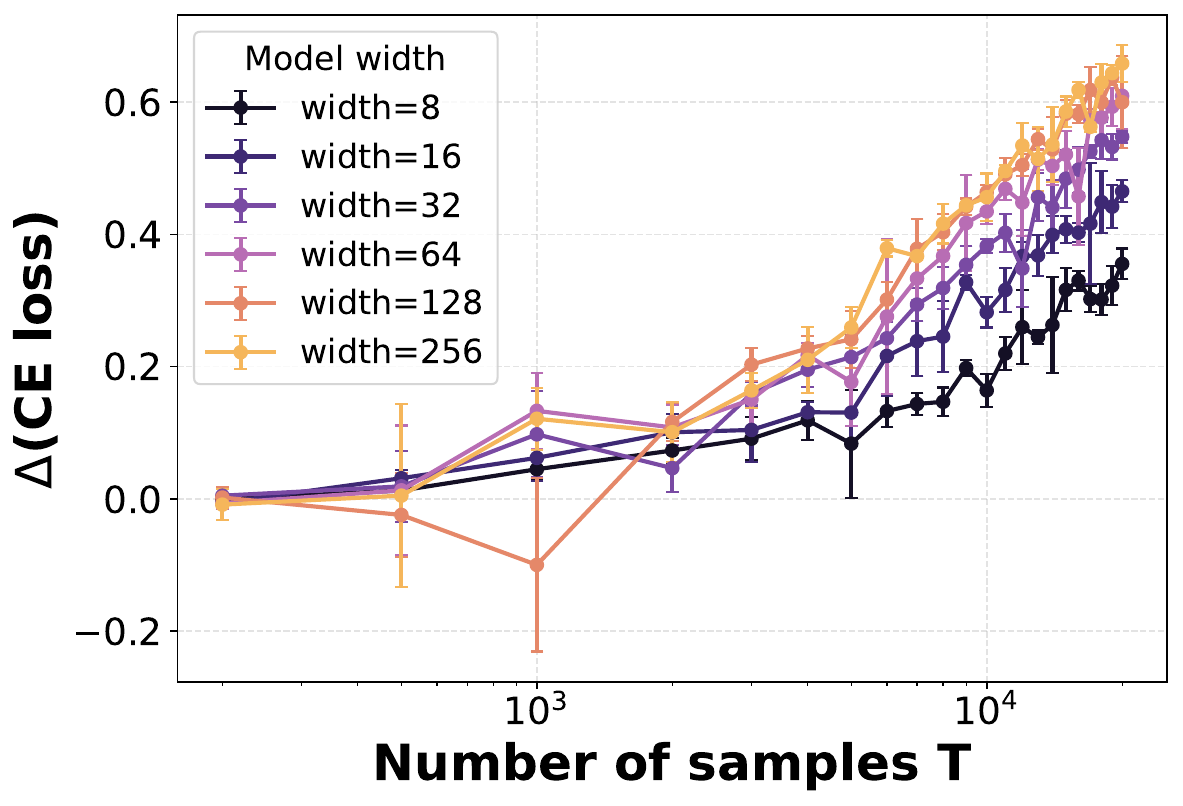}
    \caption{Test cross-entropy difference}
\end{subfigure}

\caption{
\textbf{Effect of model size on synthetic-data-induced degradation.}
We plot the performance gap between high synthetic proportion ($p=0.9$) and pure real-data training ($p=0$), as a function of the number of samples $T$, for different model widths. Across both test error and cross-entropy loss, the gap consistently increases with model size, especially in the large-sample regime. This indicates that larger models are more sensitive to synthetic data, leading to a stronger irreducible bias under mixing.
}
\label{fig:model_size_effect_real}
\end{figure}

Figure~\ref{fig:model_size_effect_real} shows how the impact of synthetic data varies with model size on real data. 
We measure the performance gap between high synthetic proportion ($p=0.9$) and pure real-data training ($p=0$), thereby isolating the effect of synthetic data.
We observe a clear and consistent trend: as model size increases, the gap becomes significantly larger, particularly at large sample sizes. 
While all models exhibit some degradation due to synthetic data, larger models suffer substantially more, indicating that the synthetic-data-induced bias is amplified with model capacity. 
This behavior aligns with our theoretical predictions, suggesting that the floor effect induced by mixing is not only persistent but also worsens with increasing model size.
\vspace{-0.5em}
\subsection{Language Model Experiments}
\paragraph{Experiment setup.}
We conduct language-modeling experiments on WikiText-103, tokenized with the GPT-2 tokenizer. All models are decoder-only Transformers trained with context length 256 using next-token cross-entropy loss, and evaluated by validation loss on the real validation set. Unless otherwise stated, we use batch size 64 and AdamW with learning rate $6\times 10^{-4}$, weight decay $0.1$, training on a single NVIDIA RTX 3090 GPU.

Synthetic text is generated from a DistilGPT-2 teacher. For the synthetic corpus, we sample continuations of length 256 from real WikiText-103 prompts of length 32, using temperature $1.5$, top-$p=0.95$, top-$k=0$, and store only the generated continuations.

We first compare the same two training protocols as before. In \emph{mixed} training, each step uses real or synthetic data so that the cumulative synthetic fraction tracks $p$. In \emph{two-stage} training, for each total budget $T$, the model is first trained for approximately $pT$ synthetic steps and then for the remaining real steps. Our main experiment uses a 4-layer Transformer with 4 attention heads and embedding dimension 256, containing 16.1M parameters. We vary $p\in\{0,0.3,0.6,0.9\}$ and evaluate 12 logarithmically spaced budgets from 100 to 24,000 steps, corresponding to up to 393M trained tokens. Results are shown in Figure~\ref{fig:lm_protocol_comparison}.

\begin{figure}[H]
    \centering
    \includegraphics[width=0.9\linewidth]{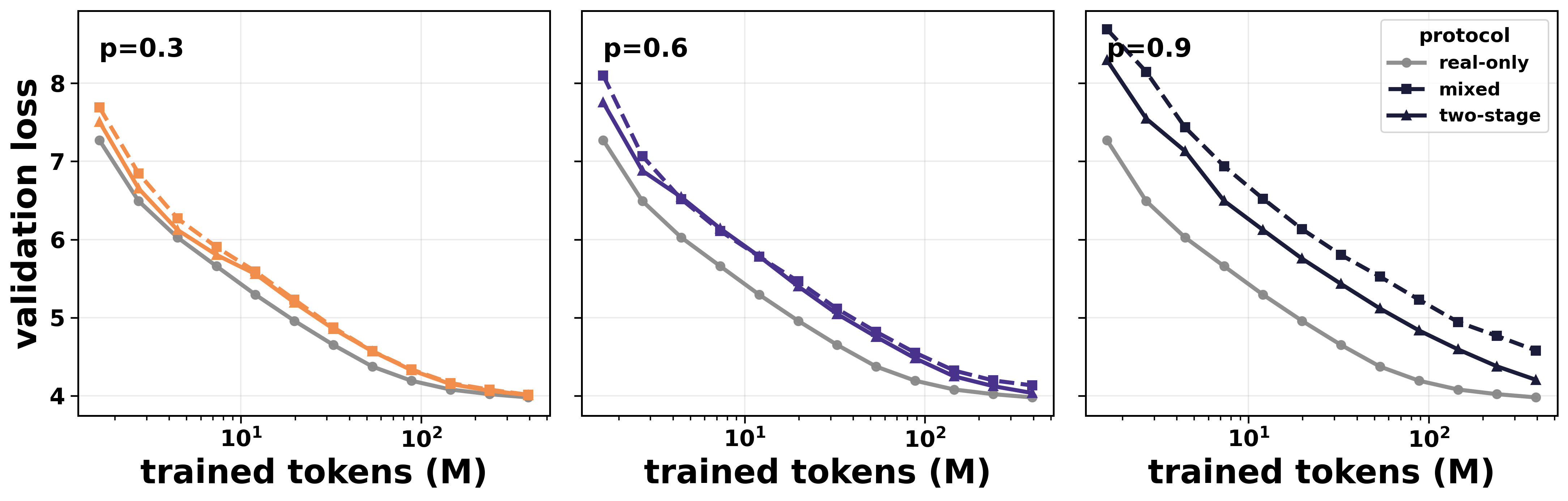}
    \caption{\textbf{Language-model protocol comparison.}
    We compare mixed training and two-stage training on WikiText-103 for synthetic fractions $p\in\{0.3,0.6,0.9\}$, with the real-only baseline shown in gray. Under mixed training, validation loss remains consistently worse than the real-only curve, especially at large $p$, indicating a persistent degradation from training on a fixed fraction of synthetic data. Two-stage training reduces this degradation and approaches the real-only curve more closely at large budgets. This matches the theoretical prediction that mixed training induces a non-vanishing synthetic-data bias floor, whereas using synthetic data only as an initialization stage can mitigate this floor.}
    \label{fig:lm_protocol_comparison}
\end{figure}
\vspace{-1em}
To study the effect of model capacity, we fix a high synthetic fraction $p=0.9$ under the mixed protocol and compare against the matched real-only baseline $p=0$. We train three Transformer models with sizes 4/256/4 (16.1M params), 6/384/6 (30.0M params), and 8/512/8 (51.1M params), where the three numbers denote layers, embedding dimension, and attention heads. We evaluate budgets from 3,000 to 48,000 steps with batch size 32, corresponding to 24.6M to 393.2M trained tokens. We report the degradation
$\Delta \mathrm{loss}=\mathrm{ValLoss}(p=0.9)-\mathrm{ValLoss}(p=0),
$
which isolates the synthetic-data-induced effect from the baseline improvement due to model size. Results are shown in Figure~\ref{fig:lm_model_size}.

\begin{figure}[H]
    \centering
    \includegraphics[width=0.42\linewidth]{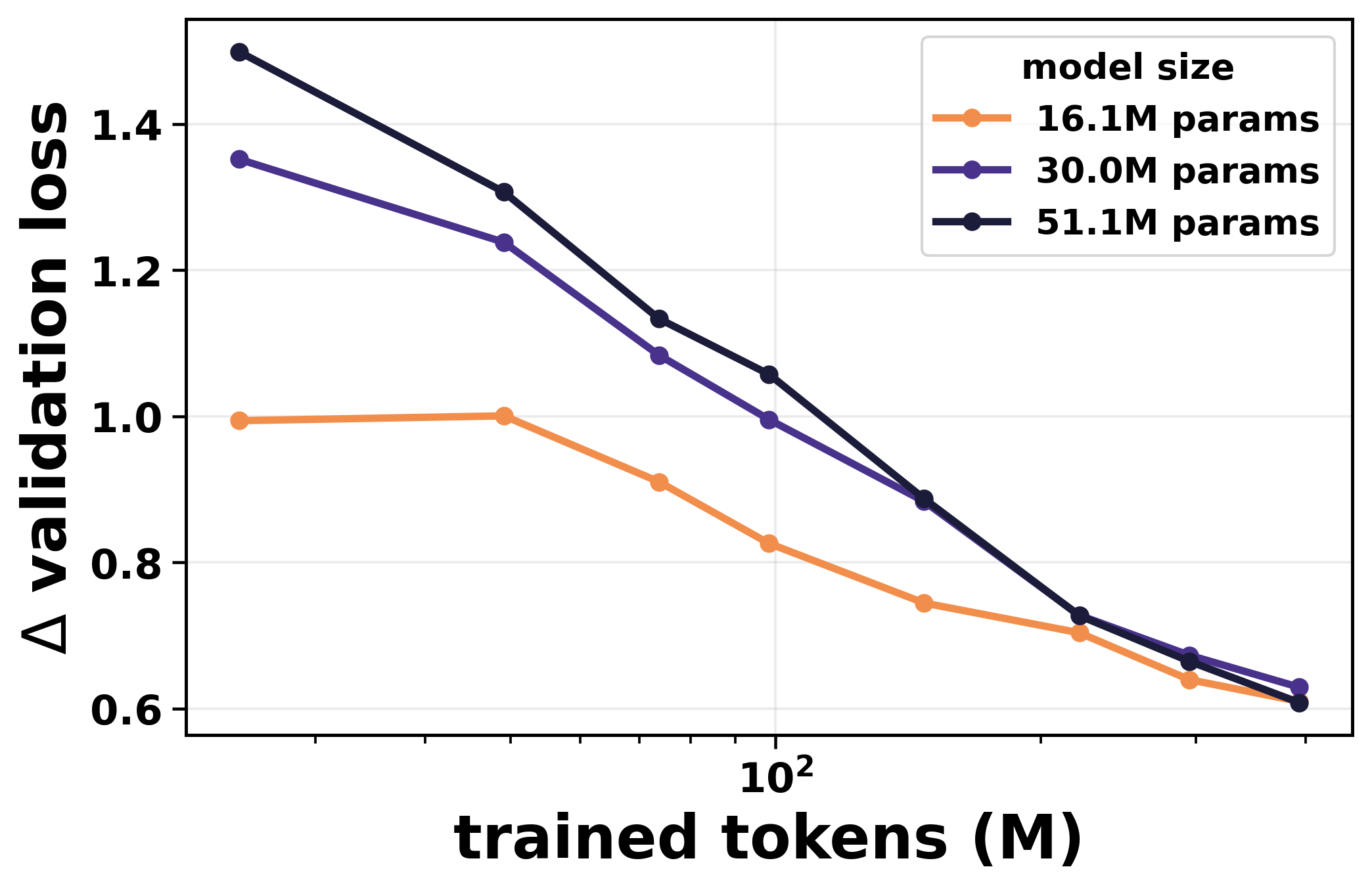}
    \caption{\textbf{Effect of model size on synthetic-data-induced degradation.}
    We plot the validation-loss gap $\Delta\mathrm{loss}=\mathrm{ValLoss}(p=0.9)-\mathrm{ValLoss}(p=0)$ for models with 16.1M, 30.0M, and 51.1M parameters. The larger models exhibit a larger degradation gap at small and intermediate budgets, showing that the effect of synthetic mismatch is amplified by model capacity. This supports the theoretical prediction that higher-capacity models can fit the biased synthetic signal more strongly, leading to a larger synthetic-induced degradation under mixed training.}
    \label{fig:lm_model_size}
\end{figure}

Finally, we test how teacher quality affects the usefulness of synthetic pretraining under a fixed real-data budget. We use the same 4-layer student and compare three synthetic teachers: a high-quality DistilGPT-2 corpus sampled with temperature $0.9$, top-$p=0.95$, top-$k=0$, and prompt length 32; the medium-quality corpus from the main experiment; and a low-quality corpus sampled with temperature $4.0$, top-$p=1.0$, top-$k=0$, and prompt length 4. We pretrain on synthetic budgets of approximately 25M, 50M, 100M, and 200M tokens, then reset the optimizer and train on real-data budgets up to 50M tokens. Performance is evaluated by real validation loss and by the gap relative to the real-only curve at the same real-token budget. Figure~\ref{fig:lm_teacher_quality}.

\begin{figure}[H]
    \centering
    \begin{subfigure}{0.85\linewidth}
        \centering
        \includegraphics[width=\linewidth]{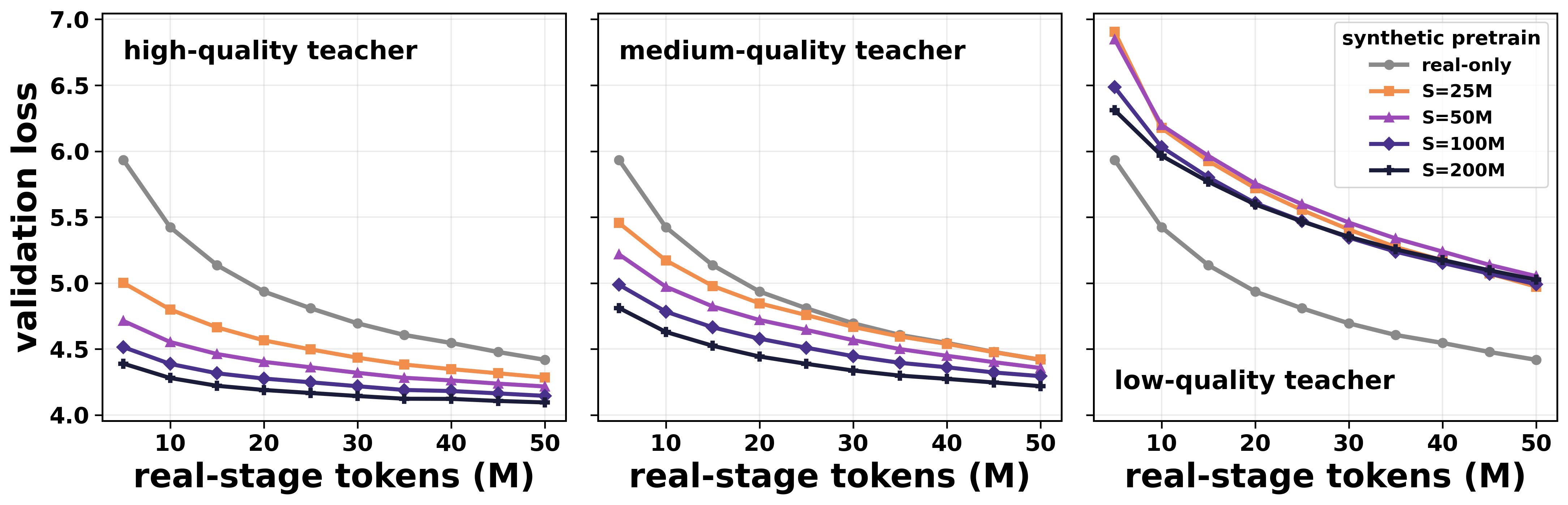}
        \caption{Validation loss after synthetic pretraining followed by fixed real-data training.}
        \label{fig:lm_teacher_quality_loss}
    \end{subfigure}

    \begin{subfigure}{0.85\linewidth}
        \centering
        \includegraphics[width=\linewidth]{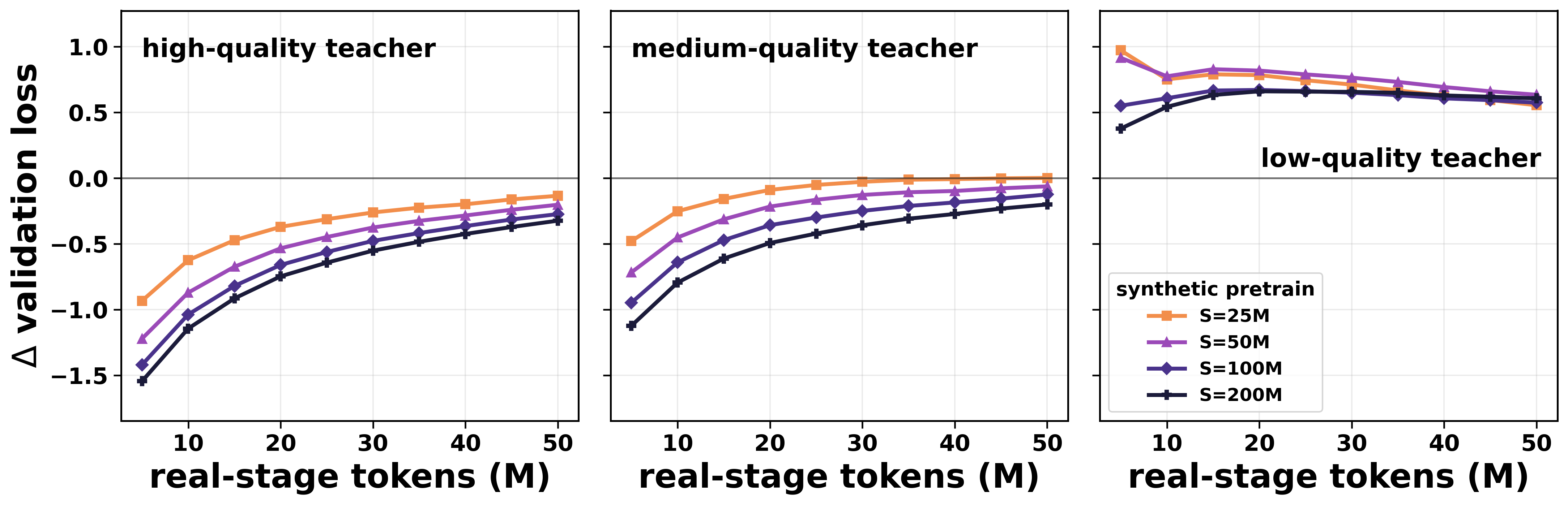}
        \caption{Validation-loss gap relative to the real-only curve at the same real-token budget.}
        \label{fig:lm_teacher_quality_delta}
    \end{subfigure}

    \caption{\textbf{Effect of synthetic teacher quality under a fixed real-data budget.}
    We compare high-, medium-, and low-quality synthetic corpora while keeping the downstream real-data budget fixed. High-quality synthetic pretraining consistently improves validation loss relative to real-only training, and the improvement grows with the synthetic pretraining budget. Medium-quality synthetic data provides weaker but still positive gains at larger synthetic budgets. In contrast, low-quality synthetic data leads to a positive loss gap, meaning that it hurts performance even after subsequent real-data training. These results validate the theoretical bias-quality trade-off: synthetic data can help when its induced bias is small enough to reduce optimization error, but poor-quality synthetic data introduces a mismatch that dominates and causes degradation.}
    \label{fig:lm_teacher_quality}
\end{figure}

\end{document}